\documentclass[aos]{imsart}

%% Packages
\RequirePackage{amsthm,amsmath,amsfonts,amssymb}
\RequirePackage[authoryear]{natbib} % uncomment this for author-year citations
\RequirePackage[colorlinks,citecolor=blue,urlcolor=blue]{hyperref}
\RequirePackage{graphicx}

\usepackage{mathrsfs}
\usepackage{dsfont}
\usepackage{booktabs}
\usepackage{float}
\usepackage{tabularx}
\usepackage{nicefrac}
\usepackage{microtype}
\usepackage{color}
\usepackage{xcolor}
\usepackage{url}
\usepackage{enumitem}
\usepackage{bbding}
\usepackage{comment}

\startlocaldefs
%%%%%%%%%%%%%%%%%%%%%%%%%%%%%%%%%%%%%%%%%%%%%%
%%                                          %%
%% Uncomment next line to change            %%
%% the type of equation numbering           %%
%%                                          %%
%%%%%%%%%%%%%%%%%%%%%%%%%%%%%%%%%%%%%%%%%%%%%%
\numberwithin{equation}{section}
%%%%%%%%%%%%%%%%%%%%%%%%%%%%%%%%%%%%%%%%%%%%%%
%%                                          %%
%% For Axiom, Claim, Corollary, Hypothesis, %%
%% Lemma, Theorem, Proposition              %%
%% use \theoremstyle{plain}                 %%
%%                                          %%
%%%%%%%%%%%%%%%%%%%%%%%%%%%%%%%%%%%%%%%%%%%%%%
% \theoremstyle{plain}
% \newtheorem{axiom}{Axiom}
% \newtheorem{claim}[axiom]{Claim}
% \newtheorem{theorem}{Theorem}[section]
% \newtheorem{lemma}[theorem]{Lemma}
%%%%%%%%%%%%%%%%%%%%%%%%%%%%%%%%%%%%%%%%%%%%%%
%%                                          %%
%% For Assumption, Definition, Example,     %%
%% Notation, Property, Remark, Fact         %%
%% use \theoremstyle{remark}                %%
%%                                          %%
%%%%%%%%%%%%%%%%%%%%%%%%%%%%%%%%%%%%%%%%%%%%%%
% \theoremstyle{remark}
% \newtheorem{definition}[theorem]{Definition}
% \newtheorem*{example}{Example}
% \newtheorem*{fact}{Fact}
%%%%%%%%%%%%%%%%%%%%%%%%%%%%%%%%%%%%%%%%%%%%%%
%% Please put your definitions here:        %%
%%%%%%%%%%%%%%%%%%%%%%%%%%%%%%%%%%%%%%%%%%%%%%
%%%%% NEW MATH DEFINITIONS %%%%%

% Random matrices

% Elements of random matrices

% Vectors

% Elements of vectors

% Matrix

% Tensor
\DeclareMathAlphabet{\mathsfit}{\encodingdefault}{\sfdefault}{m}{sl}
\SetMathAlphabet{\mathsfit}{bold}{\encodingdefault}{\sfdefault}{bx}{n}

% Graph

\def\gC{{\mathcal{C}}}
\def\gD{{\mathcal{D}}}

\def\gF{{\mathcal{F}}}
\def\gG{{\mathcal{G}}}

\def\gO{{\mathcal{O}}}

\def\gW{{\mathcal{W}}}
\def\gX{{\mathcal{X}}}

\def\gZ{{\mathcal{Z}}}

% Sets

% Don't use a set called E, because this would be the same as our symbol
% for expectation.

\def\sN{{\mathbb{N}}}

\def\sR{{\mathbb{R}}}

% Entries of a matrix

% entries of a tensor
% Same font as tensor, without \bm wrapper

\newcommand{\E}{\mathbb{E}}

\newcommand{\Var}{\mathrm{Var}}
\newcommand{\Cov}{\mathrm{Cov}}

\DeclareMathOperator*{\argmin}{arg\,min}

\DeclareMathOperator{\Tr}{Tr}

\def\paref#1{(\ref{#1})}
\def\Id{\mathds{1}}
\newcommand{\diff}{\mathrm{d}}

% Math environment
% \newtheorem{theorem}{Theorem}
\newtheorem{theorem}{Theorem}[section]
\newtheorem{corollary}[theorem]{Corollary}
\newtheorem{lemma}[theorem]{Lemma}
\newtheorem{remark}{Remark}
\newtheorem{example}{Example}
\newtheorem{assumption}{Assumption}
\newtheorem{condition}{Condition}
\newtheorem{definition}[theorem]{Definition}
\newtheorem{proposition}[theorem]{Proposition}
\newtheorem{claim}[theorem]{Claim}

\endlocaldefs

\begin{document}

\begin{frontmatter}
\title{Convergence of Continuous Normalizing Flows for Learning Probability Distributions}
%On the Convergence of Continuous Normalizing Flows with Flow Matching}
%\title{A sample article title with some additional note\thanksref{t1}}
\runtitle{Convergence of Continuous Normalizing Flows}
%\thankstext{T1}{A sample additional note to the title.}

\begin{aug}
%%%%%%%%%%%%%%%%%%%%%%%%%%%%%%%%%%%%%%%%%%%%%%%
%% Only one address is permitted per author. %%
%% Only division, organization and e-mail is %%
%% included in the address.                  %%
%% Additional information can be included in %%
%% the Acknowledgments section if necessary. %%
%% ORCID can be inserted by command:         %%
%% \orcid{0000-0000-0000-0000}               %%
%%%%%%%%%%%%%%%%%%%%%%%%%%%%%%%%%%%%%%%%%%%%%%%
\author[A]{\fnms{Yuan}~\snm{Gao}\ead[label=e1]{yuan0.gao@connect.polyu.hk}},
\author[B]{\fnms{Jian}~\snm{Huang}\ead[label=e2]{j.huang@polyu.edu.hk}},
%\orcid{0000-0000-0000-0000}}
%\and
\author[C]{\fnms{Yuling}~\snm{Jiao}\ead[label=e3]{yulingjiaomath@whu.edu.cn}}
\and
\author[D]{\fnms{Shurong}~\snm{Zheng}\ead[label=e4]{zhengsr@nenu.edu.cn}}
%%%%%%%%%%%%%%%%%%%%%%%%%%%%%%%%%%%%%%%%%%%%%%
%% Addresses                                %%
%%%%%%%%%%%%%%%%%%%%%%%%%%%%%%%%%%%%%%%%%%%%%%
\address[A]{Department of Applied Mathematics, The Hong Kong Polytechnic
University, Hong Kong SAR, China\printead[presep={,\ }]{e1}}

\address[B]{Department of Applied Mathematics, The Hong Kong Polytechnic
University, Hong Kong SAR, China\printead[presep={,\ }]{e2}}

\address[C]{School of Mathematics and Statistics and Hubei Key Laboratory of Computational Science, Wuhan
University, Wuhan, China\printead[presep={,\ }]{e3}}

\address[D]{School of Mathematics and Statistics, Northeast Normal University,
Changchun, China\printead[presep={,\ }]{e4}}
\end{aug}

\begin{abstract}
Continuous normalizing flows (CNFs) are a generative method for learning probability distributions, which is based on ordinary differential equations. This method has shown remarkable empirical success across various applications, including large-scale image synthesis, protein structure prediction, and molecule generation. In this work, we study the theoretical properties of CNFs with linear interpolation in learning probability distributions from a finite random sample, using a flow matching objective function. We establish non-asymptotic error bounds for the distribution estimator based on CNFs, in terms of the Wasserstein-2 distance. The key assumption in our analysis is that the target distribution satisfies one of the following three conditions: it either has a bounded support, is strongly log-concave, or is a finite or infinite mixture of Gaussian distributions. We present a convergence analysis framework that encompasses the error due to velocity estimation, the discretization error, and the early stopping error. A key step in our analysis involves establishing the regularity properties of the velocity field and its estimator for CNFs constructed with linear interpolation. This necessitates the development of uniform error bounds with Lipschitz regularity control of deep ReLU networks that approximate the Lipschitz function class, which could be of independent interest. Our nonparametric convergence analysis offers theoretical guarantees for using CNFs to learn probability distributions from a finite random sample.
\end{abstract}

\begin{keyword}[class=MSC]
\kwd[Primary ]{62G05; }
\kwd[Secondary ]{68T07}
\end{keyword}

\begin{keyword}
%\kwd{Distribution estimation}
\kwd{Generative learning}
\kwd{Lipschitz regularity}
\kwd{non-asymptotic error bound}
\kwd{neural network approximation}
\kwd{velocity fields}
\end{keyword}

\end{frontmatter}

\section{Introduction}

Let $\{ \mathsf{X}_i \}_{i=1}^n$ be independent and identically distributed (i.i.d.) random variables drawn from an underlying probability distribution $\nu$ with support
in $\sR^d$. The task of generative learning is to learn
%the underlying distribution
$\nu$ from the data $\{ \mathsf{X}_i \}_{i=1}^n$ by generating new samples
\citep{sala2015}. Several generative learning methods have been developed during the recent decade, including generative adversarial networks (GANs) \citep{goodfellow2014generative}, variational autoencoders \citep{kingma2014auto}, diffusion models \citep{sohl2015deep, ho2020denoising, song2021scorebased}, and
normalizing flows \citep{tabak2013family, rezende2015variational, chen2018neural}.
Deep neural networks \citep{lecun2015deep}, as a powerful modeling tool,
have played an important role in the development of these methods. Among the generative learning methods, continuous normalizing flows (CNFs) use ordinary differential equations (ODEs) to determine a stochastic process for transporting a Gaussian distribution to the target distribution, achieving the goal of generative learning.

Simulation-free CNFs have achieved impressive empirical performance across various applications. These applications include large-scale image synthesis \citep{ma2024sit}, protein structure prediction \citep{jing2023alphafold}, and 3D molecule generation \citep{song2023equivariant}. Rectified flow \citep{liu2023flow}, a CNF model that linearly interpolates Gaussian noise and data, has been recently implemented in the large image
model Stable Diffusion 3 \citep{esser2024scaling}.
Simulation-free CNFs that use flow matching to learn probability distributions have been the focus of much recent attention \citep{albergo2023building, lipman2023flow, liu2023flow, neklyudov2023action}.
%Simulation-free CNFs with flow matching for learning probability distributions have received much %attention recently \citep{%liu2023flow,  albergo2023building, lipman2023flow, %neklyudov2023action}. tong2023conditional,
%chen2023riemannian, shaul2023kinetic, pooladian2023multisample}.

An early model of CNFs was proposed by \citet{chen2018neural}. This model is based on neural ODEs and employs a simulation-based maximum likelihood method to estimate velocity fields. However, simulation-based CNFs are computationally demanding in large-scale applications. To address the computational challenges of simulation-based CNFs, significant efforts have been made to develop simulation-free CNFs, where velocity fields can be represented in terms of conditional expectations. Noteworthy examples include the probability flows of diffusion models \citep{song2021scorebased} and denoising diffusion implicit models \citep{song2021denoising}, which are trained using a denoising score matching objective function. In contrast, the flow matching method solves a least squares problem to estimate the conditional expectation that represents the velocity field \citep{albergo2023building, lipman2023flow, liu2023flow}.

When a random sample from the target distribution is available, the process of learning simulation-free CNFs involves statistically and numerically solving a class of  ODE-based initial value problems (IVPs). Specifically, consider the IVP on the unit time interval
\begin{equation} \label{eq:ODE-IVP}
\frac{\diff X_t}{\diff t}(x) = v(t, X_t(x)), \quad
X_0(x) = x \sim \mu, \quad (t, x) \in [0, 1] \times \sR^d,
\end{equation}
where $v$ represents the velocity field, which can be estimated based on data. The solution to the IVP \paref{eq:ODE-IVP} is a family of flow maps $(X_t)_{t \in [0, 1]}$ indexed by the time variable, which generates a smoothing path between the source and target distributions.
\citet{gao2023gaussian} have studied the mathematical properties of these ODE-based IVPs under the framework of Gaussian interpolation flow. This defines a Gaussian smoothing path as a transport map that pushes forward a Gaussian distribution onto the target distribution in terms of measure transport. Simulation-free CNFs adopt a two-step ``estimation-then-simulation'' approach to learning the desired transport map based on a random sample. In the estimation stage, a deep learning model is trained to estimate the velocity field without simulating the ODE that defines the CNF. During the simulation stage, numerical solvers simulate the numerical solution of the ODE associated with the estimated velocity field, and the generated data is collected at the end time point.

In this work, we study the theoretical properties of simulation-free CNFs. We develop a general framework for error analyses of CNFs with flow matching for learning probability distributions based on a random sample. Central to simulation-free CNFs, deep ReLU networks are employed for function approximation and nonparametric estimation of the velocity field. We establish the approximation properties of deep ReLU networks with Lipschitz regularity control, which is essential for analyzing the impact of the estimated velocity field on the distribution of the data generated through the flow. In particular, it is crucial to control the Lipschitz regularity of the estimated velocity field to ensure that the associated IVP is well-posed.

\subsection{Preview of main results}

The following informal descriptions provide a preview of our main results. Our analysis is based on the key assumption that the target distribution satisfies one of the three conditions: it either has a bounded support, is strongly log-concave, or is a finite or infinite mixture of Gaussian distributions. The assumption of bounded support is practical, as all measurements can only be taken within a finite range. However, it is theoretically interesting to consider distributions with unbounded support. In such cases, we impose additional structures on the distributions, including mixtures of Gaussians and strong log-concavity. Mixtures of Gaussian distributions are multimodal and have a universal approximation capacity for smooth probability density functions. The assumption of strong log-concavity is crucial for the convergence analyses of Langevin Monte Carlo and Metropolis-Hastings algorithms when sampling from unnormalized distributions \citep{dalalyan2017theoretical, durmus2017nonasymptotic, dwivedi2019log}.

Our first main result concerns the regularity of the velocity fields of the CNFs constructed with linear interpolation.  For a detailed definition of such CNFs, see Lemma \ref{lm:gif} and equation (\ref{linF}).

\begin{theorem}[Informal] \label{thm:informal-thm1}
Assume that the target distribution either has a bounded  support, is strongly log-concave, or is a mixture of Gaussians. Let $0 < \underline{t} \ll 1$. The velocity fields of the CNFs with linear interpolation have the following regularity properties:
%conditions:
\begin{itemize}
    \item[(i)] The velocity field $v^*$ is Lipschitz continuous in the space variable $x$ for $(t, x) \in [0, 1] \times \sR^d$, where the Lipschitz constant is uniformly bounded;
    \item[(ii)] The velocity field $v^*$ is Lipschitz continuous in the time variable $t$ for $(t, x) \in [0, 1-\underline{t}] \times \sR^d$, where the Lipschitz constant grows at the order of $\gO (\underline{t}^{-2})$ as $\underline{t} \downarrow 0$;
    \item[(iii)] The velocity field $v^*$ spatially has a linear growth on $\sR^d$ for each $t \in [0, 1]$.
\end{itemize}
\end{theorem}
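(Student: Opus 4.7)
The plan is to work with the closed-form representation of the velocity field for linear interpolation. Writing $I_t = (1-t) Z + t Y$ with $Z \sim \mu = \gN(0, I_d)$ and $Y \sim \nu$ independent, Lemma \ref{lm:gif} and \paref{linF} give
\begin{equation*}
v^*(t, x) \;=\; \E[Y - Z \mid I_t = x],
\end{equation*}
and the conditional density of $Y$ given $I_t = x$ is the explicit Gaussian-tilted law $p(y \mid I_t = x) \propto \nu(y) \exp(-|x - t y|^2 / (2 (1-t)^2))$. Setting $m(t, x) := \E[Y \mid I_t = x]$ and using the tower identity $(1-t)\,\E[Z \mid I_t = x] + t\,m(t, x) = x$, I would first establish the two central formulas
\begin{equation*}
v^*(t, x) \;=\; \frac{m(t, x) - x}{1 - t},\qquad \partial_x v^*(t, x) \;=\; \frac{\partial_x m(t, x) - I_d}{1-t},
\end{equation*}
together with dual versions obtained by swapping $Y \leftrightarrow Z$ and $t \leftrightarrow 1-t$. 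A direct differentiation of the explicit conditional density then yields the key conditional-covariance identity $\partial_x m(t, x) = (t/(1-t)^2)\,\Cov(Y \mid I_t = x)$, so that every regularity claim is reduced to a bound on a conditional covariance.

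For part (i), the goal is to prove $\partial_x m(t, x) - I_d = \gO(1-t)$ uniformly in $x$ as $t \uparrow 1$, together with the dual statement as $t \downarrow 0$; combined with the formula for $\partial_x v^*$ this yields a uniform Lipschitz constant on all of $[0,1] \times \sR^d$. Under bounded support, $\Cov(Y \mid I_t = x) \preceq R^2 I_d$ is immediate, and one interpolates the two dual formulas on $[0, 1/2]$ and $[1/2, 1]$. Under strong log-concavity with $-\nabla^2 \log \nu \succeq \alpha I_d$, the conditional density above has log-Hessian at least $(\alpha + t^2/(1-t)^2) I_d$, so the Brascamp--Lieb inequality delivers $\Cov(Y \mid I_t = x) \preceq (1-t)^2 / (\alpha (1-t)^2 + t^2)\,I_d$; a first-order Taylor expansion in $1-t$ yields the desired bound. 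In the Gaussian-mixture case, Gaussian conjugacy makes $Y \mid I_t = x$ itself a mixture of Gaussians with explicit component means, covariances, and mixing weights, and a direct computation controls the conditional covariance by the component covariances of $\nu$ plus a between-component spread term.

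For part (ii), differentiating the explicit conditional density in $t$ produces conditional expectations of quantities of the form $f(Y, x)\,(x - t Y)/(1-t)^2$ and $f(Y, x)\,|x - t Y|^2/(1-t)^3$. Under each of the three assumptions these conditional moments are finite with at most polynomial growth in $|x|$, and the worst $t$-singularity is the $(1-t)^{-2}$ factor inherited from the conditional log-density; combining with $v^*(t, x) = (m(t, x) - x)/(1-t)$ gives $|\partial_t v^*(t, x)| = \gO((1-t)^{-2})$ uniformly on $[0, 1 - \underline{t}] \times \sR^d$, and integration in $t$ converts this into the stated Lipschitz constant of order $\underline{t}^{-2}$. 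For part (iii), the formula $v^*(t, x) = (m(t, x) - x)/(1 - t)$ on $[0, 1 - \underline{t}]$ together with its dual on $[\underline{t}, 1]$ reduces linear growth of $v^*$ to linear growth of the posterior mean $m$, which follows in each scenario from direct boundedness (bounded support), a standard posterior-mean bound (strong log-concavity), or the explicit mixture representation.

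The step I expect to be the main obstacle is part (i) under the strongly log-concave assumption. Brascamp--Lieb only gives an upper bound on $\Cov(Y \mid I_t = x)$, so controlling $(\partial_x m - I_d)/(1-t)$ in operator norm also demands a matching \textit{lower} bound on $\partial_x m$. I would obtain it by combining the upper-bound analysis above with the linear identity $(1-t)\,\partial_x \E[Z \mid I_t = x] + t\,\partial_x m(t, x) = I_d$ and a corresponding Brascamp--Lieb bound on $\Cov(Z \mid I_t = x)$, where the log-concavity of the Gaussian $Z$ yields the required control. The Gaussian-mixture case, in which $\nu$ need not be log-concave, is handled by the conjugacy-based explicit calculation instead of Brascamp--Lieb.
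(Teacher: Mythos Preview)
Your overall strategy---reducing everything to conditional-covariance bounds via $v^*(t,x)=(m(t,x)-x)/(1-t)$ and $\partial_x m=(t/(1-t)^2)\Cov(Y\mid I_t=x)$, then invoking Brascamp--Lieb under log-concavity, direct bounds under bounded support, and Gaussian conjugacy for mixtures---is exactly how the paper proceeds (Lemmas~\ref{lm:vf-grad-x}, \ref{lm:time-derivative}, \ref{lm:jacob-bd}). The gap is in your proposed fix for the two-sided control in part~(i). You correctly identify that Brascamp--Lieb only upper-bounds $\Cov(Y\mid I_t=x)$, so near $t=1$ one also needs a \emph{lower} bound $\Cov(Y\mid I_t=x)\succeq c(1-t)^2 I_d$ to keep $(\partial_x m-I_d)/(1-t)$ bounded below. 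But your dual Brascamp--Lieb on $\Cov(Z\mid I_t=x)$ does not deliver this: since $Z=(x-tY)/(1-t)$ one computes $\partial_x\E[Z\mid I_t=x]=(I_d-\Cov(Z\mid I_t=x))/(1-t)$, so an \emph{upper} bound on $\Cov(Z)$ yields only a \emph{lower} bound on $\partial_x\E[Z]$ and hence, via your linear identity, another \emph{upper} bound on $\partial_x m$---the wrong direction. The Hatsell--Nolte proportionality $\partial_x m\propto\Cov(Y)$ hinges on $I_t\mid Y$ being Gaussian; there is no mirror identity for $Z$ because $I_t\mid Z$ carries the non-Gaussian law of $tY$. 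The paper instead uses the $\beta$-semi-log-convexity built into Assumption~\ref{assump:target} and a Cram\'er--Rao-type inequality to obtain $\Cov(Y\mid I_t=x)\succeq(1-t)^2/(\beta(1-t)^2+t^2)\,I_d$ (Lemma~\ref{lm:jacob-bd}(2)); without this ingredient the uniform spatial Lipschitz bound fails.

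For part~(ii) your sketch is headed the right way but undercounts the singularity: the raw $t$-score of the conditional density already contains a $|x-tY|^2/(1-t)^3$ term, and after the extra $1/(1-t)$ from $v^*=(m-x)/(1-t)$ the formula for $\partial_t v^*$ (Lemma~\ref{lm:time-derivative}) has $(1-t)^{-4}$ prefactors multiplying $\Cov(Y\mid I_t)$ and the third-moment combination $M_3-M_2M_1$. Reaching the claimed $\gO(\underline{t}^{-2})$ therefore requires showing \emph{both} of these are $\lesssim(1-t)^2$; the paper obtains the first from Brascamp--Lieb and the second via Cauchy--Schwarz plus a \emph{second} Brascamp--Lieb application to $\Var(Y^\top Y\mid I_t=x)$ (Lemmas~\ref{lm:moment-bd-slc} and~\ref{lm:moment-bd-smooth}). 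Your phrase ``these conditional moments are finite with at most polynomial growth in $|x|$'' does not capture this $(1-t)^2$ decay, which is the crux.
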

\begin{remark}
The regularity properties of the velocity fields stated in Theorem \ref{thm:informal-thm1} are derived from the assumptions made about the underlying target distribution. These properties are essential for studying the distributions generated by the corresponding CNFs.
\end{remark}

\begin{theorem}[Informal]
Suppose that the target distribution has a bounded support, or is strongly log-concave, or is a mixture of Gaussians.
Let $n$ be the sample size and $0 < \underline{t} \ll 1$ satisfying
$\underline{t} \asymp n^{-1/(d+5)}$.
By properly setting the deep ReLU network structure and the forward Euler discretization step sizes,
the distribution estimation error of the CNFs learned with linear interpolation and flow matching is evaluated by
\begin{align}
\label{eb0}
    \E \mathcal{W}_2 (\hat{\nu}_{1-\underline{t}}, \nu) = \widetilde{\mathcal{O}} (n^{-\frac{1}{d+5}}),
\end{align}
where the expectation is taken with respect to all random samples, $\hat{\nu}_{1-\underline{t}}$ is the law of generated data, $\nu$ is the law of target data, $\mathcal{W}_2 (\cdot, \cdot)$ is the Wasserstein-2 distance, and a polylogarithmic prefactor in $n$ is omitted.
\end{theorem}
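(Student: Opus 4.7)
The plan is to decompose the terminal-time Wasserstein error into three physically meaningful pieces, control each separately, and then choose the early stopping time $\underline{t}$ so as to balance them. Introduce the laws $\nu_t$ (the continuous true flow driven by $v^*$), $\tilde{\nu}_t$ (the continuous flow driven by the estimator $\hat{v}$), and $\hat{\nu}_t$ (the Euler discretization driven by $\hat{v}$), so that
\[
\mathcal{W}_2(\hat{\nu}_{1-\underline{t}}, \nu) \le \mathcal{W}_2(\hat{\nu}_{1-\underline{t}}, \tilde{\nu}_{1-\underline{t}}) + \mathcal{W}_2(\tilde{\nu}_{1-\underline{t}}, \nu_{1-\underline{t}}) + \mathcal{W}_2(\nu_{1-\underline{t}}, \nu).
\]
The three terms correspond, respectively, to the discretization error of the Euler scheme, the velocity estimation error, and the early stopping bias.

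The early stopping error is immediate: the linear interpolation $X_t = (1-t) Z + t Y$ with $Z \sim \mu$ and $Y \sim \nu$ supplies a coupling $(X_{1-\underline{t}}, Y)$, giving $\mathcal{W}_2^2(\nu_{1-\underline{t}}, \nu) \le \underline{t}^2 \, \E\|Z - Y\|^2 \lesssim \underline{t}^2$, where the second moment is finite under any of the three structural conditions on $\nu$. The discretization error is handled by coupling the continuous and discrete flows driven by $\hat{v}$ from a common initial point and applying Grönwall's inequality; the local truncation error of forward Euler is governed by the joint $(t, x)$-Lipschitz regularity of $\hat{v}$, which inherits the spatial Lipschitz constant $\mathcal{O}(1)$ and the temporal Lipschitz constant $\mathcal{O}(\underline{t}^{-2})$ from Theorem \ref{thm:informal-thm1}. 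Choosing a non-uniform step size that refines as $t \uparrow 1-\underline{t}$ absorbs the temporal blowup and renders this contribution lower order.

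The core of the analysis is the velocity estimation term. Coupling the two continuous flows through identical initial conditions and invoking Grönwall with the spatial Lipschitz bound of Theorem \ref{thm:informal-thm1}(i) yields
\[
\mathcal{W}_2^2(\tilde{\nu}_{1-\underline{t}}, \nu_{1-\underline{t}}) \lesssim \int_0^{1-\underline{t}} \E \|\hat{v}(s, X_s) - v^*(s, X_s)\|^2 \, \diff s,
\]
reducing matters to the excess flow-matching risk of $\hat{v}$. I would then apply the standard ERM decomposition of this risk into an approximation error $\inf_{v \in \mathcal{V}_{\mathrm{NN}}} \|v - v^*\|_{L^2}^2$ and a stochastic error. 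The approximation error is handled by truncating the spatial domain to a ball $B_R$ of radius $R \asymp \sqrt{\log n}$ -- justified by sub-Gaussian tails of the flow, which follow from the three structural assumptions combined with the spatial linear growth in Theorem \ref{thm:informal-thm1}(iii) -- and invoking the Lipschitz-controlled deep ReLU approximation result advertised in the introduction on $[0, 1-\underline{t}] \times B_R$, where $v^*$ is jointly Lipschitz with constant $\mathcal{O}(\underline{t}^{-2})$. The stochastic error is controlled by a covering number / Rademacher complexity argument for the same Lipschitz-controlled network class evaluated along the paired $(Z_i, X_i)$ flow-matching samples.

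Balancing the early stopping $\mathcal{O}(\underline{t})$ against the estimation term, whose scale carries a power of $\underline{t}^{-1}$ from the blowup of the temporal Lipschitz constant of $v^*$ combined with the $(d+1)$-dimensional approximation--complexity trade-off of the ReLU network class (and logarithmic factors from the truncation radius), produces the choice $\underline{t} \asymp n^{-1/(d+5)}$ and the overall rate $\widetilde{\mathcal{O}}(n^{-1/(d+5)})$. I expect the main obstacle to be precisely this three-way balancing: one must simultaneously (a) enlarge the network's Lipschitz budget enough to approximate $v^*$ whose time-Lipschitz constant grows like $\underline{t}^{-2}$, (b) keep that budget small enough that the ERM's stochastic fluctuations remain controlled, and (c) verify that the Grönwall constants propagating the $L^2$ velocity error to $\mathcal{W}_2$ along the flow do not introduce extra $\underline{t}^{-1}$ factors. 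It is the interplay of these constraints that forces the exponent $d+5$ rather than the naive $d+3$ one would expect from vanilla Lipschitz regression in $d+1$ variables.
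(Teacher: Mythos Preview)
Your proposal follows essentially the same three-term decomposition and balancing argument as the paper, and arrives at the correct rate for the correct reason (the $\underline{t}^{-2}$ blowup of the time-Lipschitz constant degrading $d+3$ to $d+5$). Two points deserve sharpening, however.

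First, in the Gr\"onwall step for the velocity estimation error you invoke the spatial Lipschitz bound of $v^*$ from Theorem~\ref{thm:informal-thm1}(i). If you split $\hat{v}(t,\tilde{X}_t)-v^*(t,X_t)$ using the Lipschitz regularity of $v^*$, the residual term is $\|\hat{v}(t,\tilde{X}_t)-v^*(t,\tilde{X}_t)\|^2$, evaluated along the \emph{estimated} flow $\tilde{X}_t$, whose law depends on the random sample and is not what the flow-matching risk controls. The paper instead applies Gr\"onwall with the Lipschitz constant $\mathtt{L}_x$ of the \emph{network} $\hat{v}$, which places the residual at $X_t\sim p_t$ and ties it directly to the excess flow-matching risk. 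This is precisely why the Lipschitz-controlled approximation result is indispensable here, not merely for well-posedness of the neural ODE.

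Second, and relatedly, the paper's network construction yields $\mathtt{L}_x\asymp A$ (the truncation radius), so the Gr\"onwall prefactor is $e^{\mathtt{L}_x}\asymp e^{cA}$. Your choice $A\asymp\sqrt{\log n}$ would make this factor $\exp(c\sqrt{\log n})$, which is not polylogarithmic; the paper takes $A\asymp\log\log n$ for exactly this reason. The discretization is handled in the paper with a \emph{uniform} step size $\Upsilon\lesssim n^{-3/(d+5)}$ (your non-uniform refinement is also fine and is noted as a remark). With these adjustments your outline coincides with the paper's proof.
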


\begin{remark}
\label{rem-pre-main}
As can be seen from Theorem \ref{thm:informal-thm1} or Theorem \ref{thm:vf-regularity} below, the velocity fields associated with the CNFs based on linear interpolation may be singular in the time variable at $t=1$ due to the exploding Lipschitz constant bound.
This singularity affects the convergence rate in (\ref{eb0}). Without the time singularity of the velocity field, the distribution estimation error would be bounded by $\widetilde{\gO} (n^{-1/(d+3)})$. The time singularity leads to a necessary trade-off regarding $\underline{t}$ between the error due to velocity estimation and the early stopping error. The trade-off reduces the nonparametric convergence rate of the distribution estimator to $\widetilde{\gO}(n^{-1/(d+5)})$.
See also Remark \ref{rem-rate} for additional comments and explanation.
\end{remark}

\subsection{Our contributions}

We present a comprehensive error analysis of simulation-free CNFs with linear interpolation, trained using flow matching. To the best of our knowledge, this is the first analysis of its kind in the context of simulation-free CNFs. Our results are based solely on assumptions about the target distribution, and all regularity conditions are rigorously derived from these assumptions. Our analysis accurately reflects the practical computational implementation of flow matching for learning simulation-free CNFs.
%and it considers all sources of errors in the training process. These errors include the error of the %estimated velocity field, the stochastic error of the flow matching objective, the approximation error %of the flow matching objective, and the discretization error of the forward Euler solver.
 Although our focus is on CNFs based on linear interpolation due to their widespread use and for the sake of simplicity, our analytical framework can be applied to CNFs based on other types of interpolation as well.

We summarize our main contributions into four points:
\begin{itemize}
    \item[(1)] We establish non-asymptotic error bounds for distribution estimators based on simulation-free CNFs with linear interpolation and flow matching. These bounds apply to distributions that satisfy one of the following conditions: (a) they have a bounded support, (b) they are strongly log-concave, or (c) they are mixtures of Gaussians. We present a convergence analysis framework that encompasses the error due to velocity estimation, the discretization error, and the early stopping error. We show that the nonparametric convergence rate of the distribution estimator is $\widetilde{\gO}(n^{-1/(d+5)})$ up to a polylogarithmic prefactor in the sample size $n$.

    \item[(2)]
    We derive regularity properties for the velocity field of the CNF with linear interpolation. We demonstrate the Lipschitz regularity properties of the velocity field in both the space and time variables and establish bounds for the Lipschitz constants. We also show that the velocity field grows at most linearly with respect to the space variable.
    %These regularity analyses could be of independent interest.

    \item[(3)] We establish error bounds for deep ReLU network approximation within the Lipschitz function class, demonstrating that the constructed approximation function maintains Lipschitz regularity. We also derive time-space approximation bounds for approximating the velocity field in both the time and space variables. These time-space approximation bounds are novel in three respects. Firstly, the approximation bounds are derived in terms of the $L^{\infty}$ norm. Secondly, we demonstrate that the constructed time-space approximation function is Lipschitz in both the time and space variables, with Lipschitz constants of the same order as those of the target function. Lastly, the time-space approximation function can exhibit different Lipschitz regularity in the time and space variables. These neural network approximation results, which maintain Lipschitz regularity, could be of independent interest.

    \item[(4)] We establish the statistical consistency of the flow matching estimator for the velocity field. By rigorously bounding the stochastic and approximation errors, we show that the convergence rate of the flow matching estimator coincides with the minimax optimal rate of nonparametric estimation of regression functions belonging to the Sobolev space $W^{1, \infty}([0, 1]^d)$.
\end{itemize}

The remainder of this paper is organized as follows. In Section \ref{sec:prelim}, we present the preliminary materials required for subsequent sections. In Section \ref{sec:simu-free-cnf}, we describe simulation-free CNFs and outline the steps for using these CNFs for generative learning. In Section \ref{sec:convergence-cnf}, we derive our main result concerning the error bounds for the distribution estimator based on CNFs with linear interpolation. In Section \ref{sec:error-fm}, we first present some useful regularity properties of the velocity field  and establish error bounds for the estimated velocity fields through flow matching. Section \ref{sec:related-work} contains discussions on related works in the existing literature. We conclude with remarks and discussions on potential future problems in Section \ref{sec:conclusion}.

\section{Preliminaries} \label{sec:prelim}

In this section, we summarize several useful definitions, assumptions, the background of CNFs and deep ReLU networks.

\subsection*{Notation}

Let $\mathbb{N}_0$ and $\mathbb{N}$ denote the set of non-negative integers and the set of positive integers, respectively, that is, $\mathbb{N} = \{1, 2, 3, \cdots \}$ and $\mathbb{N}_0 = \mathbb{N} \cup \{0\}$. We use $\Vert x \Vert_p$ to denote the $\ell_p$-norm of a vector $x \in \sR^d$ for $p \in [1, \infty]$. Let $\mathrm{I}_d$ denote the $d \times d$ identity matrix.
For $\Omega_1 \subseteq \sR^k, \Omega_2 \subseteq \sR^d, n \in \mathbb{N}$, we denote by $C^n(\Omega_1; \Omega_2)$ the space of continuous functions $f: \Omega_1 \to \Omega_2$ that are $n$ times differentiable and whose partial derivatives of order $n$ are continuous.
For any $f(x) \in C^2(\sR^d; \sR)$, let $\nabla_x f$ and $\dot{f}$ denote its gradient, and we use $\nabla^2_x f$ to denote its Hessian. For simplicity, let $\gamma_{d, \sigma^2} := N(0, \sigma^2 \mathrm{I}_d)$ and $\gamma_d := N(0, \mathrm{I}_d)$. Let $f: \sR^k \to \sR^d$ be a measurable mapping and $\mu$ be a probability distribution on $\sR^k$. The push-forward distribution $f_{\#} \mu$ of a measurable set $A$ is defined as $f_{\#} \mu := \mu(f^{-1} (A))$.
We use $X \lesssim Y$ and $Y \gtrsim X$ to denote $X \le C Y$ for some constant $C > 0$.
The notation $X \asymp Y$ indicates that $X \lesssim Y \lesssim X$.
For a set $\Omega \subset \sR^d$, let $\Omega^c := \{x \in \sR^d: x \notin \Omega \}$, and we use $\Id_{\Omega}: \sR^d \to \{0, 1\}$ to denote the indicator function of $\Omega$.

\subsection{Definitions}
We list a few useful definitions in this subsection.

\begin{definition} [\citealp{cattiaux2014semi}]
\label{def:semi-log-concave}
A probability distribution $\mu (\diff x) = \exp(-U) \diff x$ is $\kappa$-semi-log-concave for some $\kappa \in \sR$ if its support $\Omega \subseteq \sR^d$ is convex and its potential function $U \in C^2(\Omega; \sR)$ satisfies
\begin{equation*}
\nabla^2_x U(x) \succeq \kappa \mathrm{I}_d, \quad \forall x \in \Omega.
\end{equation*}
Note that here $\kappa$ is allowed to be negative.
\end{definition}

\begin{definition} [\citealp{eldan2018regularization}]
\label{def:semi-log-convex}
A probability distribution $\mu (\diff x) = \exp(-U) \diff x$ is $\beta$-semi-log-convex for some $\beta > 0$ if its support $\Omega \subseteq \sR^d$ is convex and its potential function $U \in C^2(\Omega; \sR)$ satisfies
\begin{equation*}
\nabla^2_x U(x) \preceq \beta \mathrm{I}_d, \quad \forall x \in \Omega.
\end{equation*}
\end{definition}

The rectified linear unit (ReLU) activation function is defined as $\varrho(x) := \max(0, x)$ for $x \in \sR,$ which also operates coordinate-wise on elements of $x \in \sR^d$. For $k \ge 2$, the ReLU$^k$ activation function is given by $\varrho_k(x) := (\max(0, x))^k$.

\begin{definition} [Deep ReLU networks]
    Deep ReLU networks stand for a class of feed-forward artiﬁcial neural networks defined with ReLU activation functions.
    The function $f_{\theta}(x): \sR^{\mathtt{k}} \to \sR^{\mathtt{d}}$ implemented by a deep ReLU network with parameter $\theta$ is expressed as composition of a sequence of functions
    \begin{align*}
        f_{\theta}(x) := l_{\mathtt{D}} \circ \varrho \circ l_{\mathtt{D}-1} \circ \varrho \circ \cdots \circ l_1 \circ \varrho \circ l_0(x)
    \end{align*}
    for any $x \in \sR^{\mathtt{k}}$, where $\varrho(x)$ is the ReLU activation function and the depth $\mathtt{D}$ is the number of hidden layers.
    For $i = 0, 1, \cdots, \mathtt{D}$, the $i$-th layer is represented by $l_i(x) := A_i x + b_i$, where $A_i \in \sR^{d_{i+1} \times d_i}$ is the weight matrix, $b_i \in \sR^{d_{i+1}}$ is the bias vector, and $d_i$ is the width of the $i$-th layer.
    The network $f_{\theta}$ contains $\mathtt{D}+1$ layers in all.
    We use a $(\mathtt{D} + 1)$-dimension vector $(d_0, d_1, \cdots, d_{\mathtt{D}})^{\top}$ to describe the width of each layer.
    In particular, $d_0 = \mathtt{k}$ is the dimension of the domain and $d_{\mathtt{D}} = \mathtt{d}$ is the dimension of the codomain.
    The width $\mathtt{W}$ is defined as the maximum width of hidden layers, that is, $\mathtt{W} = \max \left\{d_1, d_2, \cdots, d_{\mathtt{D}} \right\}$.
    The size $\mathtt{S}$ denotes the total number of nonzero parameters in the network $f_{\theta}$.
    The bound $\mathtt{B}$ denotes the $L^{\infty}$ bound of $f_{\theta}$, that is, $\sup_{x \in \sR^{\mathtt{k}}} \Vert f_{\theta}(x) \Vert_{\infty} \le \mathtt{B}$.
    We denote the function class $\{ f_{\theta}: \sR^{\mathtt{k}} \to \sR^{\mathtt{d}} \}$ implemented by deep ReLU networks with size $\mathtt{S}$, width $\mathtt{W}$, depth $\mathtt{D}$, and bound $\mathtt{B}$ as $\mathcal{NN}(\mathtt{S}, \mathtt{W}, \mathtt{D}, \mathtt{B}, \mathtt{k}, \mathtt{d})$.
\end{definition}

\begin{definition}[Wasserstein-$2$ distance]
    The Wasserstein-$2$ distance between two probability distributions on $\sR^d$ is the $L^2$ optimal transportation cost defined by
    \begin{align*}
        \mathcal{W}_2 (\mu, \nu) := \inf_{\pi \in \Pi(\mu, \nu)} \left( \E_{(\mathsf{X}, \mathsf{Y}) \sim \pi} \Vert \mathsf{X} - \mathsf{Y} \Vert_2^2 \right)^{1/2},
    \end{align*}
    where $\Pi(\mu, \nu)$ denotes the set of all joint probability distributions $\pi$ whose marginals are respectively $\mu$ and $\nu$. A distribution $\pi \in \Pi(\mu, \nu)$ is called a coupling of $\mu$ and $\nu$.
\end{definition}

\subsection{Assumptions}
We state the assumptions on the target distribution $\nu$ on $\sR^d$ to support the main results.
Below we use $\mathrm{supp}(\nu)$ to denote the support of a probability distribution $\nu$. We also use $\mathrm{diam}(\Omega)$ to represent the diameter of a set $\Omega \subset \sR^d$.

\begin{assumption} \label{assump:well-defined}
    The probability distribution $\nu$ is absolutely continuous with respect to the Lebesgue measure and has a zero mean.
\end{assumption}

\begin{assumption} \label{assump:target}
Let $D := (1 / \sqrt{2}) \mathrm{diam} (\mathrm{supp}(\nu))$.
    The probability distribution $\nu$ satisfies any one of the following conditions:
    \begin{itemize}
        \item[(i)]    $\nu$ is $\kappa$-semi-log-concave for some $\kappa > 0$ with $D \in (0, \infty]$ and $\beta$-semi-log-convex for some $\beta >0$;
        \item[(ii)]   $\nu$ is $\kappa$-semi-log-concave for some $\kappa \le 0$ with $D \in (0, \infty)$ and $\beta$-semi-log-convex for some $\beta >0$;
        \item[(iii)]  $\nu = \gamma_{d, \sigma^2} * \rho$ where $\rho$ is a probability distribution supported on a Euclidean ball of radius $R$ on $\mathbb R^d$.
    \end{itemize}
\end{assumption}

\begin{remark} [Distribution classes]
    Assumption \ref{assump:target} covers three classes of distributions that are of great interest in the literature on generative learning and sampling. Let us briefly remark on the properties of the distributions:
    \begin{itemize}
        \item[(1)] Distributions with a bounded support are covered in Assumption \ref{assump:target}-(ii). The boundedness provides a great source of regularity, which frees us from the requirement that the Hessian matrix of the potential function is positively lower bounded. A negative lower bound of the Hessian matrix 
            allows the distribution has a multimodal landscape.
        \item[(2)] Strongly log-concave distributions are considered in Assumption \ref{assump:target}-(i). For distributions in this class, the Hessian matrix of the potential function is both positively lower bounded and positively upper bounded.
        \item[(3)] Mixtures of Gaussians are considered in Assumption \ref{assump:target}-(iii). A mixture of Gaussians is notably a multimodal probability distribution, which is neither strongly log-concave nor bounded.
    \end{itemize}
\end{remark}

\begin{remark} [Score Lipschitzness]
    For any $\beta > 0, \kappa \in \sR,$ and $\kappa \le \beta$, the $\beta$-semi-log-convexity and $\kappa$-semi-log-concavity measure the Lipschitzness of the smooth score function $S(x) := \nabla_x \log \frac{\diff \nu}{\diff x}(x)$ in the sense that $\kappa \mathrm{I}_d \preceq \nabla_x S(x) \preceq \beta \mathrm{I}_d$.
    The Lipschitzness of the score function for a probability distribution is a common assumption in the literature studying convergence properties of Langevin Monte Carlo algorithms and score-based diffusion models (cf. \citet{dalalyan2017theoretical, durmus2017nonasymptotic, chen2023sampling, chen2023improved}).
\end{remark}

\begin{remark} [Sub-Gaussianity] \label{rm:sub-gaussian}
     The probability distribution $\nu$ considered in Assumption \ref{assump:target} satisfies the log-Sobolev inequality with a finite constant $C_{\mathrm{LSI}}$ depending on $\kappa > 0$ or $(\kappa, D)$ or $(\sigma, R)$ \citep{mikulincer2021brownian, dai2023lipschitz}. Let $\mathsf{V} \sim \nu$ and $\mathsf{V} = [\mathsf{V}_1, \mathsf{V}_2, \cdots, \mathsf{V}_d]^{\top}$. Then a standard Herbst's argument
     shows that $\mathsf{V}_i$ is sub-Gaussian,  and its sub-Gaussian norm $\Vert \mathsf{V}_i \Vert_{\psi_2} \asymp \sqrt{C_{\mathrm{LSI}}}$ for $1 \le i \le d$ owing to \citet[Theorem 5.3]{ledoux2001concentration}.
     In addition, a sub-Gaussian random variable has a finite fourth moment.
\end{remark}

\section{Simulation-free continuous normalizing flows}
\label{sec:simu-free-cnf}

The basic idea of simulation-free CNFs is to construct an ODE-based IVP with a tractable velocity field. The flow map of the IVP pushes forward a simple source distribution onto the underlying target distribution. It is essential to be able to efficiently estimate the velocity field with a random sample from the target distribution. Since CNFs use ODEs to model a target distribution, the corresponding velocity fields depend on the target distribution and may have a complex, unknown structure. Therefore, it is natural to employ nonparametric methods with deep neural networks to estimate velocity fields.

In Table \ref{tab:steps-CNF}, we summarize the four steps of
using simulation-free CNFs for generative learning, and we discuss each step in detail below.

\begin{table}[ht]
    \centering
    \caption{Four steps to conduct generative learning via simulation-free CNFs.}
    \label{tab:steps-CNF}
    \begin{tabularx}{0.9\textwidth}{|>{\setlength\hsize{\hsize}\setlength\linewidth{\hsize}}X|}
    \hline
    \vspace{0.006 in}
    \textbf{Task}: Generating samples from a distribution approximating the target distribution $\nu$. \\

    \begin{itemize}
        \vspace{-0.8em}
        \item[\textbf{Step 1.}]
        Construct a simulation-free CNF defined by the IVP such that $\nu = {X_T}_{\#} \mu$:
        \begin{align*}
             \frac{\diff X_t}{\diff t}(x)  = v(t, X_t(x)), \quad X_0(x) = x \sim \mu, \quad (t, x) \in [0, T] \times \sR^d,
        \end{align*}
        where $T > 0, v: [0, T] \times \sR^d \to \sR^d$ is the velocity field, and $\mu$ is a simple source distribution.
        \vspace{0.5em}
        \item[\textbf{Step 2.}] Estimate the velocity field $v(t, x)$ with a deep neural network $\hat{v}(t, x)$.
        \vspace{0.5em}
        \item[\textbf{Step 3.}] Use a proper numerical solver to solve the IVP associated with $\hat{v}(t, x)$, and return the numerical solution $\hat{Y}_T(x)$ at time $t = T$:
        \begin{align*}
             \frac{\diff Y_t}{\diff t}(x) = \hat{v}(t, Y_t(x)), \quad Y_0(x) = x \sim \mu, \quad (t, x) \in [0, T] \times \sR^d.
        \end{align*}
        \item[\textbf{Step 4.}]
        Generate samples from ${\hat{Y}_T{}}_{\#} \mu$, which approximates $\nu = {X_T}_{\#} \mu$.
        \vspace{-0.8em}
    \end{itemize} \\
    \hline
    \end{tabularx}
\end{table}

\subsection{Construction of simulation-free CNFs}
Based on the concept of stochastic interpolation, \citet{liu2023flow} and \citet{albergo2023building} proposed a new class of simulation-free CNFs. \citet{gao2023gaussian}
analyzed probability flows of diffusion models and denoising diffusion implicit models
in the framework of stochastic interpolation.
Let $a_t: [0, 1] \to \sR_+, b_t: [0, 1] \to \sR_+$ satisfy the following conditions:
\begin{align*}
\begin{aligned}
& \dot{a}_t \le 0, \quad \dot{b}_t \ge 0, \quad a_0 > 0, \quad b_0 \ge 0, \quad a_1 = 0, \quad b_1 = 1,\\
& a_t > 0 \ \ \text{for any $t \in (0,1)$}, \quad b_t >0 \ \ \text{for any $t \in (0,1)$}, \\
& a_t, b_t \in C^2([0,1)), \quad a_t^2 \in C^1([0, 1]), \quad b_t \in C^1([0, 1]).
\end{aligned}
\end{align*}
Then a general class of simulation-free CNFs is constructed in Lemma \ref{lm:gif} and satisfies the requirements of Step 1 in Table \ref{tab:steps-CNF}.
For simplicity, we use $\mathrm{Law}(\mathsf{X})$ to denote the distribution of a random variable $\mathsf{X}$.

\begin{lemma} [Theorem 38 in \citet{gao2023gaussian}] \label{lm:gif}
    Suppose that a probability distribution $\nu$ satisfies Assumptions \ref{assump:well-defined} and \ref{assump:target}.
    Let $\mu = \mathrm{Law}(a_0 \mathsf{Z} + b_0 \mathsf{X}_1)$ with $\mathsf{Z} \sim \gamma_d, \mathsf{X}_1 \sim \nu$, $\mathsf{X}_t := a_t \mathsf{Z} + b_t \mathsf{X}_1$ for any $t \in (0, 1).$ Let the velocity field $v(t, x)$ be defined by \begin{align}
        v(t, x) &:= \E[ \dot{a}_t \mathsf{Z} + \dot{b}_t \mathsf{X}_1 | \mathsf{X}_t = x], \quad (t, x) \in (0, 1) \times \sR^d, \label{eq:vf-expect} \\
        v(0, x) &:= \lim_{t \downarrow 0} v(t, x), \quad v(1, x) := \lim_{t \uparrow 1} v(t, x), ~~ x \in \sR^d \label{eq:vf-boundary}.
    \end{align}
    Then there exists a unique solution $(X_t)_{t \in [0, 1]}$ to the IVP
    \begin{equation} \label{eq:IVP-CNF}
    \frac{\diff X_t}{\diff t}(x) = v(t, X_t(x)), \quad X_0(x) = x \sim \mu, \quad (t, x) \in [0, 1] \times \sR^d.
    \end{equation}
    Moreover, the push-forward distribution satisfies ${X_t}_{\#} \mu = \mathrm{Law}(a_t \mathsf{Z} + b_t \mathsf{X}_1)$ with $\mathsf{Z} \sim \gamma_d, \mathsf{X}_1 \sim \nu$.
\end{lemma}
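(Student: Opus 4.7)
The plan is to verify three ingredients that together yield existence, uniqueness, and the transport property of the flow: (i) the marginal laws $\mathrm{Law}(\mathsf{X}_t)$ satisfy a continuity equation with the prescribed $v$; (ii) $v(t,\cdot)$ is spatially Lipschitz on compact subintervals of $(0,1)$ with linear growth in $x$, so the IVP \paref{eq:IVP-CNF} is well-posed by Cauchy-Lipschitz on $(0,1)$; (iii) the boundary limits in \paref{eq:vf-boundary} exist and the flow extends continuously to the closed interval $[0,1]$.

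\textbf{Marginal density and continuity equation.} For $t\in(0,1)$, since $a_t>0$ and $\mathsf{Z}\sim\gamma_d$ is independent of $\mathsf{X}_1\sim\nu$, the density of $\mathsf{X}_t=a_t\mathsf{Z}+b_t\mathsf{X}_1$ is the Gaussian convolution
\[
\rho_t(x) = \int_{\sR^d}(2\pi a_t^2)^{-d/2}\exp\!\Big(-\tfrac{\|x-b_t y\|_2^2}{2a_t^2}\Big)\,\nu(\diff y).
\]
Integration by parts against the Gaussian kernel yields the Tweedie-type identities $\E[\mathsf{Z}\mid\mathsf{X}_t=x]=-a_t\nabla_x\log\rho_t(x)$ and $\E[\mathsf{X}_1\mid\mathsf{X}_t=x]=(x+a_t^2\nabla_x\log\rho_t(x))/b_t$. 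Substituting into \paref{eq:vf-expect} expresses $v$ entirely through $\nabla_x\log\rho_t$. Differentiating $\rho_t$ in $t$ and comparing with $-\nabla_x\cdot(\rho_t v)$ produces the continuity equation $\partial_t\rho_t+\nabla_x\cdot(\rho_t v)=0$; the computation involves only the combinations $\dot a_t a_t$ and $\dot b_t$, which remain integrable on $[0,1]$ by the stated regularity $a_t^2\in C^1([0,1])$, $b_t\in C^1([0,1])$.

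\textbf{Regularity and well-posedness on $(0,1)$.} Under each of the three cases of Assumption \ref{assump:target}, Brascamp--Lieb / Prékopa--Leindler arguments combined with the Gaussian factor in $\rho_t$ furnish two-sided Hessian bounds on $\log\rho_t$ that are locally uniform in $t\in(0,1)$. Through the score representation of $v$, this controls the Lipschitz constant of $v(t,\cdot)$. Linear growth in $x$ follows from the finite second moment of $\nu$ (Remark \ref{rm:sub-gaussian}). Classical Cauchy--Lipschitz then delivers a unique $C^1$ solution $X_t(x)$ on every compact subinterval of $(0,1)$, and uniqueness of Lagrangian flows associated with a locally Lipschitz continuity equation identifies ${X_t}_\#\mu=\mathrm{Law}(\mathsf{X}_t)$ for every $t\in(0,1)$.

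\textbf{Boundary extension.} At $t=0$, since $a_0>0$ the density $\rho_0$ is already Gaussian-smoothed, so $v(0,\cdot)$ extends to a Lipschitz function and the flow starts cleanly from $\mu$. The main obstacle is $t=1$, where $a_1=0$, the smoothing vanishes, and $\nabla_x\log\rho_t$ may blow up --- particularly in case (ii) of Assumption \ref{assump:target}, where $\nu$ need not admit a Lipschitz score. I would handle this through the stochastic representation along the underlying coupling of $(\mathsf{Z},\mathsf{X}_1)$: for an initial point $x=a_0\mathsf{Z}+b_0\mathsf{X}_1$, the characteristic candidate $a_t\mathsf{Z}+b_t\mathsf{X}_1$ converges almost surely to $\mathsf{X}_1\sim\nu$ as $t\uparrow1$. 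Since this candidate trajectory agrees with the unique Lagrangian flow on $(0,1)$, it provides the continuous extension to $t=1$ and realizes ${X_1}_\#\mu=\nu$, completing the proof. The delicate point throughout is to show that the singularity of $v$ at $t=1$ is integrable along characteristics even when it is not bounded pointwise; this is what allows the stochastic representation to be matched with the ODE solution at the endpoint.
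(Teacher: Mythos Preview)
The paper does not prove this lemma; it is quoted verbatim as Theorem 38 of \citet{gao2023gaussian}, so there is no in-paper argument to compare against. Your steps (i)--(ii) are the natural route and essentially what that reference does: derive the continuity equation from the Gaussian-convolution form of $\rho_t$, then use Brascamp--Lieb/covariance bounds (recorded here as Lemma \ref{lm:jacob-bd}) to obtain spatial Lipschitz control of $v(t,\cdot)$.

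Your boundary argument (iii), however, contains a genuine gap. You claim that the ``characteristic candidate'' $t\mapsto a_t\mathsf{Z}+b_t\mathsf{X}_1$ ``agrees with the unique Lagrangian flow on $(0,1)$''. It does not: the flow map $X_t(x)$ is a deterministic function of the initial point $x$, whereas many pairs $(\mathsf{Z},\mathsf{X}_1)$ produce the same $x=a_0\mathsf{Z}+b_0\mathsf{X}_1$ and then follow \emph{different} interpolation paths $a_t\mathsf{Z}+b_t\mathsf{X}_1$. The ODE trajectory is not any single one of these; its velocity is the conditional expectation $\E[\dot a_t\mathsf{Z}+\dot b_t\mathsf{X}_1\mid\mathsf{X}_t]$, which averages over all of them. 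So almost-sure convergence of the interpolation to $\mathsf{X}_1$ does not furnish the extension of $X_t$ to $t=1$.

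The correct endpoint treatment is simpler than you fear. Under each case of Assumption \ref{assump:target}, the two-sided bounds on $\nabla_x v^*$ in Lemma \ref{lm:jacob-bd} are uniform on the \emph{closed} interval $[0,1]$: combine the semi-log-convexity lower bound (part 2) with the semi-log-concavity/bounded-support/Gaussian-mixture upper bounds (parts 1, 3, 4), and both remain finite as $t\uparrow 1$. There is no singularity in the spatial Lipschitz constant; the blow-up you anticipate occurs only in $\partial_t v^*$ (Theorem \ref{thm:vf-regularity}), which is irrelevant for Cauchy--Lipschitz. Hence the IVP is globally well-posed on $[0,1]$ directly, and the push-forward identity then follows from uniqueness of characteristic flows for the continuity equation with Lipschitz velocity.
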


Lemma \ref{lm:gif} shows that the velocity field $v(t, x)$ of the simulation-free CNF \paref{eq:IVP-CNF} takes the form of conditional expectations. As a result, the least squares method, also known as the flow matching method for simulation-free CNFs \citep{lipman2023flow}, is an effective approach to estimating the velocity field $v(t, x)$.

Among various choices of the coefficients $a_t$ and $b_t$,
the linear interpolation scenario, where
\begin{align}
\label{linF}
a_t = 1-t, \ b_t = t,
\end{align}
has been shown to have excellent properties for generative learning tasks \citep{liu2023flow, albergo2023stochastic}. A CNF model with linear interpolation has been used in the implementation of a large image generative model \citep{esser2024scaling}. The coefficients of the linear interpolation are the same as those of the displacement interpolation in optimal transport \citep{mccann1997convexity, villani2009displacement}. In this paper, we focus on the regularity and convergence properties of the CNF with linear interpolation (\ref{linF}).

\begin{corollary} \label{cor:linear-CNF}
      Suppose that a probability distribution $\nu$ satisfies Assumptions \ref{assump:well-defined} and \ref{assump:target}.
    Let $\mathsf{X}_0 = \mathsf{Z} \sim \gamma_d$, $\mathsf{X}_1 \sim \nu$.
    Consider the linear interpolation $\mathsf{X}_t := (1-t) \mathsf{Z} + t \mathsf{X}_1$ for any $t \in (0, 1)$ and the velocity field $v^*(t, x)$ defined by
    \begin{align}
        v^*(t, x) &:= \E[ \mathsf{X}_1 - \mathsf{Z} | \mathsf{X}_t = x], \quad (t, x) \in [0, 1] \times \sR^d \label{eq:vf-expect-linear}.
    \end{align}
    Then there exists a unique solution $(X_t)_{t \in [0, 1]}$ to the IVP
    \begin{equation} \label{eq:ivp-true}
    \frac{\diff X_t}{\diff t}(x) = v^*(t, X_t(x)), \quad X_0(x) = x \sim \gamma_d, \quad (t, x) \in [0, 1] \times \sR^d,
    \end{equation}
    and the push-forward distribution satisfies ${X_t}_{\#} \gamma_d = \mathrm{Law}(\mathsf{X}_t)$.
\end{corollary}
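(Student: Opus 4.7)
The plan is to obtain Corollary \ref{cor:linear-CNF} as a direct specialization of Lemma \ref{lm:gif} applied with the linear interpolation schedule $(a_t, b_t) = (1-t, t)$. First, I would verify that this particular choice satisfies the list of regularity and monotonicity conditions imposed on $(a_t, b_t)$ in the statement of Lemma \ref{lm:gif}. Specifically: $\dot a_t \equiv -1 \le 0$ and $\dot b_t \equiv 1 \ge 0$; the boundary values $a_0 = 1 > 0$, $b_0 = 0 \ge 0$, $a_1 = 0$, $b_1 = 1$; the strict positivity $a_t = 1-t > 0$ and $b_t = t > 0$ on the open interval $(0,1)$; and the smoothness conditions $a_t, b_t \in C^2([0,1))$, $a_t^2 = (1-t)^2 \in C^1([0,1])$, $b_t = t \in C^1([0,1])$. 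Each of these is immediate from the explicit polynomial form of $a_t$ and $b_t$.

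Next I would identify the objects that appear in Lemma \ref{lm:gif} with those in the corollary. The source distribution simplifies to $\mu = \mathrm{Law}(a_0 \mathsf{Z} + b_0 \mathsf{X}_1) = \mathrm{Law}(\mathsf{Z}) = \gamma_d$, which matches the initial law in (\ref{eq:ivp-true}). The general velocity field (\ref{eq:vf-expect}) specializes to
\begin{equation*}
\E[\dot a_t \mathsf{Z} + \dot b_t \mathsf{X}_1 \mid \mathsf{X}_t = x] = \E[\mathsf{X}_1 - \mathsf{Z} \mid \mathsf{X}_t = x] = v^*(t, x),
\end{equation*}
so the IVP in Lemma \ref{lm:gif} becomes exactly (\ref{eq:ivp-true}). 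Applying Lemma \ref{lm:gif} then yields both the existence and uniqueness of the flow $(X_t)_{t \in [0,1]}$ and the push-forward identity ${X_t}_{\#}\gamma_d = \mathrm{Law}(\mathsf{X}_t)$ verbatim.

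The one point deserving a brief check is the boundary behavior at $t = 0$ and $t = 1$, where the limits (\ref{eq:vf-boundary}) define $v^*$. At $t = 0$ we have $\mathsf{X}_0 = \mathsf{Z}$, and independence of $\mathsf{Z}$ and $\mathsf{X}_1$ combined with the zero-mean Assumption \ref{assump:well-defined} give $\lim_{t \downarrow 0} v^*(t, x) = \E[\mathsf{X}_1] - x = -x$; similarly $\lim_{t \uparrow 1} v^*(t, x) = x$. These limits exist and are continuous, consistent with the regularity guaranteed by the framework of Lemma \ref{lm:gif}. I do not anticipate a genuine obstacle here, since the corollary is a bookkeeping specialization; the substantive work — existence, uniqueness, and the push-forward identity — is already carried out in the cited lemma, and the only non-mechanical ingredient is confirming the zero-mean centering to evaluate the boundary velocity.
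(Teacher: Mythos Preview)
Your proposal is correct and matches the paper's approach: the corollary is stated immediately after Lemma~\ref{lm:gif} without a separate proof, precisely because it is the direct specialization to $(a_t,b_t)=(1-t,t)$ that you carry out. Your additional verification that the direct conditional-expectation definition of $v^*$ at the endpoints agrees with the limit definition \eqref{eq:vf-boundary} is a nice consistency check beyond what the paper spells out.
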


Corollary \ref{cor:linear-CNF} implies that the push-forward distributions $({X_t}_{\#} \gamma_d)_{t \in [0, 1]}$ coincide with the marginal distributions of the Gaussian channel $\mathsf{X}_t = (1-t) \mathsf{Z} + t \mathsf{X}_1$.
The connections between the velocity fields and Gaussian channels have inspired moment expressions for the derivatives of the velocity field. We show these expressions in Lemmas \ref{lm:vf-grad-x} and \ref{lm:time-derivative} for the purpose of examining the regularity properties of velocity fields.

\begin{remark}
\label{rem-singularity}
Since $x = \E[(1-t)\mathsf{Z} + t\mathsf{X}_1 | \mathsf{X}_t = x]$ for $(t, x) \in [0, 1] \times \sR^d$, an alternative expression of the velocity field is given by
\begin{equation} \label{eq:vf-cond-expect}
    v^*(t, x) = -\frac{1}{1-t} x + \frac{1}{1-t} \E[\mathsf{X}_1 | \mathsf{X}_t = x], \quad (t, x) \in [0, 1) \times \sR^d.
\end{equation}
Expression \paref{eq:vf-cond-expect} shows that the velocity field $v^*(t, x)$ only depends on the conditional expectation $\E[\mathsf{X}_1 | \mathsf{X}_t = x].$ 
\end{remark}

\begin{lemma} [Lemma 26 in \citet{gao2023gaussian}] \label{lm:vf-grad-x}
The Jacobian matrix $\nabla_x v^*(t, x)$ has a covariance expression as follows
\begin{equation} \label{eq:vf-grad-x-cov}
    \nabla_x v^*(t, x) = \frac{t}{(1-t)^3} \Cov(\mathsf{X}_1 | \mathsf{X}_t = x) -\frac{1}{1-t} \mathrm{I}_d, \quad (t, x) \in [0, 1) \times \sR^d.
\end{equation}
\end{lemma}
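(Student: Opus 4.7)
The plan is to start from the alternative expression of the velocity field given in Remark \ref{rem-singularity},
\[
v^*(t,x) = -\frac{1}{1-t}\, x + \frac{1}{1-t}\, \mathbb{E}[\mathsf{X}_1 \mid \mathsf{X}_t = x], \quad (t,x) \in [0,1) \times \sR^d,
\]
so that
\[
\nabla_x v^*(t,x) = -\frac{1}{1-t}\, \mathrm{I}_d + \frac{1}{1-t}\, \nabla_x m(x),
\]
where $m(x) := \mathbb{E}[\mathsf{X}_1 \mid \mathsf{X}_t = x]$. The entire task then reduces to showing that $\nabla_x m(x) = \frac{t}{(1-t)^2} \Cov(\mathsf{X}_1 \mid \mathsf{X}_t = x)$; multiplying this by $1/(1-t)$ produces the claimed covariance term, and the identity matrix term is already in place.

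To compute $\nabla_x m(x)$, I would exploit the Gaussian channel structure: conditionally on $\mathsf{X}_1 = x_1$, we have $\mathsf{X}_t \sim N(t x_1, (1-t)^2 \mathrm{I}_d)$, so the conditional density of $\mathsf{X}_t$ given $\mathsf{X}_1$ is the Gaussian kernel $p(x \mid x_1) = (2\pi (1-t)^2)^{-d/2} \exp(-\|x - t x_1\|_2^2 / (2(1-t)^2))$. Writing the marginal density $p_t(x) = \int p(x \mid x_1) \,\diff \nu(x_1)$, Bayes' rule gives
\[
m(x) = \frac{1}{p_t(x)} \int x_1\, p(x \mid x_1) \,\diff \nu(x_1).
\]
Differentiating $p(x \mid x_1)$ in $x$ produces the factor $-(x - t x_1)/(1-t)^2$, so applying $\nabla_x$ under the integral to both sides of $p_t(x)\, m(x) = \int x_1\, p(x \mid x_1) \,\diff\nu(x_1)$ and using the product rule yields
\[
p_t(x)\, \nabla_x m(x) = -\frac{1}{(1-t)^2}\int x_1 (x - t x_1)^\top p(x \mid x_1)\,\diff\nu(x_1) - m(x)\, (\nabla_x p_t(x))^\top.
\]
Next I would invoke the Tweedie-type identity, obtained by directly differentiating $p_t(x)$ under the integral, namely
\[
\nabla_x \log p_t(x) = -\frac{x - t\, m(x)}{(1-t)^2},
\]
and substitute it into the previous display. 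After dividing through by $p_t(x)$, the two cross terms involving $m(x) x^\top$ cancel and what remains collapses to
\[
\nabla_x m(x) = \frac{t}{(1-t)^2}\Bigl( \mathbb{E}[\mathsf{X}_1 \mathsf{X}_1^\top \mid \mathsf{X}_t = x] - m(x) m(x)^\top \Bigr) = \frac{t}{(1-t)^2}\Cov(\mathsf{X}_1 \mid \mathsf{X}_t = x),
\]
which is exactly what is needed.

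The main technical obstacle is justifying differentiation under the integral sign and the interchange of gradients with conditional expectations. Under Assumption \ref{assump:target}, sub-Gaussianity of $\mathsf{X}_1$ (cf.\ Remark \ref{rm:sub-gaussian}) together with the Gaussian smoothing by $(1-t)\mathsf{Z}$ makes $p_t(x)$ smooth and strictly positive for every $t \in [0,1)$, and dominates $\|x_1\|_2 \|x - t x_1\|_2 \,p(x \mid x_1)$ by an integrable envelope locally uniformly in $x$, allowing classical dominated-convergence arguments to validate every differentiation step. Once these analytic conditions are confirmed, the identity follows by the purely algebraic manipulation described above.
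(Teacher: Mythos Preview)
Your proposal is correct. The paper does not prove this lemma in-house; it is quoted from \citet{gao2023gaussian}, and the key identity $\nabla_x \E[\mathsf{X}_1 \mid \mathsf{X}_t = x] = \tfrac{t}{(1-t)^2}\Cov(\mathsf{X}_1 \mid \mathsf{X}_t = x)$ is elsewhere in the paper (e.g.\ the proof of Lemma~\ref{lm:moment-bd-slc}) obtained simply by invoking the Hatsell--Nolte identity (Lemma~\ref{lm:hn-identity}) applied to the additive Gaussian channel $\mathsf{X}_t/t = \mathsf{X}_1 + \tfrac{1-t}{t}\mathsf{Z}$. Your argument is the same computation carried out from scratch: you differentiate the Bayes-formula expression for $m(x)$ against the explicit Gaussian kernel and recover the covariance after cancellation, which is precisely a hands-on derivation of Hatsell--Nolte in this specific setting. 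The modular route via Lemma~\ref{lm:hn-identity} is shorter and avoids the bookkeeping with the $m(x)x^\top$ cross terms, while your direct calculation has the advantage of being self-contained and making the dominated-convergence justification explicit; substantively the two are the same proof.
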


\begin{remark}
    The covariance expression \paref{eq:vf-grad-x-cov} has been used to derive regularity properties of the velocity field $v^*(t, x)$ in the space variable $x$. For example, see Proposition 29 in \citet{gao2023gaussian}.
\end{remark}

\begin{lemma} [Proposition 63 in \citet{gao2023gaussian}] \label{lm:time-derivative}
The time derivative of the velocity field $v^*(t, x)$ has a moment expression for any $t \in [0, 1)$ as follows
\begin{align} \label{eq:vf-grad-t-moments}
    \partial_t v^*(t, x)
    &= -\frac{1}{(1-t)^2} x + \frac{1}{(1-t)^2} M_1 + \frac{t+1}{(1-t)^4} M^c_2 x - \frac{t}{(1-t)^4}  (M_3 - M_2 M_1),
\end{align}
where $M_1 := \E [\mathsf{X}_1 | \mathsf{X}_t = x],  M_2 := \E [\mathsf{X}_1^{\top} \mathsf{X}_1 | \mathsf{X}_t = x], M^c_2 := \Cov(\mathsf{X}_1 | \mathsf{X}_t = x)$, and $M_3 := \E [\mathsf{X}_1 \mathsf{X}_1^{\top} \mathsf{X}_1 | \mathsf{X}_t = x]$ with omitted dependence on $(t, x)$.
\end{lemma}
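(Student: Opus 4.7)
The plan is to start from the equivalent representation \paref{eq:vf-cond-expect}, namely $v^*(t,x) = -\frac{x}{1-t} + \frac{m(t,x)}{1-t}$ with $m(t,x) := \E[\mathsf{X}_1 \mid \mathsf{X}_t = x]$, and differentiate in $t$. This immediately yields
\begin{equation*}
\partial_t v^*(t,x) = \frac{m(t,x) - x}{(1-t)^2} + \frac{\partial_t m(t,x)}{1-t},
\end{equation*}
whose first summand already reproduces the two leading terms $-\frac{x}{(1-t)^2} + \frac{M_1}{(1-t)^2}$ of \paref{eq:vf-grad-t-moments}. The whole problem therefore reduces to establishing a moment expression for $\partial_t m(t,x)$ whose product with $1/(1-t)$ equals $\frac{(1+t)}{(1-t)^4} M_2^c x - \frac{t}{(1-t)^4} (M_3 - M_2 M_1)$.

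For the core calculation I would exploit the Gaussian channel structure $\mathsf{X}_t \mid \mathsf{X}_1 = y \sim N(ty,(1-t)^2 \mathrm{I}_d)$ and write
\begin{equation*}
m(t,x) = \int y\, p_{1|t}(y|x)\, dy, \qquad p_{1|t}(y|x) = \frac{p_{t|1}(x|y)\, p_\nu(y)}{p_t(x)},
\end{equation*}
where $p_{t|1}(x|y)$ is the explicit Gaussian density. Differentiating under the integral and using the identity $\partial_t \log p_t(x) = \E[\partial_t \log p_{t|1}(x|\mathsf{X}_1) \mid \mathsf{X}_t = x]$, itself obtained by differentiating $p_t = \int p_{t|1}(x|y) p_\nu(y)\,dy$ under the integral, produces cancellation of the $\mathsf{X}_1$-independent pieces and yields the clean identity
\begin{equation*}
\partial_t m(t,x) = \Cov\!\left(\mathsf{X}_1,\; \partial_t \log p_{t|1}(x|\mathsf{X}_1) \,\middle|\, \mathsf{X}_t = x\right).
\end{equation*}
A direct computation then gives $\partial_t \log p_{t|1}(x|y) = \frac{d}{1-t} + \frac{\langle x-ty,\, y\rangle}{(1-t)^2} - \frac{\|x-ty\|^2}{(1-t)^3}$, and the constant $d/(1-t)$ drops out of the covariance.

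Expanding the two remaining summands in $y = \mathsf{X}_1$ and using bilinearity reduces everything to $\Cov(\mathsf{X}_1, x^\top \mathsf{X}_1 \mid \mathsf{X}_t = x) = M_2^c x$ and $\Cov(\mathsf{X}_1, \|\mathsf{X}_1\|^2 \mid \mathsf{X}_t = x) = M_3 - M_2 M_1$. Combining over the common denominator $(1-t)^3$ pools the $M_2^c x$ contributions into $(1+t)/(1-t)^3$ via $(1-t)^{-2} + 2t(1-t)^{-3} = (1+t)(1-t)^{-3}$, and the $(M_3 - M_2 M_1)$ contributions into $-t/(1-t)^3$ via $-t(1-t)^{-2} - t^2(1-t)^{-3} = -t(1-t)^{-3}$. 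Dividing by $1-t$ and adding the contribution from the first summand of step one produces exactly \paref{eq:vf-grad-t-moments}.

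The principal technical nuisance is justifying the differentiation under the integral sign. For $t$ strictly below $1$, both $p_{t|1}(x|y)$ and its $t$-derivative decay like $\exp(-c\|y\|^2)$ in $y$ uniformly on a neighborhood of $(t,x)$, which combined with the finite fourth moment of $\nu$ guaranteed by Remark \ref{rm:sub-gaussian} furnishes an integrable dominating function; once that is in place the rest of the argument is purely algebraic manipulation of conditional moments. A convenient sanity check is that the parallel computation with $\nabla_x \log p_{t|1}(x|y) = -(x-ty)/(1-t)^2$ in place of $\partial_t \log p_{t|1}$ reproduces the Jacobian formula \paref{eq:vf-grad-x-cov} of Lemma \ref{lm:vf-grad-x}.
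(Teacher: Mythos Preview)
Your proof is correct; the covariance identity $\partial_t m(t,x) = \Cov(\mathsf{X}_1, \partial_t \log p_{t|1}(x|\mathsf{X}_1)\mid \mathsf{X}_t=x)$ together with the explicit formula for $\partial_t \log p_{t|1}(x|y)$ and the algebraic recombination $(1-t)^{-2}+2t(1-t)^{-3}=(1+t)(1-t)^{-3}$ and $-t(1-t)^{-2}-t^2(1-t)^{-3}=-t(1-t)^{-3}$ does produce exactly \paref{eq:vf-grad-t-moments}. Note, however, that the paper does not supply its own proof of this lemma: it is quoted verbatim as Proposition~63 of \citet{gao2023gaussian} and used as an input to the regularity analysis, so there is no in-paper argument to compare yours against.
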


\begin{remark}
    To quantify the regularity of the velocity field $v^*(t, x)$ in the time variable $t$, one can try to bound the moments in Eq. \paref{eq:vf-grad-t-moments} defined in Lemma \ref{lm:time-derivative}. Following this idea, we conduct regularity analyses on the velocity field in Appendix \ref{app:regularity-vf} and summarize the results in Theorem \ref{thm:vf-regularity}.
\end{remark}

\subsection{Flow matching}
This subsection concerns Step 2 in Table \ref{tab:steps-CNF}, which estimates the velocity field of a CNF with linear interpolation.
As shown in \paref{eq:vf-expect-linear}, the velocity field $v^*(t, x) = \E[ \mathsf{X}_1 - \mathsf{Z} | \mathsf{X}_t = x]$ for each $t \in [0, 1].$
For notational simplicity, let $\mathsf{Y} := \mathsf{X}_1 - \mathsf{Z}$, and note that $\mathsf{X}_t = (1-t) \mathsf{Z} + t \mathsf{X}_1$.
Given $\tau \in (0, 1]$, we consider the time interval $[0, \tau]$.
For $t \in [0, \tau]$, we denote $\mathsf{X}_t \sim p_t$.
When the time is a random variable distributed as a uniform distribution on $[0, \tau]$, that is, $\mathsf{t} \sim \mathrm{U}(0, \tau)$, we denote $\mathsf{X}_{\mathsf{t}} | \mathsf{t}=t \sim p_t$.
Then the flow matching method \citep{lipman2023flow, liu2023flow} solves a nonlinear least squares problem for estimating
$v^*(t, x) = \E[ \mathsf{Y} | \mathsf{X}_t = x]$ on the domain $[0, \tau] \times \sR^d$:
\begin{align}
    \label{eq:popu-risk}
    v^* \in \argmin_{v} \Big\{ \mathcal{L}(v) := \E_{\mathsf{t} \sim \mathrm{U}(0, \tau)} \E_{\mathsf{X}_1 \sim \nu, \mathsf{Z} \sim \gamma_d} \Vert v(\mathsf{t}, \mathsf{X}_\mathsf{t})- \mathsf{Y} \Vert_2^2 \Big\}.
\end{align}
In practice, $\tau$ is often taken as $1$. Here, we leave $\tau$ as a quantity adaptive to the time regularity of the velocity field $v^*$. We will analyze the time regularity of $v^*$ in Subsection \ref{subsec:regu-vf}.

Let $\{\mathsf{Z}_i\}_{i=1}^n$, $\{\mathsf{X}_{1,i}\}_{i=1}^n$, and $\{\mathsf{t}_i\}_{i=1}^n$ be i.i.d. random samples from $\gamma_d$, $\nu$, and $\mathrm{U}(0, \tau)$, respectively.
For $i = 1, 2, \cdots, n$, we denote $\mathsf{X}_{\mathsf{t}_i} := (1-\mathsf{t}_i) \mathsf{Z}_i + \mathsf{t}_i \mathsf{X}_{1,i}$ and $\mathsf{Y}_i := \mathsf{X}_{1,i} - \mathsf{Z}_i$.
The population risk $\mathcal{L}(v)$ defined in (\ref{eq:popu-risk})
 leads to the empirical risk
\begin{align}
    \label{eq:empirical-risk}
    \mathcal{L}_n(v) := \frac{1}{n} \sum_{i=1}^n \Vert v(\mathsf{t}_i, \mathsf{X}_{\mathsf{t}_i})- \mathsf{Y}_i \Vert_2^2.
\end{align}

We approximate the velocity field $v^*$ using deep neural networks.
We consider a deep ReLU network class with the input dimension $\mathtt{k} = d+1$ and the output dimension $\mathtt{d} = d$.
Let $\mathcal{F}_n := \mathcal{NN}(\mathtt{S}, \mathtt{W}, \mathtt{D}, \mathtt{B}, \mathtt{L}_x, \mathtt{L}_t, d+1, d)$ denote a function class $\{f_{\theta}(t, x): \sR^{d+1} \to \sR^d \}$ implemented by deep ReLU networks with size $\mathtt{S}$, width $\mathtt{W}$, depth $\mathtt{D}$, bound $\mathtt{B}$, and Lipschitz constants at most $\mathtt{L}_x$ in $x$ and $\mathtt{L}_t$ in $t$ over $(t, x) \in [0, \tau] \times \sR^d$.
The network parameters can depend on the sample size $n$, and we show the fact by making $\mathcal{F}_n$ depend on $n$.
For any $f \in \mathcal{F}_n$, the Lipschitz continuity of $f$ implies that $\Vert f(t, x) - f(s, x) \Vert_{\infty} \le \mathtt{L}_t \vert t - s \vert$ and $\Vert f(t, x) - f(t, y) \Vert_{\infty} \le \mathtt{L}_x \Vert x - y \Vert_{\infty}$ for any $s, t \in [0, \tau]$ and $x, y \in \sR^d$.
It is easy to see that $\mathcal{F}_n \subseteq \mathcal{NN}(\mathtt{S}, \mathtt{W}, \mathtt{D}, \mathtt{B}, d+1, d)$, that is a deep ReLU network class without Lipschitz regularity control.
To estimate the velocity field within the hypothesis class $\mathcal{F}_n$, we consider the empirical risk minimization problem
\begin{align}
    \label{eq:vf-ERM}
    \hat{v}_n \in \argmin_{v \in \mathcal{F}_n} \mathcal{L}_n (v),
\end{align}
where $\hat{v}_n$ is a deep ReLU network estimator for the velocity field $v^*$.
We call $\hat{v}_n$ the flow matching estimator because it is a minimizer of the empirical flow matching objective.

\subsection{Forward Euler discretization}
In this subsection, we proceed to Step 3 in Table \ref{tab:steps-CNF},  where a numerical solver is used to solve the IVP:
\begin{align}
    \label{eq:ivp-neural}
    \frac{\diff \tilde{X}_t}{\diff t} (x) = \hat{v}_n(t, \tilde{X}_t(x)), ~~  \tilde{X}_0(x) \sim \gamma_d,  ~~ (t, x) \in [0, \tau] \times \sR^d.
\end{align}
The forward Euler method is a first-order numerical procedure for solving ODE-based IVPs, which is commonly used in algorithms for sampling and generative learning.
First, we set a time grid of $[0, \tau]$ as $t_0 = 0 < t_1 < \cdots < t_K = \tau \le 1$. %Using
The forward Euler method for solving the IVP \paref{eq:ivp-neural}
yields the numerical iterations:
\begin{align}
    \label{eq:ivp-discrete}
    \frac{\diff \hat{X}_t}{\diff t} (x) = \hat{v}_n(t_{k-1}, \hat{X}_{t_{k-1}}(x)), ~~  \hat{X}_0(x) \sim \gamma_d, ~~ t \in [t_{k-1}, t_k], \ k= 1, 2, \cdots, K.
\end{align}

Finally, Step 4 in Table \ref{tab:steps-CNF} is accomplished by drawing samples from $\gamma_d$ and running the numerical iterations given in \paref{eq:ivp-discrete}.

\section{Main result: Error bounds for distribution estimation}
\label{sec:convergence-cnf}
In this section, we derive our main result, Theorem \ref{thm:dist-est-error},  on the
error bounds for the distribution estimator based on CNFs with linear interpolation.

\subsection{Error decomposition} \label{subsec:error-decomp}
We begin with three IVPs \paref{eq:ivp-true}, \paref{eq:ivp-neural}, and \paref{eq:ivp-discrete} in Section \ref{sec:simu-free-cnf}. The IVP \paref{eq:ivp-true} defines the true process without approximation of the velocity field $v^*$ and discretization in time. The IVP \paref{eq:ivp-neural} defines the neural process resulting from replacing the velocity field $v^*$ with the flow matching estimator $\hat{v}_n$. The IVP \paref{eq:ivp-discrete} is the forward Euler discretization counterpart of the IVP \paref{eq:ivp-neural}.

Let $0 < \underline{t} \ll 1$. As shown in Theorem \ref{thm:informal-thm1} or Theorem \ref{thm:vf-regularity} below, the Lipschitz constant bound of the velocity field in the time variable is of the order $\gO(\underline{t}^{-2})$ on the time interval $[0, 1-\underline{t}]$.
This order shows that the bound explodes at the time $t = 1$.
To maintain the Lipschitz regularity in the time variable for flow matching, we need to consider an early stopping time by letting the end time $\tau = 1-\underline{t}$.

Solving the IVPs \paref{eq:ivp-true}, \paref{eq:ivp-neural}, and \paref{eq:ivp-discrete}, we
obtain  the flow maps $(X_t)_{t\in[0, 1]}, (\tilde{X}_t)_{t\in[0, 1-\underline{t}]}$, and $(\hat{X}_t)_{t\in[0, 1-\underline{t}]}$. To simplify the notations, we denote the push-forward distributions ${X_t}_{\#} \gamma_d, {\tilde{X}_t{}}_{\#} \gamma_d, {\hat{X}_t{}}_{\#} \gamma_d$ by $p_t, \tilde{p}_t, \hat{p}_t$, respectively. We summarize the three processes \paref{eq:ivp-true},  \paref{eq:ivp-neural}, and \paref{eq:ivp-discrete}, their corresponding flow maps and push-forward distributions defined by the three IVPs, their corresponding velocity fields and density functions, and sources of errors in Table \ref{tab:three-ivps}.

\begin{table}[!ht]
\centering
    \caption{A list of three IVPs and related notations defining the generative learning process. In the first column, we present three processes defined by the three IVPs referred in the second column. The corresponding notations are given in the following columns. Particularly, we show the error source for each process in the last column.}
    \label{tab:three-ivps}
    \renewcommand{\arraystretch}{1.3}
    \resizebox{1.0\textwidth}{!}{
    \begin{tabular}{l >{\centering}p{1.0cm} >{\centering}p{1.4cm} >{\centering}p{1.8cm} >{\centering}p{3.2cm} >{\centering\arraybackslash}p{1.0cm} >{\centering\arraybackslash}p{2.5cm}}
    \toprule
    Process          & IVP                      & Flow map          & Velocity field        & Push-forward distribution      & Density  & Error source \\
    \midrule
    True process     & \paref{eq:ivp-true}      & $X_t(x)$          & $v^*(t, X_t(x))$                  & ${X_t}_{\#} \gamma_d$  & $p_t$                         & Early stopping \\
    Neural process   & \paref{eq:ivp-neural}    & $\tilde{X}_t(x)$  & $\hat{v}_n(t, \tilde{X}_t(x))$    & ${\tilde{X}_t{}}_{\#} \gamma_d$    & $\tilde{p}_t$    & Velocity estimation\\
    Discrete process & \paref{eq:ivp-discrete}  & $\hat{X}_t(x)$    & $\hat{v}_n(t_{k-1}, \hat{X}_{t_{k-1}}(x))$  & ${\hat{X}_t{}}_{\#} \gamma_d$    & $\hat{p}_t$      & Discretization \\
    \bottomrule
    \end{tabular}
    }
\end{table}

There are three sources of errors introduced in the generative learning process \paref{eq:ivp-true}, \paref{eq:ivp-neural}, and \paref{eq:ivp-discrete}.
The discretization error comes from the forward Euler discretization.
The error due to velocity estimation results from flow matching with deep ReLU networks.
The early stopping error is due to the time singularity of the velocity field at time $t = 1$.
We use the Wasserstein-2 distance $\gW_2$ to measure the difference between the estimated
generative distribution $\hat{p}_{1-\underline{t}}$ and the target distribution $p_1$.
We derive an upper bound for $\mathcal{W}_2 (\hat{p}_{1-\underline{t}}, p_1),$ which
takes into account all the three sources of error.

It is important to consider the trade-off between the different sources of errors.
The early stopping error is reduced when the parameter $\underline{t}$ gets smaller. However, a smaller value of $\underline{t}$ increases the time singularity of the velocity field on the time interval $[0, 1-\underline{t}]$, thus leads to a larger error due to velocity estimation.

Keeping the error trade-off in mind, we consider a basic decomposition of the
total error in terms of the Wasserstein-2 distance as follows:
\begin{align} \label{eq:error-decomp-w2}
    \mathcal{W}_2 (\hat{p}_{1-\underline{t}}, p_1)
    \le \underbrace{\mathcal{W}_2 (\hat{p}_{1-\underline{t}}, \tilde{p}_{1-\underline{t}})}_{\text{discretization}}
    + \underbrace{\mathcal{W}_2 (\tilde{p}_{1-\underline{t}}, p_{1-\underline{t}})}_{\text{velocity estimation}}
    + \underbrace{\mathcal{W}_2 (p_{1-\underline{t}}, p_1)}_{\text{early stopping}}.
\end{align}
In \paref{eq:error-decomp-w2}, the first term $\mathcal{W}_2 (\hat{p}_{1-\underline{t}}, \tilde{p}_{1-\underline{t}})$ measures the discretization error,
the second term $\mathcal{W}_2 (\tilde{p}_{1-\underline{t}}, p_{1-\underline{t}})$ measures the error due to velocity estimation, and the third term $\mathcal{W}_2 (p_{1-\underline{t}}, p_1)$ measures the early stopping error. We evaluate each error term in
Lemmas \ref{lm:discretization-error}, \ref{lm:vf-est-error}, and \ref{lm:early-stop-error} below, respectively.

\begin{lemma}[Discretization error] \label{lm:discretization-error}
    Suppose that Assumptions \ref{assump:well-defined} and \ref{assump:target} hold.
    Let $\Upsilon \equiv t_k - t_{k-1}$ for $k = 1, 2, \cdots, K$.
    Then the discretization error is evaluated by
    \begin{align*}
        \mathcal{W}_2 (\hat{p}_{1-\underline{t}}, \tilde{p}_{1-\underline{t}}) = \mathcal{O} \left(\sqrt{d} e^{\mathtt{L}_x} (\mathtt{L}_x \mathtt{B} + \mathtt{L}_t) \Upsilon \right).
    \end{align*}
\end{lemma}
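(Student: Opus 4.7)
The plan is a classical Grönwall-type argument that couples the neural and discrete processes through a shared initial condition. Since the Lipschitz constants $\mathtt{L}_x, \mathtt{L}_t$ and the network bound $\mathtt{B}$ are all with respect to the $\ell_\infty$ norm but $\gW_2$ uses the $\ell_2$ norm, I will carry out the stability estimate in $\ell_\infty$ and convert to $\ell_2$ only at the very end, which is what produces the single $\sqrt d$ prefactor rather than a $d$.

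First I would draw a single $x\sim\gamma_d$ and propagate it under both IVPs \paref{eq:ivp-neural} and \paref{eq:ivp-discrete}, so that $(\tilde X_{1-\underline t}(x),\hat X_{1-\underline t}(x))$ is a valid coupling of $\tilde p_{1-\underline t}$ and $\hat p_{1-\underline t}$. Consequently
\begin{equation*}
\gW_2(\hat p_{1-\underline t},\tilde p_{1-\underline t})^2 \;\le\; \E_{x\sim\gamma_d}\bigl\|\tilde X_{1-\underline t}(x)-\hat X_{1-\underline t}(x)\bigr\|_2^2.
\end{equation*}
Define $e_t(x):=\tilde X_t(x)-\hat X_t(x)$, so that $e_0(x)=0$ and, for $t\in[t_{k-1},t_k]$,
\begin{equation*}
\tfrac{\diff e_t}{\diff t}(x)=\hat v_n(t,\tilde X_t(x))-\hat v_n(t_{k-1},\hat X_{t_{k-1}}(x)).
\end{equation*}

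Next I would split this difference by two add-and-subtract steps into three terms:
(i) $\hat v_n(t,\tilde X_t(x))-\hat v_n(t,\hat X_t(x))$, controlled in $\ell_\infty$ by $\mathtt L_x\|e_t(x)\|_\infty$ via the spatial Lipschitz property of $\mathcal F_n$;
(ii) $\hat v_n(t,\hat X_t(x))-\hat v_n(t_{k-1},\hat X_t(x))$, controlled by $\mathtt L_t|t-t_{k-1}|\le \mathtt L_t\Upsilon$ via the temporal Lipschitz property; and
(iii) $\hat v_n(t_{k-1},\hat X_t(x))-\hat v_n(t_{k-1},\hat X_{t_{k-1}}(x))$, which I bound by $\mathtt L_x\|\hat X_t(x)-\hat X_{t_{k-1}}(x)\|_\infty$ and then by $\mathtt L_x\mathtt B\Upsilon$, using that on the subinterval the discrete drift is frozen at $\hat v_n(t_{k-1},\hat X_{t_{k-1}}(x))$, whose $\ell_\infty$ norm is at most $\mathtt B$. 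Integrating in time and passing through $\|e_t(x)\|_\infty\le\int_0^t\|(\diff e_s/\diff s)(x)\|_\infty\,\diff s$ then yields the integral inequality
\begin{equation*}
\|e_t(x)\|_\infty \;\le\; \mathtt L_x\int_0^t\|e_s(x)\|_\infty\,\diff s \;+\; t\,(\mathtt L_x\mathtt B+\mathtt L_t)\,\Upsilon,\qquad t\in[0,1-\underline t].
\end{equation*}

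Then Grönwall's inequality gives $\|e_t(x)\|_\infty\le (\mathtt L_x\mathtt B+\mathtt L_t)\,\Upsilon\, e^{\mathtt L_x t}$, uniformly in $x$, so in particular
\begin{equation*}
\|e_{1-\underline t}(x)\|_\infty \;\le\; e^{\mathtt L_x}(\mathtt L_x\mathtt B+\mathtt L_t)\,\Upsilon.
\end{equation*}
Finally I would apply the norm comparison $\|\cdot\|_2\le\sqrt d\,\|\cdot\|_\infty$ pointwise in $x$, take expectation over $x\sim\gamma_d$, and take a square root to obtain
\begin{equation*}
\gW_2(\hat p_{1-\underline t},\tilde p_{1-\underline t}) \;=\; \gO\!\bigl(\sqrt d\,e^{\mathtt L_x}(\mathtt L_x\mathtt B+\mathtt L_t)\,\Upsilon\bigr),
\end{equation*}
which is the claimed bound. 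The only delicate step is step (iii): one must remember that the discrete flow moves during $[t_{k-1},t]$ with a frozen velocity of magnitude at most $\mathtt B$ in $\ell_\infty$, so its displacement is at most $\mathtt B\Upsilon$ and no extra dimension factor is incurred inside the Grönwall loop. Assumptions \ref{assump:well-defined} and \ref{assump:target} enter only to guarantee that the initial distribution $\gamma_d$ and the trajectories are well defined; the argument itself is purely deterministic once $\hat v_n$ is fixed.
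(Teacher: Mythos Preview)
Your proof is correct and follows the same overall route as the paper: the shared-initial-condition coupling, the same three-term add-and-subtract decomposition of the velocity difference, and a Grönwall argument. The execution differs in one respect worth noting. You run the estimate pointwise in $\ell_\infty$ on $\|e_t(x)\|_\infty$ and only convert to $\ell_2$ at the end, whereas the paper differentiates the integrated squared $\ell_2$ error $E_t=\int\|\hat X_t-\tilde X_t\|_2^2\,p_0\,\diff x$ and uses $2\langle a,b\rangle\le\|a\|_2^2+\|b\|_2^2$ on each inner-product term before applying Grönwall. Your version is a bit more elementary and makes the origin of the single $\sqrt d$ factor transparent. The paper's version, by keeping the factor $(t-t_{k-1})^2$ rather than immediately bounding $|t-t_{k-1}|\le\Upsilon$, integrates to $\sum_k(t_k-t_{k-1})^3$, which is what yields the non-uniform step-size condition of Remark~\ref{rm:cond-step-size}; your cruder bound $|t-t_{k-1}|\le\Upsilon$ loses that refinement but is of course sufficient for the uniform-step statement actually claimed in the lemma.
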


Lemma \ref{lm:discretization-error} shows that the error due to  the forward Euler discretization is well controlled when the discretization step size $\Upsilon$ is sufficiently small.
We use the perturbation analysis of ODE flows to derive the error bound in Lemma \ref{lm:discretization-error}, and the proof can be found in Appendix \ref{sec:dist-est-error}.

\begin{remark} \label{rm:cond-step-size}
    Lemma \ref{lm:discretization-error} considers a uniform step size for the forward Euler discretization.
    For more general choices of the step size, we need the general condition
    \begin{align} \label{eq:cond-step-size}
        \Upsilon^2 = \sum\nolimits_{k=1}^K (t_k - t_{k-1})^3
    \end{align}
    to ensure that the error bound in Lemma \ref{lm:discretization-error} holds.
    This can be shown following the proof of Lemma \ref{lm:discretization-error}.
\end{remark}

In the sequel, we frequently take expectations over $(\mathsf{t}, \mathsf{X}_{\mathsf{t}})$ whose joint distribution is specified by $\mathsf{t} \sim \mathrm{U}(0, 1-\underline{t})$ and $\mathsf{X}_{\mathsf{t}} | \mathsf{t}=t \sim p_t$.
For ease of notation, we may omit the notation of the joint distribution when taking expectations over $(\mathsf{t}, \mathsf{X}_{\mathsf{t}}).$

\begin{lemma}[Error due to velocity estimation] \label{lm:vf-est-error}
    Suppose that Assumptions \ref{assump:well-defined} and \ref{assump:target} hold, and let $\hat{v}_n \in \mathcal{F}_n$ satisfy
    $\Vert \hat{v}_n(t, x) - \hat{v}_n(t, y) \Vert_{\infty} \le \mathtt{L}_x \Vert x - y \Vert_{\infty}$ for any $t \in [0, 1-\underline{t}]$ and $x, y \in \sR^d$.
    Then the error due to velocity estimation is bounded by
    \begin{align}
        \label{eq:w2-stable}
        \mathcal{W}_2^2 (\tilde{p}_{1-\underline{t}}, p_{1-\underline{t}})
        \le \exp(2\mathtt{L}_x +1)
         \E_{(\mathsf{t}, \mathsf{X}_{\mathsf{t}})} \Vert \hat{v}_n(\mathsf{t}, \mathsf{X}_{\mathsf{t}}) - v^*(\mathsf{t}, \mathsf{X}_{\mathsf{t}}) \Vert_2^2.
    \end{align}
\end{lemma}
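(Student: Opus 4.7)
The plan is to exploit the fact that we can couple the two flows $(X_t)$ and $(\tilde X_t)$ by driving them with the same initial sample $x \sim \gamma_d$, and then control the pathwise discrepancy via a Grönwall-type argument. Since the coupling $(X_{1-\underline{t}}(\mathsf{Z}), \tilde X_{1-\underline{t}}(\mathsf{Z}))$ with $\mathsf{Z} \sim \gamma_d$ lies in $\Pi(p_{1-\underline{t}}, \tilde p_{1-\underline{t}})$, the definition of $\gW_2$ immediately gives
\begin{equation*}
\mathcal{W}_2^2 (\tilde p_{1-\underline{t}}, p_{1-\underline{t}}) \le \E_{\mathsf{Z} \sim \gamma_d} \bigl\Vert X_{1-\underline{t}}(\mathsf{Z}) - \tilde X_{1-\underline{t}}(\mathsf{Z}) \bigr\Vert_2^2,
\end{equation*}
so the whole task reduces to a pathwise $L^2$ estimate.

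The core calculation is to fix $x$ and study $u(t) := \Vert X_t(x) - \tilde X_t(x)\Vert_2^2$ on $[0, 1-\underline{t}]$. Differentiating and splitting through the intermediate point $\hat v_n(t, X_t(x))$, I would write
\begin{equation*}
\tfrac{d u}{dt} = 2 (X_t - \tilde X_t)^\top \bigl( v^*(t, X_t) - \hat v_n(t, X_t) \bigr) + 2 (X_t - \tilde X_t)^\top \bigl( \hat v_n(t, X_t) - \hat v_n(t, \tilde X_t) \bigr).
\end{equation*}
To the first term I would apply $2ab \le a^2 + b^2$ to obtain $u(t) + \Vert v^*(t, X_t) - \hat v_n(t, X_t)\Vert_2^2$. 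To the second term I would apply the hypothesized Lipschitz continuity of $\hat v_n$ in $x$ (converted to the $\ell_2$ Lipschitz constant, which is bounded by $\mathtt{L}_x$ in the convention used in the paper) together with Cauchy--Schwarz to get $2 \mathtt{L}_x u(t)$. This produces the differential inequality $u'(t) \le (1 + 2\mathtt{L}_x) u(t) + \Vert v^*(t, X_t) - \hat v_n(t, X_t)\Vert_2^2$, and since $u(0) = 0$, the integral form of Grönwall's inequality yields
\begin{equation*}
u(1-\underline{t}) \le \exp(1 + 2\mathtt{L}_x) \int_0^{1-\underline{t}} \bigl\Vert v^*(s, X_s(x)) - \hat v_n(s, X_s(x)) \bigr\Vert_2^2 \, ds.
\end{equation*}

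Finally, I would take expectation over $x \sim \gamma_d$ and invoke the push-forward identity $X_s{}_{\#} \gamma_d = p_s$ from Corollary \ref{cor:linear-CNF} so that $(s, X_s(x))$ with $x \sim \gamma_d$ has the same joint law as $(s, \mathsf{X}_s)$ with $\mathsf{X}_s \sim p_s$. Rewriting the time integral as an expectation over $\mathsf{t} \sim \mathrm{U}(0, 1-\underline{t})$ (absorbing the normalizing factor $1-\underline{t} \le 1$) converts the right-hand side into $\exp(2\mathtt{L}_x + 1) \, \E_{(\mathsf{t}, \mathsf{X}_\mathsf{t})} \Vert \hat v_n(\mathsf{t}, \mathsf{X}_\mathsf{t}) - v^*(\mathsf{t}, \mathsf{X}_\mathsf{t})\Vert_2^2$, which is exactly the stated bound.

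I do not expect any truly hard step here—the argument is a textbook perturbation-of-ODE-flows computation—but the subtlety to be careful about is the interplay between the $\ell_\infty$ and $\ell_2$ Lipschitz constants for $\hat v_n$, since the hypothesis is stated componentwise while Grönwall must be applied in $\ell_2$. The reason this is clean rather than costing a factor $\sqrt{d}$ is the convention adopted in the paper that $\mathtt{L}_x$ also controls the $\ell_2$-Lipschitz constant (or equivalently, $\hat v_n \in \mathcal{F}_n$ is built so that the spatial Lipschitz bound transfers to $\ell_2$ without dimension loss); once that is in hand, the constant $\exp(2\mathtt{L}_x + 1)$ in the statement falls out from the $u + b^2$ and $2\mathtt{L}_x u$ contributions in the differential inequality above.
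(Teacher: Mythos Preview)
Your proposal is correct and matches the paper's proof essentially line for line: the paper defines $R_t := \int \Vert \tilde X_t(x) - X_t(x)\Vert_2^2 \, p_0(x)\,\diff x$, differentiates, splits through the intermediate point $\hat v_n(t, X_t(x))$, bounds the Lipschitz piece by $2\mathtt{L}_x R_t$ and the remaining cross term by $R_t + \E\Vert \hat v_n - v^*\Vert_2^2$ via $2ab \le a^2 + b^2$, and then applies Gr{\"o}nwall. The only cosmetic difference is that you work pointwise in $x$ and take expectation at the end, whereas the paper integrates first; the paper also glosses over exactly the $\ell_\infty$-versus-$\ell_2$ Lipschitz issue you flag.
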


Lemma \ref{lm:vf-est-error} states that the error due to velocity estimation is controlled by the excess risk of the flow matching estimator $\hat{v}_n$ when the Lipschitz constant $\mathtt{L}_x$ is bounded. We will analyze the excess risk of the flow matching estimator in Section \ref{sec:error-fm}.

By combining the excess risk bound for $\hat{v}_n$ and the $\mathcal{W}_2$ distance bound \paref{eq:w2-stable}, we can deduce the error bound attributable to the estimated velocity field. Generally, bounding the error due to velocity estimation involves establishing a perturbation error bound for the ODE flow associated with the velocity field $\hat{v}_n$ or $v^*$, as well as an estimation error bound for the velocity field $v^*$.
We will use the Gr{\"o}nwall's inequality to establish the perturbation error bound \paref{eq:w2-stable} based on the $\mathcal{W}_2$ distance.
Similar perturbation error bounds have been obtained in \citet[Proposition 3]{albergo2023building}, \citet[Theorem 1]{benton2024error}, and \citet[Proposition 54]{gao2023gaussian}.
The proof of Lemma \ref{lm:vf-est-error} is given in Appendix \ref{sec:dist-est-error}.

\begin{lemma}[Early stopping error] \label{lm:early-stop-error}
    Suppose that Assumptions \ref{assump:well-defined} and \ref{assump:target} hold.
    The early stopping error is evaluated by
    \begin{align*}
        \mathcal{W}_2 (p_{1-\underline{t}}, p_1) \lesssim \underline{t},
    \end{align*}
    where we omit a polynomial prefactor in $d$ and $\E[\Vert \mathsf{X}_1 \Vert_2^2]$.
\end{lemma}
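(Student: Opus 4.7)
The plan is straightforward because $p_t$ has an explicit representation as the law of the linear interpolation $\mathsf{X}_t = (1-t)\mathsf{Z} + t\mathsf{X}_1$, with $\mathsf{Z} \sim \gamma_d$ independent of $\mathsf{X}_1 \sim \nu$ (see Corollary \ref{cor:linear-CNF}). So $p_1 = \nu$ and $p_{1-\underline{t}} = \mathrm{Law}(\underline{t}\,\mathsf{Z} + (1-\underline{t})\mathsf{X}_1)$, and a natural synchronous coupling between these two marginals is simply given by $(\mathsf{X}_{1-\underline{t}}, \mathsf{X}_1)$ constructed with the same $(\mathsf{Z}, \mathsf{X}_1)$ pair.

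First I would plug this coupling into the definition of $\mathcal{W}_2$ to get the upper bound
\begin{equation*}
\mathcal{W}_2^2(p_{1-\underline{t}}, p_1) \;\le\; \E\bigl\|\underline{t}\,\mathsf{Z} + (1-\underline{t})\mathsf{X}_1 - \mathsf{X}_1\bigr\|_2^2 \;=\; \underline{t}^{\,2}\, \E\|\mathsf{Z} - \mathsf{X}_1\|_2^2.
\end{equation*}
Then I would expand the square and use the independence of $\mathsf{Z}$ and $\mathsf{X}_1$ together with Assumption \ref{assump:well-defined} (zero mean of $\nu$, and of course $\E\mathsf{Z}=0$) to drop the cross term, yielding
\begin{equation*}
\E\|\mathsf{Z} - \mathsf{X}_1\|_2^2 = \E\|\mathsf{Z}\|_2^2 + \E\|\mathsf{X}_1\|_2^2 = d + \E\|\mathsf{X}_1\|_2^2.
\end{equation*}
Taking square roots gives $\mathcal{W}_2(p_{1-\underline{t}}, p_1) \le \underline{t}\sqrt{d + \E\|\mathsf{X}_1\|_2^2} \lesssim \underline{t}$, which matches the claim with the advertised polynomial prefactor in $d$ and $\E\|\mathsf{X}_1\|_2^2$.

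There is essentially no technical obstacle here: the result uses nothing beyond the linear interpolation structure from Corollary \ref{cor:linear-CNF} and the fact that $\nu$ is centered and has a finite second moment (the latter being implicit in the sub-Gaussianity noted in Remark \ref{rm:sub-gaussian} under Assumption \ref{assump:target}). The only thing worth mentioning is that $\E\|\mathsf{X}_1\|_2^2$ is indeed finite under each of the three cases of Assumption \ref{assump:target}: in case (ii) it is bounded by $2D^2$; in cases (i) and (iii) it follows from sub-Gaussianity, which controls the second moment by a polynomial in $d$ and the relevant distribution parameters. This observation justifies absorbing the factor $\sqrt{d + \E\|\mathsf{X}_1\|_2^2}$ into the suppressed polynomial prefactor.
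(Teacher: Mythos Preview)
Your proposal is correct and essentially identical to the paper's own proof: both use the synchronous coupling $(\mathsf{X}_{1-\underline{t}},\mathsf{X}_1)$ to bound $\mathcal{W}_2^2(p_{1-\underline{t}},p_1)\le \underline{t}^2\,\E\|\mathsf{Z}-\mathsf{X}_1\|_2^2 = \underline{t}^2(d+\E\|\mathsf{X}_1\|_2^2)$ and then take square roots. Your added remark on finiteness of $\E\|\mathsf{X}_1\|_2^2$ under each case of Assumption~\ref{assump:target} is a nice touch that the paper leaves implicit.
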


The $\mathcal{W}_2$ distance bound in Lemma \ref{lm:early-stop-error} formalizes the intuition that the early stopping error scales with the early stopping parameter $\underline{t}$. The proof of Lemma \ref{lm:early-stop-error} uses a coupling argument, and
is given in Appendix \ref{sec:dist-est-error}.

\subsection{Error bounds for the estimated distribution}
%Convergence rates of distribution estimators}

We now apply the error bounds in the preceding subsection to derive
error bounds for the distribution estimator $\hat{\nu}_{1-\underline{t}} (\diff x) = \hat{p}_{1-\underline{t}}(x) \diff x$.

By Lemma \ref{lm:discretization-error}, it is clear that the discretization error can be controlled by choosing the step size $\Upsilon$ properly.
Lemma \ref{lm:vf-est-error} shows that the error due to
velocity estimation  is upper bounded by the excess risk of the flow matching estimator $\hat{v}_n$.
Furthermore, we will provide a detailed nonparametric analysis of the flow matching estimator $\hat{v}_n$ in Section \ref{sec:error-fm}.

Before presenting our main result, we first describe the trade-off between the different sources of errors.  By Theorem \ref{thm:flow-match-error}, the excess risk of the flow matching estimator $\hat{v}_n$ satisfies
\begin{align*}
    \E_{\mathbb{D}_n}
     \E_{(\mathsf{t}, \mathsf{X}_{\mathsf{t}})} \Vert \hat{v}_n(\mathsf{t}, \mathsf{X}_{\mathsf{t}}) - v^*(\mathsf{t}, \mathsf{X}_{\mathsf{t}}) \Vert_2^2
    \lesssim (n \underline{t}^2)^{-2/(d+3)} \mathrm{polylog}(n) \log(1/\underline{t}),
\end{align*}
where $\mathrm{polylog}(n)$ stands for a polylogarithmic prefactor in $n$.
Consequently, the error due to velocity estimation satisfies
\begin{align*}
    \E_{\mathbb{D}_n} \mathcal{W}_2 (\tilde{p}_{1-\underline{t}}, p_{1-\underline{t}}) \lesssim (n \underline{t}^2)^{-1/(d+3)} \mathrm{polylog}(n) \log(1/\underline{t}),
\end{align*}
where we use Lemma \ref{lm:vf-est-error} and the bound $\mathtt{L}_x \asymp \log(\log n)$.
According to Lemma \ref{lm:early-stop-error},
the early stopping error is upper bounded by
$\mathcal{W}_2 (p_{1-\underline{t}}, p_1) \lesssim \underline{t}$.
By substituting the error bounds into the error decomposition (\ref{eq:error-decomp-w2}), it follows that
\begin{align}
   \E_{\mathbb{D}_n} \mathcal{W}_2 (\hat{p}_{1-\underline{t}}, p_1)
   & \le \mathcal{W}_2 (\hat{p}_{1-\underline{t}}, \tilde{p}_{1-\underline{t}})
    + \E_{\mathbb{D}_n} \mathcal{W}_2 (\tilde{p}_{1-\underline{t}}, p_{1-\underline{t}})
    + \mathcal{W}_2 (p_{1-\underline{t}}, p_1) \notag \\
    & \lesssim \Big\{ \underbrace{e^{\mathtt{L}_x} (\mathtt{L}_x \mathtt{B} + \mathtt{L}_t) \Upsilon}_{\text{controlled by } \Upsilon}
    + \underbrace{(n \underline{t}^2)^{-1/(d+3)} + \underline{t}}_{\text{trade-off on } \underline{t}} \Big\} \mathrm{polylog}(n) \log(1/\underline{t}) \notag \\
    & \lesssim \Big\{ \underline{t}^{-2} \Upsilon
    + (n \underline{t}^2)^{-1/(d+3)} + \underline{t} \Big\} \mathrm{polylog}(n) \log(1/\underline{t}) \label{eq:trade-off-w2-bd},
\end{align}
where we use the following bounds in deriving \paref{eq:trade-off-w2-bd}
\begin{align*}
    \mathtt{L}_x \asymp \log(\log n), \quad e^{\mathtt{L}_x} \asymp \log n, \quad \mathtt{B} \asymp \log(\log n), \quad \mathtt{L}_t \asymp \log(\log n) \underline{t}^{-2}.
\end{align*}
Our main result stated below is obtained by balancing the error terms on the right-hand side of (\ref{eq:trade-off-w2-bd}).

\begin{theorem}[Distribution estimation error] \label{thm:dist-est-error}
Suppose that Assumptions \ref{assump:well-defined} and \ref{assump:target} hold.
Let us set $A \asymp \log(\log n)$, $NL \asymp n^{d/(2d+10)}$, $\underline{t} \asymp n^{-1/(d+5)}$, and $\Upsilon \lesssim n^{-3/(d+5)}$.
We consider the deep ReLU network class $\gF_n = \mathcal{NN}(\mathtt{S}, \mathtt{W}, \mathtt{D}, \mathtt{B}, \mathtt{L}_x, \mathtt{L}_t, d+1, d)$ whose parameters satisfy the following bounds
\begin{align*}
    & \mathtt{S} \asymp \underline{t}^{-2} (NL)^{2/d} (N \log N)^2 L \log L, \quad
    \mathtt{W} \asymp \underline{t}^{-2} (NL)^{2/d} N \log N, \\
    & \mathtt{D} \asymp L \log L, \quad \mathtt{B} \asymp A, \quad \mathtt{L}_x \asymp A, \quad \mathtt{L}_t \asymp A \underline{t}^{-2}.
\end{align*}
For any random sample $\mathbb{D}_n := \{(\mathsf{Z}_i, \mathsf{X}_{1,i}, \mathsf{t}_i)\}_{i=1}^n$ satisfying $n \ge \mathrm{Pdim}(\mathcal{F}_n)$, the distribution estimation error of the CNF learned with linear interpolation and flow matching is upper bounded as
\begin{align*}
    \E_{\mathbb{D}_n} \mathcal{W}_2 (\hat{p}_{1-\underline{t}}, p_1) = \widetilde{\mathcal{O}} (n^{-\frac{1}{d+5}}),
\end{align*}
where we omit a polylogarithmic prefactor in $n$.
\end{theorem}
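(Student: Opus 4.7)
The plan is to apply the error decomposition \paref{eq:error-decomp-w2} and to bound each of its three summands using the results already assembled: Lemma \ref{lm:discretization-error} for the discretization error, Lemma \ref{lm:vf-est-error} (combined with the flow matching excess risk bound, \emph{viz.}~Theorem \ref{thm:flow-match-error} referenced in the discussion) for the error due to velocity estimation, and Lemma \ref{lm:early-stop-error} for the early stopping error. The role of the parameter choices in the statement is simply to realize the trade-off that has already been laid out in \paref{eq:trade-off-w2-bd}.

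First, I would verify that the prescribed network-class parameters for $\gF_n$ are admissible for invoking Theorem \ref{thm:flow-match-error}. This amounts to checking that the Lipschitz controls $\mathtt{L}_x \asymp \log(\log n)$ and $\mathtt{L}_t \asymp \log(\log n)\,\underline{t}^{-2}$ match the regularity order of the target velocity $v^\ast$ on $[0, 1-\underline{t}]\times \sR^d$ derived in Theorem \ref{thm:informal-thm1} (i.e.\ bounded Lipschitz constant in $x$, and $\gO(\underline{t}^{-2})$ Lipschitz constant in $t$), and that the bound $\mathtt{B} \asymp \log(\log n)$ is compatible with the sub-Gaussian tail discussed in Remark \ref{rm:sub-gaussian} after truncation to a polylogarithmic radius. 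Once admissible, Theorem \ref{thm:flow-match-error} yields the excess risk bound $\E_{\mathbb{D}_n}\E_{(\mathsf{t},\mathsf{X}_\mathsf{t})}\Vert \hat v_n - v^\ast\Vert_2^2 \lesssim (n\underline{t}^{2})^{-2/(d+3)}\,\mathrm{polylog}(n)\log(1/\underline{t})$, which through Lemma \ref{lm:vf-est-error} becomes a Wasserstein-2 bound of order $\widetilde{\gO}((n\underline{t}^2)^{-1/(d+3)})$ in expectation, after absorbing the factor $\exp(\mathtt{L}_x)$ into the polylogarithmic prefactor.

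Next, I would plug in the discretization bound from Lemma \ref{lm:discretization-error}: with the prescribed $\mathtt{L}_x,\mathtt{B},\mathtt{L}_t$, the bound $\sqrt{d}\,e^{\mathtt{L}_x}(\mathtt{L}_x \mathtt{B}+\mathtt{L}_t)\Upsilon$ simplifies to $\widetilde{\gO}(\underline{t}^{-2}\Upsilon)$, since the dominant factor is $\mathtt{L}_t\asymp \log(\log n)\underline{t}^{-2}$ and $e^{\mathtt{L}_x}\asymp \log n$. The early stopping contribution is directly $\widetilde{\gO}(\underline{t})$ from Lemma \ref{lm:early-stop-error}. Summing the three terms yields exactly the intermediate inequality \paref{eq:trade-off-w2-bd}.

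Finally, the choice $\underline{t}\asymp n^{-1/(d+5)}$ balances $(n\underline{t}^2)^{-1/(d+3)}$ against $\underline{t}$, both becoming $\widetilde{\gO}(n^{-1/(d+5)})$; and the choice $\Upsilon\lesssim n^{-3/(d+5)}$ ensures that $\underline{t}^{-2}\Upsilon = \gO(\underline{t})$ and so the discretization term is dominated by the other two. Combining these gives the stated rate $\widetilde{\gO}(n^{-1/(d+5)})$. The main obstacle in carrying out this plan rigorously is not the final balancing, which is mechanical, but ensuring that the approximation-theoretic inputs feeding into Theorem \ref{thm:flow-match-error}---in particular the simultaneous control of $L^\infty$ approximation error \emph{and} the differing Lipschitz moduli $\mathtt{L}_x$ and $\mathtt{L}_t$ within a single ReLU network of the tabulated size---are actually achievable; this is where the regularity results of Theorem \ref{thm:informal-thm1} and the neural network approximation with Lipschitz control (our contribution (3)) do the real work.
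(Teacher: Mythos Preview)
Your proposal is correct and follows essentially the same approach as the paper's proof: combine the error decomposition \eqref{eq:error-decomp-w2} with Lemmas \ref{lm:discretization-error}, \ref{lm:vf-est-error}, \ref{lm:early-stop-error} and Theorem \ref{thm:flow-match-error}, then substitute the stated choices of $\underline{t}$, $A$, $NL$, and $\Upsilon$ to balance the three terms. The paper's actual proof is in fact even more terse than your outline---it simply cites these ingredients and performs the substitution---so your additional remarks about checking admissibility of the $\gF_n$ parameters and absorbing $e^{\mathtt{L}_x}$ into the polylog prefactor are helpful elaborations rather than deviations.
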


The proof of Theorem \ref{thm:dist-est-error} is given in Appendix \ref{sec:dist-est-error}.

\begin{remark}
\label{rem-rate}
As shown in \paref{eq:trade-off-w2-bd}, we need to consider a trade-off on the early stopping parameter $\underline{t}$ as well as an appropriate order of the step size $\Upsilon$.
By setting $\underline{t} \asymp n^{-1/(d+5)}$ and $\Upsilon \lesssim n^{-3/(d+5)}$, we attain a concrete convergence rate $\widetilde{\mathcal{O}} (n^{-1/(d+5)})$ of the distribution estimator $\hat{p}_{1-\underline{t}}.$
\end{remark}

\begin{remark}
    We consider the uniform step size $\Upsilon$ in deriving the distribution estimation error bound in Theorem \ref{thm:dist-est-error}.
    The uniform step size is common in implementing numerical solvers for ODE flows.
    Additionally, the condition \paref{eq:cond-step-size} in Remark \ref{rm:cond-step-size} provides a guideline on general settings of the discretization step size.
\end{remark}

\section{Error analysis of flow matching} \label{sec:error-fm}

In this section, we first present some useful regularity properties of the velocity field $v^*$, which are essential to the convergence analysis of the flow matching estimator $\hat{v}_n$ defined in \paref{eq:vf-ERM}. The error from the flow matching estimation constitutes a main source
of the total error for the distribution estimation given in Theorem \ref{thm:dist-est-error}.

\subsection{Regularity of velocity fields} \label{subsec:regu-vf}
The regularity properties of the velocity field are needed in studying the nonparametric estimation error of flow matching. We summarize the regularity results of the velocity field in Theorem \ref{thm:vf-regularity}.

\begin{theorem} \label{thm:vf-regularity}
Suppose that Assumptions \ref{assump:well-defined} and \ref{assump:target} are satisfied, and let $0 < \underline{t} \ll 1$.
Then the velocity field $v^*(t, x): [0, 1] \times \sR^d \rightarrow \sR^d$ has the following regularity properties:
\begin{enumerate}[label={(\arabic*)}]
    \item For any $s, t \in [0, 1-\underline{t}]$ and $x \in \sR^d$,
    $\Vert v^*(t, x) - v^*(s, x) \Vert_{\infty} \leq L_t \vert t-s \vert$ with $L_t \lesssim \underline{t}^{-2}$;
    \item For any $x, y \in \sR^d$ and $t \in [0, 1]$, $\Vert v^*(t, x) - v^*(t, y)\Vert_{\infty} \leq L_x \Vert x-y \Vert_{\infty}$ with $L_x \lesssim 1$;
    \item $\sup_{(t, x) \in [0, 1] \times \Omega_A} \Vert v^*(t, x) \Vert_{\infty} \le B$ with $B \lesssim A$,
\end{enumerate}
where we omit constants in $d, \kappa, \beta, \sigma, D, R$ and denote $\Omega_A := [-A, A]^d$.
\end{theorem}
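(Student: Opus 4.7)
The three statements reduce, respectively, to bounding $\partial_t v^*$, $\nabla_x v^*$, and $v^*$ itself via the moment representations already provided by Remark~\ref{rem-singularity}, Lemma~\ref{lm:vf-grad-x}, and Lemma~\ref{lm:time-derivative}. The whole argument therefore hinges on uniform control of the posterior moments $\E[\mathsf{X}_1\,|\,\mathsf{X}_t=x]$, $\Cov(\mathsf{X}_1\,|\,\mathsf{X}_t=x)$, and the third-order quantity $\E[\mathsf{X}_1\mathsf{X}_1^\top\mathsf{X}_1\,|\,\mathsf{X}_t=x]-\E[\mathsf{X}_1^\top\mathsf{X}_1\,|\,\mathsf{X}_t=x]\E[\mathsf{X}_1\,|\,\mathsf{X}_t=x]$, in terms of $t$ only. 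My plan is to derive a unified family of such estimates under each of the three cases (i)--(iii) of Assumption~\ref{assump:target}, then feed them into the three identities.

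For part~(2), I start from Lemma~\ref{lm:vf-grad-x}. To obtain a Lipschitz constant that is uniform in $t\in[0,1]$, one needs $\Cov(\mathsf{X}_1\,|\,\mathsf{X}_t=x)\preceq C(1-t)^2/t\cdot \mathrm{I}_d$ (up to an additive piece that is absorbed into $-(1-t)^{-1}\mathrm{I}_d$). Under case (i), I would apply the Brascamp--Lieb inequality to the posterior of $\mathsf{X}_1$ given $\mathsf{X}_t=x$, whose potential is $U+\tfrac{t^2}{2(1-t)^2}\|\,\cdot\,\|^2$, yielding a covariance bound of order $(1-t)^2/(t^2\kappa(1-t)^2+\beta t^2)$. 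Case (iii) is handled by the explicit Gaussian-mixture formulas, since each component posterior is Gaussian with variance $\sigma^2(1-t)^2/(t^2\sigma^2+(1-t)^2)$; the mixture covariance adds at most $R^2$, which combined with the $-(1-t)^{-1}\mathrm{I}_d$ term still yields a bounded Jacobian. For case~(ii), the variance of $\mathsf{X}_1$ is trivially bounded by $D^2$, but the necessary cancellation with $-(1-t)^{-1}\mathrm{I}_d$ for $t$ close to $1$ comes from the fact that the posterior concentrates around the true $\mathsf{X}_1$ when the Gaussian noise scale $(1-t)$ vanishes, so one needs a refined $(1-t)^2$-scale estimate; this can be extracted by comparing the posterior to its truncated Gaussian approximation on the convex support.

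For part~(1), I expand $\partial_t v^*$ via Lemma~\ref{lm:time-derivative} into four pieces. The first two combine to give $(1-t)^{-2}(\E[\mathsf{X}_1\,|\,\mathsf{X}_t=x]-x)=-(1-t)^{-1}\E[\mathsf{Z}\,|\,\mathsf{X}_t=x]$, whose sup-norm is at most $(1-t)^{-1}$ times a growth-in-$x$ factor from part~(3), hence $O(\underline t^{-2})$ when restricted to $[0,1-\underline t]$ after noting the additional $(1-t)^{-1}$ that the derivative extracts. The third term $(1+t)(1-t)^{-4}M_2^c\,x$ is bounded by $\underline t^{-2}$ using the covariance estimate just obtained. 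The fourth term $-t(1-t)^{-4}(M_3-M_2M_1)$ is the real bottleneck: one recognises $M_3-M_2M_1=\Cov(\|\mathsf{X}_1\|^2,\mathsf{X}_1\,|\,\mathsf{X}_t=x)$, and must show that this third-order cumulant decays like $(1-t)^2$ in each of the three cases. For (i) and (iii) this follows from the same Brascamp--Lieb or explicit Gaussian computations; for (ii) one combines the covariance bound with the boundedness of $\|\mathsf{X}_1\|$ on $\mathrm{supp}(\nu)$ to deduce the same scaling. Collecting the four contributions yields $\|\partial_t v^*(t,x)\|_\infty\lesssim \underline t^{-2}$ on $[0,1-\underline t]\times\sR^d$.

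For part~(3), I use $v^*(t,x)=\E[\mathsf{X}_1\,|\,\mathsf{X}_t=x]-\E[\mathsf{Z}\,|\,\mathsf{X}_t=x]$, together with the identity $x=(1-t)\E[\mathsf{Z}\,|\,\mathsf{X}_t=x]+t\E[\mathsf{X}_1\,|\,\mathsf{X}_t=x]$, which eliminates the apparent $(1-t)^{-1}$ singularity of Remark~\ref{rem-singularity}. A direct computation in each case shows $\|\E[\mathsf{X}_1\,|\,\mathsf{X}_t=x]\|_\infty\le c_1+c_2\|x\|_\infty$ with constants depending only on $\kappa,\beta,\sigma,D,R$: case~(ii) is immediate from $\|\mathsf{X}_1\|_\infty\lesssim D$; case~(iii) is a direct consequence of the explicit mixture-posterior formula plus $\|\rho\text{-mean}\|\lesssim R$; case~(i) uses the Bakry--\'Emery curvature bound on the posterior to obtain a Lipschitz-in-$x$ posterior mean. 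The same argument gives a linear bound on $\|\E[\mathsf{Z}\,|\,\mathsf{X}_t=x]\|_\infty$, so $\|v^*(t,x)\|_\infty\lesssim 1+\|x\|_\infty$, and restricting to $\Omega_A=[-A,A]^d$ gives $B\lesssim A$. The hardest step throughout is the third-moment posterior-cumulant estimate needed in part~(1), because it is the one place where the simple Brascamp--Lieb covariance bound does not suffice and one must either differentiate the posterior potential again or leverage the explicit structure of each distribution class.
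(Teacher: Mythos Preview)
Your overall architecture matches the paper's: reduce to the moment identities of Lemmas~\ref{lm:vf-grad-x} and~\ref{lm:time-derivative}, then establish uniform-in-$t$ bounds on $M_1$, $M_2^c$, and $M_3-M_2M_1$ case by case, with the key scaling $M_2^c=O((1-t)^2)$ and $M_3-M_2M_1=O((1-t)^2)$ doing the work in part~(1). For cases (i) and (iii) your plan is essentially the paper's: Brascamp--Lieb on the posterior (whose potential picks up the convex term $\tfrac{t^2}{2(1-t)^2}\|\cdot\|^2$) gives the covariance bound, and the third-cumulant is handled via Cauchy--Schwarz $\|M_3-M_2M_1\|_2^2\le d\,\Var(\mathsf{X}_1^\top\mathsf{X}_1\,|\,\mathsf{X}_t=x)\,\|M_2^c\|_{2,2}$ followed by Brascamp--Lieb on $\Var(\mathsf{X}_1^\top\mathsf{X}_1\,|\,\cdot)$ (case~(i)) or the explicit noncentral-$\chi^2$ computation (case~(iii)).

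The one substantive divergence is your treatment of case~(ii). Bounded support alone gives only $\|M_2^c\|\le D^2$ and $\|M_3-M_2M_1\|\le D^3$, which fed into $(1-t)^{-4}$ produces $\underline t^{-4}$, not $\underline t^{-2}$; your proposed ``truncated Gaussian approximation on the convex support'' is not worked out and it is not clear it delivers the $(1-t)^2$ scaling you need near $t=1$. The paper's device is cleaner: since Assumption~\ref{assump:target}-(ii) also imposes $\kappa$-semi-log-concavity (with $\kappa\le 0$), the posterior potential has Hessian $\succeq (\kappa+t^2/(1-t)^2)\mathrm{I}_d$, which is \emph{strictly positive} once $t$ exceeds a fixed threshold $t_1$ (the root of $\kappa(1-t)^2+t^2=\eta$). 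One then splits $[0,1-\underline t]=[0,t_1]\cup[t_1,1-\underline t]$: on $[0,t_1]$ the factors $(1-t)^{-k}$ are bounded and the crude bounded-support moment estimates suffice; on $[t_1,1-\underline t]$ the posterior is strongly log-concave and the same Brascamp--Lieb argument as in case~(i) yields the $(1-t)^2$ scaling. This splitting is the missing ingredient in your case-(ii) sketch.

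Two minor corrections: your combination of the first two terms of $\partial_t v^*$ gives $(1-t)^{-2}(M_1-x)=(1-t)^{-1}v^*(t,x)$, not $-(1-t)^{-1}\E[\mathsf Z\,|\,\mathsf X_t=x]$; and the moment bounds on $M_1$ and $M_3-M_2M_1$ that the paper proves carry a factor linear in $\|x\|$ (equivalently in $A$ on $\Omega_A$), so the resulting $\partial_t v^*$ bound is $\lesssim A\,\underline t^{-2}$ on $\Omega_A$ rather than a uniform constant times $\underline t^{-2}$ on all of $\mathbb R^d$.
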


Theorem \ref{thm:vf-regularity} states that the Lipschitz regularity of the velocity field $v^*$ holds in the time variable $t$ and the space variable $x$. Moreover, the Lipschiz constant in $x$ is uniformly bounded for any $(t, x) \in [0, 1] \times \sR^d$, and the Lipschitz constant in $t$ is bounded for any $(t, x) \in [0, 1-\underline{t}] \times \sR^d$ but depends on $\underline{t}$. Due to the uniform Lipschitzness in $x$, the velocity field $v^*$ further satisfies the linear growth property (3).

\begin{remark}
    We note that the velocity field may be singular at time $t=1$,
    since the Lipschitz constant bound of $L_t$ explodes at time $t=1$.
    We quantify the time singularity through the upper bound $L_t \lesssim \underline{t}^{-2}$ where $0 < \underline{t} \ll 1$.
    Taking the time singularity into account, we set the end time $\tau = 1-\underline{t}$ in Subsection \ref{subsec:error-decomp}.
\end{remark}

\begin{remark}
    The global Lipschitz continuity of the velocity field in $x$
    ensures that the associated IVP has a unique solution, according to the Cauchy-Lipschitz theorem \citep{hartman2002existence}.
\end{remark}

\textit{Proof idea of Theorem \ref{thm:vf-regularity}.}
The proof of Theorem \ref{thm:vf-regularity} can be found in Appendix \ref{subsec:thm-vf-regu}.
As shown in Lemmas \ref{lm:vf-grad-x} and \ref{lm:time-derivative}, the derivatives of the velocity field $v^*$ can be expressed in terms of
the moments of $\mathsf{X}_1 \vert \mathsf{X}_t$.
The key idea of the proof is to bound these moments.
Under Assumptions \ref{assump:well-defined} and \ref{assump:target}, \citet{gao2023gaussian} have
shown that the covariance matrix $\Cov(\mathsf{X}_1 \vert \mathsf{X}_t)$ is both lower and upper bounded uniformly in $t \in [0, 1]$ (cf. Lemma \ref{lm:jacob-bd} in Appendix \ref{app:regularity-vf}).
As a result, we can prove that the Lipschitz property (1) holds.
The linear growth property (3) follows from the Lipschitz property (1).
To prove the Lipschitz property (2), we derive bounds for the moments $M_1 = \E [\mathsf{X}_1 | \mathsf{X}_t],  M_2 = \E [\mathsf{X}_1^{\top} \mathsf{X}_1 | \mathsf{X}_t]$, and $M_3 = \E [\mathsf{X}_1 \mathsf{X}_1^{\top} \mathsf{X}_1 | \mathsf{X}_t]$ by transforming the bounds of the covariance matrix $M^c_2 = \Cov(\mathsf{X}_1 | \mathsf{X}_t)$.
In particular, the bound transformations can be derived based on the Hatsell-Nolte identity \citep[Proposition 1]{dytso2023conditional}, the Brascamp-Lieb inequality \citep{brascamp1976extensions}, and a basic inequality on $M_3 - M_2 M_1$.
Moreover, we validate the sharpness of moment bounds using a Gaussian example.

\subsection{Error decomposition of flow matching}

The starting point of our analysis is the decomposition of the excess risk of $\hat{v}_n$ below.

\begin{lemma} \label{lm:fm-error-decomp}
Let $\mathsf{t} \sim \mathrm{U}(0, 1-\underline{t})$. For any random sample $\mathbb{D}_n := \{(\mathsf{Z}_i, \mathsf{X}_{1,i}, \mathsf{t}_i)\}_{i=1}^n$, the excess risk of the flow matching estimator $\hat{v}_n$ satisfies
\begin{align}
    \E_{\mathbb{D}_n} \E_{(\mathsf{t}, \mathsf{X}_{\mathsf{t}})} \Vert \hat{v}_n(\mathsf{t}, \mathsf{X}_{\mathsf{t}}) - v^*(\mathsf{t}, \mathsf{X}_{\mathsf{t}}) \Vert_2^2
    & = \E_{\mathbb{D}_n} [\mathcal{L}(\hat{v}_n) - \mathcal{L}(v^*)] \notag \\
    & \le \mathcal{E}_{\mathrm{stoc}} + 2 \mathcal{E}_{\mathrm{appr}}, \label{eq:decomp-excess-fm}
\end{align}
where the stochastic error $\mathcal{E}_{\mathrm{stoc}} := \E_{\mathbb{D}_n} [\mathcal{L}(v^*) - 2 \mathcal{L}_n (\hat{v}_n) + \mathcal{L}(\hat{v}_n)]$ and the approximation error
$\mathcal{E}_{\mathrm{appr}} := \inf_{v \in \mathcal{F}_n} \E_{(\mathsf{t}, \mathsf{X}_{\mathsf{t}})} \Vert v(\mathsf{t}, \mathsf{X}_{\mathsf{t}}) - v^*(\mathsf{t}, \mathsf{X}_{\mathsf{t}}) \Vert_2^2$.

\end{lemma}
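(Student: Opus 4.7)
The plan is a standard ERM decomposition. The first equality in \eqref{eq:decomp-excess-fm} reduces to the $L^2$ projection identity: for any (possibly data-dependent) $v$,
\[
\mathcal{L}(v) - \mathcal{L}(v^*) = \E_{(\mathsf{t}, \mathsf{X}_{\mathsf{t}})} \Vert v(\mathsf{t}, \mathsf{X}_{\mathsf{t}}) - v^*(\mathsf{t}, \mathsf{X}_{\mathsf{t}}) \Vert_2^2.
\]
To see this, expand
\[
\Vert v(\mathsf{t}, \mathsf{X}_{\mathsf{t}}) - \mathsf{Y} \Vert_2^2 = \Vert v(\mathsf{t}, \mathsf{X}_{\mathsf{t}}) - v^*(\mathsf{t}, \mathsf{X}_{\mathsf{t}}) \Vert_2^2 + 2 \langle v(\mathsf{t}, \mathsf{X}_{\mathsf{t}}) - v^*(\mathsf{t}, \mathsf{X}_{\mathsf{t}}), v^*(\mathsf{t}, \mathsf{X}_{\mathsf{t}}) - \mathsf{Y} \rangle + \Vert v^*(\mathsf{t}, \mathsf{X}_{\mathsf{t}}) - \mathsf{Y} \Vert_2^2,
\]
and use $v^*(\mathsf{t}, \mathsf{X}_{\mathsf{t}}) = \E[\mathsf{Y} \mid \mathsf{t}, \mathsf{X}_{\mathsf{t}}]$ from \eqref{eq:vf-expect-linear}. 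Conditioning on $(\mathsf{t}, \mathsf{X}_{\mathsf{t}})$ and applying the tower property, the cross term vanishes; rearranging gives the identity.

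Next, for any deterministic $v \in \mathcal{F}_n$ I would write the algebraic identity
\[
\mathcal{L}(\hat{v}_n) - \mathcal{L}(v^*) = \bigl[\mathcal{L}(\hat{v}_n) - 2\mathcal{L}_n(\hat{v}_n) + \mathcal{L}(v^*)\bigr] + 2\bigl[\mathcal{L}_n(\hat{v}_n) - \mathcal{L}_n(v)\bigr] + 2\bigl[\mathcal{L}_n(v) - \mathcal{L}(v^*)\bigr].
\]
By the ERM definition \eqref{eq:vf-ERM}, the middle bracket is non-positive. Taking expectation with respect to $\mathbb{D}_n$, the first bracket becomes $\mathcal{E}_{\mathrm{stoc}}$ by definition, and the third bracket is unbiased, since $\mathcal{L}_n$ is an empirical average of the per-sample loss $\Vert v(\mathsf{t}_i, \mathsf{X}_{\mathsf{t}_i}) - \mathsf{Y}_i \Vert_2^2$ with the $(\mathsf{t}_i, \mathsf{X}_{1,i}, \mathsf{Z}_i)$ having the same joint law as $(\mathsf{t}, \mathsf{X}_1, \mathsf{Z})$. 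Thus $\E_{\mathbb{D}_n}[\mathcal{L}_n(v) - \mathcal{L}(v^*)] = \mathcal{L}(v) - \mathcal{L}(v^*)$, which by the identity of the first paragraph equals $\E_{(\mathsf{t}, \mathsf{X}_\mathsf{t})} \Vert v(\mathsf{t}, \mathsf{X}_\mathsf{t}) - v^*(\mathsf{t}, \mathsf{X}_\mathsf{t}) \Vert_2^2$. Taking the infimum over $v \in \mathcal{F}_n$ yields the approximation term $2\mathcal{E}_{\mathrm{appr}}$, completing the bound.

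There is no real obstacle; the only delicate point is to handle the random dependence of $\hat{v}_n$ on $\mathbb{D}_n$ when invoking the $L^2$ projection identity. Since the ``fresh'' evaluation sample $(\mathsf{t}, \mathsf{X}_{\mathsf{t}})$ in $\mathcal{L}(\hat{v}_n)$ is independent of $\mathbb{D}_n$, one conditions on $\mathbb{D}_n$, applies the identity to the now-deterministic map $\hat{v}_n(\cdot,\cdot;\mathbb{D}_n)$, and then takes outer expectation over $\mathbb{D}_n$. This justifies the first equality of \eqref{eq:decomp-excess-fm}, and the upper bound follows from the three-term decomposition above.
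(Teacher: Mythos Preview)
Your proposal is correct and follows essentially the same approach as the paper's proof. Both arguments use the $L^2$ projection identity for the first equality, then exploit the ERM property $\mathcal{L}_n(\hat{v}_n) \le \mathcal{L}_n(v)$ and the unbiasedness $\E_{\mathbb{D}_n}\mathcal{L}_n(v) = \mathcal{L}(v)$ for deterministic $v$; the paper packages this as the inequality $\E_{\mathbb{D}_n}[\mathcal{L}_n(\hat{v}_n) - \mathcal{L}(v^*)] \le \mathcal{E}_{\mathrm{appr}}$ and then substitutes, while you write the equivalent three-term algebraic identity directly.
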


The proof of Lemma \ref{lm:fm-error-decomp} is given in Appendix \ref{subsec:error-decomp-fm}.
    The decomposition \paref{eq:decomp-excess-fm} of the excess risk can be considered a bias-variance decomposition.
    The stochastic error $\mathcal{E}_{\mathrm{stoc}}$ bounds the variance term of the flow matching estimator, and the approximation error $\mathcal{E}_{\mathrm{appr}}$ represents the bias term of the flow matching estimator. We derive bounds for $\mathcal{E}_{\mathrm{stoc}}$
    and $\mathcal{E}_{\mathrm{appr}}$. Then the best bound for the excess risk under the decomposition (\ref{eq:decomp-excess-fm}) is obtained by balancing
    these two error bounds.

\subsection{Approximation error} \label{subsec:appr-error}

We derive the error bounds for approximating Lipschitz functions using deep ReLU networks with Lipschitz regularity. The results are presented in Theorem \ref{thm:approx-bd}, which is crucial for bounding the approximation error of the flow matching estimator $\hat{v}_n$. To address the challenge posed by the unbounded support of the velocity field $v^*$ in the space variable $x$, we use the standard technique of \textit{truncated approximation}. This allows us to divide the approximation error into the truncated approximation error and the truncation error.

\begin{lemma} \label{lm:appr-error-decomp}
For $\bar{v} \in \mathcal{F}_n$ and any $A > 0$, the approximation error satisfies a basic inequality as follows:
\begin{align}
\label{appr0}
    \mathcal{E}_{\mathrm{appr}} = \inf_{v \in \mathcal{F}_n} \E_{(\mathsf{t}, \mathsf{X}_{\mathsf{t}})} \Vert v(\mathsf{t}, \mathsf{X}_{\mathsf{t}}) - v^*(\mathsf{t}, \mathsf{X}_{\mathsf{t}}) \Vert_2^2
    \lesssim \mathcal{E}_{\mathrm{appr}}^{\mathrm{trunc}} + \mathcal{E}_{\mathrm{trunc}},
\end{align}
where the truncated approximation error
\begin{align*}
    \mathcal{E}_{\mathrm{appr}}^{\mathrm{trunc}}
    := \E_{(\mathsf{t}, \mathsf{X}_{\mathsf{t}})} \Vert [ \bar{v}(\mathsf{t}, \mathsf{X}_{\mathsf{t}}) - v^*(\mathsf{t}, \mathsf{X}_{\mathsf{t}}) ] \Id_{\Omega_A}(\mathsf{X}_{\mathsf{t}}) \Vert_2^2,
\end{align*}
and the truncation error
\begin{align*}
    \mathcal{E}_{\mathrm{trunc}}
    := \E_{(\mathsf{t}, \mathsf{X}_{\mathsf{t}})} \Vert [ \bar{v}(\mathsf{t}, \mathsf{X}_{\mathsf{t}}) - v^*(\mathsf{t}, \mathsf{X}_{\mathsf{t}}) ] \Id_{\Omega_A^c}(\mathsf{X}_{\mathsf{t}}) \Vert_2^2.
\end{align*}
\end{lemma}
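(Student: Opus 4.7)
The plan is to treat this as a routine bookkeeping decomposition: upper bound the infimum defining $\mathcal{E}_{\mathrm{appr}}$ by the value of its objective at the specific $\bar{v} \in \mathcal{F}_n$ supplied in the statement, and then split the integrand pointwise using the partition of unity $\Id_{\Omega_A}(\mathsf{X}_{\mathsf{t}}) + \Id_{\Omega_A^c}(\mathsf{X}_{\mathsf{t}}) \equiv 1$ on $\sR^d$.

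First I would invoke the admissibility of $\bar{v}$ in the hypothesis class to pass from the infimum to a specific upper bound,
$$\mathcal{E}_{\mathrm{appr}} = \inf_{v \in \mathcal{F}_n} \E_{(\mathsf{t}, \mathsf{X}_{\mathsf{t}})} \Vert v(\mathsf{t}, \mathsf{X}_{\mathsf{t}}) - v^*(\mathsf{t}, \mathsf{X}_{\mathsf{t}}) \Vert_2^2 \le \E_{(\mathsf{t}, \mathsf{X}_{\mathsf{t}})} \Vert \bar{v}(\mathsf{t}, \mathsf{X}_{\mathsf{t}}) - v^*(\mathsf{t}, \mathsf{X}_{\mathsf{t}}) \Vert_2^2.$$
Next, since $\Id_{\Omega_A}(\mathsf{X}_{\mathsf{t}})$ and $\Id_{\Omega_A^c}(\mathsf{X}_{\mathsf{t}})$ are $\{0,1\}$-valued with disjoint supports whose union is all of $\sR^d$, multiplication by either indicator zeroes out the vector off its support, and the squared $\ell_2$ norm splits pointwise,
$$\Vert \bar{v} - v^* \Vert_2^2 = \Vert (\bar{v} - v^*) \Id_{\Omega_A} \Vert_2^2 + \Vert (\bar{v} - v^*) \Id_{\Omega_A^c} \Vert_2^2.$$
Integrating over $(\mathsf{t}, \mathsf{X}_{\mathsf{t}})$ identifies the two summands with $\mathcal{E}_{\mathrm{appr}}^{\mathrm{trunc}}$ and $\mathcal{E}_{\mathrm{trunc}}$ respectively, which yields the inequality (\ref{appr0}) with absolute constant $1$, a statement stronger than the $\lesssim$ claimed.

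No serious obstacle is expected; the lemma is a preparatory split whose purpose is to cleanly separate the two substantive subproblems that will drive the rest of Subsection~\ref{subsec:appr-error}. On $\Omega_A$ the velocity field $v^*$ is Lipschitz on the compact time-space cylinder $[0, 1-\underline{t}] \times \Omega_A$ by Theorem~\ref{thm:vf-regularity}(1)--(2), so $\mathcal{E}_{\mathrm{appr}}^{\mathrm{trunc}}$ can be bounded by a constructive deep ReLU network approximation with Lipschitz regularity control. Off $\Omega_A$, the control of $\mathcal{E}_{\mathrm{trunc}}$ will combine the linear growth of $v^*$ from Theorem~\ref{thm:vf-regularity}(3), the uniform bound $\mathtt{B}$ enforced on networks in $\mathcal{F}_n$, and the sub-Gaussian tail behavior of $\mathsf{X}_{\mathsf{t}}$ noted in Remark~\ref{rm:sub-gaussian}, producing a contribution that decays rapidly as the truncation radius $A$ grows.
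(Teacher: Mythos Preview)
Your proposal is correct and essentially matches the paper's approach, which simply notes that the lemma ``follows from the triangle inequality and the inequality $2ab \le a^2 + b^2$.'' Your observation that the indicator cross term vanishes by disjoint supports actually sharpens this to an exact equality with constant $1$, whereas the paper's stated route via $\|a+b\|_2^2 \le 2\|a\|_2^2 + 2\|b\|_2^2$ would give constant $2$; either is fine under the $\lesssim$ in the statement.
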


Lemma \ref{lm:appr-error-decomp} follows from the triangle inequality and the inequality $2ab \le a^2 + b^2$ for any $a, b \in \sR$.
We bound the truncated approximation error $\mathcal{E}_{\mathrm{appr}}^{\mathrm{trunc}}$ by considering deep ReLU network approximation of the velocity field on the $(d+1)$-dimensional hypercube.
The truncation error $\mathcal{E}_{\mathrm{trunc}}$ measures how fast the approximation error decays according to the tail property of the probability distribution $p_t$ with $t \in [0, 1-\underline{t}]$.

\textit{Approximation with Lipschitz regularity control.} We study the
capacity of an approximation function $\bar{v}(t, x)$ implemented by a
deep ReLU network with Lipschitz regularity for approximating the velocity field $v^*(t, x).$
For balancing the approximation error with the stochastic and discretization errors
to obtain an overall error bound for the distribution estimation error,
we construct
the approximation function $\bar{v}(t, x)$ so that it satisfies the following three requirements:

\begin{itemize}
    \item[(a)] %The approximation function  $\bar{v}(t, x)$
    % a reasonable convergence rate
    Good approximation power
    under the sup norm over the hypercube $\Omega_{\underline{t}, A} := [0, 1-\underline{t}] \times [-A, A]^d$,
    \item[(b)]
    %The approximation function $\bar{v}(t, x)$ holds the global
     Lipschitz continuity with respect to both the time variable $t$ and the space variable $x$,
    \item[(c)]
    %The approximation function $\bar{v}(t, x)$
     Independent regularity in the time variable $t$ and the space variable $x$.
\end{itemize}
Let us briefly comment on
%the motivation of
each of these requirements.
Requirement  (a) is needed for  bounding the approximation error of the flow matching estimator $\hat{v}_n.$ The time-space Lipschitz regularity of the approximation function $\bar{v}(t, x)$
required in (b) is essential to bounding the discretization error and the error due to velocity estimation.
Requirement  (c) stems from the time singularity of the velocity field at %time
 $t=1$ and the different roles of the time regularity and the space regularity in the error analysis.
 
\begin{theorem} \label{thm:approx-bd}
    For any $N, L \in \sN$, there exists a function $\bar{v}(t, x)$ implemented by a deep ReLU network with width $\mathcal{O}( \underline{t}^{-2} (NL)^{2/d} N \log N)$, depth $\mathcal{O}(L \log L)$, and size $\mathcal{O}(\underline{t}^{-2} (NL)^{2/d} (N \log N)^2 L \log L)$ such that the following properties hold simultaneously:
\begin{enumerate}
\item[(i)] Boundedness and Lipschitz regularity: for any $s, t \in [0, 1-\underline{t}]$ and any $x, y \in \sR^d$,
    \begin{align*}
        \sup_{(t, x) \in [0, 1-\underline{t}] \times \sR^d} \Vert \bar{v}(t, x) \Vert_{\infty} & \lesssim A,\\
        \sup_{x \in \sR^d} \Vert \bar{v}(t,x) - \bar{v}(s,x) \Vert_{\infty} & \lesssim A \underline{t}^{-2} |t-s|,\\
        \sup_{t \in [0, 1-\underline{t}]} \Vert \bar{v}(t,x) - \bar{v}(t,y) \Vert_{\infty} & \lesssim A \Vert x-y \Vert_{\infty}.
    \end{align*}
\item[(ii)]  Approximation error bound:
    \begin{align*}
        \sup_{(t, x) \in \Omega_{\underline{t}, A}} \Vert \bar{v}(t, x) - v^*(t, x) \Vert_{\infty}
       & \lesssim A (NL)^{-2/d}.
    \end{align*}
\end{enumerate}
Note that we omit some prefactors in $d, \kappa, \beta, \sigma, D, R$ and denote $\Omega_{\underline{t}, A} := [0, 1-\underline{t}] \times [-A, A]^d$.
\end{theorem}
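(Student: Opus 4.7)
The plan is to exploit the \emph{decoupled} Lipschitz regularity of $v^*$ from Theorem~\ref{thm:vf-regularity} --- mild ($\lesssim 1$) in the space variable $x$ but strong ($\lesssim \underline{t}^{-2}$) in the time variable $t$ --- by building $\bar v$ as a piecewise-linear-in-$t$ interpolation of $d$-dimensional spatial ReLU approximants. This time--space decoupling is precisely what enables an approximation rate $(NL)^{-2/d}$ that depends only on the spatial dimension rather than on the ambient dimension $d+1$. Concretely, I first fix a uniform grid $0=\tau_0<\tau_1<\cdots<\tau_M=1-\underline{t}$ with $M\asymp\underline{t}^{-2}(NL)^{2/d}$, chosen so that by Theorem~\ref{thm:vf-regularity}(1) the temporal variation $\sup_{x}\Vert v^*(\tau_j,x)-v^*(\tau_{j-1},x)\Vert_\infty$ is at most $\underline{t}^{-2}(\tau_j-\tau_{j-1})\lesssim(NL)^{-2/d}$ on each subinterval.

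\textbf{Spatial subnetworks.} At each grid point $\tau_j$, the frozen map $x\mapsto v^*(\tau_j,x)$ is $\mathcal{O}(1)$-Lipschitz and $\mathcal{O}(A)$-bounded on $[-A,A]^d$. After rescaling the input to $[0,1]^d$, I apply a Lipschitz-preserving Shen--Yang--Zhang-type ReLU approximation theorem (one in which the constructed network itself has Lipschitz constant of the same order as the target, which is exactly the sort of approximation result the paper advertises as part of its contributions) component-wise on the $d$-dimensional output to obtain a subnetwork $g_j:\sR^d\to\sR^d$ of width $\mathcal{O}(N\log N)$ and depth $\mathcal{O}(L\log L)$ satisfying
\[
\sup_{x\in[-A,A]^d}\Vert g_j(x)-v^*(\tau_j,x)\Vert_\infty \lesssim A(NL)^{-2/d}.
\]
Post-composing with a constant-width clipping block (which is $1$-Lipschitz and realizable by ReLU) extends the bound and the Lipschitz property to all of $\sR^d$, so that $\Vert g_j\Vert_\infty \lesssim A$ and $\mathrm{Lip}(g_j) \lesssim 1$ hold globally.

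\textbf{Assembly.} Let $\phi_0,\ldots,\phi_M$ be piecewise-linear hat functions with $\phi_j(\tau_j)=1$ and supports $[\tau_{j-1},\tau_{j+1}]$ forming a partition of unity; each is exactly representable by a constant-depth, constant-width ReLU block in $t$. Define
\[
\bar v(t,x) := \sum_{j=0}^M \phi_j(t)\,g_j(x),
\]
implementing each bilinear product $\phi_j(t)\,g_j(x)$ via the standard ReLU approximate-multiplication gate (constant width, depth $\mathcal{O}(\log(NL))$ for accuracy $(NL)^{-2/d}$); the multiplication error is absorbed into the target accuracy and the extra depth into the $L\log L$ budget. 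Placing the $M+1$ spatial blocks in parallel and summing yields a single network of width $\mathcal{O}(M\cdot N\log N)=\mathcal{O}(\underline{t}^{-2}(NL)^{2/d}N\log N)$, depth $\mathcal{O}(L\log L)$, and size $\mathcal{O}(M(N\log N)^2 L\log L)=\mathcal{O}(\underline{t}^{-2}(NL)^{2/d}(N\log N)^2 L\log L)$, matching the architecture claimed in the theorem.

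\textbf{Verification and main obstacle.} On each subinterval $[\tau_{j-1},\tau_j]$ only $\phi_{j-1},\phi_j$ are nonzero with $\phi_{j-1}+\phi_j\equiv 1$, giving $\Vert\bar v\Vert_\infty\lesssim A$ and spatial Lipschitz constant $\max\{\mathrm{Lip}(g_{j-1}),\mathrm{Lip}(g_j)\}\lesssim 1\le A$; the temporal derivative equals $\phi_j'(t)(g_j(x)-g_{j-1}(x))$ with $|\phi_j'(t)|\lesssim M$ and $\Vert g_j-g_{j-1}\Vert_\infty\lesssim A(NL)^{-2/d}$ (by the triangle inequality combining Steps 1 and 2), yielding temporal Lipschitz constant $\lesssim MA(NL)^{-2/d}\lesssim A\underline{t}^{-2}$. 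The approximation bound (ii) on $\Omega_{\underline{t},A}$ then follows by writing $\bar v(t,x)-v^*(t,x)=\phi_{j-1}(t)[g_{j-1}(x)-v^*(t,x)]+\phi_j(t)[g_j(x)-v^*(t,x)]$ for $t\in[\tau_{j-1},\tau_j]$ and bounding each bracket via the spatial error plus the time-variation bound from Step 1. The hardest step is the \emph{simultaneous} decoupled Lipschitz control at the precise scales $\underline{t}^{-2}$ in $t$ and $1$ in $x$: the choice $M\asymp\underline{t}^{-2}(NL)^{2/d}$ is the unique tight balance --- coarser grids inflate $|\phi_j'|\cdot\Vert g_j-g_{j-1}\Vert_\infty$ beyond $A\underline{t}^{-2}$, while finer grids inflate width and size past the claimed bounds --- and ensuring each spatial subnetwork is globally Lipschitz at the $\mathcal{O}(1)$ scale (rather than inheriting the possibly enormous Lipschitz constant of a generic bit-extraction based ReLU approximant) is the technical ingredient that the paper's Lipschitz-preserving approximation machinery must supply.
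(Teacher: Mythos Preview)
Your proposal is correct and follows essentially the same route as the paper's proof: a partition-of-unity-in-time assembly of Lipschitz-preserving spatial ReLU approximants at $M\asymp\underline{t}^{-2}(NL)^{2/d}$ time slices, combined via approximate-product networks, with clipping for global extension --- the same choice of $M$, the same spatial lemma, and the same telescoping argument for the temporal Lipschitz constant. One cosmetic slip: to extend the spatial Lipschitz control to all of $\sR^d$ you need to \emph{pre}-compose with the clipping $\mathcal{C}_A$ on the input (as the paper does, writing $\bar v(t,x)=v^{\natural}(\mathcal{T}_1(t),\mathcal{T}_2\circ\mathcal{C}_A(x))$), not post-compose on the output, since output clipping does not tame the network's Lipschitz behavior outside $[-A,A]^d$.
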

The proof of Theorem \ref{thm:approx-bd} is given in Appendix \ref{subsec:approx-time-space}.
Let $l \in \mathbb{N}$ and $\Omega \subset \sR^l$ denote a subset of $\mathbb{R}^d$.
We denote by $L^p(\Omega)$ the standard Lebesgue space on $\Omega$ with the Lebesgue norm $\Vert \cdot \Vert_{L^p(\Omega)}$ for $p \in [1, \infty]$.
Let $k \in \mathbb{N}$.
We show the definitions of the Sobolev space $W^{k, \infty}(\Omega)$, the Sobolev norm $\Vert \cdot \Vert_{W^{k, \infty}(\Omega)}$, and the Sobolev semi-norm $\vert \cdot \vert_{W^{k, \infty}(\Omega)}$ in Appendix \ref{sec:app-support}.

\textit{Proof idea of Theorem \ref{thm:approx-bd}.}
To fulfill Requirement (c), we use different approaches to approximation in the time variable and the space variable. Our approximation approaches can ensure the constructed approximation function to be global Lipschitz for fulfilling Requirement (b). We take four steps to construct the time-space approximation function $\bar{v}$ with Lipschitz regularity control.

The first step is to derive an $L^{\infty}([0, 1]^d)$ error bound of using deep ReLU networks for approximating a Lipschitz function in the space variable. We show that the constructed deep neural approximation function is globally Lipschitz in the space variable. The approximation results are presented in Appendix \ref{subsec:approx-space}, and we summarize them here.
Lemma \ref{lm:approx-multi-dim} and Corollary \ref{cor:approx-multi-dim} show that
for any $N, L \in \mathbb{N}$ and any $f \in W^{1, \infty}((0, 1)^d)$, there exists a function $\phi$ implemented by a deep ReLU network with width $\mathcal{O}(2^d d N \log N)$ and depth $\mathcal{O}(d^2 L \log L)$ such that $\vert \phi \vert_{W^{1, \infty}((0, 1)^d)} \lesssim \vert f \vert_{W^{1, \infty}((0, 1)^d)}$ and that $\Vert \phi - f \Vert_{L^{\infty}([0, 1]^d)} \lesssim (NL)^{-2/d},$ omitting the prefactors depending only on $d$.

The second step is to derive an $L^{\infty}([0, 1])$ approximation error bound of deep ReLU networks for approximating a Lipschitz function in the time variable. We establish an $L^{\infty}([0, 1])$ approximation bound with Lipschitz regularity control in Appendix \ref{subsec:approx-time}.
Lemma \ref{lm:approx-1d} states the main results for approximation in time in such a way:
for any $M \in \mathbb{N}$ and any $f \in W^{1, \infty}((0, 1))$, there exists a function $\xi$ implemented by a deep ReLU network with width $\mathcal{O}(M)$, depth $\mathcal{O}(1)$, and size $\gO(M)$ such that $\vert \xi \vert_{W^{1, \infty}((0, 1))} \lesssim \vert f \vert_{W^{1, \infty}((0, 1))}$ and that $\Vert \xi - f \Vert_{L^{\infty}([0, 1])} \lesssim \vert f \vert_{W^{1, \infty}((0, 1))} / M$.

In the third step, we combine the constructed approximation in Lemmas \ref{lm:approx-multi-dim} and \ref{lm:approx-1d} to establish an $L^{\infty}(\Omega_{\underline{t}, A})$ approximation bound for the time-space approximation of the velocity field $v^*$. This guarantees that Requirement (a) is fulfilled.

The last step is to show that the constructed time-space approximation satisfies the remaining Requirements (b) and (c). We summarize these discussions in Theorem \ref{thm:approx-bd} and present detailed construction and derivations in the proof of Theorem \ref{thm:approx-bd}.

\begin{remark}[Optimality]
    Under the assumption of continuous parameter selection, \citet[Theorem 4.2]{devore1989optimal} and \citet[Theorem 3]{yarotsky2017error} provided a lower bound $\Omega(\epsilon^{-d/k})$ on the number of parameters for parametric approximations in the Sobolev space $W^{k, \infty}([0, 1]^d)$, using the approach of continuous nonlinear widths, when the $L^{\infty}$ approximation error is no more than $\epsilon$.
    Our approximation rate for the time variable in the Sobolev space $W^{1, \infty}([0, 1])$ matches this lower bound in the sense that a deep ReLU network with size $\gO(S)$ can yield an $L^{\infty}$ approximation error no more than $\gO(1/S)$.
    Suppose that the deep ReLU network has width $\gO(N)$, depth $\gO(L)$, and size $\gO(S)$ with $S \asymp N^2 L$. The approximation rate $\gO(S^{-k/d})$ in $W^{k, \infty}([0, 1]^d)$ can be improved to the nearly optimal rate $\gO((NL)^{-2k/d} \mathrm{polylog}(NL))$ with the bit-extraction technique \citep{bartlett1998almost, bartlett2019nearly, lu2021deep}.
    Our approximation rate for the space variable in the Sobolev space $W^{1, \infty}([0, 1]^d)$ is nearly optimal in the sense that a deep ReLU network with width $\gO(N)$ and depth $\gO(L)$ can yield an $L^{\infty}$ approximation error no more than $\gO((NL)^{-2/d} (\log(NL))^{2/d})$.
\end{remark}

The $L^{\infty}(\Omega_{\underline{t}, A})$ approximation error bound of Theorem \ref{thm:approx-bd} implies the following $L^2$ bound of the truncated approximation error for analyzing the flow matching estimator $\hat{v}_n$.

\begin{corollary} \label{cor:trunc-approx}
The truncated approximation error satisfies
\begin{align*}
    \mathcal{E}_{\mathrm{appr}}^{\mathrm{trunc}}
    = \E_{(\mathsf{t}, \mathsf{X}_{\mathsf{t}})} \Vert [ \bar{v}(\mathsf{t}, \mathsf{X}_{\mathsf{t}}) - v^*(\mathsf{t}, \mathsf{X}_{\mathsf{t}}) ] \Id_{\Omega_A}(\mathsf{X}_{\mathsf{t}}) \Vert_2^2
    \lesssim A^2 (NL)^{-4/d},
\end{align*}
where we omit a constant in $d, \kappa, \beta, \sigma, D, R$.
\end{corollary}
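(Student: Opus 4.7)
The plan is to deduce the claimed $L^2$-type bound directly from the uniform (sup-norm) approximation bound established in part (ii) of Theorem \ref{thm:approx-bd}. The argument is essentially a one-step reduction: the indicator $\Id_{\Omega_A}(\mathsf{X}_{\mathsf{t}})$ restricts the integrand to the spatial hypercube $[-A, A]^d$, and since $\mathsf{t}$ is supported on $[0, 1 - \underline{t}]$, the joint variable $(\mathsf{t}, \mathsf{X}_{\mathsf{t}})$ takes values in $\Omega_{\underline{t}, A}$ on the event $\{\mathsf{X}_{\mathsf{t}} \in \Omega_A\}$. On that event, the pointwise approximation bound of Theorem \ref{thm:approx-bd}(ii) applies.

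First I would convert the Euclidean norm to the sup-norm using the standard equivalence $\Vert u \Vert_2^2 \le d \Vert u \Vert_\infty^2$ for $u \in \sR^d$. This gives
\begin{equation*}
\Vert [\bar{v}(\mathsf{t}, \mathsf{X}_{\mathsf{t}}) - v^*(\mathsf{t}, \mathsf{X}_{\mathsf{t}})] \Id_{\Omega_A}(\mathsf{X}_{\mathsf{t}}) \Vert_2^2 \le d \, \Vert \bar{v}(\mathsf{t}, \mathsf{X}_{\mathsf{t}}) - v^*(\mathsf{t}, \mathsf{X}_{\mathsf{t}}) \Vert_\infty^2 \, \Id_{\Omega_A}(\mathsf{X}_{\mathsf{t}}).
\end{equation*}
Next I would invoke Theorem \ref{thm:approx-bd}(ii): on $\Omega_{\underline{t}, A}$, we have $\Vert \bar{v}(t, x) - v^*(t, x) \Vert_\infty \lesssim A (NL)^{-2/d}$. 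Since $\mathsf{t} \in [0, 1 - \underline{t}]$ almost surely and the indicator forces $\mathsf{X}_{\mathsf{t}} \in [-A, A]^d$, the above pointwise estimate applies, yielding
\begin{equation*}
\Vert [\bar{v}(\mathsf{t}, \mathsf{X}_{\mathsf{t}}) - v^*(\mathsf{t}, \mathsf{X}_{\mathsf{t}})] \Id_{\Omega_A}(\mathsf{X}_{\mathsf{t}}) \Vert_2^2 \lesssim d \, A^2 (NL)^{-4/d} \, \Id_{\Omega_A}(\mathsf{X}_{\mathsf{t}}).
\end{equation*}

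Finally I would take the expectation over $(\mathsf{t}, \mathsf{X}_{\mathsf{t}})$ and bound the expectation of the indicator by $1$, obtaining $\mathcal{E}_{\mathrm{appr}}^{\mathrm{trunc}} \lesssim A^2 (NL)^{-4/d}$ with the factor of $d$ absorbed into the implicit constant (as is consistent with the statement, which omits prefactors in $d, \kappa, \beta, \sigma, D, R$). There is no real obstacle here: the work has already been done in Theorem \ref{thm:approx-bd}, and this corollary is simply the natural $L^2$ consequence of the $L^\infty$ bound once the truncation has removed the tail region. The only mild subtlety worth stating explicitly is that the indicator does \emph{not} require any control of $\bar{v} - v^*$ off the cube $\Omega_A$, which is why we do not need the boundedness property (i) of Theorem \ref{thm:approx-bd} here; that property is used elsewhere to bound $\mathcal{E}_{\mathrm{trunc}}$.
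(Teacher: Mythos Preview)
Your proposal is correct and matches the paper's approach: the paper simply states that the $L^\infty(\Omega_{\underline{t},A})$ bound of Theorem \ref{thm:approx-bd} implies the $L^2$ bound in the corollary, without writing out the details, and your argument (restrict via the indicator, apply the sup-norm bound pointwise, square, absorb the factor $d$, and take expectation) is exactly the intended one-line deduction.
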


As elaborated in the proof of Theorem \ref{thm:approx-bd} given in Appendix \ref{appr-proof}, the deep ReLU network implementing $\bar{v}$ consists of $\gO(\underline{t}^{-2} (NL)^{2/d})$ parallel subnetworks which have width $\gO(N \log N)$ and depth $\gO(L \log L)$.
We take advantage of the parallel structure and the construction of each subnetwork to estimate the complexity of the deep ReLU network class $\gF_n$ implementing $\bar{v}$.
We derive the complexity of the deep ReLU network class $\gF_n$
in Lemma \ref{lm:para-network}.
\begin{lemma} \label{lm:para-network}
Suppose that Assumptions \ref{assump:well-defined} and \ref{assump:target} hold.
The complexity of the deep ReLU network class $\gF_n$ implementing $\bar{v}$ is quantified by
\begin{align*}
    & \mathtt{S} \asymp \underline{t}^{-2} (NL)^{2/d} (N \log N)^2 L \log L, \quad
    \mathtt{W} \asymp \underline{t}^{-2} (NL)^{2/d} N \log N, \\
    & \mathtt{D} \asymp L \log L, \quad
    \mathtt{B} \asymp A, \quad
    \mathtt{L}_x \asymp A, \quad
    \mathtt{L}_t \asymp A \underline{t}^{-2},
\end{align*}
where we omit some prefactors in $d, \kappa, \beta, \sigma, D, R$.
\end{lemma}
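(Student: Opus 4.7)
The plan is essentially bookkeeping: every quantity in the conclusion is already implicit in the construction of $\bar{v}$ produced in the proof of Theorem \ref{thm:approx-bd}. As noted just before the lemma, that construction realizes $\bar{v}$ as a sum (or concatenation-then-linear-combination) of $\mathcal{O}(\underline{t}^{-2}(NL)^{2/d})$ parallel subnetworks, each of width $\mathcal{O}(N\log N)$ and depth $\mathcal{O}(L \log L)$. The proof therefore amounts to translating this parallel architecture into bounds on $(\mathtt{S},\mathtt{W},\mathtt{D})$ and then simply reading off $(\mathtt{B},\mathtt{L}_x,\mathtt{L}_t)$ from Theorem \ref{thm:approx-bd}(i).

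First I would handle the architectural counts. For the depth $\mathtt{D}$, arranging the subnetworks in parallel and combining them by a single linear output layer leaves the overall depth equal to the depth of one subnetwork up to an additive constant, giving $\mathtt{D} \asymp L\log L$. For the width $\mathtt{W}$, stacking the subnetworks as parallel branches makes the per-layer widths add, so $\mathtt{W} \asymp \underline{t}^{-2}(NL)^{2/d} \cdot N\log N$. For the size $\mathtt{S}$, each subnetwork is fully connected with width $\mathcal{O}(N\log N)$ and depth $\mathcal{O}(L\log L)$, hence has at most $\mathcal{O}((N\log N)^2 L\log L)$ nonzero parameters; multiplying by the number of subnetworks and accounting for the $\mathcal{O}(\underline{t}^{-2}(NL)^{2/d})$ output combination weights (which are absorbed by the same order) yields $\mathtt{S} \asymp \underline{t}^{-2}(NL)^{2/d}(N\log N)^2 L\log L$.

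Next I would establish $(\mathtt{B},\mathtt{L}_x,\mathtt{L}_t)$. These are not separate calculations but direct restatements of the three bounds in Theorem \ref{thm:approx-bd}(i): $\sup\|\bar{v}\|_\infty \lesssim A$ gives $\mathtt{B}\asymp A$; the space-Lipschitz bound gives $\mathtt{L}_x \asymp A$; and the time-Lipschitz bound gives $\mathtt{L}_t \asymp A\underline{t}^{-2}$. Since $\gF_n$ is by definition the smallest deep ReLU network class containing $\bar{v}$ with parameters satisfying these bounds, the claimed complexities follow.

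I do not anticipate a genuine obstacle here, since all of the analytic content (the parallel decomposition and the three regularity estimates) was produced in the proof of Theorem \ref{thm:approx-bd}. The one point requiring a little care is that the $\underline{t}^{-2}$ factor in $\mathtt{L}_t$ and in the subnetwork count both trace back to the time-variable approximation from Lemma \ref{lm:approx-1d}, so one must verify that these two appearances are consistent with the single Lipschitz-in-time bound stated in Theorem \ref{thm:approx-bd}(i); this is immediate once one observes that Lipschitz constants of parallel branches combine additively while the total number of branches in the $t$-direction is exactly what produces the $\underline{t}^{-2}$ growth.
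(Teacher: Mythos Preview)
Your proposal is correct and matches the paper's approach: the paper's proof is a single sentence stating that the lemma follows from the parameter bounds of the deep ReLU network constructed in Theorem~\ref{thm:approx-bd}, and your write-up is just a more explicit version of that same bookkeeping. One small remark: in your last paragraph, the time-Lipschitz bound $\mathtt{L}_t \asymp A\underline{t}^{-2}$ does not actually arise from additively combining branch Lipschitz constants across all $\mathcal{O}(\underline{t}^{-2}(NL)^{2/d})$ subnetworks (that would overcount), but rather from the fact that at any fixed $t$ only two branches of the partition of unity are active (Claim~\ref{claim:nonzero-terms} in the proof of Theorem~\ref{thm:approx-bd}); since you ultimately just cite Theorem~\ref{thm:approx-bd}(i) for this bound, the argument is fine regardless.
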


Lemma \ref{lm:para-network} follows from the bounds for the number of parameters in the deep ReLU network implementing $\bar{v}$ in Theorem \ref{thm:approx-bd}.

Through Theorem \ref{thm:approx-bd} and Corollary \ref{cor:trunc-approx}, we have established bounds for the truncated approximation error $\mathcal{E}_{\mathrm{appr}}^{\mathrm{trunc}}$.
In what follows, we focus on the truncation error $\mathcal{E}_{\mathrm{trunc}}$.
We show the sub-Gaussian property of $\mathsf{X}_t \sim p_t$ in Lemma \ref{lm:tail-bd}
under Assumptions \ref{assump:well-defined} and \ref{assump:target}.
In Lemma \ref{lm:trunc-error}, we prove that the truncation error $\mathcal{E}_{\mathrm{trunc}}$ decays very fast in the parameter $A$, as a result of the sub-Gaussian property of $p_t$.

\begin{lemma}[Tail probability] \label{lm:tail-bd}
    Let $\mathsf{X}_t = (1-t) \mathsf{Z} + t \mathsf{X}_1$ with $\mathsf{Z} \sim \gamma_d$, $\mathsf{X}_1 \sim \nu$, and $t \in [0, 1]$.
    Suppose that Assumptions \ref{assump:well-defined} and \ref{assump:target} are satisfied.
    For any $A > 0$, it holds that
    \begin{align} \label{eq:tail-prob-bd}
       \sup_{t \in [0, 1]} \mathbb{P}(\mathsf{X}_t \in \Omega_A^c) \le 2 d \exp\left( -\frac{C_2 A^2}{C_{\mathrm{LSI}}} \right),
    \end{align}
    where $C_2$ is a universal constant, and $C_{\mathrm{LSI}}>0$ depends on
    $\kappa, \beta, \sigma, D$, and $R$.
\end{lemma}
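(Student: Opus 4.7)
The plan is to reduce the $d$-dimensional tail probability to a coordinate-wise sub-Gaussian tail bound followed by a union bound. Write the $i$-th coordinate as $(\mathsf{X}_t)_i = (1-t)\mathsf{Z}_i + t(\mathsf{X}_1)_i$, where $\mathsf{Z}_i$ and $(\mathsf{X}_1)_i$ are independent (by construction $\mathsf{Z} \sim \gamma_d$ is independent of $\mathsf{X}_1 \sim \nu$). The event $\{\mathsf{X}_t \in \Omega_A^c\}$ decomposes as $\bigcup_{i=1}^d \{|(\mathsf{X}_t)_i| > A\}$, so a union bound reduces matters to controlling $\mathbb{P}(|(\mathsf{X}_t)_i| > A)$ uniformly in $t \in [0,1]$ and $i \in \{1,\ldots,d\}$.

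Next I would invoke Remark \ref{rm:sub-gaussian}: under Assumptions \ref{assump:well-defined} and \ref{assump:target}, the distribution $\nu$ satisfies a log-Sobolev inequality with a finite constant $C_{\mathrm{LSI}}$ depending only on the parameters $(\kappa, \beta, \sigma, D, R)$, and the standard Herbst argument (via \citet[Theorem 5.3]{ledoux2001concentration}) yields $\|(\mathsf{X}_1)_i\|_{\psi_2} \asymp \sqrt{C_{\mathrm{LSI}}}$ for each coordinate. The Gaussian coordinate satisfies $\|\mathsf{Z}_i\|_{\psi_2} \lesssim 1$. Since $\mathsf{Z}_i$ and $(\mathsf{X}_1)_i$ are independent and $t, 1-t \in [0,1]$, the sub-Gaussian norm of the linear combination satisfies
\begin{equation*}
\|(\mathsf{X}_t)_i\|_{\psi_2}^2 \lesssim (1-t)^2 \|\mathsf{Z}_i\|_{\psi_2}^2 + t^2 \|(\mathsf{X}_1)_i\|_{\psi_2}^2 \lesssim 1 + C_{\mathrm{LSI}} \lesssim C_{\mathrm{LSI}},
\end{equation*}
where in the last step we absorb the $1$ into $C_{\mathrm{LSI}}$ (or equivalently take $C_{\mathrm{LSI}}$ large enough; the constant bounds in the definition of $\psi_2$-norm can be carried through).

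From here the standard sub-Gaussian tail inequality gives $\mathbb{P}(|(\mathsf{X}_t)_i| > A) \le 2 \exp(-C_2 A^2 / C_{\mathrm{LSI}})$ for a universal constant $C_2 > 0$, uniformly in $t \in [0,1]$ and $i$. Combining with the union bound yields exactly the stated estimate
\begin{equation*}
\sup_{t \in [0,1]} \mathbb{P}(\mathsf{X}_t \in \Omega_A^c) \le \sum_{i=1}^d \sup_{t \in [0,1]} \mathbb{P}(|(\mathsf{X}_t)_i| > A) \le 2d \exp\!\left(-\frac{C_2 A^2}{C_{\mathrm{LSI}}}\right).
\end{equation*}

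The only subtle point is verifying the log-Sobolev constant exists uniformly across the three regimes of Assumption \ref{assump:target}: for (i) $\kappa$-strong log-concavity it is classical (Bakry--\'Emery), for (ii) bounded-support strongly log-concave distributions it follows from \citet{mikulincer2021brownian}, and for (iii) Gaussian mixtures one uses the recent results of \citet{dai2023lipschitz}. This uniform existence is already asserted in Remark \ref{rm:sub-gaussian}, so in practice the proof reduces to the short routine calculation above, with essentially no obstacle.
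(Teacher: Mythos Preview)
Your proposal is correct and follows essentially the same route as the paper: coordinate-wise sub-Gaussianity via Remark~\ref{rm:sub-gaussian}, a tail bound for each $(\mathsf{X}_t)_i$, and a union bound over the $d$ coordinates. The only cosmetic difference is that the paper cites the general Hoeffding inequality \citep[Theorem~2.6.3]{vershynin2018high} directly with $K_1 := \Vert \mathsf{Z}^1 \Vert_{\psi_2} \vee \max_i \Vert \mathsf{X}_1^i \Vert_{\psi_2}$, whereas you first bound $\Vert (\mathsf{X}_t)_i \Vert_{\psi_2}$ explicitly via the independence of $\mathsf{Z}_i$ and $(\mathsf{X}_1)_i$ and then apply the sub-Gaussian tail---these are equivalent.
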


\begin{lemma}[Truncation error] \label{lm:trunc-error}
    Suppose that Assumptions \ref{assump:well-defined} and \ref{assump:target} are satisfied.
    For any $A > 0$, the truncation error satisfies
    \begin{align*}
        \mathcal{E}_{\mathrm{trunc}}
        = \E_{(\mathsf{t}, \mathsf{X}_{\mathsf{t}})} \Vert [ \bar{v}(\mathsf{t}, \mathsf{X}_{\mathsf{t}}) - v^*(\mathsf{t}, \mathsf{X}_{\mathsf{t}}) ] \Id_{\Omega_A^c}(\mathsf{X}_{\mathsf{t}}) \Vert_2^2
        \lesssim A^2 \exp(-C_3 A^2/C_{\mathrm{LSI}}),
    \end{align*}
    where $C_3$ is a universal constant, and we omit a constant in $d, \kappa, \beta, \sigma, D, R$, and the fourth moment of the target $\mathsf{X_1}$.
\end{lemma}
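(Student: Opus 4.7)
The plan is to bound the truncation integral by a Cauchy--Schwarz decoupling that separates the pointwise size of $\bar{v}-v^*$ from the tail probability of $\mathsf{X}_{\mathsf{t}}$, after which the exponential decay supplied by Lemma \ref{lm:tail-bd} absorbs the polynomial growth coming from outside $\Omega_A$. Concretely, viewing $\|\bar{v}-v^*\|_2^2$ and $\Id_{\Omega_A^c}(\mathsf{X}_{\mathsf{t}})$ as the two random variables, Cauchy--Schwarz gives
\begin{align*}
\mathcal{E}_{\mathrm{trunc}}
\le \Bigl(\E_{(\mathsf{t},\mathsf{X}_{\mathsf{t}})} \Vert \bar{v}(\mathsf{t},\mathsf{X}_{\mathsf{t}}) - v^*(\mathsf{t},\mathsf{X}_{\mathsf{t}})\Vert_2^4 \Bigr)^{1/2}
\cdot \Bigl(\sup_{t \in [0,1-\underline{t}]} \mathbb{P}(\mathsf{X}_t \in \Omega_A^c) \Bigr)^{1/2}.
\end{align*}

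Next I would bound the fourth-moment factor by combining two regularity facts already in hand. By Theorem \ref{thm:approx-bd}(i), $\Vert \bar{v}(t,x) \Vert_\infty \lesssim A$ uniformly on $[0,1-\underline{t}] \times \sR^d$. By Theorem \ref{thm:vf-regularity}(2), $v^*$ is globally $L_x$-Lipschitz in $x$ with $L_x \lesssim 1$, so a single-point evaluation at $x = 0$, combined with the boundedness $\Vert v^*(t,0)\Vert_\infty \lesssim 1$ from Theorem \ref{thm:vf-regularity}(3), yields the linear-growth bound $\Vert v^*(t,x) \Vert_\infty \lesssim 1 + \Vert x \Vert_\infty$. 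The triangle inequality then gives $\Vert \bar{v}(t,x)-v^*(t,x)\Vert_2 \lesssim \sqrt{d}\,(A + 1 + \Vert x \Vert_\infty)$, and so
\begin{align*}
\E_{(\mathsf{t},\mathsf{X}_{\mathsf{t}})} \Vert \bar{v}-v^*\Vert_2^4
\lesssim A^4 + \sup_{t\in[0,1-\underline{t}]} \E \Vert \mathsf{X}_t \Vert_\infty^4 + 1.
\end{align*}
Since $\mathsf{X}_t = (1-t)\mathsf{Z} + t\mathsf{X}_1$ with $\mathsf{Z} \sim \gamma_d$ and $\mathsf{X}_1$ sub-Gaussian by Remark \ref{rm:sub-gaussian}, $\mathsf{X}_t$ is sub-Gaussian uniformly in $t$ with a constant depending only on $d, \kappa, \beta, \sigma, D, R$ and the fourth moment of $\mathsf{X}_1$; in particular $\sup_t \E \Vert \mathsf{X}_t \Vert_\infty^4 \le C$, where $C$ is exactly the sort of constant the lemma allows to hide in its implicit prefactor.

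Finally I would plug Lemma \ref{lm:tail-bd} into the second factor to obtain $\sup_t \mathbb{P}(\mathsf{X}_t \in \Omega_A^c) \le 2d \exp(-C_2 A^2/C_{\mathrm{LSI}})$ and assemble
\begin{align*}
\mathcal{E}_{\mathrm{trunc}}
\lesssim \sqrt{A^4 + C}\cdot \sqrt{2d}\, \exp\!\Bigl(-\frac{C_2 A^2}{2 C_{\mathrm{LSI}}}\Bigr)
\lesssim A^2 \exp\!\Bigl(-\frac{C_3 A^2}{C_{\mathrm{LSI}}}\Bigr)
\end{align*}
with $C_3 := C_2/2$; for small $A$ the extra $\sqrt{C}$ term and the $\sqrt{2d}$ prefactor are absorbed into the implicit constant, which the statement permits.

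The main obstacle I anticipate is not the Cauchy--Schwarz decoupling itself, but ensuring that the fourth-moment factor scales only polynomially in $A$. This relies on two non-trivial inputs from earlier in the paper: promoting the uniform $x$-Lipschitz bound on $v^*$ of Theorem \ref{thm:vf-regularity}(2) into a quantitative linear-growth bound valid on all of $\sR^d$, and exploiting the uniform sub-Gaussianity of $(\mathsf{X}_t)_{t \in [0,1]}$ so that $\E \Vert \mathsf{X}_t \Vert_\infty^4$ does not grow with $A$ or $t$. Once both are in place, the sub-Gaussian tail dominates any polynomial prefactor in $A$ and the stated bound is immediate.
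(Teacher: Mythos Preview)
Your proposal is correct and follows the same Cauchy--Schwarz decoupling strategy as the paper, but you handle the $v^*$ contribution differently. The paper first splits $\mathcal{E}_{\mathrm{trunc}}$ into separate $\bar{v}$ and $v^*$ pieces via $(a-b)^2 \lesssim a^2 + b^2$, and for the $v^*$ piece it uses the representation $v^*(t,x) = \E[\mathsf{X}_1 - \mathsf{Z}\mid \mathsf{X}_t = x]$ together with Jensen's inequality to obtain $\E_{\mathsf{X}_t}\Vert v^*(t,\mathsf{X}_t)\Vert_2^4 \le \E\Vert \mathsf{X}_1 - \mathsf{Z}\Vert_2^4$, which is directly a constant depending on the fourth moment of $\mathsf{X}_1$. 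You instead keep $\bar{v}-v^*$ together, invoke the spatial Lipschitz bound of Theorem~\ref{thm:vf-regularity} to get linear growth of $v^*$, and then control $\E\Vert \mathsf{X}_t\Vert_\infty^4$ via sub-Gaussianity. The paper's route is slightly more economical because it bypasses both the regularity theorem and the fourth-moment-of-$\mathsf{X}_t$ step, going straight from the conditional-expectation form to the target constant; your route works equally well but draws on more of the paper's earlier machinery.
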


The proofs of Lemmas \ref{lm:tail-bd} and \ref{lm:trunc-error} are given in Appendix \ref{subsec:trunc-error}. We are now ready to provide an upper bound for the
approximation error $\mathcal{E}_{\mathrm{appr}}.$

\begin{corollary} \label{cor:approx-fm-bd}
Suppose that Assumptions \ref{assump:well-defined} and \ref{assump:target} hold.
For any $N, L \in \sN$ and $A > 0$, the approximation error is evaluated by
\begin{align*}
    \mathcal{E}_{\mathrm{appr}} \lesssim A^2 (NL)^{-4/d} + A^2 \exp(-C_3 A^2/C_{\mathrm{LSI}}).
\end{align*}
\end{corollary}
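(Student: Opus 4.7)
The plan is to combine the three preceding ingredients directly, since Corollary \ref{cor:approx-fm-bd} is essentially a summary statement. First I would apply the error decomposition of Lemma \ref{lm:appr-error-decomp}, which gives
\begin{align*}
    \mathcal{E}_{\mathrm{appr}} \;\lesssim\; \mathcal{E}_{\mathrm{appr}}^{\mathrm{trunc}} + \mathcal{E}_{\mathrm{trunc}}
\end{align*}
for any admissible $\bar{v} \in \mathcal{F}_n$ and any $A > 0$. The key point is that the infimum over $v \in \mathcal{F}_n$ appearing in $\mathcal{E}_{\mathrm{appr}}$ allows us to plug in any convenient network approximator, so it suffices to bound the right-hand side for a cleverly chosen $\bar{v}$.

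Second, I would choose $\bar{v}$ to be the specific deep ReLU network constructed in Theorem \ref{thm:approx-bd}, which simultaneously satisfies the Lipschitz regularity required for membership in $\mathcal{F}_n$ (by Lemma \ref{lm:para-network}) and the $L^\infty(\Omega_{\underline{t},A})$ bound $\Vert \bar{v}-v^*\Vert_\infty \lesssim A(NL)^{-2/d}$. With this choice, Corollary \ref{cor:trunc-approx} yields
\begin{align*}
    \mathcal{E}_{\mathrm{appr}}^{\mathrm{trunc}} \;\lesssim\; A^2 (NL)^{-4/d}.
\end{align*}

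Third, for the truncation term I would invoke Lemma \ref{lm:trunc-error}, which exploits the sub-Gaussian tail bound from Lemma \ref{lm:tail-bd} to conclude
\begin{align*}
    \mathcal{E}_{\mathrm{trunc}} \;\lesssim\; A^2 \exp(-C_3 A^2 / C_{\mathrm{LSI}}).
\end{align*}
Summing the two bounds delivers the statement. The only mild subtlety is ensuring that the same $\bar{v}$ serves both purposes: it must lie in $\mathcal{F}_n$ with the Lipschitz constants $\mathtt{L}_x \asymp A$, $\mathtt{L}_t \asymp A\underline{t}^{-2}$ so that Corollary \ref{cor:trunc-approx} applies, while the bound on $\mathcal{E}_{\mathrm{trunc}}$ is established in Lemma \ref{lm:trunc-error} for the same construction via the linear growth of $v^*$ (property (3) of Theorem \ref{thm:vf-regularity}) and the uniform bound on $\bar{v}$. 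Since no step requires new analytical work beyond invoking existing results, there is no genuine obstacle; the proof is a one-line combination once all the pieces are in place.
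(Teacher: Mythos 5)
Your proposal is correct and follows exactly the paper's own argument: the paper proves this corollary by combining the decomposition in Lemma \ref{lm:appr-error-decomp} with the bounds of Corollary \ref{cor:trunc-approx} and Lemma \ref{lm:trunc-error}, precisely as you do. Your added remark about using the single network $\bar{v}$ from Theorem \ref{thm:approx-bd} as the admissible competitor for the infimum is the same (implicit) choice the paper makes.
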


Corollary \ref{cor:approx-fm-bd} holds by combining (\ref{appr0}) in Lemma \ref{lm:appr-error-decomp}, Lemma \ref{lm:trunc-error}, and Corollary \ref{cor:trunc-approx}.

\subsection{Stochastic error} \label{subsec:stoc-error}

We now establish upper bounds for the stochastic error of the estimated velocity field based on a class of deep ReLU networks.

\begin{lemma} \label{lm:stoc-error-fm}
Consider the flow matching model and the hypothesis class $\mathcal{F}_n \subseteq \mathcal{NN}(\mathtt{S}, \mathtt{W}, \mathtt{D}, \mathtt{B}, d+1, d)$.
For any $n \in \mathbb{N}$ satisfying $n \ge \mathrm{Pdim}(\mathcal{F}_n)$, the stochastic error satisfies
\begin{align*}
    \mathcal{E}_{\mathrm{stoc}}
    = \E_{\mathbb{D}_n} [\mathcal{L}(v^*) - 2 \mathcal{L}_n (\hat{v}_n) + \mathcal{L}(\hat{v}_n)]
    \lesssim \frac{1}{n} (\log n)^4 d A^4 \mathtt{S} \mathtt{D} \log(\mathtt{S}) \log (A n^2).
\end{align*}
\end{lemma}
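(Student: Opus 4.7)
My plan is to reduce $\mathcal{E}_{\mathrm{stoc}}$ to an offset empirical-process supremum, control it via truncation together with a Bernstein-type variance inequality, and conclude through an $L^{\infty}$ cover of $\mathcal{F}_n$ whose size is governed by the pseudo-dimension of deep ReLU networks. Writing $h_v(t,x,y):=\|v(t,x)-y\|_2^2-\|v^*(t,x)-y\|_2^2$ and using both the orthogonality $v^*(t,x)=\E[\mathsf{Y}\mid\mathsf{t}=t,\mathsf{X}_{\mathsf{t}}=x]$ and the identity $\E\mathcal{L}_n(v^*)=\mathcal{L}(v^*)$, together with the ERM inequality $\mathcal{L}_n(\hat v_n)\le\mathcal{L}_n(v)$ for every $v\in\mathcal{F}_n$, I would obtain
\[
\mathcal{E}_{\mathrm{stoc}} \;\le\; \E\sup_{v\in\mathcal{F}_n}\bigl\{\E h_v(\mathsf{t},\mathsf{X}_{\mathsf{t}},\mathsf{Y})-2\E_n h_v\bigr\}.
\]
This is the standard fast-rate form, in which the offset $-2\E_n h_v$ is designed to absorb $\tfrac12\E h_v$ via the $2ab\le a^2+b^2$ trick once a linear-in-$\E h_v$ variance bound is available.

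The next step is truncation. Both $\mathsf{Y}=\mathsf{X}_1-\mathsf{Z}$ and $\mathsf{X}_{\mathsf{t}}$ are coordinatewise sub-Gaussian by Remark \ref{rm:sub-gaussian}, so a union bound at threshold $T\asymp\sqrt{\log n}$ confines $\max_i\bigl(\|\mathsf{Y}_i\|_\infty\vee\|\mathsf{X}_{\mathsf{t}_i}\|_\infty\bigr)\lesssim T$ off an event of probability $\lesssim n^{-1}$; on that event I would bound $h_v$ polynomially in $n$ to get only an $O(1/n)$ contribution to $\mathcal{E}_{\mathrm{stoc}}$. On the complementary good event, $|h_v|\le B$ with $B\lesssim dA^2\log n$ (using $\mathtt{B}\asymp A$ and $T^2\asymp\log n$), and the decomposition $h_v=\|v-v^*\|_2^2+2\langle v-v^*,v^*-y\rangle$ yields the crucial inequality $\Var(h_v)\lesssim B\,\E h_v$. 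Bernstein's inequality then gives, for each fixed $v$, $\Pr(\E h_v-2\E_n h_v>\epsilon)\le\exp(-cn\epsilon/B)$ with $\tfrac12\E h_v$ absorbed into the offset. To pass to a supremum over $\mathcal{F}_n$, I would use an $L^{\infty}$ cover at scale $\eta\asymp n^{-2}$; by the pseudo-dimension bound $\mathrm{Pdim}(\mathcal{F}_n)\lesssim\mathtt{S}\mathtt{D}\log\mathtt{S}$ of \citet{bartlett2019nearly}, $\log\mathcal{N}(\eta,\mathcal{F}_n,\|\cdot\|_\infty)\lesssim\mathtt{S}\mathtt{D}\log(\mathtt{S})\log(An^2)$. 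Since $h_v$ is $O(A+T)$-Lipschitz in $v$ on the good event, the discretisation cost is $O(1/n)$, and union-bounding with $\delta=\mathcal{N}^{-1}n^{-1}$ followed by tail integration delivers
\[
\mathcal{E}_{\mathrm{stoc}}\ \lesssim\ \frac{B\log\mathcal{N}}{n}\ \lesssim\ \frac{(\log n)^4\,d\,A^4\,\mathtt{S}\mathtt{D}\log(\mathtt{S})\log(An^2)}{n},
\]
where the remaining $A^2\log n$ beyond $B\log\mathcal{N}$ is absorbed in the good/bad-event bookkeeping and the coordinatewise tail bound.

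The main obstacle is the variance bound $\Var(h_v)\lesssim B\,\E h_v$: it is this linear-in-$\E h_v$ control, rather than the trivial $\Var(h_v)\le B^2$, that lets the offset cancel $\tfrac12\E h_v$ and delivers the fast $1/n$ rate instead of the slow $1/\sqrt n$ rate. Ensuring that this inequality survives the joint truncation of $\mathsf{Y}$ and $\mathsf{X}_{\mathsf{t}}$ is the most delicate calculation; a secondary nuisance is tracking the four factors of $\log n$, which accumulate from sub-Gaussian truncation of $\mathsf{Y}$, of $\mathsf{X}_{\mathsf{t}}$, from the covering scale $\log(An^2)$, and from the $\log n$ absorbed into the Bernstein confidence level.
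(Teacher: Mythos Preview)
Your proposal is correct and follows essentially the same architecture as the paper's proof: reduce to an offset empirical process, truncate the unbounded response $\mathsf{Y}$ using its sub-Gaussianity, and finish via an $L^\infty$ covering of $\mathcal{F}_n$ whose log-cardinality is controlled by the pseudo-dimension bound $\mathrm{Pdim}(\mathcal{F}_n)\lesssim\mathtt{S}\mathtt{D}\log\mathtt{S}$ of \citet{bartlett2019nearly} together with \citet[Theorem 12.2]{anthony1999neural}.

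The substantive difference is the tool you use for the concentration step. The paper truncates only $\mathsf{Y}$ (at level $B_n\asymp\mathtt{B}\log n$) and then invokes \citet[Theorem~11.4]{gyorfi2002distribution} as a black box: that theorem is precisely an offset-type deviation inequality for bounded-response least squares, and it yields directly the tail bound $\mathbb{P}\{\tfrac1n\sum_i\mathcal{G}_{B_n}(\hat v_n,\mathsf{S}_i)>t\}\lesssim\mathcal{N}_\infty(t/(80dB_n),\mathcal{F}_n,n)\exp(-cnt/(dB_n^4))$, after which tail integration gives the claim. You instead build the same fast rate by hand via Bernstein plus the self-bounding variance inequality $\Var(h_v)\lesssim B\,\E h_v$, which is exactly the mechanism hidden inside Theorem~11.4. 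Both routes arrive at the same $B\log\mathcal{N}/n$ bound; yours is more transparent about where the fast rate comes from, while the paper's is shorter.

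Two minor remarks. First, your truncation of $\mathsf{X}_{\mathsf{t}}$ is unnecessary: since every $v\in\mathcal{F}_n$ is globally bounded by $\mathtt{B}$, the bound $|h_v|\lesssim d(\mathtt{B}+T)^2$ on the good event follows from truncating $\mathsf{Y}$ alone, exactly as the paper does. Second, when you write ``$L^\infty$ cover'' you should mean the empirical/uniform covering number $\mathcal{N}_\infty(\eta,\mathcal{F}_n,n)$ rather than a global sup-norm cover over $\mathbb{R}^{d+1}$; the pseudo-dimension bound controls the former (via Anthony--Bartlett), and that is what the union bound over the $n$ sample points actually requires.
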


The proof of Lemma \ref{lm:stoc-error-fm} is given in Appendix \ref{subsec:stoc-error-app}.

\begin{corollary} \label{cor:stat-fm-bd}
Suppose that Assumptions \ref{assump:well-defined} and \ref{assump:target} hold.
The stochastic error satisfies
\begin{align*}
    \mathcal{E}_{\mathrm{stoc}} \lesssim \frac{1}{n} \underline{t}^{-2} (NL)^{2+2/d} (\log N \log L)^2 A^4 \log(A) \log(\underline{t}^{-2} (NL)^{2/d} (N \log N)^2 L \log L),
\end{align*}
where we omit a polylogarithmic prefactor in $n$ and a prefactor in $d, \kappa, \beta, \sigma, D, R$.
\end{corollary}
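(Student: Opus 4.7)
My plan is to obtain Corollary \ref{cor:stat-fm-bd} as a direct bookkeeping consequence of Lemma \ref{lm:stoc-error-fm} combined with the network complexity estimate in Lemma \ref{lm:para-network}. The former asserts, for an arbitrary hypothesis class $\mathcal{F}_n \subseteq \mathcal{NN}(\mathtt{S}, \mathtt{W}, \mathtt{D}, \mathtt{B}, d+1, d)$ with $n \ge \mathrm{Pdim}(\mathcal{F}_n)$, that
\begin{align*}
\mathcal{E}_{\mathrm{stoc}} \lesssim \frac{1}{n} (\log n)^4 d A^4 \mathtt{S}\mathtt{D} \log(\mathtt{S}) \log(A n^2),
\end{align*}
while the latter identifies the specific orders $\mathtt{S} \asymp \underline{t}^{-2} (NL)^{2/d}(N\log N)^2 L \log L$, $\mathtt{D}\asymp L\log L$, and $\mathtt{B}\asymp A$ realised by the concrete network class used to implement the time--space approximant $\bar v$ of Theorem \ref{thm:approx-bd}. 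So the only real task is to carry out the substitution and collect factors cleanly.

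The key algebraic step I would perform is to compute the product $\mathtt{S}\mathtt{D}$. Writing out both factors and regrouping the powers of $N$ and $L$ gives
\begin{align*}
\mathtt{S}\mathtt{D}
&\asymp \underline{t}^{-2} (NL)^{2/d} N^2 (\log N)^2 L \log L \cdot L \log L\\
&= \underline{t}^{-2} (NL)^{2/d} N^2 L^2 (\log N \log L)^2
= \underline{t}^{-2} (NL)^{2+2/d} (\log N \log L)^2.
\end{align*}
Simultaneously I keep $\log \mathtt{S}$ in its full form $\log(\underline{t}^{-2}(NL)^{2/d}(N\log N)^2 L \log L)$, so that the polynomial-in-$N,L$ content is exposed. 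Plugging these expressions into the bound of Lemma \ref{lm:stoc-error-fm}, and using $\mathtt{B}\asymp A$ to leave $A^4$ intact, produces exactly the right-hand side of the corollary up to constants in $d$ and the source parameters $\kappa,\beta,\sigma,D,R$, together with logarithmic powers in $n$. The factor $d$ is absorbed into a dimension constant, and the pure-$n$ logarithmic factor $(\log n)^4 \log(An^2)$ is absorbed into the declared polylogarithmic prefactor in $n$, matching the statement's conventions.

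The hard part has already been handled upstream in Lemma \ref{lm:stoc-error-fm}, whose proof requires a uniform-convergence argument over the class of squared residuals $\Vert v(\mathsf{t},\mathsf{X}_{\mathsf{t}}) - \mathsf{Y}\Vert_2^2$, a pseudo-dimension bound of order $\mathcal{O}(\mathtt{S}\mathtt{D}\log \mathtt{S})$ for deep ReLU networks, and a sub-Gaussian truncation of $\mathsf{Y}=\mathsf{X}_1-\mathsf{Z}$ at level $\asymp \log n$ (this is the origin of the $(\log n)^4$ prefactor, using Remark \ref{rm:sub-gaussian}). For the corollary proper, no new technical obstacle appears: once those two lemmas are in hand, the proof is essentially the one-display substitution above.
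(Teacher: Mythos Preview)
Your proposal is correct and matches the paper's own proof, which simply states that the corollary follows from Lemmas \ref{lm:para-network} and \ref{lm:stoc-error-fm}. Your explicit computation of $\mathtt{S}\mathtt{D}\asymp \underline{t}^{-2}(NL)^{2+2/d}(\log N\log L)^2$ and the bookkeeping of the remaining logarithmic and constant factors is exactly the intended argument.
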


Corollary \ref{cor:stat-fm-bd} follows from Lemmas \ref{lm:para-network} and \ref{lm:stoc-error-fm}.

\subsection{Overall error bound for the estimated velocity field}

By Lemma \ref{lm:fm-error-decomp}, the overall error for the estimated velocity field is bounded by the sum of the approximation error $\mathcal{E}_{\mathrm{appr}}$ and the stochastic error $\mathcal{E}_{\mathrm{stoc}}$.
 The approximation error is analyzed in Subsection \ref{subsec:appr-error} and an upper bound is given in Corollary \ref{cor:approx-fm-bd}.
It decreases at a fast rate as the depth and width of the deep ReLU networks grow.
The stochastic error $\mathcal{E}_{\mathrm{stoc}}$ is analyzed in Subsection \ref{subsec:stoc-error} and its upper bound is provided in Corollary \ref{cor:stat-fm-bd}.
The stochastic error increases when the size and depth of the deep ReLU networks grow, as a result of the increasing complexity of the hypothesis class $\gF_n$.
By balancing the bounds for $\mathcal{E}_{\mathrm{appr}}$ and $\mathcal{E}_{\mathrm{stoc}}$, we obtain the best error bound
  %the bias-variance trade-off
  for the flow matching estimator $\hat{v}_n$ under the error decomposition
  in Lemma \ref{lm:fm-error-decomp}.

\begin{theorem}[Flow matching error] \label{thm:flow-match-error}
Suppose that Assumptions \ref{assump:well-defined} and \ref{assump:target} are satisfied.
Let $NL \asymp (n \underline{t}^2)^{d/(2d+6)}$ and $A \asymp \log(\log n)$.
Then the excess risk of flow matching satisfies
\begin{align*}
    \E_{\mathbb{D}_n}
    \E_{(\mathsf{t}, \mathsf{X}_{\mathsf{t}})} \Vert \hat{v}_n(\mathsf{t}, \mathsf{X}_{\mathsf{t}}) - v^*(\mathsf{t}, \mathsf{X}_{\mathsf{t}}) \Vert_2^2
    \lesssim (n \underline{t}^2)^{-2/(d+3)},
\end{align*}
where we omit a polylogarithmic prefactor in $n$, a prefactor in $\log(1/\underline{t})$, and a prefactor in $d, \kappa, \beta, \sigma, D,$  and $ R.$
\end{theorem}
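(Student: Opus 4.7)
The natural plan is to invoke the deterministic error decomposition of Lemma \ref{lm:fm-error-decomp}, which reduces the excess risk to $\mathcal{E}_{\mathrm{stoc}} + 2\mathcal{E}_{\mathrm{appr}}$, and then substitute the two explicit bounds already obtained in Corollaries \ref{cor:approx-fm-bd} and \ref{cor:stat-fm-bd}. After this substitution the problem becomes a purely algebraic optimization over the free parameters $N$, $L$, and $A$ defining the hypothesis class $\gF_n$; the dependence on the underlying distribution $\nu$ and on the unknown velocity field $v^*$ has already been dispatched by those corollaries.

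Concretely, combining the three bounds yields (up to polylogarithmic factors in $n$ and constants in $d, \kappa, \beta, \sigma, D, R$)
\begin{align*}
\E_{\mathbb{D}_n}\E_{(\mathsf{t},\mathsf{X}_{\mathsf{t}})} \Vert \hat v_n - v^* \Vert_2^2
\;\lesssim\; A^2 (NL)^{-4/d} \;+\; A^2 \exp\!\bigl(-C_3 A^2/C_{\mathrm{LSI}}\bigr) \;+\; \frac{A^4 \underline{t}^{-2} (NL)^{2+2/d}}{n}.
\end{align*}
The first step is to pick $A$ large enough to kill the truncation term: taking $A \asymp \log(\log n)$ with a sufficiently large constant ensures that $\exp(-C_3 A^2/C_{\mathrm{LSI}})$ is dominated by the polylogarithmic prefactor that we are already allowing in the final bound. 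The second step is to balance the two remaining terms, the $L^\infty$ approximation error $A^2(NL)^{-4/d}$ and the empirical-process/stochastic error $A^4 \underline{t}^{-2}(NL)^{2+2/d}/n$. Setting them equal gives $(NL)^{(2d+6)/d} \asymp n\underline{t}^2/A^2$, and absorbing the $A^{-2d/(2d+6)}$ factor into the polylog prefactor yields exactly the prescribed choice $NL \asymp (n\underline{t}^2)^{d/(2d+6)}$. Plugging this back into either side of the balanced equation produces $A^2 (n\underline{t}^2)^{-2/(d+3)}$, which is the advertised rate $\widetilde{\gO}((n\underline{t}^2)^{-2/(d+3)})$ after collapsing $A^2$ into the polylog prefactor.

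The whole argument is therefore a compact combination of three prior results plus one balance calculation, and there is no substantially new technical content to generate at this stage; the heavy lifting has been done in Theorem \ref{thm:approx-bd} (neural approximation with Lipschitz control and time singularity tracked through the $\underline{t}^{-2}$ factor), Lemma \ref{lm:stoc-error-fm} (Pdim-based covering/Rademacher bound), and Lemma \ref{lm:trunc-error} (sub-Gaussian tail of the marginals $p_t$). The only mildly delicate bookkeeping is to verify that the polylog prefactors from the three sources\,---\,the $\log N \log L$ factor inside $\mathcal{E}_{\mathrm{stoc}}$, the $\log(1/\underline{t})$ from integrating over $[0,1-\underline{t}]$, and the $A^4 = (\log\log n)^4$ prefactor\,---\,combine into a single polylog prefactor in $n$ once $NL$ is substituted.

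The main obstacle in this step is not conceptual but accounting: one must track the implicit prefactor $A^{-2d/(2d+6)}$ from the balance equation so that the stated hyperparameter choices $NL \asymp (n\underline{t}^2)^{d/(2d+6)}$ and $A \asymp \log(\log n)$ are consistent with both inequalities, and verify that the chosen $A$ is indeed large enough to force the exponential truncation term below the target rate; this last check uses that $A^2 \exp(-C_3 A^2/C_{\mathrm{LSI}})$ decays faster than any inverse polylog when $A \asymp \log\log n$ with a suitably large multiplicative constant depending on $C_{\mathrm{LSI}}$, and hence is absorbed by the hidden $\mathrm{polylog}(n)$ factor.
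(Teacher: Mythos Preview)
Your proposal is correct and follows exactly the paper's own proof: invoke Lemma~\ref{lm:fm-error-decomp} to reduce to $\mathcal{E}_{\mathrm{stoc}} + 2\mathcal{E}_{\mathrm{appr}}$, substitute the bounds from Corollaries~\ref{cor:approx-fm-bd} and~\ref{cor:stat-fm-bd}, and balance by taking $NL \asymp (n\underline{t}^2)^{d/(2d+6)}$ and $A \asymp \log(\log n)$. The paper's proof in Appendix~\ref{subsec:balance} is in fact even more terse than your write-up---it simply states the two corollary bounds, plugs in the hyperparameter choices, and concludes.
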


The proof of Theorem \ref{thm:flow-match-error}
%can be found
is given in Appendix \ref{subsec:balance}.

\begin{remark}
    In Theorem \ref{thm:flow-match-error}, the polynomial prefactor in $1/\underline{t}$ is due to the singularity of $v^*$ in the time variable $t$. Without the singularity at time $t=1$, the convergence rate of the flow matching error in Theorem \ref{thm:flow-match-error} becomes $n^{-2/(d+3)} \mathrm{polylog}(n)$, which is nearly minimax optimal for nonparametric least squares regression in the Sobolev space $W^{1, \infty}([0, 1]^{d+1})$ according to \citet{stone1982optimal}.
\end{remark}

\section{Related work} \label{sec:related-work}
Process-based generative models aim to construct a stochastic process that
transports an easy-to-sample source probability distribution to the target distribution.
This goal is achieved by estimating a nonlinear transport map implemented through deep neural networks based on a random sample from the target distribution.
CNFs and diffusion models are two prominent approaches that have been developed for deep generative learning. Many researchers have considered the theoretical properties of various generative learning methods.
In this section, we discuss the connections and differences between our work and the existing studies.
We focus on the studies concerning CNFs and diffusion models that are most relevant to our work.
We also discuss the differences between the neural network approximation theory developed in this work and those in the existing literature, focusing on the regularity properties of the neural network functions. In particular, we highlight the fact that our approximation results concern velocity field functions that have different regularities in the space and time variables, while the existing results are only applicable to functions with the same regularity in all the variables.

\subsection{Continuous normalizing flows}
CNFs are an ODE-based generative learning approach which estimates a stochastic process for sampling from the target distribution. \citet{marzouk2023distribution} conducted a nonparametric statistical convergence analysis for simulation-based CNF distribution estimators trained through likelihood maximization. However, this analysis does not extend to simulation-free CNFs. Probability flow ODEs \citep{song2021scorebased}, denoising diffusion implicit models (DDIMs) \citep{song2021denoising}, and flow matching methods \citep{liu2023flow, albergo2023building, lipman2023flow} all fall under the category of simulation-free CNFs.
In these models, either the score function or the velocity field is estimated. The overall error analysis needs to address both the estimation error of the velocity field (or score function) and the discretization error.

In existing literature, it is typical to assume strong regularity conditions directly on the velocity field (or score function) and its estimator. Furthermore, current studies often only consider certain sources of errors, neglecting either the discretization error or the estimation error of the velocity field (or score function). In contrast, our results are derived based on assumptions about the target distribution. Additionally, our analysis encompasses the error due to velocity estimation, the discretization error of the forward Euler solver, and the early stopping error.
These errors are included in the overall error bound. We provide a summary of the comparison between our work and relevant existing studies in Table \ref{tab:comparison-results}, with more detailed commentary provided below.

\begin{table}[!ht]
\centering
    \caption{Comparison of convergence analyses of simulation-free CNFs. We use $W_2$, KL, TV to represent the Wasserstein-2 distance, the Kullback-Leibler divergence, and the total variation distance. We say a numerical sampler is ``mixed'' if it %incorporates
    is  a combination of deterministic and stochastic samplers. For assumptions on velocity fields or velocity field estimators, we mark ``Yes'' if the assumptions are required and ``No'' if not.
    Since the unknown nonlinear part of the velocity field is a score function, assumptions and estimation error bounds on score functions or assumptions on score estimators can be regarded as those on velocity fields or velocity field estimators.}
    \label{tab:comparison-results}
    \renewcommand{\arraystretch}{1.2}
    \resizebox{1.0\textwidth}{!}{
    \begin{tabular}{l >{\centering}p{1.2cm} >{\centering}p{1.8cm} >{\centering}p{3.0cm} >{\centering}p{1.5cm} >{\centering}p{1.5cm}  >{\centering\arraybackslash}p{2.1cm} >{\centering\arraybackslash}p{3.0cm} }
    \toprule
         & Distribution metric & Numerical sampler & Estimation error bound of velocity fields & Perturbation error bound & Discretization error bound & Assumptions on velocity fields & Assumptions on estimated velocity fields \\
    \midrule
    \citet{albergo2023building}   & $W_2$ & Deterministic  & \XSolidBrush  & \Checkmark   & \XSolidBrush & No & Yes \\
    \citet{chen2023restoration}   & KL    & Deterministic  & \XSolidBrush  & \XSolidBrush & \Checkmark  & Yes & Yes \\
    \citet{albergo2023stochastic} & KL    & Deterministic  & \XSolidBrush  & \Checkmark   & \XSolidBrush  & No & No \\
    \citet{chen2023probability}   & TV    & Mixed          & \XSolidBrush  & \Checkmark   & \Checkmark   & Yes & Yes \\
    \citet{benton2024error}       & $W_2$ & Deterministic  & \XSolidBrush  & \Checkmark   & \XSolidBrush  & Yes & Yes \\
    \citet{li2024towards}         & TV    & Deterministic  & \XSolidBrush  & \Checkmark   & \Checkmark   & No & Yes \\
    \citet{gao2024convergence}    & $W_2$ & Deterministic  & \XSolidBrush  & \Checkmark   & \Checkmark  & Yes & Yes \\
    This work                     & $W_2$ & Deterministic  & \Checkmark    & \Checkmark   & \Checkmark   & No & No \\
    \bottomrule
    \end{tabular}
    }
\end{table}

\citet{albergo2023building} derived a perturbation error bound similar to that in Lemma \ref{eq:w2-stable} for the CNF distribution estimator, under a Lipschitz assumption for the estimated velocity field.
\citet{chen2023restoration} conducted convergence analyses of DDIM-type samplers with the Kullback-Leibler (KL) divergence, assuming second-order smoothness in the space variable and H{\"o}lder-type regularity in the time variable for the score function, while ignoring the score estimation error.  \citet{albergo2023stochastic}  also derived a new perturbation error bound on the CNF distribution estimator using the KL divergence.

\citet{chen2023probability} provided polynomial-time convergence guarantees for distribution estimation using the probability flow ODE trained with denoising score matching and simulated with additional randomness. To derive these convergence rates, \citet{chen2023probability} assumed that the score function and the score estimator both have Lipschitz regularity in the space variable, and that the score estimation error is sufficiently small in the $L^2$ distance.

\citet{benton2024error} studied the distribution estimation error of the flow matching method, but their results rely on the small $L^2$ estimation error assumption, the existence and uniqueness of smooth flows assumption, and the spatial Lipschitzness of estimated velocity field assumption. \citet{li2024towards}  derived convergence rates of probability flow ODEs in the total variation distance, and their results depend on a small $L^2$ score estimation error assumption and a small $L^2$ Jacobian estimation error assumption.

\citet{cui2023analysis} studied the problem of learning a high-dimensional mixture of two Gaussians with the flow matching method, in which the velocity field is parametrized by a two-layer auto-encoder. Furthermore,  \citet{cui2023analysis}  conducted convergence analyses of the Gaussian mixture distribution estimator in the asymptotic limit $d \to \infty$.

\citet{cheng2023convergence} presented a theoretical analysis of the distribution estimator defined by Jordan-Kinderleherer-Otto (JKO) flow models, which implements the JKO scheme in a normalizing flow network. \citet{gao2024convergence} assumed a small $L^2$ score estimation error, Lipschitz-type time regularity of the score function, and a smooth log-concave data distribution, and then studied the distribution estimation error for a general class of probability flow distribution estimators in the Wasserstein-2 distance.

Finally, \citet{chang2024deep}  considered a conditional generative learning model, in which the predictor $\mathsf{X}$ and the response $\mathsf{Y}$ are both random variables with bounded support. They provided an error analysis for learning the conditional distribution of $ \mathsf{Y} | \mathsf{X}$ via the F{\"o}llmer flow.

In this study, we derive non-asymptotic error bounds for the estimated velocity fields and discretization error bounds for the forward Euler sampler. These error bounds are incorporated into the end-to-end convergence analysis of the CNF distribution estimator with flow matching. Furthermore, we only stipulate general assumptions on the target distribution, rather than making assumptions on the velocity field (or score function) and its estimator. We believe that these theoretical contributions set our work apart from previous studies.

\subsection{Diffusion models}
Diffusion models  \citep{sohl2015deep, song2019generative, ho2020denoising, song2021scorebased, song2021denoising}
have emerged as a powerful SDE-based framework for deep generative learning. The diffusion model estimators share a deep connection with the CNF distribution estimators due to the correspondence between an It{\^o} SDE and its probability flow ODE. There has been a growing interest in the statistical analyses of diffusion model estimators, as evidenced by the works of \citet{lee2022convergence, bortoli2022convergence, chen2023sampling, lee2023convergence, chen2023improved, oko2023diffusion, li2024towards}, among others.

Unlike the deterministic sampler of CNF distribution estimators, diffusion model estimators employ a stochastic sampler (such as the Euler-Maruyama method) to simulate the time-reversed It{\^o} SDEs. This stochasticity plays a crucial role in the discretization error analysis of diffusion model estimators and leads to the development of useful techniques such as Girsanov’s theorem  \citep{chen2023sampling}, a chain rule-based variant \citep{chen2023improved} of the interpolation technique \citep{vempala2019rapid}, and the stochastic interpolation formula \citep{bortoli2022convergence}. However, it remains uncertain whether these techniques can be generalized for analyzing the CNF distribution estimators.

Compared to the CNF distribution estimators, the diffusion model estimators have been extensively investigated from a statistical perspective. For instance, the estimation error bounds of the score function have been established by  \citet{oko2023diffusion, chen2023score, huang2023conditional, cole2024score}.
There is also a vast body of literature on analyzing the discretization error of diffusion model estimators, including works by \citet{wibisono2022convergence, benton2024linear, pedrotti2023improved, gao2023wasserstein, bruno2023diffusion, shah2023learning} and others.
However, the absence of stochasticity presents significant challenges when attempting to analyze the ODE-based CNF distribution estimators using techniques developed for diffusion model estimators.

\subsection{Neural network approximation with Lipschitz regularity control}

The approximation theory of deep ReLU networks has developed rapidly since the seminal work of \citet{yarotsky2017error}. Previous studies have shown that deep ReLU networks can efficiently approximate functions in a smooth function class, such as the H{\"o}lder class, the Sobolev class, and the Besov class, under the $L^{\infty}$ norm \citep{yarotsky2017error, petersen2018optimal, suzuki2018adaptivity, yarotsky2018optimal, guhring2020error, devore2021neural, daubechies2022nonlinear, lu2021deep, jiao2023deep, siegel2023optimal}.
Recent works have also considered nonparametric or semiparametric estimation using deep ReLU networks, including least squares regression \citep{bauer2019deep, schmidt2020nonparametric, nakada2020adaptive, kohler2021rate,
%tsuji2021estimation,
suzuki2021deep, chen2022nonparametric, jiao2023deep}, quantile regression \citep{shen2022qreg, padilla2022quantile}
%, zhong2023neural},
semiparametric inference \citep{farrell2021deep}, factor augmented sparse throughput models \citep{fan2023factor},  among others. In the convergence analysis of these models, it is sufficient to know the error bounds of using deep neural networks for approximating smooth functions.

Analyzing deep generative distribution estimators becomes more challenging as it requires not only approximation error bounds but also additional regularity properties of the constructed neural network approximation functions. For instance, the error analysis of Wasserstein GANs necessitates an upper bound of the Lipschitz constant of the discriminator network \citep{chen2020distribution, huang2022error}.
\citet{chen2020distribution} demonstrated that the wide and shallow ReLU network constructed by  \citet{yarotsky2017error}, for which the depth grows logarithmically but the width grows polynomially, can approximate 1-Lipschitz functions with a uniformly bounded Lipschitz constant.
\citet{huang2022error} provided a Lipschitz constant bound for the deep ReLU network approximation function proposed by \citet{lu2021deep}. However, this bound increases with the width and depth of the network.
Furthermore, \citet{jiao2023approximation} succeeded in controlling the Lipschitz constant of deep ReLU networks by enforcing a norm constraint on the neural network weights, and applied the approximation bound to analyze the distribution estimation error of GANs. In addition to the error analyses of GANs, the convergence analysis of simulation-based CNFs by \citet{marzouk2023distribution}, also requires a Lipschitz regularity control of the constructed approximation function to ensure the CNFs are well-posed.

In the current context, the Lipschitz regularity of the neural network approximation functions is crucial for analyzing the behavior of the estimated velocity field. Indeed, a key step in our error analysis involves constructing deep ReLU networks to approximate the Lipschitz velocity field $v^*(t, x)$ for $(t, x)\in [0, 1-\underline{t}] \times  \mathbb{R}^d.$ To achieve this target, we need to derive an $L^{\infty}$ bound of the approximation error and demonstrate that the Lipschitz constant of the constructed deep ReLU network is uniformly bounded, regardless of the varying width and depth of the neural network. Establishing the Lipschitz regularity of the neural network approximation functions, in addition to the approximation error bounds, is a more challenging task that requires different techniques. Specifically, our uniform bounds of the Lipschitz constants are sharper than those obtained by  \citet{huang2022error}  for varying width and depth of the deep ReLU network. Compared to the approximation bound of
\citet{chen2020distribution}, our approximation bound is valid for any network width and depth specified by the parameters $N$ and $L.$
\citet{marzouk2023distribution} considered the Lipschitz regularity of deep neural networks activated by the smooth function ReLU$^k$ with $k \ge 2$, which is based on spline approximation and technically differs from our work.

\section{Conclusion and discussion} \label{sec:conclusion}

We have established non-asymptotic error bounds for the CNF distribution estimator trained via flow matching, using the Wasserstein-$2$ distance. Assuming that the target distribution belongs to several rich classes of probability distributions, we have established Lipschitz regularity properties of the velocity field for simulation-free CNFs defined with linear interpolation. To meet the regularity requirements of flow matching estimators, we have developed $L^{\infty}$ approximation bounds of deep ReLU networks for Lipschitz functions, along with Lipschitz regularity control of the constructed deep ReLU networks. By integrating the regularity results, the deep approximation bounds, and perturbation analyses of ODE flows, we have shown that the convergence rate of the CNF distribution estimator is $\widetilde{\gO}(n^{-1/(d+5)}),$ up to a polylogarithmic prefactor of $n$.
Our error analysis framework can be extended to study more general CNFs based on interpolation, beyond the CNFs constructed with linear interpolation.

There are several questions deserving further investigation. Firstly, it would be interesting to consider target distributions with general smoothness properties and  investigate the resulting smoothness properties of the corresponding velocity fields. Secondly, the inevitability of the time singularity of the velocity field remains unclear and warrants further analysis, as we have not provided a lower bound on the Lipschitz constant in the time variable. This is a challenging problem that requires more effort and careful analyses. Lastly, it would be interesting to derive general non-asymptotic error bounds and convergence rates for CNF distribution estimators under general smoothness conditions.
For this purpose, we need to combine general smoothness properties of velocity fields with the deep neural network approximation theory.

\bigskip
\begin{appendix}

\section{Regularity of the velocity field} \label{app:regularity-vf}

In this appendix, we study the regularity properties of the velocity field and present necessary lemmas, theorems, propositions, and their proofs.

We first list some useful notations.
The space $\sR^d$ is endowed with the Euclidean metric and we denote by $\Vert \cdot \Vert_2$ and $\langle \cdot, \cdot \rangle$ the corresponding norm and inner product.
Let $\mathbb{S}^{d-1} := \{x \in \sR^d: \Vert x \Vert_2 = 1\}$, $\mathbb{B}^d(x_0, r, \Vert \cdot \Vert_p) := \{ x \in \sR^d: \Vert x - x_0 \Vert_p < r \}$, and $\bar{\mathbb{B}}^d(x_0, r, \Vert \cdot \Vert_p) := \{ x \in \sR^d: \Vert x - x_0 \Vert_p \le r \}$.
The spectral norm of a matrix $A \in \sR^{k \times d}$ is denoted by $\Vert A \Vert_{2,2} := \sup_{x \in \mathbb{S}^{d-1}} \Vert Ax \Vert_2$ and $A^{\top}$ is the transpose of $A$.
For $X, Y \in \sR$, we denote $X \vee Y := \max \{X, Y\}$.
For two random variables $\mathsf{X}$ and $\mathsf{Y}$, let $\mathsf{X} \overset{d}{=} \mathsf{Y}$ mean that $\mathsf{X}$ and $\mathsf{Y}$ have the same distribution.

We introduce several auxiliary conditions to assist studying the regularity properties of the velocity field.
These conditions are covered in the three cases of Assumption \ref{assump:target}.
\begin{condition} [Semi-log-concavity] \label{cond:slc}
    Let $\nu(\diff x) = \exp(-U(x)) \diff x$. The potential function $U(x)$ is of class $C^2$ and satisfies $\nabla^2 U(x) \succeq \kappa \mathrm{I}_d$ for some $\kappa \in \sR$.
\end{condition}

\begin{condition} [Bounded support] \label{cond:bdd-supp}
    The target distribution $\nu$ has bounded support, that is, $D < \infty$.
\end{condition}

\begin{condition} [Gaussian smoothing] \label{cond:gaussian-smooth}
    The target distribution $\nu = \gamma_{d, \sigma^2} * \rho$ where $\rho$ is a probability distribution supported on a Euclidean ball of radius $R$ on $\mathbb R^d$.
\end{condition}

\begin{lemma} [Proposition 29 in \citet{gao2023gaussian}] \label{lm:jacob-bd}
Let $\nu(\diff y) = p(y) \diff y$ be a probability distribution on $\sR^d$ with $D := (1/\sqrt{2}) \mathrm{diam} (\mathrm{supp}(\nu))$.
\begin{itemize}
    \item[(1)] For any $t\in (0,1)$,
    \begin{equation*}
        -\frac{1}{1-t} \mathrm{I}_d \preceq \nabla_x v^*(t, x) \preceq \left\{ \frac{t}{(1-t)^3} D^2 - \frac{1}{1-t} \right\} \mathrm{I}_d, \quad
        \Cov(\mathsf{X}_1 | \mathsf{X}_t = x) \preceq D^2 \mathrm{I}_d.
    \end{equation*}

    \item[(2)] Suppose that $p$ is $\beta$-semi-log-convex with $\beta > 0$. Then for any $t\in (0, 1)$,
    \begin{equation*}
        \nabla_x v^*(t, x) \succeq \frac{(\beta+1)t - \beta}{\beta(1-t)^2 + t^2} \mathrm{I}_d, \quad
        \Cov(\mathsf{X}_1 | \mathsf{X}_t = x) \succeq \frac{(1-t)^2}{\beta(1-t)^2 + t^2} \mathrm{I}_d.
    \end{equation*}

    \item[(3)] Suppose that $p$ is $\kappa$-semi-log-concave with $\kappa \in \sR$. Then for any $t \in (t_0, 1)$,
    \begin{equation*}
        \nabla_x v^*(t, x) \preceq \frac{(\kappa+1)t - \kappa}{\kappa(1-t)^2 + t^2} \mathrm{I}_d, \quad
        \Cov(\mathsf{X}_1 | \mathsf{X}_t = x) \preceq \frac{(1-t)^2}{\kappa(1-t)^2 + t^2} \mathrm{I}_d,
    \end{equation*}
    where $t_0$ is the root of the equation $\kappa + t^2/(1-t)^2 = 0$ over $t \in (0, 1)$ if $\kappa < 0$ and $t_0 = 0$ if $\kappa \ge 0$.

    \item[(4)] Fix a probability distribution $\rho$ on $\sR^d$ supported on a Euclidean ball of radius $R$, and let $\nu := \gamma_{d, \sigma^2} * \rho$ with $\sigma > 0$. Then for any $t \in (0, 1)$,
    \begin{align*}
        & \frac{(\sigma^2+1)t -1}{(1-t)^2 + \sigma^2 t^2} \mathrm{I}_d
        \preceq \nabla_x v^*(t, x)
        \preceq \left\{ \frac{t(1-t)}{((1-t)^2 + \sigma^2 t^2)^2} R^2 + \frac{(\sigma^2+1)t -1}{(1-t)^2 + \sigma^2 t^2} \right\} \mathrm{I}_d, \\
        & ~~ \Cov(\mathsf{X}_1 | \mathsf{X}_t = x)
        \preceq \left\{ \left( \frac{(1-t)^2}{(1-t)^2 + \sigma^2 t^2} \right)^2 R^2 + \frac{\sigma^2 (1-t)^2}{(1-t)^2 + \sigma^2 t^2} \right\} \mathrm{I}_d, \\
        & ~~ (\mathsf{X}_1 | \mathsf{X}_t = x) \overset{d}{=}
          \frac{(1-t)^2}{(1-t)^2 + \sigma^2 t^2} \mathsf{Q}
          + \sqrt{\frac{\sigma^2 (1-t)^2}{(1-t)^2 + \sigma^2 t^2}} \mathsf{Z}
          + \frac{\sigma^2 t^2}{(1-t)^2 + \sigma^2 t^2} x
\end{align*}
where $\mathsf{Q} \sim \tilde{\rho}$ is supported on the same ball as $\rho$, $\mathsf{Z} \sim \gamma_d$, and $\mathsf{Q}, \mathsf{Z}$ are independent.
\end{itemize}
\end{lemma}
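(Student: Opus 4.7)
The plan is to reduce every bound to one about the conditional covariance $\Cov(\mathsf{X}_1 \mid \mathsf{X}_t = x)$ by invoking the identity $\nabla_x v^*(t,x) = \frac{t}{(1-t)^3}\Cov(\mathsf{X}_1 \mid \mathsf{X}_t = x) - \frac{1}{1-t}\mathrm{I}_d$ from Lemma \ref{lm:vf-grad-x}. So each part boils down to producing matching upper and lower bounds on this conditional covariance under the different structural hypotheses on $\nu$. The workhorse for the log-concave/log-convex cases is the Brascamp--Lieb inequality, and the workhorse for the Gaussian-smoothing case is the tower / total covariance identity combined with a Gaussian conditioning calculation.

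For part (1), the support of $\mathsf{X}_1$ has diameter $\sqrt{2} D$, so $\Vert \mathsf{X}_1 - \E[\mathsf{X}_1 \mid \mathsf{X}_t = x] \Vert_2 \le \sqrt{2} D$ almost surely, giving $u^\top \Cov(\mathsf{X}_1 \mid \mathsf{X}_t = x) u \le D^2$ for every unit $u$; the lower bound on $\nabla_x v^*$ is just $\Cov \succeq 0$. For parts (2)--(3) I would compute the conditional density of $\mathsf{X}_1 \mid \mathsf{X}_t = x$ explicitly. Using Bayes' rule and independence of $\mathsf{Z}$ and $\mathsf{X}_1$, the law has density proportional to $\exp(-\tilde{U}_{t,x}(y))$ with
\begin{equation*}
\tilde{U}_{t,x}(y) = U(y) + \frac{\Vert x - ty \Vert_2^2}{2(1-t)^2}, \qquad \nabla_y^2 \tilde{U}_{t,x}(y) = \nabla^2 U(y) + \frac{t^2}{(1-t)^2}\mathrm{I}_d.
\end{equation*}
The Brascamp--Lieb inequality then yields $\Cov(\mathsf{X}_1 \mid \mathsf{X}_t = x) \preceq (\nabla^2 U + t^2/(1-t)^2 \mathrm{I}_d)^{-1}$ whenever the Hessian is positive definite, and the dual (covariance lower bound from Hessian upper bound, sometimes called the reverse Brascamp--Lieb or variance lower bound for log-concave laws) yields the matching lower bound. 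Plugging in $\nabla^2 U \succeq \kappa \mathrm{I}_d$ or $\nabla^2 U \preceq \beta \mathrm{I}_d$ produces the stated eigenvalue bounds, with the requirement $t > t_0$ in (3) arising from the need for $\kappa + t^2/(1-t)^2 > 0$.

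For part (4) I would use the decomposition $\mathsf{X}_1 = \sigma \mathsf{G} + \mathsf{Q}$ with $\mathsf{G} \sim \gamma_d$ and $\mathsf{Q} \sim \rho$ independent. Conditional on $\mathsf{Q} = q$, both $(\mathsf{X}_1, \mathsf{X}_t)$ are jointly Gaussian, and the standard Gaussian conditioning formula gives
\begin{equation*}
\mathsf{X}_1 \mid \mathsf{Q} = q,\ \mathsf{X}_t = x \ \sim\ N\!\left(\tfrac{(1-t)^2}{(1-t)^2 + \sigma^2 t^2} q + \tfrac{\sigma^2 t^2}{(1-t)^2 + \sigma^2 t^2} x,\ \tfrac{\sigma^2(1-t)^2}{(1-t)^2 + \sigma^2 t^2}\mathrm{I}_d\right).
\end{equation*}
Integrating out $\mathsf{Q}$ against its posterior $\tilde{\rho}(\cdot \mid \mathsf{X}_t = x)$ (which remains supported in $\bar{\mathbb{B}}^d(0, R)$ since the likelihood $N(x; tq, ((1-t)^2 + \sigma^2 t^2)\mathrm{I}_d)$ only reweights $\rho$) gives the mixture representation of $\mathsf{X}_1 \mid \mathsf{X}_t = x$. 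The law of total covariance then delivers
\begin{equation*}
\Cov(\mathsf{X}_1 \mid \mathsf{X}_t = x) = \tfrac{\sigma^2(1-t)^2}{(1-t)^2 + \sigma^2 t^2}\mathrm{I}_d + \tfrac{(1-t)^4}{((1-t)^2 + \sigma^2 t^2)^2}\Cov(\mathsf{Q} \mid \mathsf{X}_t = x),
\end{equation*}
and the bound $\Cov(\mathsf{Q} \mid \mathsf{X}_t = x) \preceq R^2 \mathrm{I}_d$ (from $\Vert \mathsf{Q} \Vert_2 \le R$) gives the covariance upper bound, while dropping that term gives a lower bound that, fed back into the Jacobian identity, produces the advertised lower bound on $\nabla_x v^*$.

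The main obstacle I anticipate is the Brascamp--Lieb covariance lower bound in part (2): the classical form gives an upper bound on variance under strong log-concavity, and getting the complementary lower bound from $\nabla^2 \tilde{U} \preceq \beta \mathrm{I}_d$ requires the less-standard dual statement (essentially a sharp Poincar\'e-type inequality). Once that is in hand, parts (1) and (3) are routine specializations, and part (4) is a bookkeeping exercise in Gaussian conditioning plus the total covariance formula; care is only needed in identifying the posterior $\tilde{\rho}$ and checking that it is indeed supported in the same ball as $\rho$.
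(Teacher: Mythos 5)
The paper does not prove this lemma: it is imported verbatim as Proposition~29 of \citet{gao2023gaussian}, so there is no in-paper proof to compare against. Your route is the standard one and is, as far as the structure of the argument goes, exactly what the cited source and this paper's own appendix rely on: reduce everything to two-sided bounds on $\Cov(\mathsf{X}_1 \mid \mathsf{X}_t = x)$ via the identity of Lemma~\ref{lm:vf-grad-x}, compute the conditional potential $\tilde U_{t,x}(y) = U(y) + \Vert x - ty\Vert_2^2/(2(1-t)^2)$ so that $\nabla_y^2 \tilde U_{t,x} = \nabla^2 U + \frac{t^2}{(1-t)^2}\mathrm{I}_d$ (the same computation appears in the proof of Lemma~\ref{lm:moment-bd-slc}), apply Brascamp--Lieb for the upper bound and the Cram\'er--Rao dual for the lower bound, and handle the Gaussian-smoothing case by conditioning on $\mathsf{Q}$ and the law of total covariance. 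The algebra you sketch does reproduce the stated eigenvalue bounds in (2)--(4); I checked the substitutions and they come out right.

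Two small points. First, in part (1) your argument as written gives $\Cov(\mathsf{X}_1\mid\mathsf{X}_t=x) \preceq 2D^2\,\mathrm{I}_d$, not $D^2\,\mathrm{I}_d$: bounding $\Vert \mathsf{X}_1 - \E[\mathsf{X}_1\mid\cdot]\Vert_2$ by the diameter $\sqrt{2}D$ and then crudely bounding the quadratic form loses a factor of $2$. To get the stated constant (which is why $D$ carries the $1/\sqrt{2}$ normalization) you should either symmetrize, $u^\top\Cov(\mathsf{Y})u = \frac12\E[(u^\top(\mathsf{Y}-\mathsf{Y}'))^2] \le \frac12(\sqrt{2}D)^2 = D^2$ with $\mathsf{Y}'$ an independent copy, or apply Popoviciu's inequality to the projection $u^\top\mathsf{X}_1$, which lives on an interval of length at most $\sqrt{2}D$. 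Second, in part (4) your Gaussian conditioning formula (and the lemma statement itself) gives the coefficient of $x$ as $\sigma^2 t^2/((1-t)^2+\sigma^2 t^2)$, whereas carrying out the regression $\Cov(\mathsf{X}_1,\mathsf{X}_t\mid\mathsf{Q}=q)\,\Var(\mathsf{X}_t\mid\mathsf{Q}=q)^{-1} = t\sigma^2/((1-t)^2+t^2\sigma^2)$ yields $\sigma^2 t$, not $\sigma^2 t^2$, in the numerator (consistent with the pure-Gaussian sanity check $\rho=\delta_0$, $\sigma=1$ in the paper's sharpness proposition). This appears to be a typo inherited from the statement; it is harmless for all the covariance and Jacobian bounds, which depend only on the $\mathsf{Q}$- and $\mathsf{Z}$-coefficients, but it indicates the conditioning step should be written out rather than asserted.
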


In Lemma \ref{lm:vf-bd} below, we show that the velocity field and its spatial derivative is (locally) bounded under mild regularity conditions.
The boundedness of the spatial derivative directly follows from Lemma \ref{lm:jacob-bd}.
Since a Lipschitz property results in a linear growth property, we obtain the the velocity field is locally bounded.
For ease of presentation, let us define two parameter sets by
\begin{align*}
    \mathcal{S}_1 :=
    \begin{cases}
        \{\kappa, \beta \}      & \text{if Assumption \ref{assump:target}-(i) holds}, \\
        \{D, \kappa, \beta \}   & \text{if Assumption \ref{assump:target}-(ii) holds}, \\
        \{R, \sigma \}          & \text{if Assumption \ref{assump:target}-(iii) holds},
    \end{cases}
    \\
    \mathcal{S}_2 :=
    \begin{cases}
        \{d, \kappa, \beta \}     & \text{if Assumption \ref{assump:target}-(i) holds}, \\
        \{d, D, \kappa, \beta \}  & \text{if Assumption \ref{assump:target}-(ii) holds}, \\
        \{d, R, \sigma \}         & \text{if Assumption \ref{assump:target}-(iii) holds}.
    \end{cases}
\end{align*}
We say a prefactor scales polynomially with $\mathcal{S}_1$ if it scales polynomially with parameters in $\mathcal{S}_1$.

\begin{lemma} \label{lm:vf-bd}
Suppose that Assumptions \ref{assump:well-defined} and \ref{assump:target} hold.
Then it holds that
\begin{equation*}
    \sup_{(t, x) \in [0, 1] \times \Omega_A} \Vert v^*(t, x) \Vert_2 \lesssim A, \quad
    \sup_{(t, x) \in [0, 1] \times \mathbb{R}^d} \Vert \nabla_x v^*(t, x) \Vert_{2,2} \lesssim 1,
\end{equation*}
where we omit prefactors scaling polynomially with $\mathcal{S}_2$.
\end{lemma}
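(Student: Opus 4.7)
The plan is to prove the two assertions separately: first the uniform spectral bound on the spatial Jacobian $\nabla_x v^*(t,x)$, and then deduce the linear growth of $v^*(t,x)$ as a corollary using only elementary inequalities. The Jacobian bound is the load-bearing step; once it is in hand, the linear-growth bound is almost free.

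For the Jacobian bound, I would apply Lemma \ref{lm:jacob-bd} case by case according to Assumption \ref{assump:target}. Observe first that, by the covariance representation \paref{eq:vf-grad-x-cov}, the matrix $\nabla_x v^*(t,x)$ is symmetric, so its spectral norm coincides with the maximum absolute value of its eigenvalues, and thus is controlled by the matrix two-sided inequalities $a(t)\mathrm{I}_d \preceq \nabla_x v^*(t,x) \preceq b(t)\mathrm{I}_d$ supplied by Lemma \ref{lm:jacob-bd}. In case (i), parts (2) and (3) apply simultaneously, and the denominators $\kappa(1-t)^2+t^2$, $\beta(1-t)^2+t^2$ are uniformly bounded away from zero on $[0,1]$ since $\kappa,\beta>0$. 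In case (iii), part (4) gives denominators $(1-t)^2+\sigma^2 t^2$ that are likewise uniformly positive. The delicate instance is case (ii): when $\kappa\le 0$, the upper bound from part (3) only holds on $(t_0,1)$, where $t_0\in[0,1)$ is the positive root of $\kappa+t^2/(1-t)^2=0$. For $t\in[0,t_0]$ I would fall back on the cruder upper bound from part (1), namely $\tfrac{t}{(1-t)^3}D^2-\tfrac{1}{1-t}$, which remains finite because $t_0$ is bounded away from $1$ by an explicit polynomial in $\kappa$ (specifically $1-t_0 = 1/(1+\sqrt{-\kappa})$). Stitching the two pieces together at $t=t_0$ produces a uniform bound polynomial in $\{d,D,\kappa,\beta\}$. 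In every case the result is $\|\nabla_x v^*(t,x)\|_{2,2}\lesssim 1$, where the hidden prefactors are polynomial in $\mathcal{S}_2$.

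For the linear growth, once the Lipschitz bound $\sup_{(t,x)}\|\nabla_x v^*(t,x)\|_{2,2}\le L_x\lesssim 1$ is available, I would avoid the seemingly natural route of integrating from $x=0$, which would force bounding $\|v^*(t,0)\|_2$ directly and interact badly with the time singularity at $t=1$. Instead I would use an averaging trick against the marginal $p_t$. For any $x\in\sR^d$ and any realization of $\mathsf{X}_t\sim p_t$, the Lipschitz bound yields
\begin{equation*}
\|v^*(t,x)\|_2 \le \|v^*(t,\mathsf{X}_t)\|_2 + L_x\|x-\mathsf{X}_t\|_2.
\end{equation*}
Taking expectation over $\mathsf{X}_t$ on both sides, the deterministic left-hand side is preserved while the right-hand side becomes
$\E\|v^*(t,\mathsf{X}_t)\|_2 + L_x(\|x\|_2 + \E\|\mathsf{X}_t\|_2)$.
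By (conditional) Jensen, $\E\|v^*(t,\mathsf{X}_t)\|_2^2 = \E\|\E[\mathsf{X}_1-\mathsf{Z}\mid\mathsf{X}_t]\|_2^2 \le \E\|\mathsf{X}_1-\mathsf{Z}\|_2^2 = \E\|\mathsf{X}_1\|_2^2 + d$, using zero means and independence; similarly $\E\|\mathsf{X}_t\|_2^2 \le d+\E\|\mathsf{X}_1\|_2^2$. Remark \ref{rm:sub-gaussian} then controls $\E\|\mathsf{X}_1\|_2^2$ by a polynomial in $\mathcal{S}_2$ through the log-Sobolev constant, so both right-hand-side terms are uniformly bounded in $t$. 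Specializing to $x\in\Omega_A$ where $\|x\|_2\le\sqrt{d}\,A$ gives $\|v^*(t,x)\|_2\lesssim 1+A\lesssim A$, absorbing the constant into the implicit prefactor.

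The main obstacle I anticipate is the case (ii) analysis in the Jacobian bound: the eigenvalue estimates from Lemma \ref{lm:jacob-bd} do not directly furnish a uniform bound on $[0,1]$ and must be patched at the threshold $t=t_0$, with care taken that the polynomial dependence on $D$ and $\kappa$ produced by part (1) is not amplified excessively near $t_0$. The averaging step in the linear-growth argument, by contrast, is short and robust, as it converts the pointwise obstruction at $x=0$ into a second-moment computation on $\mathsf{X}_t$ that is uniformly controlled under Assumption \ref{assump:target}.
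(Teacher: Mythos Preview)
Your proposal is correct. The Jacobian bound is handled exactly as in the paper, via the case-by-case eigenvalue sandwiches of Lemma \ref{lm:jacob-bd}; your case (ii) patching at $t_0$ is in fact more explicit than what the paper writes (the paper simply asserts a uniform two-sided bound $C_1\mathrm{I}_d\preceq\nabla_x v^*\preceq C_2\mathrm{I}_d$ and refers back to Lemma \ref{lm:jacob-bd}).

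Where you genuinely diverge is the linear-growth step. The paper anchors at the origin: it bounds $\|v^*(t,0)\|_2$ directly via the integral expression $v^*(t,0)=\tfrac{1}{1-t}\int y\,q(y\mid t,0)\,\diff y$, arguing informally that the Gaussian factor suppresses the $1/(1-t)$ prefactor, then appeals to continuity on $[0,1]$ to get uniform finiteness, and finally applies $\|v^*(t,x)\|_2\le\|v^*(t,0)\|_2+L_x\|x\|_2$. Your averaging trick replaces this with $\|v^*(t,x)\|_2\le\E\|v^*(t,\mathsf{X}_t)\|_2+L_x(\|x\|_2+\E\|\mathsf{X}_t\|_2)$ followed by conditional Jensen on $v^*(t,\mathsf{X}_t)=\E[\mathsf{X}_1-\mathsf{Z}\mid\mathsf{X}_t]$. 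This is cleaner: it sidesteps the $t\to1$ singularity analysis entirely and reduces everything to the second moment $\E\|\mathsf{X}_1\|_2^2$, which is immediately controlled by Remark \ref{rm:sub-gaussian}. The paper's route gives a more concrete picture of $v^*(t,0)$ itself (useful elsewhere in the regularity analysis, e.g.\ Lemma \ref{lm:moment-bd-slc}), whereas yours is shorter and avoids any hand-waving about the integral near $t=1$.
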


\begin{proof}
Under Assumptions \ref{assump:well-defined} and \ref{assump:target}, Lemma \ref{lm:jacob-bd} shows that
\begin{align*}
    C_1(\mathcal{S}_1) \mathrm{I}_d \preceq \nabla_x v^*(t, x) \preceq C_2(\mathcal{S}_1) \mathrm{I}_d,
\end{align*}
where $C_1(\mathcal{S}_1)$ and $C_2(\mathcal{S}_1)$ are constants scaling polynomially with $\mathcal{S}_1$.
It further yields that
\begin{equation}
    \label{eq:sup-bd-jacob}
    \sup_{(t, x) \in [0, 1] \times \mathbb{R}^d} \Vert \nabla_x v^*(t, x) \Vert_{2,2} \lesssim 1,
\end{equation}
when we omit a prefactor scaling polynomially with $\mathcal{S}_1$.
Notice that for any $t\in (0,1)$, it holds that
\begin{align*}
    v^*(t, 0)
    &= \frac{1}{1-t} \E[\mathsf{X}_1 | \mathsf{X}_t = 0]
     = \frac{1}{1-t} \int_{\mathbb R^d} y q(y | t, 0) \diff y \\
    & \lesssim \frac{1}{1-t} \int_{\sR^d} y p(y) (1-t)^{-d} \exp \left(- \frac{t^2 \Vert y \Vert^2_2}{2(1-t)^2} \right) \diff y,
\end{align*}
which implies $\Vert v^*(t, 0) \Vert_2 < \infty$ due to fast growth of the exponential function.
Besides, $v^*(0, 0) = \E[\mathsf{X}_1], v^*(1, 0) = 0$.
Then by the boundedness of $\Vert v^*(t, 0) \Vert_2$ over $[0, 1]$ and \paref{eq:sup-bd-jacob}, we bound $v^*(t, x)$ as follows
\begin{align*}
    \Vert v^*(t, x) \Vert_2
    & \le \Vert v^*(t, 0) \Vert_2 + \Vert v^*(t, x) - v^*(t, 0) \Vert_2 \\
    & \le \Vert v^*(t, 0) \Vert_2 + \left\{ \sup_{(t, y) \in [0,1] \times \sR^d} \Vert \nabla_y v^*(t, y) \Vert_{2,2} \right\} \Vert x \Vert_2 \\
    & \lesssim \Vert x \Vert_2 \vee 1,
\end{align*}
where we omit a prefactor scaling polynomially with $\mathcal{S}_1$.
It further yields that
\begin{equation*}
    \sup_{(t, x) \in [0, 1] \times \Omega_A} \Vert v^*(t, x) \Vert_2 \lesssim A
\end{equation*}
by omitting a prefactor scaling polynomially with $\mathcal{S}_2$.
This completes the proof.
\end{proof}

%%%%%%%%%%%%%%%%%%%%%%%%%%%%%%%%%%%%%%%%%%%%%%%%
\subsection{Control with semi-log-concavity}
We derive moment bounds under Condition \ref{cond:slc}. The moment bounds are useful to estimate the time regularity of the velocity field.

\begin{lemma} [Moment bounds] \label{lm:moment-bd-slc}
Suppose that Condition \ref{cond:slc} holds.
Let $\eta \in (0, 1)$ be a constant.
Let $t_1$ be the root of the equation $\kappa (1-t)^2 + t^2 = \eta$ over $t \in (0, 1)$ if $\kappa \le 0$ or $t_1 = 0$ if $\kappa > 0$.
Then for any $t \in [t_1, 1-\underline{t}]$, it holds that
\begin{align*}
    \sup_{x \in \Omega_A} \Vert M_1 \Vert_2 \lesssim A, \quad
    % \sup_{x \in \Omega_A} \vert M_2 \vert \lesssim A^2,
    \sup_{x \in \sR^d} \Vert M^c_2 \Vert_{2,2} \lesssim (1-t)^2, \quad
    \sup_{x \in \Omega_A} \Vert M_3 - M_2 M_1 \Vert_2 \lesssim A (1-t)^2,
\end{align*}
where we omit polynomial prefactors in $d, \kappa, \eta$.
\end{lemma}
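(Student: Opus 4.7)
}

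The plan is to handle the three bounds in turn, exploiting the fact that, under Condition \ref{cond:slc}, the conditional distribution of $\mathsf{X}_1$ given $\mathsf{X}_t = x$ is strongly log-concave on the time interval $[t_1, 1-\underline{t}]$. Indeed, since $\mathsf{X}_t = (1-t)\mathsf{Z} + t \mathsf{X}_1$ with $\mathsf{Z} \sim \gamma_d$ independent of $\mathsf{X}_1 \sim \nu$, the conditional density of $\mathsf{X}_1 | \mathsf{X}_t = x$ is proportional to $\exp\bigl(-U(y) - \tfrac{\Vert x - ty\Vert_2^2}{2(1-t)^2}\bigr)$, so its negative log-density has Hessian at least $\bigl(\kappa + \tfrac{t^2}{(1-t)^2}\bigr)\mathrm{I}_d = \tfrac{\kappa(1-t)^2 + t^2}{(1-t)^2}\mathrm{I}_d \succeq \tfrac{\eta}{(1-t)^2}\mathrm{I}_d$ for $t \ge t_1$. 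This $\tfrac{\eta}{(1-t)^2}$-strong log-concavity is the engine behind all three bounds.

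For the covariance bound, I would simply invoke part (3) of Lemma \ref{lm:jacob-bd}, which gives $\Vert M^c_2 \Vert_{2,2} \le (1-t)^2/(\kappa(1-t)^2 + t^2) \le (1-t)^2/\eta$, yielding the claim for $t \in [t_1, 1-\underline{t}]$. For the bound on $M_1$, I would use the alternative expression $v^*(t,x) = -\tfrac{1}{1-t} x + \tfrac{1}{1-t} M_1$ from Remark \ref{rem-singularity}, which gives $M_1 = x + (1-t) v^*(t, x)$. Combining this with Lemma \ref{lm:vf-bd}, which provides $\sup_{(t,x)\in [0,1]\times\Omega_A}\Vert v^*(t,x)\Vert_2 \lesssim A$, and noting $\Vert x \Vert_2 \le \sqrt{d}\,A$ on $\Omega_A$, yields $\Vert M_1 \Vert_2 \lesssim A$.

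The main obstacle is the third moment bound. I would first perform the algebraic decomposition by centering. Write $\mathsf{W} := \mathsf{X}_1 - M_1$ conditional on $\mathsf{X}_t = x$, so $\E[\mathsf{W}|\mathsf{X}_t=x] = 0$. Expanding $\mathsf{X}_1 \mathsf{X}_1^\top \mathsf{X}_1 = \Vert \mathsf{X}_1 \Vert_2^2 \mathsf{X}_1$ with $\mathsf{X}_1 = M_1 + \mathsf{W}$ and taking conditional expectations, all odd terms in $\mathsf{W}$ of first order vanish, and after using the identities $\E[(M_1^\top \mathsf{W})\mathsf{W}\,|\,\mathsf{X}_t=x] = M_2^c M_1$ and $\E[\Vert\mathsf{W}\Vert_2^2\,|\,\mathsf{X}_t=x] = \Tr(M_2^c)$, I would obtain
\begin{equation*}
M_3 - M_2 M_1 \;=\; 2\,M_2^c M_1 \;+\; \E\bigl[\mathsf{W}\mathsf{W}^\top \mathsf{W}\,\big|\,\mathsf{X}_t=x\bigr].
\end{equation*}
This is the ``basic inequality on $M_3 - M_2 M_1$'' referenced in the proof sketch of Theorem \ref{thm:vf-regularity}. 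The first summand is bounded by $2\Vert M_2^c\Vert_{2,2}\Vert M_1\Vert_2 \lesssim A(1-t)^2$ from the previous two bounds.

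The remaining step is to bound the third conditional central moment $\E[\mathsf{W}\mathsf{W}^\top \mathsf{W}\,|\,\mathsf{X}_t=x]$ in Euclidean norm, which is at most $\E[\Vert \mathsf{W}\Vert_2^3\,|\,\mathsf{X}_t=x]$ by Jensen's inequality. Here I would invoke the $\tfrac{\eta}{(1-t)^2}$-strong log-concavity established at the start: by the Bakry–Émery criterion the conditional law of $\mathsf{X}_1|\mathsf{X}_t=x$ satisfies a log-Sobolev inequality with constant $\tfrac{(1-t)^2}{\eta}$, hence is sub-Gaussian with the same parameter on each coordinate (as in Remark \ref{rm:sub-gaussian}). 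Standard sub-Gaussian moment control then yields $\E[\Vert\mathsf{W}\Vert_2^3\,|\,\mathsf{X}_t=x] \lesssim (d(1-t)^2/\eta)^{3/2} \lesssim (1-t)^3$, absorbing powers of $d$ and $\eta$ into the hidden constant. (Alternatively, one can apply the Brascamp–Lieb inequality to the coordinate functions $y \mapsto y_i$ to obtain the second-order variance bound, and then iterate via a standard integration-by-parts argument on log-concave measures to lift to third-order central moments; this is the path alluded to in the proof sketch of Theorem \ref{thm:vf-regularity}.) Since $(1-t)^3 \le (1-t)^2 \le A(1-t)^2$ for $A \gtrsim 1$, the second summand is also $\lesssim A(1-t)^2$, completing the bound on $\Vert M_3 - M_2 M_1\Vert_2$. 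The main technical delicacy is precisely this third-moment estimate, which is why the restriction to $t \ge t_1$ (ensuring $\kappa(1-t)^2 + t^2 \ge \eta > 0$) is essential.
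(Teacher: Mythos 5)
Your proposal is correct in its overall structure and reaches the stated bounds, but it takes a genuinely different route from the paper in two places, one of which needs a caveat. For the bound on $M_1$ you go through $M_1 = x + (1-t)v^*(t,x)$ and invoke Lemma \ref{lm:vf-bd}. That lemma is stated under the full Assumption \ref{assump:target} (semi-log-convexity, or bounded support, or Gaussian smoothing, in addition to semi-log-concavity), whereas the present lemma assumes only Condition \ref{cond:slc}. Under Condition \ref{cond:slc} alone, Lemma \ref{lm:jacob-bd} gives only the one-sided control $\nabla_x v^* \succeq -\tfrac{1}{1-t}\mathrm{I}_d$, so $\Vert \nabla_x v^*\Vert_{2,2}$ is only $\mathcal{O}(1/(1-t))$ on $[t_1, 1-\underline{t}]$ and the uniform bound $\Vert v^*\Vert_2 \lesssim A$ is not available from the stated hypotheses. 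The paper avoids this by differentiating $M_1$ directly via the Hatsell--Nolte identity, $\nabla_x M_1 = \tfrac{t}{(1-t)^2}M_2^c$, which together with your covariance bound gives $\Vert\nabla_x M_1\Vert_{2,2}\lesssim 1$ and hence $\Vert M_1\Vert_2 \lesssim \Vert x\Vert_2 \vee 1$; this stays entirely within Condition \ref{cond:slc}. You should replace the appeal to Lemma \ref{lm:vf-bd} by this one-line computation (or explicitly strengthen the hypothesis).

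For the third bound, your centering identity $M_3 - M_2M_1 = 2M_2^cM_1 + \E[\mathsf{W}\mathsf{W}^{\top}\mathsf{W}\mid \mathsf{X}_t = x]$ is correct and is a genuinely different decomposition from the paper's. The paper writes the $i$-th coordinate of $M_3 - M_2M_1$ as $\Cov(\mathsf{X}_1^{\top}\mathsf{X}_1, \mathsf{X}_{1,i}\mid\mathsf{X}_t=x)$, applies Cauchy--Schwarz to obtain $\Vert M_3 - M_2M_1\Vert_2^2 \le d\,\Var(\mathsf{X}_1^{\top}\mathsf{X}_1\mid\mathsf{X}_t=x)\,\Vert M_2^c\Vert_{2,2}$, and then controls $\Var(\mathsf{X}_1^{\top}\mathsf{X}_1\mid\cdot)$ by the Brascamp--Lieb inequality applied to $f(y)=\Vert y\Vert_2^2$, giving $\lesssim M_2\,(1-t)^2/\eta \lesssim A^2(1-t)^2$; only second-order (variance) information about the conditional law is needed. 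Your route instead requires a genuine third central moment bound $\E[\Vert\mathsf{W}\Vert_2^3\mid\cdot]\lesssim (1-t)^3$, which you correctly extract from the $\eta/(1-t)^2$-strong log-concavity via Bakry--\'Emery and sub-Gaussian moment control. Both arguments are valid and yield the same rate; the paper's is slightly more economical in that it never leaves the Brascamp--Lieb/variance framework, while yours makes the source of each factor of $(1-t)$ more transparent. Note also that both your final step ($(1-t)^2 \le A(1-t)^2$) and the paper's ($M_2\lesssim A^2$) implicitly use $A\gtrsim 1$, which is harmless given $A\asymp\log(\log n)$ downstream.
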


\begin{proof}
First, we bound $M^c_2$.
According to Lemma \ref{lm:jacob-bd}, the following covariance bound holds for any $t \in [t_1, 1)$
\begin{equation*}
    % \frac{(1-t)^2}{\beta (1-t)^2 + t^2} \mathrm{I}_d \preceq
    0 \mathrm{I}_d \preceq \Cov(\mathsf{X}_1 | \mathsf{X}_t = x)
    \preceq \frac{(1-t)^2}{\kappa (1-t)^2 + t^2} \mathrm{I}_d
    \quad \text{with }
    \kappa (1-t)^2 + t^2 \ge
    \begin{cases}
        C_\kappa, \quad &\text{if $\kappa > 0$}, \\
        \eta,       \quad &\text{if $\kappa \le 0$},
    \end{cases}
\end{equation*}
where $C_\kappa := \kappa/(\kappa+1)$.
Then it implies that for any $t \in [t_1, 1-\underline{t}]$,
$\sup_{x \in \sR^d} \Vert M^c_2 \Vert_{2,2} \lesssim (1-t)^2$
with omitting a polynomial prefactor in $\kappa, \eta$.

Then, we bound $M_1$ and $M_2$. By the Hatsell-Nolte identity \citep[Proposition 1]{dytso2023conditional}, we obtain
$\nabla_x M_1(t, x) = (t/(1-t)^2) M^c_2$ which implies that
\begin{equation} \label{eq:jacob-moment-bd}
    \sup_{(t, x) \in [t_1, 1-\underline{t}] \times \sR^d} \Vert \nabla_x M_1(t, x) \Vert_{2,2}
    = \sup_{(t, x) \in [t_1, 1-\underline{t}] \times \sR^d} \frac{t}{(1-t)^2} \Vert M^c_2 \Vert_{2,2}
    \lesssim 1
\end{equation}
with a polynomial prefactor in $\kappa, \eta$ hidden.
Notice that for any $t\in (0,1)$, it holds that
\begin{align*}
    M_1(t, 0)
    = \int_{\mathbb R^d} y q(y | t, 0) \diff y
    \lesssim \int_{\sR^d} y p(y) (1-t)^{-d} \exp \left(- \frac{t^2 \Vert y \Vert^2_2}{2(1-t)^2} \right) \diff y,
\end{align*}
which implies $\Vert M_1(t, 0) \Vert_2 < \infty$ due to fast growth of the exponential function.
Besides, $M_1(0, 0) = \E_{p} [\mathsf{X}_1]$ and $M_1(1, 0) = 0$.
By the boundedness of $\Vert M_1(t, 0) \Vert_2$ for any $t \in [0, 1]$ and \paref{eq:jacob-moment-bd}, we further bound $M_1(t, x)$ for any $(t, x) \in [t_1, 1-\underline{t}] \times \sR^d$ as follows
\begin{align*}
    \Vert M_1(t, x) \Vert_2
    & \le \Vert M_1(t, 0) \Vert_2 + \Vert M_1(t, x) - M_1(t, 0) \Vert_2 \\
    & \le \Vert M_1(t, 0) \Vert_2 + \left\{ \sup_{(t, y) \in [t_1, 1-\underline{t}] \times \sR^d} \Vert \nabla_y M_1(t, y) \Vert_{2,2} \right\} \Vert x \Vert_2 \\
    & \lesssim \Vert x \Vert_2 \vee 1,
\end{align*}
where a polynomial prefactor in $\kappa, \eta$ is hidden.
It further yields that
\begin{equation*}
    \sup_{(t, x) \in [t_1, 1-\underline{t}] \times \Omega_A} \Vert M_1 \Vert_2 \lesssim A
\end{equation*}
when omitting a polynomial prefactor in $d, \kappa, \eta$.
Moreover, notice that $M_2 = \Tr(M^c_2) + \Vert M_1 \Vert_2^2$, which further yields that
\begin{equation*}
    \sup_{(t, x) \in [t_1, 1-\underline{t}] \times \Omega_A} \vert M_2 \vert \lesssim A^2
\end{equation*}
with an omitted polynomial prefactor in $d, \kappa, \eta$.

Lastly, we bound $M_3 - M_2 M_1$. For any $i \in \{1, 2, \cdots, d\}$, let $\mathsf{X}_{1, i}$ denote the $i$-th element of $\mathsf{X}_1$.
Then it holds that
\begin{align*}
    & ~~~~~~ \Vert M_3 - M_2 M_1 \Vert_2^2 \\
    &= \sum\nolimits_{i = 1}^d \left( \E [\mathsf{X}_{1, i} \mathsf{X}_1^{\top} \mathsf{X}_1 | \mathsf{X}_t = x] - \E [\mathsf{X}_1^{\top} \mathsf{X}_1 | \mathsf{X}_t = x] \E [\mathsf{X}_{1, i} | \mathsf{X}_t = x] \right)^2 \\
    &= \sum\nolimits_{i = 1}^d \left( \Cov(\mathsf{X}_1^{\top} \mathsf{X}_1, \mathsf{X}_{1, i} | \mathsf{X}_t = x) \right)^2 \\
    &\le \sum\nolimits_{i = 1}^d \Var(\mathsf{X}_1^{\top} \mathsf{X}_1 | \mathsf{X}_t = x) \Var(\mathsf{X}_{1, i} | \mathsf{X}_t = x) \\
    & ~~~~~ \text{(By the Cauchy-Schwarz inequality)} \\
    &= \Var(\mathsf{X}_1^{\top} \mathsf{X}_1 | \mathsf{X}_t = x) \sum\nolimits_{i = 1}^d \Var(\mathsf{X}_{1, i} | \mathsf{X}_t = x) \\
    &= \Var(\mathsf{X}_1^{\top} \mathsf{X}_1 | \mathsf{X}_t = x) \Tr(M^c_2) \\
    &\le d \ \Var(\mathsf{X}_1^{\top} \mathsf{X}_1 | \mathsf{X}_t = x) \Vert M^c_2 \Vert_{2,2}.
\end{align*}
Let $\mathsf{X}_1 \sim p(y)$ be $\kappa$-semi-log-concave for some $\kappa \in \sR$. Then for any $t\in [0,1)$, $\mathsf{X}_1 | \mathsf{X}_t \sim q(y | t, x)$ is $(\kappa + t^2/(1-t)^2)$-semi-log-concave because
\begin{equation*}
    -\nabla^2_y \log q(y | t, x)
    = -\nabla^2_y \log p(y) - \nabla^2_y \log q(t, x | y)
    \succeq \left( \kappa + \frac{t^2}{(1-t)^2} \right) \mathrm{I}_d.
\end{equation*}
When $t \in \left\{ t : \kappa + t^2/(1-t)^2 > 0, t \in (0, 1) \right\}$, by the Brascamp-Lieb inequality \citep{brascamp1976extensions}, it yields that
\begin{equation*}
    \Var(\mathsf{X}_1^{\top} \mathsf{X}_1 | \mathsf{X}_t = x)
    \le 4 M_2 \left( \kappa + \frac{t^2}{(1-t)^2} \right)^{-1}
    = 4 M_2 \frac{(1-t)^2}{\kappa (1-t)^2 + t^2}.
\end{equation*}
Analogous to the control of $\Vert M^c_2 \Vert_{2,2}$, we further obtain that for any $(t, x) \in [t_1, 1-\underline{t}] \times \Omega_A$,
\begin{align*}
    \Var(\mathsf{X}_1^{\top} \mathsf{X}_1 | \mathsf{X}_t = x) \lesssim
    \begin{cases}
        A^2 (1-t)^2 / C_\kappa,   \quad &\text{if $\kappa > 0$}, \\
        A^2 (1-t)^2 / \eta,       \quad &\text{if $\kappa \le 0$}.
    \end{cases}
\end{align*}
Hence, we deduce that for any $t \in [t_1, 1-\underline{t}]$,
\begin{equation*}
    \sup_{x \in \Omega_A} \Vert M_3 - M_2 M_1 \Vert_2 \lesssim A (1-t)^2,
\end{equation*}
where we omit a polynomial prefactor in $d, \kappa, \eta$.
This completes the proof.
\end{proof}

\begin{lemma} \label{lm:time-grad-slc}
Suppose that Condition \ref{cond:slc} holds.
Then it holds that
\begin{equation*}
    \sup_{(t, x) \in [t_1, 1-\underline{t}] \times \Omega_A} \Vert \partial_t v^*(t, x) \Vert_2 \lesssim A / \underline{t}^2,
\end{equation*}
where we omit a polynomial prefactor in $d, \kappa, \eta$.
\end{lemma}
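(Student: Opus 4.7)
The plan is to start from the four-term moment expression for $\partial_t v^*(t,x)$ provided by Lemma \ref{lm:time-derivative},
\[
\partial_t v^*(t, x) = -\frac{1}{(1-t)^2} x + \frac{1}{(1-t)^2} M_1 + \frac{t+1}{(1-t)^4} M^c_2 x - \frac{t}{(1-t)^4} (M_3 - M_2 M_1),
\]
and bound it term by term by plugging in the moment estimates of Lemma \ref{lm:moment-bd-slc}. The entire argument is a triangle inequality on $\Omega_A$, together with the elementary bound $\Vert x\Vert_2\le\sqrt{d}\,A$ for $x\in\Omega_A$ and the key observation that $t\in[t_1,1-\underline t]$ implies $1-t\ge \underline t$, so $(1-t)^{-1}\le \underline t^{-1}$.

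For the first and second terms, using $\Vert x\Vert_2\le\sqrt d\,A$ and the bound $\Vert M_1\Vert_2\lesssim A$ from Lemma \ref{lm:moment-bd-slc}, we immediately get
\[
\frac{1}{(1-t)^2}\Vert x\Vert_2+\frac{1}{(1-t)^2}\Vert M_1\Vert_2 \lesssim \frac{\sqrt d\, A}{\underline t^{2}}.
\]
For the third term, the covariance bound $\Vert M_2^c\Vert_{2,2}\lesssim (1-t)^2$ cancels two factors of $(1-t)^{-1}$, leaving
\[
\frac{t+1}{(1-t)^4}\Vert M_2^c\Vert_{2,2}\Vert x\Vert_2 \lesssim \frac{\sqrt d\, A}{(1-t)^2}\le\frac{\sqrt d\, A}{\underline t^{2}}.
\]
For the fourth term, the third-moment bound $\Vert M_3 - M_2 M_1\Vert_2 \lesssim A(1-t)^2$ again cancels two powers of $(1-t)$ in the denominator, yielding $A/(1-t)^2\le A/\underline t^2$. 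Summing the four contributions and absorbing the $\sqrt d$ and the polynomial factors in $(d,\kappa,\eta)$ hidden in Lemma \ref{lm:moment-bd-slc} into the $\lesssim$ gives the claimed bound.

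There is no essential obstacle beyond careful bookkeeping. The only point worth emphasising is that the $\underline t^{-2}$ rate is genuinely driven by the two ``bare'' $(1-t)^{-2}$ factors in the first and second terms of the moment expression: the third and fourth terms, despite having $(1-t)^{-4}$ in front, are tamed to the same order by the scaling $\Vert M_2^c\Vert_{2,2},\ \Vert M_3 - M_2 M_1\Vert_2/A=O((1-t)^2)$ established under semi-log-concavity. This is why Lemma \ref{lm:moment-bd-slc} is tailored to give the moments with a $(1-t)^2$ factor rather than a looser uniform bound: the final order $A/\underline t^2$ would degrade otherwise.
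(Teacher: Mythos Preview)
Your proof is correct and follows essentially the same approach as the paper: apply the triangle inequality to the moment expression of Lemma~\ref{lm:time-derivative}, plug in the bounds from Lemma~\ref{lm:moment-bd-slc}, and use $(1-t)^{-1}\le\underline t^{-1}$ on $[t_1,1-\underline t]$. Your additional commentary on why the $(1-t)^2$ scaling of the moment bounds is essential for the final $\underline t^{-2}$ rate is accurate and helpful, though the paper's own proof states only the bare inequalities.
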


\begin{proof}
By Lemma \ref{lm:time-derivative}, it holds that
\begin{align*}
    \Vert \partial_t v^*(t, x) \Vert_2
    \le & \frac{1}{(1-t)^2} \Vert x \Vert_2
    + \frac{1}{(1-t)^2} \Vert M_1 \Vert_2
    + \frac{1}{(1-t)^4} \Vert M^c_2 \Vert_{2,2} \cdot \Vert x \Vert_2 \\
    & + \frac{1}{(1-t)^4} \Vert M_3 - M_2 M_1 \Vert_2.
\end{align*}
Applying Lemma \ref{lm:moment-bd-slc}, we obtain
\begin{align*}
    \sup_{(t, x) \in [t_1, 1-\underline{t}] \times \Omega_A} \Vert \partial_t v^*(t, x) \Vert_2 \lesssim \frac{A}{\underline{t}^2},
\end{align*}
where we omit a polynomial prefactor in $d, \kappa, \eta$.
\end{proof}

%%%%%%%%%%%%%%%%%%%%%%%%%%%%%%%%%%%%%%%%%%%%%%%%
\subsection{Control with bounded support}
We derive moment bounds under Condition \ref{cond:bdd-supp}. The moment bounds are useful to estimate the time regularity of the velocity field.

\begin{lemma} [Moment bounds] \label{lm:moment-bd-bdd-supp}
Suppose that Condition \ref{cond:bdd-supp} holds.
Then it holds that
\begin{align*}
    \sup_{(t, x) \in [0, 1] \times \mathbb{R}^d} \Vert M_1 \Vert_2 \lesssim D,
    \sup_{(t, x) \in [0, 1] \times \mathbb{R}^d} \Vert M^c_2 \Vert_{2,2} \lesssim D^2,
    \sup_{(t, x) \in [0, 1] \times \mathbb{R}^d} \Vert M_3 - M_2 M_1 \Vert_2 \lesssim D^3.
\end{align*}
\end{lemma}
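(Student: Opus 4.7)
The plan is to exploit the compactness of $\mathrm{supp}(\nu)$ together with the zero-mean assumption to obtain an almost-sure bound on $\Vert \mathsf{X}_1 \Vert_2$, from which the three moment bounds will follow essentially by inspection. The key preliminary claim is that $\mathrm{supp}(\nu) \subseteq \bar{\mathbb{B}}^d(0, \sqrt{2}D, \Vert \cdot \Vert_2)$. To see this, I would fix any $x \in \mathrm{supp}(\nu)$; by definition of the diameter, $\Vert x - \mathsf{X}_1 \Vert_2 \le \sqrt{2}D$ almost surely, so Jensen's inequality combined with $\E[\mathsf{X}_1] = 0$ from Assumption~\ref{assump:well-defined} yields
\begin{equation*}
\Vert x \Vert_2 = \Vert \E[x - \mathsf{X}_1] \Vert_2 \le \E[\Vert x - \mathsf{X}_1 \Vert_2] \le \sqrt{2}D.
\end{equation*}
Hence $\Vert \mathsf{X}_1 \Vert_2 \le \sqrt{2}D$ almost surely.

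From this, the bound on $M_1$ is immediate, since the conditional expectation preserves the a.s.\ norm bound: $\Vert M_1 \Vert_2 \le \E[\Vert \mathsf{X}_1 \Vert_2 \mid \mathsf{X}_t = x] \le \sqrt{2}D$. The bound on $M_2^c$ is directly supplied by part~(1) of Lemma~\ref{lm:jacob-bd}, which gives $\Cov(\mathsf{X}_1 \mid \mathsf{X}_t = x) \preceq D^2 \mathrm{I}_d$ using only bounded support, so $\Vert M_2^c \Vert_{2,2} \le D^2$ uniformly in $(t, x)$.

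For the third bound, I would replay the Cauchy-Schwarz computation from the proof of Lemma~\ref{lm:moment-bd-slc}. Identifying the $i$-th coordinate of $M_3 - M_2 M_1$ with the conditional covariance $\Cov(\mathsf{X}_1^{\top}\mathsf{X}_1, \mathsf{X}_{1,i} \mid \mathsf{X}_t = x)$ and summing coordinate-wise gives
\begin{equation*}
\Vert M_3 - M_2 M_1 \Vert_2^2 \le \Var(\mathsf{X}_1^{\top}\mathsf{X}_1 \mid \mathsf{X}_t = x) \cdot \Tr(M_2^c).
\end{equation*}
Because $\mathsf{X}_1^{\top}\mathsf{X}_1 \in [0, 2D^2]$ almost surely, Popoviciu's inequality bounds the conditional variance by $D^4$, while $\Tr(M_2^c) \le d D^2$ from the previous step. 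Combining yields $\Vert M_3 - M_2 M_1 \Vert_2 \le \sqrt{d}\, D^3$, which is absorbed in $\lesssim D^3$.

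No serious obstacle is anticipated: bounded support removes all the $t$-dependent degeneracies that made the semi-log-concave case of Lemma~\ref{lm:moment-bd-slc} delicate, so each bound holds uniformly over $(t, x) \in [0, 1] \times \sR^d$. The only subtle point is the initial deduction that zero mean plus bounded diameter $\sqrt{2}D$ implies the support lies in a ball of radius $\sqrt{2}D$ around the origin; this is needed because $D$ is defined as half the diameter rather than the radius of $\mathrm{supp}(\nu)$.
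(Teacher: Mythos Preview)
Your proposal is correct and follows essentially the same approach as the paper, which dispatches the lemma in a single sentence (``The desired bounds hold naturally due to the boundedness of $\mathsf{X}_1$''). You have simply written out the details that the paper leaves implicit, including the nice observation that zero mean plus diameter $\sqrt{2}D$ forces the support into a ball of radius $\sqrt{2}D$ about the origin.
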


\begin{proof}
The desired bounds hold naturally due to the boundedness of $\mathsf{X}_1$.
\end{proof}

To ease the notation, we denote $\Omega_{\underline{t}, A} := [0, 1-\underline{t}] \times [-A, A]^d$.

\begin{lemma} \label{lm:time-grad-bdd-supp}
Suppose that Condition \ref{cond:bdd-supp} holds.
Then it holds that
\begin{align*}
    & \sup_{(t, x) \in \Omega_{\underline{t}, A}} \Vert v^*(t, x) \Vert_2 \lesssim A/\underline{t},
    \sup_{(t, x) \in [0, 1-\underline{t}] \times \mathbb{R}^d} \Vert \nabla_x v^*(t, x) \Vert_{2,2} \lesssim 1/\underline{t}^3, \\
    & \sup_{(t, x) \in \Omega_{\underline{t}, A}} \Vert \partial_t v^*(t, x) \Vert_2 \lesssim A/\underline{t}^4,
\end{align*}
where we omit polynomial prefactors in $d, D$.
\end{lemma}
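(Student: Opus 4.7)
The plan is to obtain each of the three bounds by directly combining the expressions in Lemmas \ref{lm:jacob-bd}, \ref{lm:time-derivative} with the simple moment estimates supplied by Lemma \ref{lm:moment-bd-bdd-supp}. All three estimates follow from plugging in $1-t \geq \underline{t}$ and tracking prefactors; no new analytical tool beyond what has been established is required.

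First, for the Jacobian bound I would invoke Lemma \ref{lm:jacob-bd}(1), which under Condition \ref{cond:bdd-supp} gives
\[
-\tfrac{1}{1-t}\mathrm{I}_d \preceq \nabla_x v^*(t,x) \preceq \left\{\tfrac{t}{(1-t)^3}D^2 - \tfrac{1}{1-t}\right\}\mathrm{I}_d.
\]
On $t \in [0,1-\underline{t}]$ the right-hand coefficient is dominated by $D^2/\underline{t}^3$ and the left by $1/\underline{t}$, so the spectral norm of $\nabla_x v^*$ is $\lesssim 1/\underline{t}^3$ with a polynomial prefactor in $D$. This handles part (ii) of the lemma.

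Next, for the velocity field itself I would use the alternative representation from Remark \ref{rem-singularity},
\[
v^*(t,x) = -\tfrac{1}{1-t}\,x + \tfrac{1}{1-t}\,M_1(t,x),
\]
and apply the triangle inequality. Lemma \ref{lm:moment-bd-bdd-supp} supplies $\Vert M_1\Vert_2 \lesssim D$ uniformly (since $\mathsf{X}_1$ lies in a set of diameter $\sqrt{2}D$ and conditional expectations stay in its convex hull). Combining with $\Vert x\Vert_2 \le A\sqrt{d}$ for $x\in\Omega_A$ and $1/(1-t) \le 1/\underline{t}$ yields $\Vert v^*(t,x)\Vert_2 \lesssim A/\underline{t}$ after absorbing the $\sqrt{d}$ and $D$ factors into the hidden prefactor.

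Finally, for the time derivative I plan to use Lemma \ref{lm:time-derivative}, which decomposes $\partial_t v^*$ into four terms involving $x$, $M_1$, $M^c_2 x$, and $M_3 - M_2 M_1$. Bounding each term separately with Lemma \ref{lm:moment-bd-bdd-supp} and $1-t \ge \underline{t}$, the first two contribute $O(A/\underline{t}^2 + D/\underline{t}^2)$, while the third gives $O(D^2 A\sqrt{d}/\underline{t}^4)$ via $\Vert M^c_2 x\Vert_2 \le \Vert M^c_2\Vert_{2,2}\Vert x\Vert_2$ and the fourth gives $O(D^3/\underline{t}^4)$. Summing and absorbing $D, d$ prefactors, the dominant scale is $A/\underline{t}^4$, as claimed. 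The only mild subtlety is that the two ``leading'' terms come from the $M^c_2 x$ and $M_3 - M_2 M_1$ contributions where the $(1-t)^{-4}$ factor meets the fully bounded moments — there is no cancellation to exploit since Condition \ref{cond:bdd-supp} alone provides no vanishing of $M^c_2$ as $t \to 1$, so the bound is tight in order of $\underline{t}$. Since everything reduces to triangle-inequality bookkeeping, there is no serious obstacle; the only care needed is to verify that $\Omega_A$ enters only via $\Vert x\Vert_2 \le A\sqrt{d}$ so that the $A$-dependence is linear throughout.
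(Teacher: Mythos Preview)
Your proposal is correct and follows essentially the same approach as the paper: the paper likewise uses the representation \eqref{eq:vf-cond-expect} with the bound $\Vert M_1\Vert_2\lesssim D$ for the velocity, Lemma~\ref{lm:vf-grad-x} with $\Vert M_2^c\Vert_{2,2}\lesssim D^2$ for the Jacobian, and Lemma~\ref{lm:time-derivative} term-by-term with the moment bounds of Lemma~\ref{lm:moment-bd-bdd-supp} for the time derivative. The only cosmetic difference is that the paper factors out $(1-t)^{-4}$ from all four terms of $\partial_t v^*$ before bounding, whereas you bound each term with its own power of $(1-t)^{-1}$ and then take the worst case; the conclusions are identical.
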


\begin{proof}
By Eq. \paref{eq:vf-cond-expect}, for $t \in (0, 1)$, it implies that
\begin{align*}
    \Vert v^*(t, x) \Vert_2
    \le \frac{1}{1-t} \Vert x \Vert_2 + \frac{1}{1-t} \Vert M_1 \Vert_2
    \le \frac{\Vert x \Vert_2 \vee \Vert M_1 \Vert_2}{1-t},
\end{align*}
which yields that
\begin{equation*}
    \sup_{(t, x) \in \Omega_{\underline{t}, A}} \Vert v^*(t, x) \Vert_2
    \lesssim \frac{A}{\underline{t}}
\end{equation*}
with a polynomial prefactor in $D$ omitted.
By Lemma \ref{lm:vf-grad-x}, for $t \in (0, 1)$, it holds that
\begin{align*}
    \Vert \nabla_x v^*(t, x) \Vert_{2,2}
    \le \frac{t}{(1-t)^3} \Vert M^c_2 \Vert_{2,2} + \frac{1}{1-t}
    \lesssim \frac{1}{(1-t)^3},
\end{align*}
which implies that
\begin{equation*}
    \sup_{(t, x) \in [0, 1-\underline{t}] \times \mathbb{R}^d} \Vert \nabla_x v^*(t, x) \Vert_{2,2} \lesssim \frac{1}{\underline{t}^3}
\end{equation*}
with a polynomial prefactor in $D$ omitted.
By Lemma \ref{lm:time-derivative}, the following bounds hold
\begin{align*}
    \Vert \partial_t v^*(t, x) \Vert_2
    \le & \frac{1}{(1-t)^2} \Vert x \Vert_2
    + \frac{1}{(1-t)^2} \Vert M_1 \Vert_2
    + \frac{1}{(1-t)^4} \Vert M^c_2 \Vert_{2,2} \cdot \Vert x \Vert_2 \\
    & + \frac{1}{(1-t)^4} \Vert M_3 - M_2 M_1 \Vert_2 \\
    \lesssim & \frac{1}{(1-t)^4} (\Vert x \Vert_2 + \Vert M_1 \Vert_2 + \Vert M^c_2 \Vert_{2,2} \cdot \Vert x \Vert_2 + \Vert M_3 - M_2 M_1 \Vert_2),
\end{align*}
which implies that
\begin{equation*}
    \sup_{(t, x) \in \Omega_{\underline{t}, A}} \Vert \partial_t v^*(t, x) \Vert_2 \lesssim \frac{A}{\underline{t}^4}
\end{equation*}
with a polynomial prefactor in $d, D$ omitted. This completes the proof.
\end{proof}

%%%%%%%%%%%%%%%%%%%%%%%%%%%%%%%%%%%%%%%%%%%%%%%%
\subsection{Control with Gaussian smoothing}
We derive moment bounds under Condition \ref{cond:gaussian-smooth}. The moment bounds are useful to estimate the time regularity of the velocity field.

\begin{lemma} [Moment bounds] \label{lm:moment-bd-smooth}
Suppose that Condition \ref{cond:gaussian-smooth} holds.
Then for any $t \in [0, 1-\underline{t}]$, it holds that
\begin{align*}
    \sup_{x \in \Omega_A} \Vert M_1 \Vert_2 \lesssim A, \quad
    % \sup_{x \in \Omega_A} \vert M_2 \vert \lesssim A^2, \\
    \sup_{x \in \sR^d} \Vert M^c_2 \Vert_{2,2} \lesssim (1-t)^2, \quad
    \sup_{x \in \Omega_A} \Vert M_3 - M_2 M_1 \Vert_2 \lesssim A (1-t)^2,
\end{align*}
where we omit polynomial prefactors in $d, R, \sigma$.
\end{lemma}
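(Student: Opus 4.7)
The plan is to follow the same three-step structure as the proof of Lemma \ref{lm:moment-bd-slc}, but exploit the explicit representation of $\mathsf{X}_1 \mid \mathsf{X}_t$ given in Lemma \ref{lm:jacob-bd}(4) in place of the Brascamp--Lieb-based arguments. Write
\[
(\mathsf{X}_1 \mid \mathsf{X}_t = x) \overset{d}{=} \alpha_t \mathsf{Q} + \beta_t \mathsf{Z} + \gamma_t x,
\]
where $\alpha_t = (1-t)^2/((1-t)^2+\sigma^2 t^2)$, $\beta_t^2 = \sigma^2 \alpha_t$, $\gamma_t = \sigma^2 t^2/((1-t)^2+\sigma^2 t^2)$, $\mathsf{Q} \sim \tilde{\rho}$ is supported on the ball of radius $R$, and $\mathsf{Z} \sim \gamma_d$ is independent of $\mathsf{Q}$. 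Observe that $\alpha_t + \gamma_t = 1$, and that $(1-t)^2 + \sigma^2 t^2 \ge \sigma^2/(1+\sigma^2)$ for $t\in[0,1]$, which gives the key inequalities $\alpha_t \lesssim (1-t)^2$ and $\beta_t^2 \lesssim (1-t)^2$ with constants depending polynomially on $\sigma$.

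The bound on $M_1$ follows immediately: since $\E[\mathsf{Z}] = 0$ and $\|\E[\mathsf{Q}]\|_2 \le R$, we get $\|M_1\|_2 \le \alpha_t R + \gamma_t \|x\|_2 \le R + \sqrt{d}\,A$, which is $\lesssim A$ on $\Omega_A$ up to a polynomial prefactor in $d,R$. The bound on $M^c_2$ is read off directly from Lemma \ref{lm:jacob-bd}(4): using $\alpha_t^2 R^2 + \beta_t^2 \lesssim (1-t)^2$ via the two key inequalities above, the desired upper bound $\|M^c_2\|_{2,2} \lesssim (1-t)^2$ holds uniformly in $x$, with a polynomial prefactor in $R,\sigma$.

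For $M_3 - M_2 M_1$, I would reuse the Cauchy--Schwarz computation from the proof of Lemma \ref{lm:moment-bd-slc} to obtain
\[
\|M_3 - M_2 M_1\|_2^2 \le d\,\Var(\mathsf{X}_1^\top \mathsf{X}_1 \mid \mathsf{X}_t = x)\,\|M^c_2\|_{2,2}.
\]
Since $\|M^c_2\|_{2,2} \lesssim (1-t)^2$ is already in hand, it suffices to show $\Var(\mathsf{X}_1^\top \mathsf{X}_1 \mid \mathsf{X}_t = x) \lesssim A^2 (1-t)^2$. Expanding
\[
\mathsf{X}_1^\top \mathsf{X}_1 = \alpha_t^2 \mathsf{Q}^\top\mathsf{Q} + \beta_t^2 \mathsf{Z}^\top\mathsf{Z} + 2\alpha_t\beta_t \mathsf{Q}^\top \mathsf{Z} + 2\alpha_t\gamma_t \mathsf{Q}^\top x + 2\beta_t\gamma_t \mathsf{Z}^\top x + \gamma_t^2 \|x\|_2^2,
\]
the last term is deterministic and drops out of the variance. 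For the remaining five terms, I would apply $\Var(\sum_{i=1}^5 Y_i) \le 5 \sum_i \Var(Y_i)$ and bound each variance crudely: $\Var(\alpha_t^2 \mathsf{Q}^\top\mathsf{Q}) \le \alpha_t^4 R^4$, $\Var(\beta_t^2\mathsf{Z}^\top\mathsf{Z}) = 2d\beta_t^4$, $\Var(2\alpha_t\beta_t \mathsf{Q}^\top\mathsf{Z}) \le 4\alpha_t^2 \beta_t^2 R^2$, $\Var(2\alpha_t\gamma_t \mathsf{Q}^\top x) \le 4\alpha_t^2 R^2 \|x\|_2^2$, and $\Var(2\beta_t\gamma_t \mathsf{Z}^\top x) = 4\beta_t^2 \gamma_t^2 \|x\|_2^2$. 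Plugging in $\alpha_t,\beta_t^2 \lesssim (1-t)^2$, $\gamma_t \le 1$, and $\|x\|_2 \le \sqrt{d}A$, each term is $\lesssim dA^2(1-t)^2$ with a prefactor polynomial in $d,R,\sigma$, giving the claimed bound after taking square roots.

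The main obstacle is the bookkeeping on the powers of $(1-t)$ in the variance expansion. The tempting bound $\alpha_t \le 1$ is too weak and would lose the crucial $(1-t)^2$ factor; one must instead use $\alpha_t \lesssim (1-t)^2$, which holds because the denominator $(1-t)^2 + \sigma^2 t^2$ is bounded away from zero uniformly in $t\in[0,1]$. Once this observation is in place, all estimates are straightforward and mirror those of Lemma \ref{lm:moment-bd-slc}, with the explicit Gaussian-plus-bounded decomposition replacing the Brascamp--Lieb inequality.
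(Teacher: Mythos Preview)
Your proposal is correct and follows essentially the same skeleton as the paper---same bound on $M_2^c$ from Lemma~\ref{lm:jacob-bd}(4), same Cauchy--Schwarz reduction for $M_3 - M_2 M_1$---but diverges in two places, both in a more elementary direction.

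For $M_1$, the paper reuses the Hatsell--Nolte identity plus a Lipschitz-in-$x$ argument (exactly as in Lemma~\ref{lm:moment-bd-slc}), whereas you simply read off $M_1 = \alpha_t\E[\mathsf{Q}] + \gamma_t x$ from the explicit representation and bound it directly. Your route is shorter and fully justified here; the paper's argument is more portable (it works whenever $M_2^c$ is controlled, without an explicit posterior formula), but that generality is not needed under Condition~\ref{cond:gaussian-smooth}. For $\Var(\mathsf{X}_1^\top\mathsf{X}_1 \mid \mathsf{X}_t = x)$, the paper conditions on $\mathsf{Q}$, applies the law of total variance, and invokes moments of a noncentral chi-squared; you instead expand the quadratic form into five random summands and apply $\Var(\sum_{i=1}^5 Y_i)\le 5\sum_i\Var(Y_i)$. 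Both arrive at $\lesssim A^2(1-t)^2$ with polynomial prefactors in $d,R,\sigma$; your expansion is cruder in constants but avoids the chi-squared machinery entirely. The key insight you correctly flag---that one must use $\alpha_t,\beta_t^2\lesssim(1-t)^2$ via the uniform lower bound on $(1-t)^2+\sigma^2 t^2$ rather than the trivial $\alpha_t\le 1$---is exactly what drives both proofs.
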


\begin{proof}
The proof idea is partially similar to that of Lemma \ref{lm:moment-bd-slc}.

First, we bound $M^c_2$.
According to Lemma \ref{lm:jacob-bd}, the following covariance bound holds for any $t \in [0, 1)$,
\begin{equation*}
    % \frac{(1-t)^2}{\beta (1-t)^2 + t^2} \mathrm{I}_d \preceq
    0 \mathrm{I}_d \preceq \Cov(\mathsf{X}_1 | \mathsf{X}_t = x)
    \preceq (1-t)^2 \left\{ \frac{R^2 (1-t)^2}{((1-t)^2 + \sigma^2 t^2)^2} + \frac{\sigma^2}{(1-t)^2 + \sigma^2 t^2} \right\} \mathrm{I}_d.
\end{equation*}
Notice that
\begin{align*}
    \frac{R^2 (1-t)^2}{((1-t)^2 + \sigma^2 t^2)^2} + \frac{\sigma^2}{(1-t)^2 + \sigma^2 t^2}
    \le \left( 1+ \frac{1}{\sigma^2} \right)^2 R^2 + \sigma^2 +1.
\end{align*}
It implies that for any $t \in [0, 1-\underline{t}]$,
\begin{equation} \label{eq:second-moment-control}
    \sup_{x \in \sR^d} \Vert M^c_2 \Vert_{2,2} \lesssim (1-t)^2
\end{equation}
with omitting a polynomial prefactor in $R, \sigma$.

Then, we bound $M_1$ and $M_2$. Again, by the Hatsell-Nolte identity \citep[Proposition 1]{dytso2023conditional}, we obtain that
\begin{equation} \label{eq:jacob-moment-bd-smooth}
    \sup_{(t, x) \in [0, 1-\underline{t}] \times \sR^d} \Vert \nabla_x M_1(t, x) \Vert_{2,2}
    = \sup_{(t, x) \in [0, 1-\underline{t}] \times \sR^d} \frac{t}{(1-t)^2} \Vert M^c_2 \Vert_{2,2}
    \lesssim 1
\end{equation}
with a polynomial prefactor in $R, \sigma$ hidden.
Identical to how we proceed in the proof of Lemma \ref{lm:moment-bd-slc},
we have
\begin{equation*}
    \sup_{(t, x) \in \Omega_{\underline{t}, A}} \Vert M_1 \Vert_2 \lesssim A
\end{equation*}
when omitting a polynomial prefactor in $d, R, \sigma$.
% Moreover, because $M_2 = \Tr(M^c_2) + \Vert M_1 \Vert_2^2$, it yields that
% \begin{equation*}
%     \sup_{(t, x) \in \Omega_{\underline{t}, A}} \vert M_2 \vert \lesssim A^2
% \end{equation*}
% with an omitted polynomial prefactor in $d, R, \sigma$.

Finally, we bound $M_3 - M_2 M_1$.
Recall that we have deduced the following inequality in the proof of Lemma \ref{lm:moment-bd-slc}
\begin{align} \label{eq:third-moment-control}
    \Vert M_3 - M_2 M_1 \Vert_2^2
    \le d \ \Var(\mathsf{X}_1^{\top} \mathsf{X}_1 | \mathsf{X}_t = x) \Vert M^c_2 \Vert_{2,2}.
\end{align}
We next focus on bounding $\Var(\mathsf{X}_1^{\top} \mathsf{X}_1 | \mathsf{X}_t = x)$.
By Lemma \ref{lm:jacob-bd}-(4), it is shown that
\begin{align*}
    (\mathsf{X}_1 | \mathsf{X}_t = x)
    \overset{d}{=} \mathsf{P}_x :=
    \frac{(1-t)^2}{(1-t)^2 + \sigma^2 t^2} \mathsf{Q}
    + \sqrt{\frac{\sigma^2 (1-t)^2}{(1-t)^2 + \sigma^2 t^2}} \mathsf{Z}
    + \frac{\sigma^2 t^2}{(1-t)^2 + \sigma^2 t^2} x
\end{align*}
where $\mathsf{Q} \sim \tilde{\rho}$ is supported on the same ball as $\rho$, $\mathsf{Z} \sim \gamma_d$, and $\mathsf{Q}, \mathsf{Z}$ are independent.
In the expression above, we note that the denominator $(1-t)^2 + \sigma^2 t^2$ is lower bounded by $\sigma^2 / (\sigma^2+1)$ over $t \in [0, 1]$.
Let $\mathsf{R}_{x, y} := \mathsf{P}_x^{\top} \mathsf{P}_x | \mathsf{Q}=y$.
Then by the law of total variance, it yields that
\begin{align} \label{eq:var-decomp}
    \Var(\mathsf{X}_1^{\top} \mathsf{X}_1 | \mathsf{X}_t = x)
    = \Var(\mathsf{P}_x^{\top} \mathsf{P}_x)
    = \E [\Var(\mathsf{R}_{x, y})] + \Var(\E[\mathsf{R}_{x, y}]).
\end{align}
We claim that $\mathsf{R}_{x, y}/\eta$ is distributed as a noncentral chi-squared distribution with degrees of freedom $d$ and the noncentrality parameter $\xi_{x, y}$ where
\begin{align*}
    \eta = \frac{\sigma^2 (1-t)^2}{(1-t)^2 + \sigma^2 t^2}, \quad
    \xi_{x, y} = \frac{1}{\eta} \left\Vert \frac{(1-t)^2}{(1-t)^2 + \sigma^2 t^2} y + \frac{\sigma^2 t^2}{(1-t)^2 + \sigma^2 t^2} x \right\Vert_2^2.
\end{align*}
By properties of the noncentral chi-squared distribution, it holds that
\begin{align*}
    \E(\mathsf{R}_{x, y}) = \eta (d + \xi_{x, y}), \quad \Var(\mathsf{R}_{x, y}) = 2\eta^2 (d + 2\xi_{x, y}).
\end{align*}
Then we bound the first term in the variance decomposition \paref{eq:var-decomp} as follows
\begin{align*}
    \E [\Var(\mathsf{R}_{x, y})]
    = 2\eta \E \left[ \eta d + 2 \eta \xi_{x, y}) \right]
    \lesssim (1-t)^2 (\Vert x \Vert_2^2 \vee 1)
\end{align*}
where we omit a polynomial prefactor in $d, R, \sigma$.
To bound the second term, we do the following calculations
\begin{align*}
    \Var(\E[\mathsf{R}_{x, y}])
    &= \Var(\eta d + \eta \xi_{x, y})
     = \Var \left( \eta d + \left\Vert \frac{(1-t)^2}{(1-t)^2 + \sigma^2 t^2} \mathsf{Q} + \frac{\sigma^2 t^2}{(1-t)^2 + \sigma^2 t^2} x \right\Vert_2^2 \right) \\
    &= \Var \left( \left\Vert \frac{(1-t)^2}{(1-t)^2 + \sigma^2 t^2} \mathsf{Q} \right\Vert_2^2
       + 2 \left\langle \frac{(1-t)^2}{(1-t)^2 + \sigma^2 t^2} \mathsf{Q}, \frac{\sigma^2 t^2}{(1-t)^2 + \sigma^2 t^2} x \right\rangle \right) \\
    &\lesssim (1-t)^4 (\Vert x \Vert_2^2 \vee 1),
\end{align*}
where we omit a polynomial prefactor in $d, R, \sigma$.
Combining the control of the two terms, we obtain that
\begin{align} \label{eq:var-squared-rv-control}
    \Var(\mathsf{X}_1^{\top} \mathsf{X}_1 | \mathsf{X}_t = x) \lesssim (1-t)^2 (\Vert x \Vert_2^2 \vee 1)
\end{align}
by omitting a polynomial prefactor in $d, R, \sigma$.
Therefore, using \paref{eq:second-moment-control}, \paref{eq:third-moment-control}, and \paref{eq:var-squared-rv-control}, the bound of $M_3 - M_2 M_1$ is deduced for any $t \in [0, 1-\underline{t}]$ by
\begin{align*}
    \sup_{x \in \Omega_A} \Vert M_3 - M_2 M_1 \Vert_2
    \lesssim A (1-t)^2
\end{align*}
with a polynomial prefactor in $d, R, \sigma$ hidden.
This completes the proof.
\end{proof}

\begin{lemma} \label{lm:time-grad-smooth}
Suppose that Condition \ref{cond:gaussian-smooth} holds.
Then it holds that
\begin{equation*}
    \sup_{(t, x) \in [0, 1-\underline{t}] \times \Omega_A} \Vert \partial_t v^*(t, x) \Vert_2 \lesssim A / \underline{t}^2,
\end{equation*}
where we omit a polynomial prefactor in $d, \kappa, \eta$.
\end{lemma}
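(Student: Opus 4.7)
The plan is to mimic the structure of Lemma \ref{lm:time-grad-slc}: combine the explicit moment expansion of $\partial_t v^*(t,x)$ from Lemma \ref{lm:time-derivative} with the moment bounds specific to the Gaussian-smoothing setting established in Lemma \ref{lm:moment-bd-smooth}, and then absorb the singular $(1-t)^{-k}$ prefactors using the standing assumption $t\in[0,1-\underline{t}]$.

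First, I would start from the identity
\begin{equation*}
\partial_t v^*(t,x) = -\frac{1}{(1-t)^2}x + \frac{1}{(1-t)^2} M_1 + \frac{t+1}{(1-t)^4} M^c_2\, x - \frac{t}{(1-t)^4}\bigl(M_3 - M_2 M_1\bigr),
\end{equation*}
apply the triangle inequality and standard operator-norm bounds to obtain
\begin{equation*}
\Vert \partial_t v^*(t,x)\Vert_2 \le \frac{\Vert x\Vert_2 + \Vert M_1\Vert_2}{(1-t)^2} + \frac{(t+1)\,\Vert M^c_2\Vert_{2,2}\,\Vert x\Vert_2 + t\,\Vert M_3 - M_2 M_1\Vert_2}{(1-t)^4}.
\end{equation*}
This is purely algebraic and valid for every $(t,x)\in[0,1)\times\sR^d$.

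Next, I would invoke Lemma \ref{lm:moment-bd-smooth} under Condition \ref{cond:gaussian-smooth}: uniformly over $t\in[0,1-\underline{t}]$ and $x\in\Omega_A$, one has $\Vert M_1\Vert_2\lesssim A$, $\Vert M^c_2\Vert_{2,2}\lesssim (1-t)^2$, and $\Vert M_3-M_2 M_1\Vert_2\lesssim A(1-t)^2$, up to polynomial prefactors in $d, R, \sigma$. Plugging these into the inequality above, together with $\Vert x\Vert_2\le \sqrt{d}\,A$ for $x\in\Omega_A$, the second group of terms contributes $A/(1-t)^2$ because the factors of $(1-t)^2$ supplied by the moment bounds cancel two of the four powers in the denominator. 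The first group already contributes $A/(1-t)^2$. Hence
\begin{equation*}
\sup_{x\in\Omega_A}\Vert \partial_t v^*(t,x)\Vert_2 \;\lesssim\; \frac{A}{(1-t)^2}.
\end{equation*}

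Finally, since $t\in[0,1-\underline{t}]$ forces $(1-t)^{-2}\le \underline{t}^{-2}$, taking the supremum over $t$ yields the claimed bound $A/\underline{t}^2$, with a hidden polynomial prefactor in $d, R, \sigma$. There is really no main obstacle: the entire difficulty has already been absorbed into Lemma \ref{lm:moment-bd-smooth}, where the sharp $(1-t)^2$-factors in the second- and third-moment bounds were isolated precisely so that this last step becomes routine. The only thing to keep an eye on is that the prefactor in the statement (listed as $d,\kappa,\eta$) should be read as $d, R, \sigma$ in the Gaussian-smoothing regime, matching the constants produced by Lemma \ref{lm:moment-bd-smooth}.
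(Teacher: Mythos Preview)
Your proposal is correct and matches the paper's approach essentially verbatim: the paper's proof simply states that, based on Lemma \ref{lm:moment-bd-smooth}, the argument is almost identical to that of Lemma \ref{lm:time-grad-slc}. Your observation that the hidden prefactor should be read as $d,R,\sigma$ rather than $d,\kappa,\eta$ is also apt.
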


\begin{proof}
Based on Lemma \ref{lm:moment-bd-smooth}, the proof is almost identical to that of Lemma \ref{lm:time-grad-slc}.
\end{proof}

%%%%%%%%%%%%%%%%%%%%%%%%%%%%%%%%%%%%%%%%%%%%%%%%
\subsection{Sharpness of moment bounds}
The moment bounds in Lemmas \ref{lm:moment-bd-slc} and \ref{lm:moment-bd-smooth} are sharp in $(t, x)$ because of a Gaussian example.
\begin{proposition}
Let $\mathsf{X}_1 \sim \gamma_d$. The conditional distribution of $ \mathsf{X}_1 | \mathsf{X}_t$ has the following explicit expression
\begin{align*}
    \mathsf{X}_1 | \mathsf{X}_t = x \sim N \left( \frac{t}{(1-t)^2 + t^2} x, \frac{(1-t)^2}{(1-t)^2 + t^2} \mathrm{I}_d \right).
\end{align*}
Moreover, for any $t \in (0, 1]$, the moment bounds are given by
\begin{align*}
    \sup_{x \in \Omega_A} \Vert M_1 \Vert_2 \asymp A, \
    \sup_{x \in \sR^d} \Vert M^c_2 \Vert_{2,2} \asymp (1-t)^2, \
    \sup_{x \in \Omega_A} \Vert M_3 - M_2 M_1 \Vert_2 \asymp A (1-t)^2,
\end{align*}
where we omit polynomial prefactors in $d$.
\end{proposition}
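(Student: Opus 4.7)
The plan is to exploit the fact that with $\mathsf{X}_1 \sim \gamma_d$ and $\mathsf{Z} \sim \gamma_d$ independent, the pair $(\mathsf{X}_1, \mathsf{X}_t)$ with $\mathsf{X}_t = (1-t)\mathsf{Z} + t\mathsf{X}_1$ is jointly Gaussian, so the conditional law can be written down in closed form. Concretely, I would first compute the second-moment data $\mathbb{E}[\mathsf{X}_1] = \mathbb{E}[\mathsf{X}_t] = 0$, $\mathrm{Cov}(\mathsf{X}_1) = \mathrm{I}_d$, $\mathrm{Cov}(\mathsf{X}_t) = ((1-t)^2 + t^2)\mathrm{I}_d$, and $\mathrm{Cov}(\mathsf{X}_1, \mathsf{X}_t) = t\, \mathrm{I}_d$. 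Then the standard Gaussian conditioning formula immediately yields
\[
\mathsf{X}_1 \mid \mathsf{X}_t = x \sim N\!\Big(\tfrac{t}{(1-t)^2 + t^2}\, x,\ \tfrac{(1-t)^2}{(1-t)^2 + t^2}\, \mathrm{I}_d\Big),
\]
which proves the first claim.

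Next, writing $\mu_t := \tfrac{t}{(1-t)^2 + t^2} x$ and $\sigma_t^2 := \tfrac{(1-t)^2}{(1-t)^2 + t^2}$, I would compute the three moments directly. Clearly $M_1 = \mu_t$ and $M_2^c = \sigma_t^2 \mathrm{I}_d$. For $M_3$, I would represent $\mathsf{X}_1 = \mu_t + \sigma_t \mathsf{W}$ with $\mathsf{W} \sim N(0, \mathrm{I}_d)$ under the conditional law, expand
\[
\mathsf{X}_1 \mathsf{X}_1^\top \mathsf{X}_1 = (\mu_t + \sigma_t \mathsf{W})\big(\|\mu_t\|_2^2 + 2\sigma_t \mu_t^\top \mathsf{W} + \sigma_t^2 \|\mathsf{W}\|_2^2\big),
\]
and take expectations using $\mathbb{E}\mathsf{W} = 0$, $\mathbb{E}[\mathsf{W}\mathsf{W}^\top] = \mathrm{I}_d$, and $\mathbb{E}[\|\mathsf{W}\|_2^2 \mathsf{W}] = 0$ by symmetry. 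This gives $M_3 = \mu_t \|\mu_t\|_2^2 + (d+2)\sigma_t^2 \mu_t$, and combining with $M_2 M_1 = (\|\mu_t\|_2^2 + d\sigma_t^2)\mu_t$ yields the clean identity
\[
M_3 - M_2 M_1 = 2\sigma_t^2\, \mu_t = \tfrac{2 t (1-t)^2}{((1-t)^2 + t^2)^2}\, x.
\]

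Finally, to establish the sharp $\asymp$ bounds, I would bound the scalar factors $\tfrac{t}{(1-t)^2+t^2}$, $\sigma_t^2 = \tfrac{(1-t)^2}{(1-t)^2+t^2}$, and $\tfrac{2t(1-t)^2}{((1-t)^2+t^2)^2}$ above by absolute constants times $1$, $(1-t)^2$, and $(1-t)^2$ respectively, which gives the $\lesssim$ halves and in particular confirms that Lemmas \ref{lm:moment-bd-slc} and \ref{lm:moment-bd-smooth} cannot be improved in $(t, x)$. For the $\gtrsim$ halves, for each $t \in (0,1]$ I would evaluate at $x = (A,\ldots,A)^\top \in \Omega_A$ (so $\|x\|_2 = \sqrt{d}\,A$): this delivers $\|M_1\|_2 \gtrsim A$, $\|M_2^c\|_{2,2} \gtrsim (1-t)^2$ (the second bound is $x$-free), and $\|M_3 - M_2 M_1\|_2 \gtrsim A(1-t)^2$, matching the claimed scaling. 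There is no real obstacle here; the only mild care point is keeping track of which factors are absorbed into the dimension-dependent prefactor versus the $t$-dependent one, and observing that the denominator $(1-t)^2 + t^2 \in [1/2, 1]$ on $(0,1]$ so it never contributes to the sharp scaling.
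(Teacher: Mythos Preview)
Your proposal is correct and follows the same approach as the paper---the paper's proof is merely the two-line sketch ``By Bayes' rule \ldots; by properties of the Gaussian distribution, the desired moment bounds hold,'' and you have fleshed out exactly those two steps (joint-Gaussian conditioning for the first, explicit moment computation for the second). Your closed-form identity $M_3 - M_2 M_1 = 2\sigma_t^2 \mu_t$ is a nice concrete addition that the paper does not spell out.
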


\begin{proof}
By Bayes' rule, for $\mathsf{X}_1 \sim \gamma_d$, it implies that
\begin{align*}
        \mathsf{X}_1 | \mathsf{X}_t = x \sim N \left( \frac{t}{(1-t)^2 + t^2} x, \frac{(1-t)^2}{(1-t)^2 + t^2} \mathrm{I}_d \right).
\end{align*}
By properties of the Gaussian distribution, the desired moment bounds hold.
\end{proof}

%%%%%%%%%%%%%%%%%%%%%%%%%%%%%%%%%%%%%%%%%%%%%%%%
\subsection{Proof of Theorem \ref{thm:vf-regularity}}
\label{subsec:thm-vf-regu}

\begin{proof} [Proof of Theorem \ref{thm:vf-regularity}]
By Lemma \ref{lm:vf-bd}, it holds that
for any $x, y \in \sR^d$ and $t \in [0, 1]$, $\Vert v^*(t, x) - v^*(t, y)\Vert_{\infty} \lesssim \Vert x-y \Vert_{\infty}$,
and that $\sup_{(t, x) \in [0, 1] \times \Omega_A} \Vert v^*(t, x) \Vert_{\infty} \lesssim A$,
where we omit constants in $d, \kappa, \beta, \sigma, D, R$.

Then we show that the Lipschitz continuity of $v^*(t, x)$ in $t$.
Concretely, for any $s, t \in [0, 1-\underline{t}]$ and $x \in \sR^d$, $\Vert v^*(t, x) - v^*(s, x) \Vert_{\infty} \leq L_t \vert t-s \vert$ with $L_t \lesssim \underline{t}^{-2}$ by omitting a constant in $d, \kappa, \beta, \sigma, D, R$.
We analyze the cases in Assumption \ref{assump:target} one by one as follows:
\begin{itemize}
    \item Suppose that Assumptions \ref{assump:well-defined} and \ref{assump:target}-(i) holds. Condition \ref{cond:slc} holds as well. We use controls with semi-log-concavity in Lemma \ref{lm:time-grad-slc} and derive the desired Lipschitz continuity in $t$.
    \item Suppose that Assumptions \ref{assump:well-defined} and \ref{assump:target}-(ii) holds. Conditions \ref{cond:slc} and \ref{cond:bdd-supp} hold as well. We use controls with bounded support in Lemma \ref{lm:time-grad-bdd-supp} for $t \in [0, t_1]$ and use controls with semi-log-concavity in Lemma \ref{lm:time-grad-slc} for $t \in [t_1, 1-\underline{t}]$. Then we derive the desired Lipschitz continuity in $t$.
    \item Suppose that Assumptions \ref{assump:well-defined} and \ref{assump:target}-(iii) holds. Condition \ref{cond:gaussian-smooth} holds as well. We use controls with with Gaussian smoothing in Lemma \ref{lm:time-grad-smooth}, and the Lipschitz continuity in $t$ follows.
\end{itemize}
We complete the proof.
\end{proof}

\section{Approximation error of the velocity field}
\label{appr-proof}

In this section, we analyze the approximation error of the velocity field by a constructive approach.

Before proceeding, we present a few useful notations for the Sobolev function class.
A $d$-dimensional multi-index is a $d$-tuple $\alpha = (\alpha_1, \alpha_2, \cdots, \alpha_d)^{\top} \in \mathbb{N}_0^d$.
We define $\Vert \alpha \Vert_1 = \sum_{i=1}^d \alpha_i$ and $\partial^{\alpha} := \partial_1^{\alpha_1} \partial_2^{\alpha_2} \cdots \partial_d^{\alpha_d}$ to represent the partial derivative of a $d$-dimensional function.
We also use $D$ to denote the weak derivative of a single variable function and $D^{\alpha}$ to denote the partial derivative $D^{\alpha_1}_1 D^{\alpha_2}_2 \cdots D^{\alpha_d}_d$ of a $d$-dimensional function with $\alpha_i$ as the order of derivative $D_i$ in the $i$-th variable.

\subsection{Approximation in space with Lipschitz regularity}
\label{subsec:approx-space}

In this subsection, we study the approximation capacity of deep ReLU networks joint with an estimate of the Lipschitz regularity.
The strong expressive power of deep ReLU networks has been studied with the localized or averaged Taylor polynomials.
We follow the localized approximation approach, and establish the global Lipschitz continuity and non-asymptotic approximation estimate of deep ReLU networks.

\begin{lemma} \label{lm:approx-multi-dim}
Given any $f \in W^{1, \infty}((0, 1)^d)$ with $\Vert f \Vert_{W^{1, \infty}((0, 1)^d)} \le 1$, for any $N, L \in \mathbb{N}$, there exists a function $\phi$ implemented by a deep ReLU network with width $\mathcal{O}(2^d d N \log N)$ and depth $\mathcal{O}(d^2 L \log L)$ such that $\Vert \phi \Vert_{W^{1, \infty}((0, 1)^d)} \lesssim 1$ and
\begin{align*}
    \Vert \phi - f \Vert_{L^{\infty}([0, 1]^d)} \lesssim (NL)^{-2/d},
\end{align*}
where we omit some prefactors depending only on $d$.
\end{lemma}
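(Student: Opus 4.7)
The plan is to realize $\phi$ as a deep ReLU network that evaluates a piecewise multilinear interpolant of $f$ on a uniform grid, where the grid values are stored via a bit-extraction sub-network. The grid spacing will govern the approximation rate, while the Lipschitz bound will be obtained by writing the interpolant as a partition-of-unity expansion and telescoping nearby grid values.

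First, fix $K \asymp (NL)^{2/d}$ and tile $[0,1]^d$ into $K^d$ sub-cubes of side $1/K$. Let $\tilde f$ be the multilinear interpolant of $f$ at the $(K+1)^d$ grid nodes. Since $\Vert f \Vert_{W^{1,\infty}((0,1)^d)}\le 1$, the standard interpolation estimate gives $\Vert \tilde f - f \Vert_{L^\infty([0,1]^d)} \lesssim d/K \lesssim (NL)^{-2/d}$, and $\tilde f$ is $d$-Lipschitz. Writing $\tilde f = \sum_c f(c)\, g_c$, where $g_c(x) = \prod_{i=1}^d \eta_{c_i}(x_i)$ is a product of one-dimensional hat functions of width $2/K$, the family $\{g_c\}$ is a continuous partition of unity with $0 \le g_c \le 1$ and individual Lipschitz constant $\lesssim K$.

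Next, I implement the ingredients by ReLU sub-networks. Each one-dimensional hat $\eta_{c_i}$ is an exact ReLU combination of bounded depth; the product of the $d$ hats is realized by the Yarotsky-style approximate-multiplication sub-network with depth $\mathcal{O}(d L \log L)$ and accuracy better than $(NL)^{-2/d}$, and running $2^d$ such product circuits in parallel per cube produces the width factor $2^d$. For the $(K+1)^d \sim (NL)^{2}$ grid values, I quantize each $f(c)$ to precision $1/K$ (so $\mathcal{O}(\log K)$ bits per value) and apply the bit-extraction construction of \citet{bartlett1998almost, lu2021deep} to obtain a lookup sub-network of width $\mathcal{O}(N \log N)$ and depth $\mathcal{O}(L \log L)$ returning $\tilde v_c$ with $|\tilde v_c - f(c)| \lesssim 1/K$. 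Assembling the pieces, set $\phi(x) = \sum_c \tilde v_c\, g_c(x)$; the bound $\Vert \phi - f \Vert_{L^\infty} \lesssim (NL)^{-2/d}$ follows by the triangle inequality, and $\Vert \phi \Vert_{L^\infty} \lesssim 1$ is immediate since $0 \le g_c \le 1$ and $\sum_c g_c \equiv 1$.

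The Lipschitz control is the most delicate step. I use a partition-of-unity trick: on each cube $Q$ with a reference corner $c_0$, rewrite $\phi(x) = \tilde v_{c_0} + \sum_{c \in Q}(\tilde v_c - \tilde v_{c_0}) g_c(x)$; the $2^d$ differences are bounded by $\mathcal{O}(1/K)$ from the Lipschitzness of $f$ plus the quantization error, while $|\nabla g_c| \lesssim K$, which yields $\Vert \nabla \phi \Vert_{L^\infty} \lesssim 1$ with a prefactor depending only on $d$. This is precisely where the argument could fail: bit-extraction is inherently discontinuous in its cube index, and the extracted values $\tilde v_c$ must vary by at most $\mathcal{O}(1/K)$ across neighboring cubes for the cancellation above to go through. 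The resolution is that the lookup is never composed directly with a spatial coordinate---the spatial dependence lives entirely in the continuous hats $g_c$---and the quantization precision is tied to the grid size. Keeping all precisions synchronized (the bit-extraction level, the multiplication accuracy, and the width-depth budget) so that no single source of error dominates is the main technical accounting, and is what forces the prefactors $2^d d$ in the width and $d^2$ in the depth.
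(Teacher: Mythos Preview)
Your overall strategy---multilinear interpolation on a grid of spacing $1/K$ with $K\asymp (NL)^{2/d}$, grid values stored by bit-extraction, and Lipschitz control via the telescoping $\phi(x)=\tilde v_{c_0}+\sum_{c}(\tilde v_c-\tilde v_{c_0})g_c(x)$---is sound at the level of the \emph{exact} interpolant, and the balancing $|\tilde v_c-\tilde v_{c_0}|\lesssim 1/K$ against $|\nabla g_c|\lesssim K$ is exactly the right mechanism. But there is a genuine gap in the network realization.

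The issue is your sentence ``the lookup is never composed directly with a spatial coordinate.'' It must be. With $(K{+}1)^d\asymp (NL)^2$ nodes and a width budget of $\mathcal{O}(N\log N)$, you cannot instantiate all hats $g_c$ and all values $\tilde v_c$ in parallel; bit-extraction is mandatory, and the extractor needs a cube index. That index is produced by a step-function sub-network in $x$, which a ReLU network can only realize as a piecewise-linear map with steep ramps on narrow transition strips. On those strips the index is non-integral, the extractor returns uncontrolled values, and the ``fractional-part'' hats you compute from the same step output are likewise uncontrolled. So neither your $L^\infty$ bound nor your Lipschitz bound is established on those strips; the telescoping argument presupposes you are on a single cube with the correct $2^d$ looked-up values, which is precisely what fails there.

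The paper's construction is organized around this obstruction rather than around multilinear interpolation. It uses a \emph{coarse} partition of unity $\{g_m\}_{m\in\{1,2\}^d}$ (Definitions~\ref{def:partition-function}--\ref{def:pou}) with only $2^d$ pieces, whose supports $\Omega_m$ are unions of sub-intervals that deliberately exclude the step-function transition strips. On each $\Omega_m$ a bit-extraction network $\psi_m$ realizes a piecewise-\emph{constant} (order-$0$ averaged Taylor) approximation of $f$ with $\|\psi_m-f\|_{L^\infty(\Omega_m)}\lesssim (NL)^{-2/d}$ and $\|\psi_m-f\|_{W^{1,\infty}(\Omega_m)}\lesssim 1$ (Lemmas~\ref{lm:local-est-polynm}--\ref{lm:local-est-nn}); the global approximant is $\phi=\sum_m \phi_{\times}(\phi_m,\psi_m)$. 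The Lipschitz bound then follows from the product-rule estimate
\[
\|\phi_m(\psi_m-f)\|_{W^{1,\infty}}\le \|\phi_m\|_{W^{1,\infty}}\|\psi_m-f\|_{L^\infty}+\|\phi_m\|_{L^\infty}\|\psi_m-f\|_{W^{1,\infty}},
\]
where the first product is $\mathcal{O}(K)\cdot\mathcal{O}(1/K)$ and the second is $\mathcal{O}(1)\cdot\mathcal{O}(1)$. This is the same $K$ vs.\ $1/K$ balancing you identified, but it is made rigorous because $\phi_m$ (the network approximation of $g_m$) vanishes where $\psi_m$ misbehaves. Your fine partition $\{g_c\}$ has no such vanishing on the transition strips, so the analogous step does not close. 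If you want to salvage the multilinear route, you would need to superimpose the paper's $2^d$-piece overlapping partition on top of it to kill the strips; as written, the proposal is missing that ingredient.
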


\begin{corollary} \label{cor:approx-multi-dim}
Given any $f \in W^{1, \infty}((0, 1)^d)$ with $\Vert f \Vert_{W^{1, \infty}((0, 1)^d)} < \infty$, for any $N, L \in \mathbb{N}$, there exists a function $\phi$ implemented by a deep ReLU network with width $\mathcal{O}(2^d d N \log N)$ and depth $\mathcal{O}(d^2 L \log L)$ such that $\Vert \phi \Vert_{W^{1, \infty}((0, 1)^d)} \lesssim \Vert f \Vert_{W^{1, \infty}((0, 1)^d)}$ and
\begin{align*}
    \Vert \phi - f \Vert_{L^{\infty}([0, 1]^d)} \lesssim \Vert f \Vert_{W^{1, \infty}((0, 1)^d)} (NL)^{-2/d},
\end{align*}
where we omit some prefactors depending only on $d$.
\end{corollary}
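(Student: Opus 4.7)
The plan is to reduce the corollary to Lemma \ref{lm:approx-multi-dim} by a simple normalization-and-rescaling argument, since Lemma \ref{lm:approx-multi-dim} already handles the unit-norm case and all quantities of interest (the Sobolev norm, the $L^\infty$ approximation error, and the network's output) scale linearly under multiplication by a positive constant.

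First, if $\|f\|_{W^{1,\infty}((0,1)^d)} = 0$, then $f$ is a.e.\ constant (in fact zero, after choosing a representative), and $\phi \equiv 0$ trivially satisfies the conclusion. So I may assume $C := \|f\|_{W^{1,\infty}((0,1)^d)} \in (0,\infty)$. Set $g := f/C$; then $g \in W^{1,\infty}((0,1)^d)$ with $\|g\|_{W^{1,\infty}((0,1)^d)} \le 1$. Apply Lemma \ref{lm:approx-multi-dim} to $g$ to obtain, for the given $N,L\in\sN$, a deep ReLU network $\psi$ of width $\gO(2^d d N \log N)$ and depth $\gO(d^2 L \log L)$ with $\|\psi\|_{W^{1,\infty}((0,1)^d)} \lesssim 1$ and $\|\psi - g\|_{L^\infty([0,1]^d)} \lesssim (NL)^{-2/d}$, where the implicit constants depend only on $d$.

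Now define $\phi := C \psi$. Since $\psi$ is realized by a deep ReLU network, and positive scalar multiplication of the output can be implemented by absorbing the factor $C$ into the weights and bias of the final (linear) layer $l_{\mathtt{D}}$, the function $\phi$ is implemented by a deep ReLU network of exactly the same width and depth as $\psi$, namely width $\gO(2^d d N \log N)$ and depth $\gO(d^2 L \log L)$. Because the Sobolev norm $\|\cdot\|_{W^{1,\infty}((0,1)^d)}$ is $1$-homogeneous in the function, we have
\begin{equation*}
\|\phi\|_{W^{1,\infty}((0,1)^d)} \;=\; C \|\psi\|_{W^{1,\infty}((0,1)^d)} \;\lesssim\; C \;=\; \|f\|_{W^{1,\infty}((0,1)^d)},
\end{equation*}
with the same $d$-dependent prefactor as in Lemma \ref{lm:approx-multi-dim}.

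Finally, for the approximation bound,
\begin{equation*}
\|\phi - f\|_{L^\infty([0,1]^d)} \;=\; C\,\|\psi - g\|_{L^\infty([0,1]^d)} \;\lesssim\; \|f\|_{W^{1,\infty}((0,1)^d)}\,(NL)^{-2/d},
\end{equation*}
again with a prefactor depending only on $d$, which is exactly the claim. There is no real obstacle here; the only thing to double-check is that rescaling the output of a ReLU network by a positive constant preserves the ReLU architecture and its width/depth counts, which follows immediately from the definition of $\mathcal{NN}(\mathtt{S}, \mathtt{W}, \mathtt{D}, \mathtt{B}, \mathtt{k}, \mathtt{d})$ since the last layer $l_{\mathtt{D}}$ is affine and $C\, l_{\mathtt{D}}(\cdot) = (C A_{\mathtt{D}})(\cdot) + C b_{\mathtt{D}}$ remains affine with the same dimensions (only the $L^\infty$ bound $\mathtt{B}$ scales by $C$, which does not affect width or depth).
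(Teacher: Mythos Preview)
Your proposal is correct and matches the paper's proof exactly: the paper simply says to apply Lemma \ref{lm:approx-multi-dim} to $\bar{f} := f / \Vert f \Vert_{W^{1,\infty}((0,1)^d)}$, which is precisely your normalization-and-rescaling argument spelled out in more detail.
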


\begin{remark}
    The approximation rate is nearly optimal for the unit ball of functions in $W^{1, \infty}((0, 1)^d)$ according to \citet{shen2020deep, shen2022optimal} and \citet{lu2021deep}.
\end{remark}

\noindent
\textit{Proof sketch of Lemma \ref{lm:approx-multi-dim}.} The proof idea is similar to that of \citet[Theorem 3]{yang2023nearly}, and we divide the proof into three steps.

Step 1. Discretization.
We use a partition of unity to discretize the set $(0, 1)^d$.
As in Definitions \ref{def:partition-function} and \ref{def:pou}, we construct a partition of unity $\{g_m\}_{m \in \{1, 2\}^d}$ on $(0, 1)^d$ with $\mathrm{supp}(g_m) \cap (0, 1)^d \subset \Omega_m$ for any $m \in \{1, 2\}^d$.
Then we approximate the partition of unity $\{g_m\}_{m \in \{1, 2\}^d}$ by a collection of deep ReLU networks $\{\phi_m\}_{m \in \{1, 2\}^d}$ as in Lemma \ref{lm:pou-nn}.

Step 2. Approximation on $\Omega_m$.
Given any $m \in \{1, 2\}^d$, for each subset $\Omega_m \subset [0, 1]^d$, we find a piecewise constant function $f_{K, m}$ satisfying
\begin{align*}
    \Vert f_{K, m} - f \Vert_{W^{1, \infty}(\Omega_m)} \lesssim 1, \quad
    \Vert f_{K, m} - f \Vert_{L^{\infty}(\Omega_m)} \lesssim 1 / K,
\end{align*}
where we omit constants in $d$.
Piecewise constant functions can be approximated by deep ReLU networks. Then, following \citet{lu2021deep} and
\citet{yang2023nearly}, we construct a deep ReLU network $\psi_m$ with width $\mathcal{O}(2^d d N \log N)$ and depth $\mathcal{O}(d^2 L \log L)$ such that
\begin{align*}
    \Vert \psi_m - f \Vert_{W^{1, \infty}(\Omega_m)} \lesssim 1, \quad
    \Vert \psi_m - f \Vert_{L^{\infty}(\Omega_m)} \lesssim (NL)^{-2/d},
\end{align*}
where we omit constants in $d$.

Step 3. Approximation on $[0, 1]^d$.
Combining the approximations on each subset $\Omega_m$ properly, we construct an approximation of the target function $f$ on the domain $[0, 1]^d$.
That is, for any $N, L \in \mathbb{N}$, there exists a function $\phi$ implemented by a deep ReLU network with width $\mathcal{O}(N \log N)$ and depth $\mathcal{O}(L \log L)$ such that
\begin{align*}
    \Vert \phi - f \Vert_{L^{\infty}([0, 1]^d)} \lesssim (NL)^{-2/d}
    \quad \text{with} \quad
    \Vert \phi \Vert_{W^{1, \infty}((0, 1)^d)} \lesssim 1,
\end{align*}
where we omit constants in $d$.

\begin{definition} \label{def:partition-function}
Given $K, d \in \mathbb{N}$, and for any $m = [m_1, m_2, \cdots, m_d]^{\top} \in \{1, 2\}^d$, we define $\Omega_m := \prod_{i=1}^d \Omega_{m_j}$ where $\Omega_1 := \bigcup_{i=1}^{K-1} \left[ \frac{i}{K}, \frac{i}{K} + \frac{3}{4K} \right]$ and $\Omega_2 := \bigcup_{i=0}^{K} \left[ \frac{i}{K} - \frac{1}{2K}, \frac{i}{K} + \frac{1}{4K} \right] \cap [0, 1]$.
\end{definition}

\begin{figure}[t!]
\centering
\includegraphics[width=3 in]{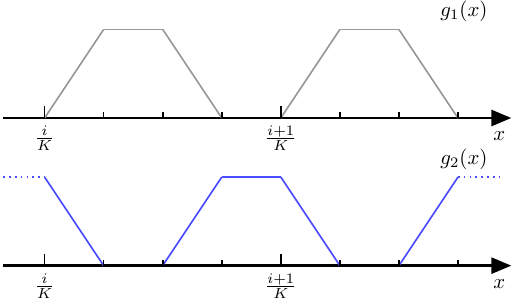} %./figures/fig-g-func.pdf}
\caption{Functions $g_1$ and $g_2$ for defining a partition of unity.}
\label{fig:pou-space}
\end{figure}

\begin{definition} \label{def:pou}
    Given $K, d \in \mathbb{N}$, for any integer $i \in \mathbb{Z}$, we define
    \begin{align*}
        g_1(x) :=
        \begin{cases}
            1, & x \in \left[ \frac{i}{K} + \frac{1}{4K}, \frac{i}{K} + \frac{1}{2K} \right], \\
            0, & x \in \left[ \frac{i}{K} + \frac{3}{4K}, \frac{i}{K} + \frac{1}{K} \right], \\
            4K \left( x - \frac{i}{K} \right), & x \in \left[ \frac{i}{K}, \frac{i}{K} + \frac{1}{4K} \right], \\
            -4K \left( x - \frac{i}{K} - \frac{3}{4K} \right), & x \in \left[ \frac{i}{K} + \frac{1}{2K}, \frac{i}{K} + \frac{3}{4K} \right],
        \end{cases}
        \quad g_2(x) := g_1\left( x + \frac{1}{2K} \right).
    \end{align*}
    For any $m = [m_1, m_2, \cdots, m_d]^{\top} \in \{1, 2\}^d$, we further define $g_m(x) := \prod_{j=1}^d g_{m_j}(x_j)$ where $x = [x_1, x_2, \cdots, x_d]^{\top}$.
\end{definition}

\begin{lemma} [Proposition 1 in \citet{yang2023nearly}] \label{lm:pou-nn}
    Given any $N, L \in \mathbb{N}$ and any $m \in \{1, 2\}^d$, for $K = \lfloor N^{1/d} \rfloor^2 \lfloor L^{2/d} \rfloor$, there exists a function $\phi_m$ implemented by a deep ReLU network with width $\mathcal{O}(d N)$ and depth $\mathcal{O}(d^2 L)$ such that
    \begin{align*}
        \Vert \phi_m - g_m \Vert_{W^{1, \infty}((0, 1)^d)} \le 50 d^{5/2} (N+1)^{-4dL}.
    \end{align*}
\end{lemma}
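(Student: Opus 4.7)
\textbf{Proof proposal for Lemma \ref{lm:pou-nn}.}

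The plan is to exploit the tensor-product structure $g_m(x) = \prod_{j=1}^d g_{m_j}(x_j)$ and reduce the problem to two ingredients: an exact (or nearly exact) ReLU representation of each univariate factor $g_{m_j}$, followed by a high-precision ReLU emulation of the $d$-fold product. Each $g_{m_j}: [0,1] \to [0,1]$ is a continuous piecewise-linear function with at most $O(K)$ affine pieces, all slopes equal to $\pm 4K$ or $0$, and values in $[0,1]$. Since $K = \lfloor N^{1/d}\rfloor^2 \lfloor L^{2/d}\rfloor$, the sawtooth-composition trick of \citet{telgarsky2016benefits}/\citet{yarotsky2017error} produces a ReLU network $\tilde g_{m_j}$ of width $O(N)$ and depth $O(L)$ that represents $g_{m_j}$ \emph{exactly}: depth $L$ can generate $\Omega(2^L)$ linear pieces, and combining with width $O(N)$ easily realizes the $O(K)$ breakpoints we need. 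Thus $\tilde g_{m_j} = g_{m_j}$ pointwise and $|\tilde g_{m_j}|_{W^{1,\infty}} \le 4K$.

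Next I will build a ReLU multiplication gadget. The key primitive is Yarotsky's squaring network $\mathrm{Sq}_{N,L}: [0,1] \to [0,1]$ of width $O(N)$ and depth $O(L)$ satisfying $\|\mathrm{Sq}_{N,L}(u) - u^2\|_{L^\infty([0,1])} \le (N+1)^{-c L}$ for a universal constant $c$, with slopes bounded by $O(1)$. Polarization $uv = \tfrac{1}{4}[(u+v)^2 - (u-v)^2]$ then gives a two-input multiplier $\mathrm{Mul}_{N,L}$ of width $O(N)$ and depth $O(L)$ with the same super-exponential accuracy on $[0,1]^2$ and uniformly bounded Lipschitz constant. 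To form the $d$-fold product, I cascade $d-1$ such multipliers in a balanced binary tree, concatenating $\tilde g_{m_j}(x_j)$ at the leaves. The resulting network $\phi_m$ has width $O(dN)$ (parallel copies of $\tilde g_{m_j}$) and depth $O(L) \cdot O(\log d) + O(dL) = O(d^2 L)$; the overshoot to $d^2$ accommodates the $O(\log d)$ tree layers together with constant overhead per multiplier and the depth of each $\tilde g_{m_j}$ block.

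The main work is the error estimate. Let $u_j := g_{m_j}(x_j) \in [0,1]$. By induction on the tree depth, using $|ab - \tilde a \tilde b| \le |a||b - \tilde b| + |\tilde b||a - \tilde a|$ together with the $L^\infty$ error $(N+1)^{-cL}$ of each $\mathrm{Mul}_{N,L}$ and the fact that all intermediate factors lie in $[0,1]$, one obtains
\begin{align*}
\|\phi_m - g_m\|_{L^\infty([0,1]^d)} \le (d-1)(N+1)^{-cL} \le 25 d^{5/2}(N+1)^{-4dL}
\end{align*}
after choosing the depth constant in $\mathrm{Mul}$ large enough (a factor $4d$ in the exponent is absorbed by scaling the depth of each squaring block by $4d$, which is why the advertised depth is $O(d^2 L)$ rather than $O(d L)$). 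For the gradient bound, I differentiate $\phi_m - g_m$ in $x_i$: by the chain rule this difference is a sum of $O(d)$ terms each of the form $\partial_{x_i}\tilde g_{m_i}(x_i) \cdot (\text{product error})$ plus $\tilde g_{m_i}'(x_i) \cdot (\text{Lipschitz propagation})$. Since $\|\tilde g_{m_j}'\|_\infty \le 4K \lesssim N^{2/d}L^{2/d}$ and every intermediate multiplier is $O(1)$-Lipschitz, the chain-rule sum is at most $d \cdot 4K \cdot (N+1)^{-cL}$. Absorbing $K$ into the exponent by again enlarging the depth of $\mathrm{Sq}$ by a constant multiple of $d$ yields $|\phi_m - g_m|_{W^{1,\infty}} \le 25 d^{5/2}(N+1)^{-4dL}$, giving the claimed bound.

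The main obstacle is the Sobolev (not merely $L^\infty$) control: the factor $\|\tilde g_{m_j}'\|_\infty = O(K)$ is genuinely large, so I must overcome it by spending additional depth inside the squaring gadget so that its $L^\infty$ error absorbs the $K$ factor and still leaves $(N+1)^{-4dL}$. This is exactly where the $d^2$ in the depth count, rather than $d$, becomes necessary, and where the constant $50 d^{5/2}$ (as opposed to $O(d)$) enters after summing the $d$ partial-derivative contributions and accounting for the tree-multiplier Lipschitz propagation.
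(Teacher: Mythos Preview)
Your overall architecture matches the construction the paper inherits from \citet{yang2023nearly}: represent each univariate factor $g_{m_j}$ exactly by a ReLU subnetwork, then feed the $d$ outputs into an approximate product network (the paper records this as $\phi_m = \phi_\times(g_{m_1},\dots,g_{m_d})$ with $\phi_\times$ the multi-product of Lemma~\ref{lm:multi-prod-nn}). So the strategy is correct, and the paper does not give its own proof beyond citing that reference.

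There is, however, a real gap in your gradient argument. You assert that because each $\mathrm{Mul}_{N,L}$ is $O(1)$-Lipschitz and has $L^\infty$ error $(N+1)^{-cL}$, the chain-rule sum for $\partial_{x_i}(\phi_m - g_m)$ is at most $d\cdot 4K\cdot (N+1)^{-cL}$. That inference fails: after the chain rule, what multiplies $g_{m_i}'(x_i)$ is $\partial_{u_i}\phi_\times(u)-\prod_{j\ne i}u_j$, i.e.\ the \emph{derivative} error of the product gadget, not its value error. An $O(1)$-Lipschitz multiplier with small $L^\infty$ error can still have $O(1)$ derivative error (e.g.\ add $\epsilon\sin(x/\epsilon)$ to $xy$). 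You therefore need the two-input multiplier to satisfy a $W^{1,\infty}$ bound $\|\mathrm{Mul}_{N,L}(x,y)-xy\|_{W^{1,\infty}}\le C(N+1)^{-cL}$, and then propagate that bound inductively through the cascade; this is exactly what Lemma~\ref{lm:times-nn} (and in turn Lemma~\ref{lm:multi-prod-nn}) supplies, and it is a nontrivial property of the Yarotsky squaring construction, not a consequence of bounded slopes. Once you invoke that, your absorption of the $4K$ factor by inflating the squaring-block depth by a constant multiple of $d$ goes through as written and accounts for the $\mathcal{O}(d^2L)$ depth.

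A smaller point: the claim that pure sawtooth composition yields $g_{m_j}$ \emph{exactly} with width $O(N)$ and depth $O(L)$ is loose, since $K=\lfloor N^{1/d}\rfloor^2\lfloor L^{2/d}\rfloor$ is not generally a power of two and for $d=1$ equals $N^2L^2$. The cited construction builds the periodic trapezoid from the step-function block of Lemma~\ref{lm:step-func-nn} (width $4N+5$, depth $4L+4$) composed with a fixed finite ReLU shape; you should route through that rather than the bare Telgarsky sawtooth.
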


\begin{lemma} \label{lm:local-est-polynm}
    Let $K \in \mathbb{N}$. For any $f \in W^{1, \infty}((0, 1)^d)$ with $\Vert f \Vert_{W^{1, \infty}((0, 1)^d)} \le 1$ and $m \in \{1, 2\}^d$, there exists a piecewise constant function $f_{K, m}$ on $\Omega_m$ satisfying
    \begin{align*}
        \Vert f_{K, m} - f \Vert_{W^{1, \infty}(\Omega_m)} \lesssim 1, \quad
        \Vert f_{K, m} - f \Vert_{L^{\infty}(\Omega_m)} \lesssim 1 / K
    \end{align*}
    with prefactors in $d$ omitted.
\end{lemma}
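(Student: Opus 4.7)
The plan is to approximate $f$ by a piecewise constant function on the disjoint hyperrectangles that comprise $\Omega_m$. First, I would observe that by Definition \ref{def:partition-function}, the set $\Omega_m = \prod_{j=1}^d \Omega_{m_j}$ is a disjoint union of axis-aligned hyperrectangles $Q$, each of which is a product of intervals of length at most $3/(4K)$; in particular each $Q$ has $\ell_\infty$-diameter at most $3/(4K)$. The pieces are not merely abutting but are separated by open gaps (a crucial point for the weak derivative argument below), as one sees directly from the definitions of $\Omega_1$ and $\Omega_2$.

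Next I would define $f_{K, m}$ on $\Omega_m$ by picking an arbitrary representative point $x_Q$ in each $Q$ (say its center) and setting $f_{K, m}(x) := f(x_Q)$ for $x \in Q$. Since the $Q$'s are pairwise disjoint, $f_{K, m}$ is well-defined and piecewise constant. The $L^\infty$ bound then follows from the Lipschitz property implicit in $\Vert f \Vert_{W^{1, \infty}((0, 1)^d)} \le 1$: along any axis-parallel polygonal path connecting $x, y \in Q$ one obtains $|f(x) - f(y)| \le d \cdot \Vert x - y \Vert_\infty$ using the $L^\infty$ bound on the weak gradient coordinate by coordinate, so
\begin{equation*}
\Vert f_{K, m} - f \Vert_{L^\infty(Q)} \le d \cdot \mathrm{diam}_\infty(Q) \le \frac{3d}{4K},
\end{equation*}
and taking the supremum over the finitely many $Q$'s yields $\Vert f_{K, m} - f \Vert_{L^\infty(\Omega_m)} \lesssim 1/K$, absorbing the $d$ into the implicit constant.

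For the $W^{1, \infty}$ estimate, note that because the hyperrectangles are open and separated by gaps inside $\Omega_m$, the interior of $\Omega_m$ is exactly $\bigsqcup_Q \mathrm{int}(Q)$, on each component of which $f_{K, m}$ is constant. Hence the weak partial derivatives $D^\alpha f_{K, m}$ vanish a.e. on $\Omega_m$ for every multi-index $\alpha$ with $\Vert \alpha \Vert_1 = 1$, so
\begin{equation*}
|f_{K, m} - f|_{W^{1, \infty}(\Omega_m)} = |f|_{W^{1, \infty}(\Omega_m)} \le \Vert f \Vert_{W^{1, \infty}((0, 1)^d)} \le 1.
\end{equation*}
Combining this with the $L^\infty$ bound (which is itself $\lesssim 1/K \le 1$) gives $\Vert f_{K, m} - f \Vert_{W^{1, \infty}(\Omega_m)} \lesssim 1$, as required.

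There is no real obstacle; the only subtle point worth flagging in the write-up is that the weak gradient of the piecewise constant function is genuinely zero on the interior of $\Omega_m$, which relies on the separation between adjacent hyperrectangles built into Definition \ref{def:partition-function} (otherwise one would pick up singular jump contributions at shared faces that would destroy the $W^{1, \infty}$ control).
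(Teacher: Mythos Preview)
Your argument is correct and is more elementary than the paper's. The paper's proof extends $f$ to $\mathbb{R}^d$ via an extension operator, then on each sub-rectangle $\Omega_{m_*,i}$ defines the constant $p_{f,i}$ as an \emph{averaged} Taylor polynomial of order $1$ over a small Euclidean ball, and invokes the Bramble--Hilbert lemma (Lemma~\ref{lm:bramble-hilbert}) to obtain the local $L^\infty$ and $W^{1,\infty}$ estimates; it then glues the pieces with a partition of unity $\{h_i\}$ that is identically $1$ on each $\Omega_{m_*,i}$, so that the final $f_{K,m}$ is again piecewise constant on $\Omega_m$. Your construction takes the constant on each piece to be a point value $f(x_Q)$ of the Lipschitz representative and bounds the error directly from $|f|_{W^{1,\infty}}\le 1$, bypassing the extension operator, the averaged Taylor polynomial, and Bramble--Hilbert entirely. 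Both proofs ultimately rely on the same structural fact you highlighted --- the $1/(4K)$ gaps in $\Omega_1,\Omega_2$ make $\mathrm{int}(\Omega_m)$ a disjoint union of open boxes, so the weak gradient of any piecewise constant function vanishes there. The paper's machinery would pay off if one needed higher-order approximation in $W^{k,\infty}$ for $k\ge 2$, where averaged Taylor polynomials of degree $k-1$ and Bramble--Hilbert give the right local rates automatically; for $k=1$ your direct Lipschitz argument is shorter and loses nothing.
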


\begin{proof}
    %The proof idea follows those of \citet[Lemma C.4]{guhring2020error} and %\citet[Theorem 6]{yang2023nearly}.
    We leverage approximation properties of averaged Taylor polynomials \citep[Definition 4.1.3]{brenner2008polynomial} and the Bramble-Hilbert Lemma \citep[Lemma 4.3.8]{brenner2008polynomial} to deduce local estimates and then combine them through a partition of unity to obtain a global estimate.
    The key observation is that the $L^{\infty}$ approximation bound can be established while uniformly controlling the Lipschitz constant of the piecewise constant function with a mild regularity assumption on the target function such as $f \in W^{1, \infty}((0, 1)^d)$.

    Without loss of generality, let us assume $m = m_* := [1, 1, \cdots, 1]^{\top}$.
    Following the proofs of \citet[Lemma C.4]{guhring2020error} and \citet[Theorem 6]{yang2023nearly}, we first define an extension operator $E: W^{1, \infty}((0, 1)^d) \to W^{1, \infty}(\sR^d)$ to handle the boundary.
    Accordingly, let $\tilde{f} := E f$ and $C_E$ be the norm of the extension operator.
    Then for any $\Omega \subset \sR^d$, it holds that
    \begin{align*}
        \vert \tilde{f} \vert_{W^{1, \infty}(\Omega)} \le \Vert \tilde{f} \Vert_{W^{1, \infty}(\Omega)} \le C_E \Vert f \Vert_{W^{1, \infty}((0, 1)^d)} \le C_E.
    \end{align*}
    Next, we define an average Taylor polynomial of order $1$ over $B_{i, K} := \mathbb{B}^d(\frac{8i+3}{8K}, \frac{1}{4K}, \Vert \cdot \Vert_2)$ by
    \begin{align*}
        p_{f, i}(x) := \int_{B_{i, K}} T_y^1 \tilde{f}(x) \phi_K(y) \diff y
    \end{align*}
    where $\phi_K$ is a cut-off function supported on $\bar{B}_{i, K}$ as given in Example \ref{ex:cut-off-1}.
    By Definition \ref{def:average-polyn}, $T_y^1 \tilde{f}(x) = \tilde{f}(y)$.
    Then, it implies that $p_{f, i}(x)$ is a constant function as
    \begin{align*}
        p_{f, i}(x) = \int_{B_{i, K}} \tilde{f}(y) \phi_K(y) \diff y.
    \end{align*}

    Step 1. Get local estimates.
    For any $i = [i_1, i_2, \cdots, i_d]^{\top} \in \{0, 1, \cdots, K \}^d$, we would like to employ the Bramble-Hilbert lemma \ref{lm:bramble-hilbert} on the subset
    \begin{align*}
        \Omega_{m_*, i} = \bar{\mathbb{B}}^d \left( \frac{8i+3}{8K}, \frac{3}{8K}, \Vert \cdot \Vert_{\infty} \right) = \prod_{j=1}^d \left[ \frac{i_j}{K}, \frac{3+4 i_j}{4K}\right].
    \end{align*}
    It is easy to check the conditions of the Bramble-Hilbert lemma are fulfilled as
    \begin{align*}
        \frac{1}{4K} \ge \frac{1}{2} \times \frac{3}{8K} = \frac{1}{2} r_{\max}(\Omega_{m_*, i}), \quad
        \gamma(\Omega_{m_*, i}) = \frac{d_{\Omega_{m_*, i}}}{r_{\max}(\Omega_{m_*, i})} = 2 \sqrt{d}.
    \end{align*}
    Hence, by the Bramble-Hilbert Lemma \ref{lm:bramble-hilbert}, it yields that
    \begin{align*}
        & \Vert \tilde{f} - p_{f, i} \Vert_{L^{\infty}(\Omega_{m_*, i})} \le C_1(d) \vert \tilde{f} \vert_{W^{1, \infty}(\Omega_{m_*, i})} / K, \\
        & \vert \tilde{f} - p_{f, i} \vert_{W^{1, \infty}(\Omega_{m_*, i})} \le C_1(d) \vert \tilde{f} \vert_{W^{1, \infty}(\Omega_{m_*, i})}.
    \end{align*}
    Combining $\vert \tilde{f} \vert_{W^{1, \infty}(\Omega_{m_*, i})} \le C_E$ and the inequalities above, it implies that
    \begin{align}
        & \Vert \tilde{f} - p_{f, i} \Vert_{L^{\infty}(\Omega_{m_*, i})} \le C_1(d) C_E / K, \label{eq:l-infty-bd} \\
        & \Vert \tilde{f} - p_{f, i} \Vert_{W^{1, \infty}(\Omega_{m_*, i})} \le C_1(d) C_E. \label{eq:W-infty-bd}
    \end{align}

    Step 2. Define a partition of unity. We construct a partition of unity in order to combine the local estimates.
    Let $K \in \mathbb{N}$. For any $0 \le i \le K$, we define $h_i: \sR \to \sR$ by
    \begin{align*}
        h_i(x) = h \left(4K \left(x - \frac{8i+3}{8K} \right) \right) \text{ where }
        h(x) =
        \begin{cases}
            1, & \vert x \vert < 3/2, \\
            0, & \vert x \vert > 2, \\
            4 - 2 \vert x \vert, & 3/2 \le \vert x \vert \le 2.
        \end{cases}
    \end{align*}
    One can verify that $\{ h_i \}_{i=1}^K$ is a partition of unity of $[0, 1]$ and $h_i(x) = 1$ for any $x \in \left[ \frac{i}{K}, \frac{3+4i}{4K} \right]$.
    Considering the multidimensional case, for any $x = [x_1, x_2, \cdots, x_d]^{\top} \in \sR^d$ and any $i = [i_1, i_2, \cdots, i_d]^{\top} \in \{0, 1, \cdots, K \}^d$, let us define
    \begin{align*}
        h_i(x) := \prod\nolimits_{j=1}^d h_{i_j}(x_j).
    \end{align*}
    Then a partition of unity of $[0, 1]^d$ is defined by $\{h_i: i \in \{0, 1, \cdots, K\}^d \}$.
    Moreover, $h_i(x) = 1$ for any $x \in \Omega_{m_*, i} = \prod_{j=1}^d \left[ \frac{i_j}{K}, \frac{3+4i_j}{4K} \right]$ and $i = [i_1, i_2, \cdots, i_d]^{\top} \in \{0, 1, \cdots, K\}^d$.
    By the definition of $h_i(x)$ on $\Omega_{m_*, i}$ and \eqref{eq:l-infty-bd}, \eqref{eq:W-infty-bd} , it yields that
    \begin{align*}
        & \Vert h_i (\tilde{f} - p_{f, i}) \Vert_{L^{\infty}(\Omega_{m_*, i})} \le \Vert \tilde{f} - p_{f, i} \Vert_{L^{\infty}(\Omega_{m_*, i})} \le C_1(d) C_E / K, \\
        & \Vert h_i (\tilde{f} - p_{f, i}) \Vert_{W^{1, \infty}(\Omega_{m_*, i})} \le \Vert \tilde{f} - p_{f, i} \Vert_{W^{1, \infty}(\Omega_{m_*, i})} \le C_1(d) C_E.
    \end{align*}

    Step 3. Get global estimates.
    To deduce the global estimates, we start with defining $f_{K, m_*}$ over $\Omega_{m_*}$ by
    \begin{align*}
        f_{K, m_*} := \sum\nolimits_{i \in \{0, 1, \cdots, K\}^d} h_i p_{f, i}.
    \end{align*}
    The error bounds follow that
    \begin{align*}
        & \Vert f_{K, m_*} - f \Vert_{L^{\infty}(\Omega_{m_*})}
        \le \max_{i \in \{0, 1, \cdots, K\}^d} \Vert h_i (\tilde{f} - p_{f, i}) \Vert_{L^{\infty}(\Omega_{m_*, i})}
        \le C_1(d) C_E / K, \\
        & \Vert f_{K, m_*} - f \Vert_{W^{1, \infty}(\Omega_{m_*})}
        \le \max_{i \in \{0, 1, \cdots, K\}^d} \Vert h_i (\tilde{f} - p_{f, i}) \Vert_{W^{1, \infty}(\Omega_{m_*, i})}
        \le C_1(d) C_E.
    \end{align*}
    This completes the proof.
\end{proof}

\begin{lemma} \label{lm:local-est-nn}
    Given any $f \in W^{1, \infty}((0, 1)^d)$ with $\Vert f \Vert_{W^{1, \infty}((0, 1)^d)} \le 1$, for any $N, L \in \mathbb{N}$ and any $m \in \{1, 2\}^d$, there exists a deep ReLU network $\psi_m$ with width $\mathcal{O}(N \log N)$ and depth $\mathcal{O}(L \log L)$ such that
    \begin{align*}
        \Vert \psi_m - f \Vert_{W^{1, \infty}(\Omega_m)} \lesssim 1, \quad
        \Vert \psi_m - f \Vert_{L^{\infty}(\Omega_m)} \lesssim (NL)^{-2/d},
    \end{align*}
    where we omit constants in $d$.
\end{lemma}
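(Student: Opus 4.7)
The plan is to approximate $f$ on $\Omega_m$ by first invoking Lemma \ref{lm:local-est-polynm} with $K = \lfloor N^{1/d} \rfloor^2 \lfloor L^{2/d} \rfloor$ so that $K \asymp (NL)^{2/d}$. This gives a piecewise constant function $f_{K, m}$ satisfying $\Vert f_{K, m} - f \Vert_{L^{\infty}(\Omega_m)} \lesssim 1/K \lesssim (NL)^{-2/d}$ and $\Vert f_{K, m} - f \Vert_{W^{1, \infty}(\Omega_m)} \lesssim 1$; by the triangle inequality, $\Vert f_{K, m} \Vert_{W^{1, \infty}(\Omega_m)} \lesssim 1$. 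The task then reduces to building a ReLU network $\psi_m$ that approximates the piecewise constant $f_{K, m}$ on $\Omega_m$ in $L^{\infty}$ with error $\mathcal{O}((NL)^{-2/d})$ while maintaining a Lipschitz constant of order $\mathcal{O}(1)$ on $\Omega_m$; the $W^{1, \infty}$ conclusion then follows by the triangle inequality again.

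I would construct $\psi_m$ as a composition $\psi_m = V \circ I$ of two sub-networks. The first, a cell selector $I$, sends $x \in \Omega_m$ to the multi-index $i$ of the sub-block $\Omega_{m, i}$ containing $x$. Since $\Omega_m$ is by construction a union of sub-blocks separated by gaps of width $\asymp 1/K$, the step transitions of $I$ can be placed entirely inside those gaps, so that $I$ is locally constant on $\Omega_m$. The second, a value decoder $V$, maps the encoded index to an approximation of the constant $p_{f, i}$. Using the bit-extraction technique of Bartlett \emph{et al.} (1998) together with the hierarchical deep ReLU constructions from \citet{lu2021deep} and the proof template of \citet{yang2023nearly}, one can realise $V$ with depth $\mathcal{O}(L \log L)$ and width $\mathcal{O}(N \log N)$, storing the required $K^d \asymp (NL)^2$ values to precision $(NL)^{-2/d}$. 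The composed network thus has the asserted width and depth.

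The main obstacle is simultaneously achieving the rate $(NL)^{-2/d}$ and a Lipschitz constant of order $\mathcal{O}(1)$: a straightforward bit-extraction or sharp-step construction yields networks whose slopes grow polynomially in $N$ and $L$, which would break the $W^{1, \infty}$ bound. The rescuing fact is that since $\Vert f \Vert_{W^{1, \infty}((0,1)^d)} \le 1$, adjacent cell values satisfy $|p_{f, i} - p_{f, i'}| \lesssim 1/K$, so a piecewise-linear ReLU interpolation across a gap of width $\mathcal{O}(1/K)$ has slope of order $(1/K)/(1/K) = \mathcal{O}(1)$. By inserting such bounded-slope clipping functions at every level of the cell selector and of the value decoder, and by arranging the bit-extraction so that the amplitude of each extracted correction is proportional to the scale of the index it is resolving, one can guarantee that $\psi_m$ is globally Lipschitz with constant independent of $N$ and $L$, while preserving the claimed $L^{\infty}$ error and the width/depth budget.

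Combining these two bounds with Lemma \ref{lm:local-est-polynm} yields, via the triangle inequality,
\begin{align*}
\Vert \psi_m - f \Vert_{L^{\infty}(\Omega_m)} &\le \Vert \psi_m - f_{K, m} \Vert_{L^{\infty}(\Omega_m)} + \Vert f_{K, m} - f \Vert_{L^{\infty}(\Omega_m)} \lesssim (NL)^{-2/d}, \\
\Vert \psi_m - f \Vert_{W^{1, \infty}(\Omega_m)} &\le \Vert \psi_m \Vert_{W^{1, \infty}(\Omega_m)} + \Vert f \Vert_{W^{1, \infty}(\Omega_m)} \lesssim 1,
\end{align*}
which are exactly the two bounds claimed by the lemma. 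The delicate bookkeeping throughout lies in the slope control: every sub-network that is composed into $\psi_m$ must be rescaled so that pre-composition with $I$ does not amplify its slope beyond $\mathcal{O}(1)$.
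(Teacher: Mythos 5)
Your architecture is essentially the paper's: approximate $f$ by the piecewise constant $f_{K,m}$ from Lemma \ref{lm:local-est-polynm} with $K = \lfloor N^{1/d}\rfloor^2 \lfloor L^{2/d}\rfloor$, then realize $f_{K,m}$ by composing a cell-selector (the step-function network of Lemma \ref{lm:step-func-nn}) with a bit-extraction value decoder (Lemma \ref{lm:data-fit-nn}). The $L^{\infty}$ bound is handled correctly. The problem is in how you justify the $W^{1,\infty}(\Omega_m)$ bound. You treat the Lipschitz constant of $\psi_m$ as a genuine obstacle and propose an elaborate fix --- rescaled clippings at every level of the selector and decoder so that $\psi_m$ becomes globally $\mathcal{O}(1)$-Lipschitz on $[0,1]^d$. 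That construction is never actually carried out, and it is far from clear it can be: Lemma \ref{lm:data-fit-nn} fits $K^d \asymp (NL)^2$ arbitrary values with no slope control between grid points, and forcing an $\mathcal{O}(1)$ Lipschitz constant on the decoder while keeping the same width/depth budget and accuracy is exactly the kind of claim that needs a proof, not a sentence. As written, your final display rests on ``$\Vert \psi_m \Vert_{W^{1,\infty}(\Omega_m)} \lesssim 1$'' justified only by this unproven modification, so the argument has a gap.

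The point you are missing is that the obstacle is illusory, and you already stated the fact that dissolves it: the step transitions of the cell selector are placed inside the gaps of width $\asymp 1/K$ that separate the sub-blocks of $\Omega_m$, so the composed network $\psi_m$ is \emph{constant on each sub-block} $\Omega_{m,i}$. Hence $\psi_m - f_{K,m}$ has vanishing weak derivative on $\Omega_m$, which gives $\Vert \psi_m - f_{K,m}\Vert_{W^{1,\infty}(\Omega_m)} = \Vert \psi_m - f_{K,m}\Vert_{L^{\infty}(\Omega_m)} \lesssim (NL)^{-2}$ for free; combined with $\Vert f_{K,m}-f\Vert_{W^{1,\infty}(\Omega_m)} \lesssim 1$ via the triangle inequality, the $W^{1,\infty}(\Omega_m)$ bound follows with no slope control whatsoever. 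The large slopes of the unmodified construction all live in $[0,1]^d\setminus\Omega_m$, which the norm $\Vert\cdot\Vert_{W^{1,\infty}(\Omega_m)}$ does not see; the global Lipschitz regularity of the final approximant is recovered only at the next stage (Lemma \ref{lm:approx-multi-dim}), by multiplying $\psi_m$ with the approximate partition-of-unity networks $\phi_m$ that vanish on those gaps. So drop the global-Lipschitz detour and replace it with the one-line observation that $\vert\psi_m\vert_{W^{1,\infty}(\Omega_m)}=0$.
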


\begin{proof}
    The idea of proof is similar to those of \citet[Theorem 3.1]{hon2022simultaneous} and \citet[Theorem 7]{yang2023nearly}.
    For completeness, we provide a concrete proof in the following.
    Without loss of generality, we consider $m = m_* := [1, 1, \cdots, 1]^{\top}$. Given $K = \lfloor N^{1/d} \rfloor^2 \lfloor L^{2/d} \rfloor$, by Lemma \ref{lm:local-est-polynm}, we have
    \begin{align*}
        & \Vert f_{K, m_*} - f \Vert_{W^{1, \infty}(\Omega_{m_*})} \lesssim 1, \\
        & \Vert f_{K, m_*} - f \Vert_{L^{\infty}(\Omega_{m_*})} \lesssim 1/K \lesssim (NL)^{-2/d},
    \end{align*}
    where $f_{K, m_*}$ is a constant function for $x \in \prod_{j=1}^d \left[ \frac{i_j}{K}, \frac{3+4i_j}{4K} \right]$ and $i = [i_1, i_2, \cdots, i_d]^{\top} \in \{0, 1, \cdots, K-1 \}^d$.
    The insight is to approximate $f_{K, m_*}$ with deep ReLU networks.
    Let $\delta = 1/(4K) \le 1/(3K)$ in Lemma \ref{lm:step-func-nn}.
    Then by Lemma \ref{lm:step-func-nn}, there exists a deep ReLU network $\phi_1(x)$ with width $4N+5$ and depth $4L+4$ such that
    \begin{align*}
        \phi_1(x) = k, \quad x \in \left[ \frac{k}{K}, \frac{k+1}{K} - \frac{1}{4K} \right], \quad k = 0, 1, \cdots, K-1.
    \end{align*}
    We further define
    \begin{align*}
        \phi_2(x) = \left[ \frac{\phi_1(x_1)}{K}, \frac{\phi_1(x_2)}{K}, \cdots, \frac{\phi_1(x_d)}{K} \right]^{\top}.
    \end{align*}
    For each $p = 0, 1, \cdots, K^d-1$, there exists a bijection
    \begin{align*}
        \eta(p) = [\eta_1, \eta_2, \cdots, \eta_d]^{\top} \in \{0, 1, \cdots, K-1 \}^d
    \end{align*}
    satisfying $\sum_{j=1}^d \eta_j K^{j-1} = p$. We also define
    \begin{align*}
        \xi_p = \frac{f_{K, m_*}(\eta(p)/K) + C_2(d)}{2 C_2(d)} \in [0, 1],
    \end{align*}
    where $\vert f_{K, m_*} \vert < C_2(d) := 1 + C_1(d) C_E$.
    Then, due to Lemma \ref{lm:data-fit-nn}, there exists a deep ReLU network $\tilde{\phi}$ with width $16(N+1) \log_2(8N)$ and depth $(5L+2) \log_2 (4L)$ such that $\vert \tilde{\phi}(p) - \xi_{p} \vert \le (NL)^{-2}$ for $p = 0, 1, \cdots, K^d-1$.
    Let us define
    \begin{align*}
        \phi(x) := 2 C_2(d) \tilde{\phi} \left( \sum\nolimits_{j=1}^d \eta_j K^j \right) - C_2(d).
    \end{align*}
    Then it is clear that
    \begin{align*}
        \vert \phi(\eta(p)/N) - f_{K, m_*}(\eta(p)/N) \vert = 2 C_2(d) \vert \tilde{\phi}(x) - \xi_p \vert \le 2 C_2(d) (NL)^{-2}.
    \end{align*}
    Furthermore, let $\psi_{m_*}(x) := \phi \circ \phi_2(x)$ for any $x \in \Omega_{m_*}$.
    Since $\psi_{m_*} - f_{K, m_*}$ is a step function whose first-order weak derivative is $0$ over $\Omega_{m_*}$, then it implies that
    \begin{align*}
        \Vert \psi_{m_*} - f_{K, m_*} \Vert_{W^{1, \infty}(\Omega_{m_*})}
        = \Vert \psi_{m_*} - f_{K, m_*} \Vert_{L^{\infty}(\Omega_{m_*})}
        \le 2 C_2(d) (NL)^{-2}.
    \end{align*}
    By the triangle inequalities for $\Vert \cdot \Vert_{L^{\infty}(\Omega_{m_*})}$ and $\Vert \cdot \Vert_{W^{1, \infty}(\Omega_{m_*})}$, it is easy to derive that
    \begin{align*}
        & \Vert \psi_{m_*} - f \Vert_{W^{1, \infty}(\Omega_{m_*})}
        \le \Vert \psi_{m_*} - f_{K, m_*} \Vert_{W^{1, \infty}(\Omega_{m_*})} + \Vert f_{K, m_*} - f \Vert_{W^{1, \infty}(\Omega_{m_*})} \lesssim 1, \\
        & \Vert \psi_{m_*} - f \Vert_{L^{\infty}(\Omega_{m_*})}
        \le \Vert \psi_{m_*} - f_{K, m_*} \Vert_{L^{\infty}(\Omega_{m_*})} + \Vert f_{K, m_*} - f \Vert_{L^{\infty}(\Omega_{m_*})}
        \lesssim (NL)^{-2/d}.
    \end{align*}
    Lastly, we calculate the width and depth of the deep ReLU network to implement $\psi_{m_*} = \phi \circ \phi_2$.
    Because that $\phi$ has width $\mathcal{O}(N \log N)$ and depth $\mathcal{O}(L \log L)$ and $\phi_2$ has width $\mathcal{O}(N)$ and depth $\mathcal{O}(L)$, the deep ReLU network of $\psi_{m_*}$ is constructed with width $\mathcal{O}(N \log N)$ and depth $\mathcal{O}(L \log L)$.
    This completes the proof.
\end{proof}

\begin{proof} [Proof of Lemma \ref{lm:approx-multi-dim}]
We proceed in a similar way as the proof of \citet[Theorem 3]{yang2023nearly}.
By Lemma \ref{lm:pou-nn}, there exists a sequence of deep ReLU networks $\{ \phi_m \}_{m \in \{1, 2\}^d}$ such that for any $m \in \{1, 2\}^d$,
\begin{align*}
    \Vert \phi_m - g_m \Vert_{W^{1, \infty}((0, 1)^d)} \le 50 d^{5/2} (N+1)^{-4dL}.
\end{align*}
Each $\phi_m$ is implemented by a deep ReLU network with width $\mathcal{O}(d N)$ and depth $\mathcal{O}(d^2 L)$.
By Lemma \ref{lm:local-est-nn}, there exists a collection of deep ReLU networks $\{ \psi_m \}_{m \in \{1, 2\}^d}$ such that for any $m \in \{1, 2\}^d$,
\begin{align*}
    \Vert \psi_m - f \Vert_{W^{1, \infty}(\Omega_m)} \lesssim 1, \quad
    \Vert \psi_m - f \Vert_{L^{\infty}(\Omega_m)} \lesssim (NL)^{-2/d},
\end{align*}
where we omit constants in $d$.
Each $\psi_m$ is implemented by a deep ReLU network with width $\mathcal{O}(N \log N)$ and depth $\mathcal{O}(L \log L)$.
Before proceeding, it is useful to estimate $\Vert \phi_m \Vert_{L^{\infty}(\Omega_m)}$, $\Vert \phi_m \Vert_{W^{1, \infty}(\Omega_m)}$,$\Vert \psi_m \Vert_{L^{\infty}(\Omega_m)}$, and $\Vert \psi_m \Vert_{W^{1, \infty}(\Omega_m)}$ as follows
\begin{align*}
    \Vert \phi_m \Vert_{L^{\infty}(\Omega_m)}
    & \le \Vert \phi_m \Vert_{L^{\infty}([0, 1]^d)}
    \le \Vert g_m \Vert_{L^{\infty}([0, 1]^d)} + \Vert \phi_m - g_m \Vert_{L^{\infty}([0, 1]^d)} \\
    & \le 1 + 50 d^{5/2}
    \lesssim d^{5/2},
\end{align*}
\begin{align*}
    \Vert \phi_m \Vert_{W^{1, \infty}(\Omega_m)}
    & \le \Vert \phi_m \Vert_{W^{1, \infty}([0, 1]^d)}
    \le \Vert g_m \Vert_{W^{1, \infty}([0, 1]^d)} + \Vert \phi_m - g_m \Vert_{W^{1, \infty}([0, 1]^d)} \\
    & \le 4 \lfloor N^{1/d} \rfloor^2 \lfloor L^{2/d} \rfloor + 50 d^{5/2},
\end{align*}
\begin{align*}
    \Vert \psi_m \Vert_{L^{\infty}(\Omega_m)}
    \le \Vert f \Vert_{L^{\infty}(\Omega_m)} + \Vert \psi_m - f \Vert_{L^{\infty}(\Omega_m)}
    \lesssim 1,
\end{align*}
\begin{align*}
    \Vert \psi_m \Vert_{W^{1, \infty}(\Omega_m)}
    \le \Vert f \Vert_{W^{1, \infty}([0, 1]^d)} + \Vert \psi_m - f \Vert_{W^{1, \infty}([0, 1]^d)}
    \lesssim 1.
\end{align*}
Let $B_1 := \max_{m \in \{1, 2\}^d} \{ \Vert \phi_m \Vert_{L^{\infty}(\Omega_m)}, \Vert \psi_m \Vert_{L^{\infty}(\Omega_m)} \}$, then it yields that $B_1 \lesssim d^{5/2}$ by the estimates of $\Vert \phi_m \Vert_{L^{\infty}(\Omega_m)}$ and $\Vert \psi_m \Vert_{W^{1, \infty}(\Omega_m)}$.
Let $B_2 := \max_{m \in \{1, 2\}^d} \{ \Vert \phi_m \Vert_{W^{1, \infty}(\Omega_m)}, \Vert \psi_m \Vert_{W^{1, \infty}(\Omega_m)} \}$. Similarly, it yields that $B_2 \lesssim (NL)^{2/d} + d^{5/2}$.
By Lemma \ref{lm:times-nn}, for any $N, L \in \mathbb{N}$, there exists a deep ReLU network $\phi_{\times, B_1}$ with width $15 (N+1)$ and depth $16 L$ such that $\Vert \phi_{\times, B_1} \Vert_{W^{1, \infty}((-B_1, B_1)^2)} \le 12 B_1^2$ and
\begin{align*}
    \Vert \phi_{\times, B_1}(x, y) - xy \Vert_{W^{1, \infty}((-B_1, B_1)^2)} \le 6 B_1^2 (N+1)^{-8L}.
\end{align*}
To obtain a global estimate on $[0, 1]^d$, we combine the local estimate $\{ \psi_m \}_{m \in \{1, 2\}^d}$ and the approximate partition of unity $\{ \phi_m \}_{m \in \{1, 2\}^d}$.
Let us construct the global approximation function $\phi$ by
\begin{align} \label{eq:global-approx-func}
    \phi(x) := \sum_{m \in \{1, 2\}^d} \phi_{\times, B_1}(\phi_m(x), \psi_m(x)).
\end{align}

Next, we bound the error of the global approximation estimate by
\begin{align*}
       \Vert f - \phi \Vert_{L^{\infty}([0, 1]^d)}
    =& \Vert \sum\nolimits_{m \in \{1, 2\}^d} g_m f - \phi \Vert_{W^{1, \infty}((0, 1)^d)} \\
    \le& \underbrace{\Vert \sum\nolimits_{m \in \{1, 2\}^d} [g_m f - \phi_m \psi_m] \Vert_{L^{\infty}([0, 1]^d)}}_{=: \mathcal{R}_1} \\
      +& \underbrace{\Vert \sum\nolimits_{m \in \{1, 2\}^d} [\phi_m \psi_m - \phi_{\times, B_1}(\phi_m(x), \psi_m(x))] \Vert_{L^{\infty}([0, 1]^d)}}_{=: \mathcal{R}_2}
\end{align*}
and
\begin{align*}
       \Vert f - \phi \Vert_{W^{1, \infty}((0, 1)^d)}
    =& \Vert \sum\nolimits_{m \in \{1, 2\}^d} g_m f - \phi \Vert_{W^{1, \infty}((0, 1)^d)} \\
    \le& \underbrace{\Vert \sum\nolimits_{m \in \{1, 2\}^d} [g_m f - \phi_m \psi_m] \Vert_{W^{1, \infty}((0, 1)^d)}}_{=: \mathcal{R}_3} \\
      +& \underbrace{\Vert \sum\nolimits_{m \in \{1, 2\}^d} [\phi_m \psi_m - \phi_{\times, B_1}(\phi_m(x), \psi_m(x))] \Vert_{W^{1, \infty}((0, 1)^d)}}_{=: \mathcal{R}_4}.
\end{align*}
It remains to bound $\mathcal{R}_1, \mathcal{R}_2, \mathcal{R}_3$, and $\mathcal{R}_4$, respectively.
For the term $\mathcal{R}_1$, it holds
\begin{align*}
    \mathcal{R}_1
    & \le \sum_{m \in \{1, 2\}^d} \Vert g_m f - \phi_m \psi_m \Vert_{L^{\infty}([0, 1]^d)} \\
    & \le \sum_{m \in \{1, 2\}^d} \left[ \Vert (g_m - \phi_m) f \Vert_{L^{\infty}([0, 1]^d)} + \Vert \phi_m (f - \psi_m) \Vert_{L^{\infty}([0, 1]^d)} \right] \\
    & = \sum_{m \in \{1, 2\}^d} \left[ \Vert (g_m - \phi_m) f \Vert_{L^{\infty}([0, 1]^d)} + \Vert \phi_m (f - \psi_m) \Vert_{L^{\infty}(\Omega_m)} \right] \\
    & \le \sum_{m \in \{1, 2\}^d} \left[ \Vert g_m - \phi_m \Vert_{L^{\infty}([0, 1]^d)} \Vert f \Vert_{L^{\infty}([0, 1]^d)}
      + \Vert \phi_m \Vert_{L^{\infty}(\Omega_m)} \Vert f - \psi_m \Vert_{L^{\infty}(\Omega_m)} \right] \\
    & \le \sum_{m \in \{1, 2\}^d} \left[ \Vert g_m - \phi_m \Vert_{W^{1, \infty}([0, 1]^d)} \Vert f \Vert_{W^{1, \infty}([0, 1]^d)}
      + \Vert \phi_m \Vert_{L^{\infty}(\Omega_m)} \Vert f - \psi_m \Vert_{L^{\infty}(\Omega_m)} \right] \\
    & \le 2^d [50 d^{5/2} (N+1)^{-4dL} + (1 + 50 d^{5/2}) (NL)^{-2/d}] \\
    & \lesssim (NL)^{-2/d},
\end{align*}
where we use $(NL)^{2/d} \le (N+1)^{4dL}$ to derive the last inequality and omit a prefactor in $d$.
For the term $\mathcal{R}_3$, it holds
\begin{align*}
    \mathcal{R}_3
    & \le \sum_{m \in \{1, 2\}^d} \Vert g_m f - \phi_m \psi_m \Vert_{W^{1, \infty}((0, 1)^d)} \\
    & \le \sum_{m \in \{1, 2\}^d} \left[ \Vert (g_m - \phi_m) f \Vert_{W^{1, \infty}((0, 1)^d)} + \Vert \phi_m (f - \psi_m) \Vert_{W^{1, \infty}((0, 1)^d)} \right] \\
    & = \sum_{m \in \{1, 2\}^d} \left[ \Vert (g_m - \phi_m) f \Vert_{W^{1, \infty}((0, 1)^d)} + \Vert \phi_m (f - \psi_m) \Vert_{W^{1, \infty}(\Omega_m)} \right] \\
    & \le \sum_{m \in \{1, 2\}^d} \left[ \Vert g_m - \phi_m \Vert_{W^{1, \infty}((0, 1)^d)} \Vert f \Vert_{W^{1, \infty}((0, 1)^d)} \right. \\
    & ~~~~~~~~~~~~~~~~ + \left. \Vert \phi_m \Vert_{W^{1, \infty}(\Omega_m)} \Vert f - \psi_m \Vert_{L^{\infty}(\Omega_m)}
      + \Vert \phi_m \Vert_{L^{\infty}(\Omega_m)} \Vert f - \psi_m \Vert_{W^{1, \infty}(\Omega_m)} \right] \\
    & \lesssim 2^d [50 d^{5/2} (N+1)^{-4dL}
                    + (4 \lfloor N^{1/d} \rfloor^2 \lfloor L^{2/d} \rfloor + 50 d^{5/2}) (NL)^{-2/d}
                    +  (1 + 50 d^{5/2})] \\
    & \lesssim 1,
\end{align*}
where we use $(NL)^{2/d} \le (N+1)^{4dL}$ to derive the last inequality and omit a prefactor in $d$.
For the terms $\mathcal{R}_2$ and $\mathcal{R}_4$, it holds
\begin{align*}
    \mathcal{R}_2
    & \le \mathcal{R}_4 \\
    & \le \sum_{m \in \{1, 2\}^d} \Vert [\phi_m \psi_m - \phi_{\times, B_1}(\phi_m(x), \psi_m(x))] \Vert_{W^{1, \infty}((0, 1)^d)} \\
    & \le \sum_{m \in \{1, 2\}^d} \Vert [\phi_m \psi_m - \phi_{\times, B_1}(\phi_m(x), \psi_m(x))] \Vert_{W^{1, \infty}(\Omega_m)} \\
    & \le \sum_{m \in \{1, 2\}^d} 2 \sqrt{d} \max \left\{ \Vert \phi_{\times, B_1}(x, y) - xy \Vert_{L^{\infty}((-B_1, B_1)^2)}, \right. \\
    & ~~~ \left. \vert \phi_{\times, B_1}(x, y) - xy \vert_{W^{1, \infty}((-B_1, B_1)^2)} \times \max \{ \vert \phi_m \vert_{W^{1, \infty}(\Omega_m)}, \vert \psi_m \vert_{W^{1, \infty}(\Omega_m)} \} \right\} \\
    & \le \sum_{m \in \{1, 2\}^d} 2 \sqrt{d} \Vert \phi_{\times, B_1}(x, y) - xy \Vert_{W^{1, \infty}((-B_1, B_1)^2)} \\
    & ~~~~~~~~~~~~~~~~~~~ \times \max \{ \Vert \phi_m \Vert_{W^{1, \infty}(\Omega_m)}, \Vert \psi_m \Vert_{W^{1, \infty}(\Omega_m)} \} \\
    & \le \sum_{m \in \{1, 2\}^d} 12 \sqrt{d} B_1^2 (N+1)^{-8L} B_2 \\
    & \lesssim 2^d \sqrt{d} d^5 (N+1)^{-8L} ((NL)^{2/d} + d^{5/2}) \\
    & \lesssim 2^d \sqrt{d} d^5 (d^{5/2} (NL)^{2/d}) (N+1)^{-8L} \\
    & \lesssim (NL)^{2/d} (N+1)^{-8L} \\
    & \lesssim (NL)^{-2/d},
\end{align*}
where we use $(NL)^{2/d} \le (N+1)^{4L}$ in the last inequality and omit constants depending only on $d$.
Combining the estimates of $\mathcal{R}_1, \mathcal{R}_2, \mathcal{R}_3$, and $\mathcal{R}_4$, we have
\begin{align*}
    \Vert f - \phi \Vert_{L^{\infty}([0, 1]^d)} \le \mathcal{R}_1 + \mathcal{R}_2 \lesssim (NL)^{-2/d}
\end{align*}
and
\begin{align*}
    \Vert f - \phi \Vert_{W^{1, \infty}([0, 1]^d)} \le \mathcal{R}_3 + \mathcal{R}_4 \lesssim 1 + (NL)^{-2/d} \lesssim 1.
\end{align*}
It is easy to see
\begin{align*}
    \Vert \phi \Vert_{W^{1, \infty}([0, 1]^d)}
    \le \Vert f \Vert_{W^{1, \infty}([0, 1]^d)} + \Vert f - \phi \Vert_{W^{1, \infty}([0, 1]^d)}
    \lesssim 1.
\end{align*}

Lastly, we calculate the complexity of the constructed deep ReLU network $\phi$ in \paref{eq:global-approx-func}.
By the definition of $\phi$ in \paref{eq:global-approx-func}, we know that $\phi$ consists of $\gO(2^d)$ parallel subnetworks listed as follows:
\begin{itemize}
    \item $\phi_{\times, B_1}$ with width $\mathcal{O}(N)$ and depth $\mathcal{O}(L)$;
    \item $\phi_m$ with width $\mathcal{O}(d N)$ and depth $\mathcal{O}(d^2 L)$;
    \item $\psi_m$ with width $\gO(N \log N)$ and depth $\gO(L \log L)$.
\end{itemize}
Hence, the deep ReLU network implementing the function $\phi$ has width $\gO(2^d d N \log N)$ and depth $\gO(d^2 L \log L)$.
\end{proof}

\begin{proof} [Proof of Corollary \ref{cor:approx-multi-dim}]
    The proof is completed by employing Lemma \ref{lm:approx-multi-dim} on $\bar{f} := f / \Vert f \Vert_{W^{1, \infty}((0, 1)^d)}$.
\end{proof}

\begin{figure}[t!]
\centering
\includegraphics[width=3.5 in]{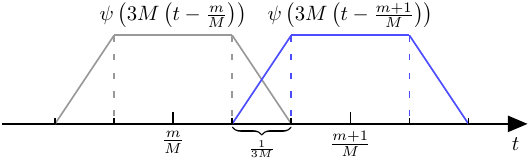} %{./figures/fig-psi.pdf}
\caption{Functions $\phi_m(t)$ and $\phi_{m+1}(t)$ for defining a partition of unity.}
\label{fig:pou-time}
\end{figure}

\subsection{Approximation in time with Lipschitz regularity}
\label{subsec:approx-time}

To handle the singularity of the velocity field in time, we develop a new approximation result to approximate the velocity field in time.

\begin{lemma} \label{lm:approx-1d}
Given any $f \in W^{1, \infty}((0, 1))$ with $\Vert f \Vert_{W^{1, \infty}((0, 1))} \le \infty$, for any $M \in \mathbb{N}$, there exists a function $\xi$ implemented by a deep ReLU network with width $\mathcal{O}(M)$ and depth $\mathcal{O}(1)$ such that $\vert \xi \vert_{W^{1, \infty}((0, 1))} \lesssim \vert f \vert_{W^{1, \infty}((0, 1))}$ and
\begin{align*}
    \Vert \xi - f \Vert_{L^{\infty}([0, 1])} \lesssim \vert f \vert_{W^{1, \infty}((0, 1))} / M.
\end{align*}
\end{lemma}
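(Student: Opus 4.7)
The plan is to use piecewise linear interpolation on a uniform grid and then realize that interpolant as a shallow ReLU network. First I would set $t_i = i/M$ for $i=0,1,\ldots,M$ and define $\xi$ to be the continuous piecewise linear function that agrees with $f$ at every $t_i$ and is affine on each $[t_i, t_{i+1}]$. A standard decomposition lets me write
\begin{equation*}
\xi(t) = f(0) + \sum_{i=0}^{M-1} s_i \bigl[\varrho(t - t_i) - \varrho(t - t_{i+1})\bigr],
\qquad s_i := M\bigl(f(t_{i+1}) - f(t_i)\bigr),
\end{equation*}
which is manifestly implementable as a ReLU network with one hidden layer of width $O(M)$, i.e.\ depth $O(1)$ and size $O(M)$ as claimed.

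Next I would verify the two quantitative bounds. Because $f \in W^{1,\infty}((0,1))$ is $|f|_{W^{1,\infty}((0,1))}$-Lipschitz, each slope satisfies $|s_i| = M|f(t_{i+1})-f(t_i)| \le |f|_{W^{1,\infty}((0,1))}$. Since $\xi$ is piecewise linear with these slopes, its a.e.\ derivative is bounded by the same constant, giving $|\xi|_{W^{1,\infty}((0,1))} \le |f|_{W^{1,\infty}((0,1))}$. For the $L^\infty$ error, fix $t \in [t_i, t_{i+1}]$; using $\xi(t_i) = f(t_i)$ I write
\begin{equation*}
\xi(t) - f(t) = \bigl[\xi(t) - \xi(t_i)\bigr] - \bigl[f(t) - f(t_i)\bigr],
\end{equation*}
and both differences are bounded by $|f|_{W^{1,\infty}((0,1))}\cdot(t - t_i) \le |f|_{W^{1,\infty}((0,1))}/M$, so $\|\xi - f\|_{L^\infty([0,1])} \le 2|f|_{W^{1,\infty}((0,1))}/M$.

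I do not anticipate a genuine obstacle here; the construction is essentially the classical tent-function interpolation, and the Lipschitz bound follows automatically because the slopes of the interpolant are averages of the weak derivative of $f$. The only points needing a little care are keeping track that the hidden layer contains exactly the $2M$ units $\varrho(t-t_i)$ and $\varrho(t-t_{i+1})$ (several of which coincide, so $O(M)$ distinct units suffice), and observing that the construction is valid on the closed interval $[0,1]$ after extending $f$ continuously to the boundary by its essentially bounded representative. These are routine bookkeeping, so the lemma follows directly from the displayed formula and the two estimates above.
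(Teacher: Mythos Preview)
Your proof is correct. Both you and the paper build a continuous piecewise linear approximant realized as a one-hidden-layer ReLU network, but the constructions differ: you take the classical nodal interpolant at the grid points $t_i=i/M$, while the paper instead uses a trapezoidal partition of unity $\{\phi_m\}_{m=0}^M$ (hat functions with flat tops) combined with the zero-order Taylor coefficients $f_m=f(m/M)$, setting $\tilde f(t)=\sum_m\phi_m(t)f_m$. Your argument is more direct and yields sharper constants (Lipschitz constant $1$ rather than $5$, and $L^\infty$ error $2|f|_{W^{1,\infty}}/M$ rather than $2/(3M)\cdot|f|_{W^{1,\infty}}$ after a two-case analysis); indeed your slope bound $|s_i|\le|f|_{W^{1,\infty}}$ is immediate, whereas the paper has to split into $|t-s|\ge 1/(3M)$ and $|t-s|<1/(3M)$ to control the Lipschitz seminorm. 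The paper's partition-of-unity form is not gratuitous, though: the same functions $\phi_j(\tilde t)$ are reused verbatim in the time--space construction of Theorem~\ref{thm:approx-bd}, where the approximation is written as $\sum_j\phi_{\times}(\zeta_{ij}(\tilde x),\phi_j(\tilde t))$ and the plateau structure of $\phi_j$ together with the fact that at most two $\phi_j$ are simultaneously nonzero is exploited repeatedly. So for the lemma in isolation your argument is cleaner, but the paper's construction is tailored to feed into the subsequent tensorized approximation.
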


\begin{proof}
The proof consists of two steps. We start with the construction of a continuous piecewise linear function for approximating $1$-Lipschitz functions, which shall be implemented by a deep ReLU network. After that, we establish the global Lipschitz continuity of the constructed deep ReLU network, in addition to the approximation bounds in the $L^{\infty}([0, 1])$ norm.

Step 1. We construct a partition of unity following \citet[Proof of Theorem 1]{yarotsky2017error}.
Let $M \in \mathbb{N}$ and $m \in \{0, 1, \cdots, M\}$.
We collect a set of functions $\{ \phi_m \}_{m=0}^M$ that are defined as follows: for any $t \in [0, 1]$, let
\begin{align}
    \label{eq:pou-1d}
    \phi_m(t) := \psi \left( 3M \left( t - \frac{m}{M} \right) \right)
    \quad \text{with} \quad
    \psi(z) =
    \begin{cases}
        1, & \vert z \vert < 1, \\
        0, & \vert z \vert > 2, \\
        2 - \vert z \vert, & 1 \le \vert z \vert \le 2,
    \end{cases}
\end{align}
that satisfies $\sum_{m=0}^M \phi_m(t) = 1$.
It implies that $\{ \phi_m \}_{m=0}^M$ forms a partition of unity on the domain $[0, 1]$.
We plot $\phi_m$ and $\phi_{m+1}$ in Figure \ref{fig:pou-time}.
As in \citet[Proof of Lemma 10]{chen2020distribution}, for each $m \in \{0, 1, \cdots, M\}$, we consider a piecewise constant function $f_m := f(m/M)$.
Actually, the piecewise constant approximation is specially the zero-degree Taylor polynomial for the function $f$ at $x = m/M$ in \citet[Proof of Theorem 1]{yarotsky2017error}.
We claim that
\begin{align} \label{eq:time-approx-func}
    \tilde{f}(t) := \sum_{m=0}^M \phi_m(t) f_m
\end{align}
provides an approximation of $f$, and the approximation error is evaluated by
\begin{align*}
    \Vert \tilde{f} - f \Vert_{L^{\infty}([0, 1])}
    &= \sup_{t \in [0, 1]} \Big\vert \sum\nolimits_{m=0}^M \phi_m(t) [f_m - f(t)] \Big\vert \\
    &= \sup_{t \in [0, 1]} \Big\vert \sum\nolimits_{\vert t - \frac{m}{M} \vert \le \frac{2}{3M}} \phi_m(t) [f(m/M) - f(t)] \Big\vert \\
    &\le \frac{2}{3M} \vert f \vert_{W^{1, \infty}((0, 1))},
\end{align*}
where the Lipschitz continuity of $f$ is used in the inequality.
It is clear that $\tilde{f}$ can be implemented with a deep ReLU network.

Step 2. We establish the global Lipschitz continuity of $\tilde{f}$. Notice that for any $t, s \in [0, 1]$,
\begin{align*}
    \vert \tilde{f}(t) - \tilde{f}(s) \vert
    & \le \vert \tilde{f}(t) - f(t) \vert + \vert f(t) - f(s) \vert + \vert f(s) - \tilde{f}(s) \vert \\
    & \le 2 \Vert \tilde{f} - f \Vert_{L^{\infty}([0, 1])} + \vert f \vert_{W^{1, \infty}((0, 1))} \vert t - s \vert \\
    & \le \frac{4}{3M} \vert f \vert_{W^{1, \infty}((0, 1))} + \vert f \vert_{W^{1, \infty}((0, 1))} \vert t - s \vert.
\end{align*}
(1) If $\vert t - s \vert \ge \frac{1}{3M}$, it is clear that
$\vert \tilde{f}(t) - \tilde{f}(s) \vert \le 5 \vert f \vert_{W^{1, \infty}((0, 1))} \vert t - s \vert$. \\
(2) If $\vert t - s \vert < \frac{1}{3M}$, we try to directly bound the difference
\begin{align*}
    \vert \tilde{f}(t) - \tilde{f}(s) \vert
    &= \Big\vert \sum\nolimits_{m=0}^M [\phi_m(t) - \phi_m(s)] f_m \Big\vert \\
    &= \Big\vert \sum\nolimits_{m=0}^M \left[ \psi \left( 3M t - 3 m \right) - \psi \left( 3M s - 3 m \right) \right] f_m \Big\vert =: \mathcal{E}.
\end{align*}
Next, we focus on bounding $\mathcal{E}$.
Without loss of generality, we assume $s > t$. Considering $\vert t - s \vert < \frac{1}{3M}$, we deduce that $s \in (t, t + \frac{1}{3M})$.
From Figure \ref{fig:pou-time}, we can observe that there exist at most two numbers $m = \tilde{m} \in \{0, 1, \cdots, M \}$ or $m = \bar{m} := \tilde{m}+1$ such that $\psi( 3M t - 3 m) \not\equiv 0$ or $\psi( 3M s - 3 m) \not\equiv 0$.
It follows that
\begin{align*}
    \mathcal{E}
    &= \left\vert \left[ \psi \left( 3M t - 3 \tilde{m} \right) - \psi \left( 3M s - 3 \tilde{m} \right) \right] f_{\tilde{m}} \right. \\
    & ~~~~ + \left. \left[ \psi \left( 3M t - 3 \bar{m} \right) - \psi \left( 3M s - 3 \bar{m} \right) \right] f_{\bar{m}} \right\vert \\
    &= \left\vert \left[ \psi \left( 3M t - 3 \tilde{m} \right) - \psi \left( 3M s - 3 \tilde{m} \right) \right] f_{\tilde{m}} \right. \\
    & ~~~~ + \left. \left[ (1 - \psi \left( 3M t - 3 \tilde{m} \right)) - (1 - \psi \left( 3M s - 3 \tilde{m} \right)) \right] f_{\bar{m}} \right\vert \\
    &= \left\vert \left[ \psi \left( 3M t - 3 \tilde{m} \right) - \psi \left( 3M s - 3 \tilde{m} \right) \right] (f_{\tilde{m}} - f_{\bar{m}}) \right\vert \\
    &\le \left\vert f_{\tilde{m}} - f_{\bar{m}} \right\vert \cdot \left\vert \psi \left( 3M t - 3 \tilde{m} \right) - \psi \left( 3M s - 3 \tilde{m} \right) \right\vert \\
    &\le \frac{1}{M} \vert f \vert_{W^{1, \infty}((0, 1))} \left\vert \psi \left( 3M t - 3 \tilde{m} \right) - \psi \left( 3M s - 3 \tilde{m} \right) \right\vert \\
    &= 3 \vert f \vert_{W^{1, \infty}((0, 1))} \vert t - s \vert.
\end{align*}
Hence, if $\vert t - s \vert < \frac{1}{3M}$, it holds that
$\vert \tilde{f}(t) - \tilde{f}(s) \vert \le 3 \vert f \vert_{W^{1, \infty}((0, 1))} \vert t - s \vert$.

To sum up, for any $t, s \in [0, 1]$, it holds that $\vert \tilde{f}(t) - \tilde{f}(s) \vert \le 5 \vert f \vert_{W^{1, \infty}((0, 1))} \vert t - s \vert$. It is easy to see from \paref{eq:time-approx-func} that the deep ReLU network implementing $\tilde{f}$ has width $\mathcal{O}(M)$ and depth $\mathcal{O}(1)$. Then we complete the proof.
\end{proof}

\subsection{Time-space approximation} \label{subsec:approx-time-space}
In the subsection, we construct a time-space approximation while keeping the Lipschitz regularity both in the space variable and in the time variable.

\begin{figure}[ht!]
\centering
\includegraphics[width=2.5 in]{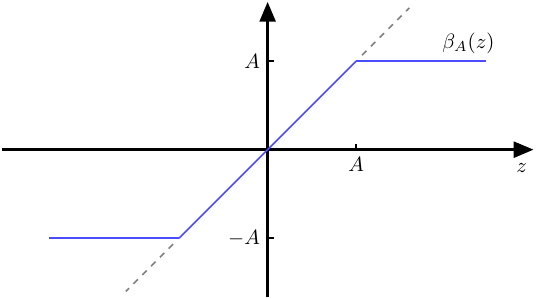} %{./figures/fig-clip.pdf}
\caption{The clipping function $\beta_A$.}
\label{fig:clipping-func}
\end{figure}

\begin{lemma}[Clipping functions] \label{lm:clip-func}
    Given $A > 0$, we define $\beta_A: \sR \to [-A, A]$ by
    \begin{align*}
        \beta_A(z) :=
        \begin{cases}
        -A, & z \in (-\infty, -A), \\
        z,  & z \in [-A, A], \\
        A,  & z \in (A, \infty).
    \end{cases}
    \end{align*}
    There exists a clipping function $\mathcal{C}_A: \sR^d \to [-A, A]^d$ at level $A$ implemented by a deep ReLU network with width $\mathcal{O}(d)$ and depth $\mathcal{O}(1)$ such that for any $x = [x_1, x_2, \cdots, x_d]^{\top} \in \sR^d$,
    \begin{align*}
        \mathcal{C}_A(x) = [\beta_A(x_1), \beta_A(x_2), \cdots, \beta_A(x_d)]^{\top}.
    \end{align*}
\end{lemma}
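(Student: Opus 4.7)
The plan is to realize the scalar clipping function $\beta_A$ by a shallow ReLU sub-network and then apply it coordinate-wise in parallel to obtain $\mathcal{C}_A$. The key identity I would use is
\begin{equation*}
    \beta_A(z) = \varrho(z+A) - \varrho(z-A) - A, \qquad z \in \sR,
\end{equation*}
which one verifies by direct case analysis on the three regions $z<-A$, $-A\le z\le A$, and $z>A$. This exhibits $\beta_A$ as a function implementable by a ReLU network with a single hidden layer of width $2$, with fixed (data-independent) weights $\pm 1, \pm A$.

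Given this scalar gadget, I would construct $\mathcal{C}_A$ by placing $d$ independent copies of the above network in parallel, each acting on a distinct input coordinate $x_i$. Concretely, the first affine map $l_0:\sR^d\to\sR^{2d}$ sends $x=[x_1,\ldots,x_d]^\top$ to the block vector whose $i$-th block is $[x_i+A,\; x_i-A]^\top$; applying ReLU coordinate-wise yields $[\varrho(x_i+A),\;\varrho(x_i-A)]^\top$ in each block; and the final affine map $l_1:\sR^{2d}\to\sR^d$ assembles the output by $\varrho(x_i+A) - \varrho(x_i-A) - A$ in its $i$-th coordinate. The resulting network has one hidden layer of width $2d$, so depth $\mathcal O(1)$ and width $\mathcal O(d)$, and by construction satisfies
\begin{equation*}
    \mathcal{C}_A(x) = [\beta_A(x_1), \beta_A(x_2), \ldots, \beta_A(x_d)]^\top,
\end{equation*}
as required. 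There is no genuine obstacle here: the entire argument is the algebraic identity above together with a routine bookkeeping of widths and depths; the only thing to be careful about is to avoid introducing depth when organizing the $d$ parallel channels, which is handled by stacking them as a single affine pre-activation rather than composing them sequentially.
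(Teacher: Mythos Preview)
Your proposal is correct and takes essentially the same approach as the paper: the paper's proof is the one-line identity $\mathcal{C}_A(x) = \varrho(x + A\mathbf{1}_d) - \varrho(x - A\mathbf{1}_d) - A\mathbf{1}_d$, which is exactly your scalar identity applied coordinate-wise in vector form.
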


\begin{proof}
It is clear that $\mathcal{C}_A(x) = \varrho(x + A \mathbf{1}_d) - \varrho(x - A \mathbf{1}_d) - A \mathbf{1}_d$ where $\varrho: \sR^d \to \sR^d$ is the ReLU function. This expression implies that the clipping function $\mathcal{C}_A$ can be implemented by a deep ReLU network with width $\mathcal{O}(d)$ and depth $\mathcal{O}(1)$.
\end{proof}

The main idea of the time-space approximation on $[0, 1-\underline{t}] \times \sR^d$ is based on Lemmas \ref{lm:approx-multi-dim}, \ref{lm:approx-1d}, and \ref{lm:clip-func}.

\begin{proof}[Proof of Theorem \ref{thm:approx-bd}]
We derive a time-space approximation $\bar{v}$ of the velocity field $v^*$ on the domain $\Omega_{\underline{t}, A} = [0, 1-\underline{t}] \times [-A, A]^d$ and bound the Lipschitz constants of $\bar{v}$ on the domain $[0, 1-\underline{t}] \times \sR^d$.

First of all, we use the clipping function $\mathcal{C}$ defined in Lemma \ref{lm:clip-func} to clip the support of the space variable, that is, for each $x \in \sR^d$, we have $\mathcal{C}(x) \in [-A, A]^d$. We only need to consider approximation in $x$ on the domain $[-A, A]^d$.

Then we can employ the mappings
$\tilde{t} = \mathcal{T}_1 (t) := t/(1-\underline{t})$ and
$\tilde{x} = \mathcal{T}_2 (x) := (x + A \mathbf{1}_d)/(2A)$ to transform the domain $\Omega_{\underline{t}, A}$ into the domain $[0, 1]^{d+1}$. When the domain $[0, 1-\underline{t}] \times \sR^d$ is considered, the transformed domain is $[0, 1] \times \sR^d$.
Notice that both mappings are invertible and can be implemented by deep ReLU networks.
We denote their inverse functions as
$t = \mathcal{T}_1^{-1} (\tilde{t})$ and $x = \mathcal{T}_2^{-1} (\tilde{x})$.
We further define a new velocity field $v^{\diamond}$ by
$v^{\diamond}(\tilde{t}, \tilde{x}) := v^*(\mathcal{T}_1^{-1} (\tilde{t}), \mathcal{T}_2^{-1} (\tilde{x}))$ for any $(\tilde{t}, \tilde{x}) \in [0, 1] \times \sR^d$.
It is clear that $v^*(t, x) = v^{\diamond}(\mathcal{T}_1 (t), \mathcal{T}_2 (x))$ for any $(t, x) \in [0, 1-\underline{t}] \times \sR^d$.
According to Theorem \ref{thm:vf-regularity}, the new velocity field $v^{\diamond}$ satisfies
\begin{enumerate}[label={(\arabic*)}]
    \item For any $\tilde{s}, \tilde{t} \in [0, 1]$ and $\tilde{x} \in \sR^d$, $\Vert v^{\diamond}(\tilde{t}, \tilde{x}) - v^{\diamond}(\tilde{s}, \tilde{x}) \Vert_{\infty} \lesssim \underline{t}^{-2} \vert \tilde{t} - \tilde{s} \vert$;
    \item For any $\tilde{x}, \tilde{y} \in \sR^d$ and $\tilde{t} \in [0, 1]$, $\Vert v^{\diamond}(\tilde{t}, \tilde{x}) - v^{\diamond}(\tilde{t}, \tilde{y})\Vert_{\infty} \lesssim A \Vert \tilde{x} - \tilde{y} \Vert_{\infty}$;
    \item $\Vert v^{\diamond} \Vert_{L^{\infty}([0, 1]^{d+1})} \lesssim A$,
\end{enumerate}
where we omit constants in $d, \kappa, \beta, \sigma, D, R$.

In the following, we construct a time-space approximation of the new velocity field $v^{\diamond}$ on the transformed domain $[0, 1]^{d+1}$ using deep ReLU networks.
Let $v^{\diamond} = [v^{\diamond}_1, v^{\diamond}_2, \cdots, v^{\diamond}_d]^{\top}$.
Given $M \in \mathbb{N}$, we uniformly partition the unit interval $[0, 1]$ into $M$ non-overlapping sub-intervals with length $1/M$.
Let $\{ \phi_j(\tilde{t}) \}_{j=0}^M$ form a partition of unity on $[0, 1]$ with the same definition as \paref{eq:pou-1d} in the proof of Lemma \ref{lm:approx-1d}.
For each $i \in \{1, 2, \cdots, d\}$, we define a time approximation of $v^{\diamond}_i$ by
\begin{align*}
    \tilde{v}_i(\tilde{t}, \tilde{x}) := \sum\nolimits_{j=0}^M v^{\diamond}_i \left(j/M, \tilde{x} \right) \phi_j \left( \tilde{t} \right).
\end{align*}
Let $\tilde{v} := [\tilde{v}_1, \tilde{v}_2, \cdots, \tilde{v}_d]^{\top}$.
Due to Lemma \ref{lm:approx-1d}, for any $\tilde{x} \in [0, 1]^d$, it holds that
\begin{align*}
    \vert \tilde{v} (\cdot, \tilde{x}) \vert_{W^{1, \infty}((0, 1); \sR^d)}
    \lesssim \vert v^{\diamond} (\cdot, \tilde{x}) \vert_{W^{1, \infty}((0, 1); \sR^d)}
    \lesssim \underline{t}^{-2}.
\end{align*}
For $i = 1, 2, \cdots, d$ and $j = 0, 1, \cdots, M$, let $\zeta_{ij}(\tilde{x})$ be a space approximation of $v^{\diamond}_i\left(j/M, \tilde{x} \right)$ implemented by a deep ReLU network constructed in Lemma \ref{lm:approx-multi-dim}.
Then it holds that $\max_{i, j} \Vert \zeta_{ij} \Vert_{W^{1, \infty}((0, 1)^d)} \lesssim A$.
By Lemma \ref{lm:times-nn-not-cube}, we can construct a deep ReLU network $\phi_{\times, (B_3, B_4)}$ with width $15 (N+1)$ and depth $8 L$ to approximate the product function such that
$\Vert \phi_{\times, (B_3, B_4)} \Vert_{W^{1, \infty}((-B_3, B_3) \times (-B_4, B_4))} \le 12 B_3 B_4$,
\begin{align} \label{eq:approx-rate-times-nn}
    \Vert \phi_{\times, (B_3, B_4)}(x, y) - xy \Vert_{W^{1, \infty}((-B_3, B_3) \times (-B_4, B_4))} \le 6 B_3 B_4 (N+1)^{-4L},
\end{align}
and
\begin{align} \label{eq:property-times-nn}
    \phi_{\times, (B_3, B_4)}(x, 0) = \frac{\partial \phi_{\times, (B_3, B_4)}(x, 0)}{\partial x} = 0 \text{ for $x \in (-B_3, B_3)$}.
\end{align}
Using the same partition of unity $\{ \phi_j(\tilde{t}) \}_{j=0}^M$ on $[0, 1]$, we define a time-space approximation of $v^{\diamond}_i$ for each $i \in \{1, 2, \cdots, d\}$ by
\begin{align} \label{eq:vf-cube-approx-func}
    v^{\natural}_i(\tilde{t}, \tilde{x}) := \sum\nolimits_{j=0}^M \phi_{\times, (B_3, B_4)} \left( \zeta_{ij}(\tilde{x}), \phi_j \left( \tilde{t} \right) \right),
\end{align}
which can be implemented with a deep ReLU network.
We choose the parameters $B_3, B_4$ such that $B_3 \asymp \max_{i, j} \Vert \zeta_{ij} \Vert_{L^{\infty}([0, 1]^d)} \lesssim A$ and $B_4 \asymp \max_j \Vert \phi_j \Vert_{L^{\infty}([0, 1])} \lesssim 1$.
\begin{claim} \label{claim:nonzero-terms}
    There are at most two nonzero terms in the summation \paref{eq:vf-cube-approx-func} defining the time-space approximation function $v^{\natural}_i$.
\end{claim}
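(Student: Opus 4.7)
The plan is to exploit the localized support of the partition-of-unity functions $\{\phi_j\}_{j=0}^M$ defined via (\ref{eq:pou-1d}), together with the vanishing property (\ref{eq:property-times-nn}) of the multiplication network $\phi_{\times, (B_3, B_4)}$. The key observation is that a summand in (\ref{eq:vf-cube-approx-func}) can be nonzero only when its second argument $\phi_j(\tilde{t})$ is nonzero, since $\phi_{\times, (B_3, B_4)}(\zeta_{ij}(\tilde{x}), 0) = 0$ by (\ref{eq:property-times-nn}). Thus, bounding the number of nonzero summands reduces to counting how many indices $j \in \{0, 1, \ldots, M\}$ satisfy $\phi_j(\tilde{t}) \neq 0$ for a fixed $\tilde{t} \in [0, 1]$.

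First, I would recall from the construction in (\ref{eq:pou-1d}) that $\phi_j(\tilde{t}) = \psi(3M(\tilde{t} - j/M))$, where $\psi(z) = 0$ whenever $|z| \geq 2$. Consequently, $\phi_j(\tilde{t}) \neq 0$ forces $|3M(\tilde{t} - j/M)| < 2$, i.e., $j \in (M\tilde{t} - 2/3,\; M\tilde{t} + 2/3)$. This is an open interval of length $4/3$ on the real line.

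Next, I would argue that any open interval of length $4/3$ contains at most two integers: if $j_1 < j_2 < j_3$ were three distinct integers in such an interval, then $j_3 - j_1 \geq 2$, contradicting the fact that the interval has length strictly less than $2$. Therefore, at most two indices $j \in \{0, 1, \ldots, M\}$ can satisfy $\phi_j(\tilde{t}) \neq 0$.

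Combining the two observations, for any fixed $\tilde{t} \in [0,1]$ and any $\tilde{x} \in \mathbb{R}^d$, the only potentially nonzero summands in (\ref{eq:vf-cube-approx-func}) are those indexed by the at most two values of $j$ for which $\phi_j(\tilde{t}) \neq 0$; all remaining summands vanish identically by (\ref{eq:property-times-nn}). This yields the claim. I do not anticipate a significant technical obstacle here, as the argument is purely combinatorial once the two ingredients (support of $\psi$ and vanishing of $\phi_{\times, (B_3, B_4)}$ at zero) are combined; the main care needed is the simple length-counting argument for integers inside an interval of length $4/3$.
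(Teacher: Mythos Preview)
Your proposal is correct and follows essentially the same approach as the paper: you use the vanishing property \eqref{eq:property-times-nn} to reduce to counting indices $j$ with $\phi_j(\tilde t)\neq 0$, and then use the support of $\psi$ from \eqref{eq:pou-1d} to show at most two such indices exist. The paper's justification is simply a terser version of your argument, so there is nothing substantively different.
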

This claim holds because for any $\tilde{t} \in [0, 1]$, there are at most two indexes $j$'s from $\{0, 1, 2, \cdots, M\}$ such that $\phi_j(\tilde{t})$ is nonzero according to the definition of the partition of unity $\{ \phi_j(\tilde{t}) \}_{j=0}^M$.
Then our claim follows from the property \paref{eq:property-times-nn} of the approximation product function $\phi_{\times, (B_3, B_4)}$.

Before we study the properties of $v^{\natural}_i$, we introduce a surrogate function $\check{v}_i$ defined by
\begin{align*}
    \check{v}_i(\tilde{t}, \tilde{x}) := \sum\nolimits_{j=0}^M \zeta_{ij}(\tilde{x}) \phi_j \left( \tilde{t} \right).
\end{align*}
The function $\check{v}_i$ will be useful to study the approximation capacity and the regularity of $v^{\natural}_i$.
We derive the approximation rate and the regularity properties of $\check{v}_i$ in the following.
Due to Lemma \ref{lm:approx-multi-dim}, for any $\tilde{x} \in [0, 1]^d$, $i = 1, 2, \cdots, d$, and $j = 0, 1, \cdots, M$, we have
\begin{align*}
    \vert \zeta_{ij}(\tilde{x}) - v^{\diamond}_i \left(j/M, \tilde{x} \right) \vert
    \lesssim A (NL)^{-2/d}.
\end{align*}
We evaluate the approximation error of $\check{v}_i$ by the following error decomposition:
\begin{align}
    \label{eq:error-decomp-vf-cube-approx}
    \Vert \check{v}_i - v^{\diamond}_i \Vert_{L^{\infty}([0, 1]^{d+1})}
    \le \underbrace{\Vert \check{v}_i - \tilde{v}_i \Vert_{L^{\infty}([0, 1]^{d+1})}}_{=: \mathcal{E}_i^1}
        + \underbrace{\Vert \tilde{v}_i - v^{\diamond}_i \Vert_{L^{\infty}([0, 1]^{d+1})}}_{=: \mathcal{E}_i^2}.
\end{align}
By Lemma \ref{lm:approx-multi-dim}, we bound $\mathcal{E}_i^1$ by
\begin{align}
    \mathcal{E}_i^1
    & \le \Big\Vert \sum\nolimits_{j=0}^M \left[ \zeta_{ij}(\tilde{x}) - v^{\diamond}_i \left(j/M, \tilde{x} \right) \right] \phi_j \left( \tilde{t} \right) \Big\Vert_{L^{\infty}([0, 1]^{d+1})} \notag \\
    & \le \max\nolimits_{0 \le j \le M} \Vert \zeta_{ij}(\tilde{x}) - v^{\diamond}_i \left(j/M, \tilde{x} \right) \Vert_{L^{\infty}([0, 1]^d)} \notag \\
    & \lesssim \max\nolimits_{0 \le j \le M} \Vert v^{\diamond}_i(j/M, \tilde{x}) \Vert_{W^{1, \infty}((0, 1)^d)} (NL)^{-2/d} \notag \\
    & \lesssim A (NL)^{-2/d}. \label{eq:error-1-vf-cube-approx}
\end{align}
By Lemma \ref{lm:approx-1d}, we bound $\mathcal{E}_i^2$ by
\begin{align}
    \mathcal{E}_i^2
    \lesssim \sup_{\tilde{x} \in [0, 1]^d} \vert v^{\diamond}_i(\cdot, \tilde{x}) \vert_{W^{1, \infty}((0, 1))} / M
    \ \lesssim \underline{t}^{-2} M^{-1}. \label{eq:error-2-vf-cube-approx}
\end{align}
Combining \paref{eq:error-decomp-vf-cube-approx}, \paref{eq:error-1-vf-cube-approx}, and \paref{eq:error-2-vf-cube-approx}, we have
\begin{align*}
    \Vert \check{v}_i - v^{\diamond}_i \Vert_{L^{\infty}([0, 1]^{d+1})}
    \lesssim A (NL)^{-2/d} + \underline{t}^{-2} M^{-1}.
\end{align*}
Suppose that $(NL)^{2/d} \asymp \underline{t}^2 M$, and it yields that for $i = 1, 2, \cdots, d$,
\begin{align*}
    \Vert \check{v}_i - v^{\diamond}_i \Vert_{L^{\infty}([0, 1]^{d+1})}
    \lesssim A (NL)^{-2/d}.
\end{align*}
Let $\check{v} := [\check{v}_1, \check{v}_2, \cdots, \check{v}_d]^{\top}$.
We have the approximation power of $\check{v}$ evaluated by
\begin{align} \label{eq:vf-surrogate-cube-approx}
    \Vert \check{v} - v^{\diamond} \Vert_{L^{\infty}([0, 1]^{d+1})}
    \lesssim A (NL)^{-2/d}.
\end{align}

Moreover, the Lipschitz continuity of $\check{v}$ in $\tilde{t}$ and $\tilde{x}$ can be verified.
Concretely, we have the Lipschitz estimate in the space variable $\tilde{x}$: for any $\tilde{x}, \tilde{y} \in [0, 1]^d$ and $\tilde{t} \in [0, 1]$,
\begin{align*}
    \Vert \check{v}(\tilde{t}, \tilde{x}) - \check{v}(\tilde{t}, \tilde{y}) \Vert_{\infty}
    & \le \max\nolimits_{1 \le i \le d} \Big\Vert \sum\nolimits_{j=0}^M [\zeta_{ij}(\tilde{x}) - \zeta_{ij}(\tilde{y})] \phi_j \left( \tilde{t} \right) \Big\Vert_{\infty} \\
    & \le \max\nolimits_{1 \le i \le d, \ 0 \le j \le M} \Vert \zeta_{ij}(\tilde{x}) - \zeta_{ij}(\tilde{y}) \Vert_{\infty} \\
    & \le \max\nolimits_{1 \le i \le d, \ 0 \le j \le M} \Vert \zeta_{ij} \Vert_{W^{1, \infty}((0, 1)^d)} \cdot \Vert \tilde{x} - \tilde{y} \Vert_{\infty} \\
    & \lesssim \Vert v^{\diamond} \Vert_{W^{1, \infty}((0, 1)^d; \sR^d)} \cdot \Vert \tilde{x} - \tilde{y} \Vert_{\infty} \\
    & \lesssim A \Vert \tilde{x} - \tilde{y} \Vert_{\infty},
\end{align*}
It is somewhat tedious to derive the Lipschitz estimate in the time variable $\tilde{t}$.
For any $\tilde{s}, \tilde{t} \in [0, 1]$ and $\tilde{x} \in [0, 1]^d$,
\begin{align*}
    & ~~~~~ \Vert \check{v}(\tilde{t}, \tilde{x}) - \check{v}(\tilde{s}, \tilde{x}) \Vert_{\infty} \\
    & \le \Vert \check{v}(\tilde{t}, \tilde{x}) - \tilde{v}(\tilde{t}, \tilde{x}) \Vert_{\infty}
          + \Vert \tilde{v}(\tilde{t}, \tilde{x}) - \tilde{v}(\tilde{s}, \tilde{x}) \Vert_{\infty}
          + \Vert \tilde{v}(\tilde{s}, \tilde{x}) - \check{v}(\tilde{s}, \tilde{x}) \Vert_{\infty} \\
    & \le 2 \sup_{\vartheta \in [0, 1]} \Vert \check{v}(\vartheta, \tilde{x}) - \tilde{v}(\vartheta, \tilde{x}) \Vert_{\infty}
      + \vert \tilde{v} (\cdot, \tilde{x}) \vert_{W^{1, \infty}((0, 1); \sR^d)} \vert \tilde{t} - \tilde{s} \vert \\
    & \le 2 \max\nolimits_{1 \le i \le d} \mathcal{E}_i^1
      + \vert \tilde{v} (\cdot, \tilde{x}) \vert_{W^{1, \infty}((0, 1); \sR^d)} \vert \tilde{t} - \tilde{s} \vert \\
    & \lesssim A (NL)^{-2/d} + \underline{t}^{-2} \vert \tilde{t} - \tilde{s} \vert.
\end{align*}
Considering $(NL)^{2/d} \asymp \underline{t}^2 M$, we deduce that
\begin{align*}
    \Vert \check{v}(\tilde{t}, \tilde{x}) - \check{v}(\tilde{s}, \tilde{x}) \Vert_{\infty}
    \lesssim A \underline{t}^{-2} M^{-1} + \underline{t}^{-2} \vert \tilde{t} - \tilde{s} \vert.
\end{align*}
Then we consider two cases for bounding $\Vert \check{v}(\tilde{t}, \tilde{x}) - \check{v}(\tilde{s}, \tilde{x}) \Vert_{\infty}$. \\
Case 1. If $\vert \tilde{t} - \tilde{s} \vert \ge \frac{1}{3M}$, it is clear that
$\Vert \check{v}(\tilde{t}, \tilde{x}) - \check{v}(\tilde{s}, \tilde{x}) \Vert_{\infty}
\lesssim A \underline{t}^{-2} \vert \tilde{t} - \tilde{s} \vert$. \\
Case 2. If $\vert \tilde{t} - \tilde{s} \vert < \frac{1}{3M}$, for any $i \in \{1, 2, \cdots, d\}$, we try to bound the difference
\begin{align*}
    &~~~~~ \vert \check{v}_i(\tilde{t}, \tilde{x}) - \check{v}_i(\tilde{s}, \tilde{x}) \vert \\
    &= \Big\vert \sum\nolimits_{j=0}^M \zeta_{ij}(\tilde{x}) \left[ \phi_j \left( \tilde{t} \right) - \phi_j \left( \tilde{s} \right) \right] \Big\vert \\
    &= \Big\vert \sum\nolimits_{j=0}^M \zeta_{ij}(\tilde{x}) \left[ \psi \left( 3M \tilde{t} - 3j \right) - \psi \left( 3M \tilde{s} - 3j \right) \right] \Big\vert =: \mathcal{E}_3.
\end{align*}
Then, we focus on bounding $\mathcal{E}_3$.
Without loss of generality, we assume $\tilde{t} < \tilde{s}$.
The remaining calculation is similar to the proof of Lemma \ref{lm:approx-1d}.
Let $m = \tilde{m} \in \{0, 1, \cdots, M\}$ or $m = \bar{m} := \tilde{m} + 1$ be two possible numbers satisfying $\psi( 3M \tilde{t} - 3 m) \not\equiv 0$ or $\psi( 3M \tilde{s} - 3 m) \not\equiv 0$. Then it holds that
\begin{align*}
    \mathcal{E}_3
    &= \left\vert \left[ \psi \left( 3M \tilde{t} - 3 \tilde{m} \right) - \psi \left( 3M \tilde{s} - 3 \tilde{m} \right) \right] \zeta_{i\tilde{m}}(\tilde{x}) \right. \\
    & ~~~~ + \left. \left[ \psi \left( 3M \tilde{t} - 3 \bar{m} \right) - \psi \left( 3M \tilde{s} - 3 \bar{m} \right) \right] \zeta_{i\bar{m}}(\tilde{x}) \right\vert \\
    &= \left\vert \left[ \psi \left( 3M \tilde{t} - 3 \tilde{m} \right) - \psi \left( 3M \tilde{s} - 3 \tilde{m} \right) \right] \zeta_{i\tilde{m}}(\tilde{x}) \right. \\
    & ~~~~ + \left. \left[ (1 - \psi \left( 3M \tilde{t} - 3 \tilde{m} \right)) - (1 - \psi \left( 3M \tilde{s} - 3 \tilde{m} \right)) \right] \zeta_{i\bar{m}}(\tilde{x}) \right\vert \\
    &= \left\vert \left[ \psi \left( 3M \tilde{t} - 3 \tilde{m} \right) - \psi \left( 3M \tilde{s} - 3 \tilde{m} \right) \right] [\zeta_{i\tilde{m}}(\tilde{x}) - \zeta_{i\bar{m}}(\tilde{x})] \right\vert \\
    &\le \left\vert \zeta_{i\tilde{m}}(\tilde{x}) - \zeta_{i\bar{m}}(\tilde{x}) \right\vert \left\vert \psi \left( 3M \tilde{t} - 3 \tilde{m} \right) - \psi \left( 3M \tilde{s} - 3 \tilde{m} \right) \right\vert \\
    &\le 3M \left\vert \zeta_{i\tilde{m}}(\tilde{x}) - \zeta_{i\bar{m}}(\tilde{x}) \right\vert \cdot \vert \tilde{t} - \tilde{s} \vert.
\end{align*}
We bound the term $\vert \zeta_{i\tilde{m}}(\tilde{x}) - \zeta_{i\bar{m}}(\tilde{x}) \vert$ by
\begin{align*}
    & ~~~~~ \vert \zeta_{i\tilde{m}}(\tilde{x}) - \zeta_{i\bar{m}}(\tilde{x}) \vert \\
    & \le \vert \zeta_{i\tilde{m}}(\tilde{x}) - v^{\diamond}_i \left(\tilde{m}/M, \tilde{x} \right) \vert
    + \vert v^{\diamond}_i \left(\tilde{m}/M, \tilde{x} \right) - v^{\diamond}_i \left(\bar{m}/M, \tilde{x} \right) \vert \\
    & ~~~~ + \vert v^{\diamond}_i \left(\bar{m}/M, \tilde{x} \right) - \zeta_{i\bar{m}}(\tilde{x}) \vert \\
    & \lesssim A (NL)^{-2/d} + \underline{t}^{-2} M^{-1}.
\end{align*}
Recall that $(NL)^{2/d} \asymp \underline{t}^2 M$. It implies that
$\vert \zeta_{i\tilde{m}}(\tilde{x}) - \zeta_{i\bar{m}}(\tilde{x}) \vert \lesssim A \underline{t}^{-2} M^{-1}.$
Therefore, if $\vert \tilde{t} - \tilde{s} \vert < \frac{1}{3M}$, it holds that
\begin{align*}
    \vert \check{v}_i(\tilde{t}, \tilde{x}) - \check{v}_i(\tilde{s}, \tilde{x}) \vert
    \le 3M \left\vert \zeta_{i\tilde{m}}(\tilde{x}) - \zeta_{i\bar{m}}(\tilde{x}) \right\vert \vert \tilde{t} - \tilde{s} \vert
    \lesssim A \underline{t}^{-2} \vert \tilde{t} - \tilde{s} \vert.
\end{align*}
We summarize the Lipschitz properties of $\check{v}$ as follows:
\begin{align*}
    \vert \check{v}(\cdot, \tilde{x}) \vert_{W^{1, \infty}((0, 1); \sR^d)}
    \lesssim A \underline{t}^{-2}, \quad
    \vert \check{v}(\tilde{t}, \cdot) \vert_{W^{1, \infty}((0, 1)^d; \sR^d)}
    \lesssim A.
\end{align*}

Let $v^{\natural} := [v^{\natural}_1, v^{\natural}_2, \cdots, v^{\natural}_d]^{\top}$.
We use the approximation rate of $\check{v}$ to derive that of $v^{\natural}$.
By the triangle inequality, it holds that
\begin{align*}
    & \Vert v^{\natural} - v^{\diamond} \Vert_{L^{\infty}([0, 1]^{d+1})} \\
    \le & \Vert v^{\natural} - \check{v} \Vert_{L^{\infty}([0, 1]^{d+1})} + \Vert \check{v} - v^{\diamond} \Vert_{L^{\infty}([0, 1]^{d+1})} \\
    \lesssim & A (N+1)^{-4L} + A (NL)^{-2/d} \quad \text{(By Claim \ref{claim:nonzero-terms}, Eq. \paref{eq:approx-rate-times-nn}, and Eq. \paref{eq:vf-surrogate-cube-approx})} \\
    \lesssim & A (NL)^{-2/d} \quad \text{(By $(NL)^{2/d} \le (N+1)^{4L}$ for any $N, L \in \sN$)}.
\end{align*}
Thus, the approximation rate of $v^{\natural}$ is given by
\begin{align} \label{eq:vf-cube-approx}
    \Vert v^{\natural} - v^{\diamond} \Vert_{L^{\infty}([0, 1]^{d+1})} \lesssim A (NL)^{-2/d}.
\end{align}
Then we study the Lipschitz properties of $v^{\natural}$.
By Lemma \ref{lm:lip-composition} and Claim \ref{claim:nonzero-terms}, it holds that
\begin{align*}
     & ~\vert v^{\natural}(\cdot, \tilde{x}) - \check{v}(\cdot, \tilde{x}) \vert_{W^{1, \infty}((0, 1); \sR^d)} \\
    =& \max_i \vert v_i^{\natural}(\cdot, \tilde{x}) - \check{v}_i(\cdot, \tilde{x}) \vert_{W^{1, \infty}((0, 1))} \\
    =& \max_i \Big\vert \sum\nolimits_{j=0}^M \phi_{\times, (B_3, B_4)} \left( \zeta_{ij}(\tilde{x}), \phi_j(\cdot) \right) - \sum\nolimits_{j=0}^M \zeta_{ij}(\tilde{x}) \phi_j(\cdot) \Big\vert_{W^{1, \infty}((0, 1))} \\
    \lesssim & \max_i \vert \phi_{\times, (B_3, B_4)}(x, y) - x y \vert_{W^{1, \infty}((-B_3, B_3) \times (-B_4, B_4))} \cdot \vert \phi_j \vert_{W^{1, \infty}((0, 1))} \\
    \lesssim & A M (N+1)^{-4L}
    \lesssim A \underline{t}^{-2} (NL)^{2/d} (N+1)^{-4L} \quad \text{(By $(NL)^{2/d} \asymp \underline{t}^2 M$)} \\
    \lesssim & A \underline{t}^{-2} \quad \text{(By $(NL)^{2/d} \le (N+1)^{4L}$ for any $N, L \in \sN$)}.
\end{align*}
By the triangle inequality, the Lipschitz property of $v^{\natural}$ in the time variable $\tilde{t}$ is evaluated by
\begin{align*}
        & \vert v^{\natural}(\cdot, \tilde{x}) \vert_{W^{1, \infty}((0, 1); \sR^d)} \\
    \le & \vert v^{\natural}(\cdot, \tilde{x}) - \check{v}(\cdot, \tilde{x}) \vert_{W^{1, \infty}((0, 1); \sR^d)} + \vert \check{v}(\cdot, \tilde{x}) \vert_{W^{1, \infty}((0, 1); \sR^d)} \\
    \lesssim & A \underline{t}^{-2} + A \underline{t}^{-2}
    \lesssim A \underline{t}^{-2}.
\end{align*}
By Lemma \ref{lm:lip-composition} and Claim \ref{claim:nonzero-terms}, we derive the Lipschitz property of $v^{\natural}$ in the space variable $\tilde{x}$ as follows
\begin{align*}
      & ~\vert v^{\natural}(\tilde{t}, \cdot) \vert_{W^{1, \infty}((0, 1)^d)} \\
    = & \max_i \vert v_i^{\natural}(\tilde{t}, \cdot) \vert_{W^{1, \infty}((0, 1)^d)} \\
    = & \max_i \Big\vert \sum\nolimits_{j=0}^M \phi_{\times, (B_3, B_4)} \left( \zeta_{ij}(\cdot), \phi_j(\tilde{t}) \right) \Big\vert_{W^{1, \infty}((0, 1)^d)} \\
    \lesssim & \max_i \Big\{ \vert \phi_{\times, (B_3, B_4)}(\cdot, y) \vert_{W^{1, \infty}((-B_3, B_3))} \cdot \max_j \vert \zeta_{ij} \vert_{W^{1, \infty}((0, 1)^d)} \Big\} \\
    \lesssim & A^2.
\end{align*}

We claim that $\bar{v}(t, x) := v^{\natural}(\mathcal{T}_1 (t), \mathcal{T}_2 \circ \mathcal{C}_A (x))$ provides a good approximation of the velocity field $v^*$ on the domain $\Omega_{\underline{t}, A}$, and $\bar{v}$ can be implemented by a deep ReLU network.
According to the error bound \paref{eq:vf-cube-approx}, the approximation rate of $\bar{v}$ is given by
\begin{align*}
    \Vert \bar{v}(t, x) - v^*(t, x) \Vert_{L^{\infty}(\Omega_{\underline{t}, A})}
    \lesssim A (NL)^{-2/d},
\end{align*}
where we omit constants in $d, \kappa, \beta, \sigma, D, R$.
Furthermore, we need to estimate the Lipschitz constants of $\bar{v}$.
Here, we use Lemma \ref{lm:lip-composition} to calculate the Sobolev semi-norms of the composite functions:
\begin{align*}
    & \vert \bar{v}(\cdot, x) \vert_{W^{1, \infty}((0, 1-\underline{t}); \sR^d)}
    \lesssim \vert v^{\natural}(\cdot, \mathcal{T}_2 \circ \mathcal{C}_A (x)) \vert_{W^{1, \infty}((0, 1); \sR^d)} \vert \mathcal{T}_1 \vert_{W^{1, \infty}((0, 1-\underline{t}); (0, 1))}
    \lesssim A \underline{t}^{-2}, \\
    & \vert \bar{v}(t, \cdot) \vert_{W^{1, \infty}(\sR^d; \sR^d)}
    \lesssim \vert v^{\natural}(\mathcal{T}_1 (t), \cdot) \vert_{W^{1, \infty}((0, 1)^d; \sR^d)}
             \vert \mathcal{T}_2 \vert_{W^{1, \infty}((-A, A)^d; \sR^d)}
             \vert \mathcal{C}_A \vert_{W^{1, \infty}(\sR^d; [-A, A]^d)}
    \lesssim A.
\end{align*}
In addition, we have the $L^{\infty}$ bound $\Vert \bar{v} \Vert_{L^{\infty}([1-\underline{t}] \times \sR^d)} \lesssim A$.

In the end, it remains to calculate the complexity of the deep ReLU network implementing $\bar{v}$.
By the definition of $v^{\natural}$ in \paref{eq:vf-cube-approx-func}, we know that $v^{\natural}$ consists of $\gO(\underline{t}^{-2} d (NL)^{2/d})$ parallel subnetworks listed as follows:
\begin{itemize}
    \item $\phi_{\times, (B_3, B_4)}$ with width $\mathcal{O}(N)$ and depth $\mathcal{O}(L)$;
    \item $\zeta_{ij}$ with width $\gO(2^d d N \log N)$ and depth $\gO(d^2 L \log L)$;
    \item $\phi_j$ with width $\gO(1)$ and depth $\gO(1)$.
\end{itemize}
Hence, the deep ReLU network implementing $v^{\natural}$ has width $\gO(\underline{t}^{-2} 2^d d^2 (NL)^{2/d} N \log N)$ and depth $\gO(d^2 L \log L)$.
By omitting polynomial prefactors in $d$, we obtain that the deep ReLU network implementing the function $\bar{v}$ has width $\gO(\underline{t}^{-2} 2^d (NL)^{2/d} N \log N)$, depth $\gO(L \log L)$, and size $\gO(\underline{t}^{-2} 4^d (NL)^{2/d} (N \log N)^2 L \log L)$.
\end{proof}

\section{Error analysis of flow matching}
In the appendix, we present proofs for error analyses of flow matching.

\subsection{Basic error decomposition} \label{subsec:error-decomp-fm}

We present the proof of Lemma \ref{lm:fm-error-decomp}.

\begin{proof}[Proof of Lemma \ref{lm:fm-error-decomp}]
    We follow the proof of \citet[Lemma 3.1]{jiao2023deep}.
    Due to that $v^*$ is the minimizer of $\mathcal{L}$, direct calculation implies
    \begin{align*}
        \E_{\mathbb{D}_n} \E_{(\mathsf{t}, \mathsf{X}_{\mathsf{t}})} \Vert \hat{v}_n(\mathsf{t}, \mathsf{X}_{\mathsf{t}}) - v^*(\mathsf{t}, \mathsf{X}_{\mathsf{t}}) \Vert_2^2
        = \E_{\mathbb{D}_n} [\mathcal{L}(\hat{v}_n) - \mathcal{L}(v^*)].
    \end{align*}
    Since $\hat{v}_n$ is the minimizer of the empirical risk, for any $v^{\dag} \in \arg\inf_{v \in \gF_n} \E_{(\mathsf{t}, \mathsf{X}_{\mathsf{t}})} \Vert v(\mathsf{t}, \mathsf{X}_{\mathsf{t}}) - v^*(\mathsf{t}, \mathsf{X}_{\mathsf{t}}) \Vert_2^2$, it holds that
    \begin{align*}
        \mathcal{L}_n (\hat{v}_n) - \mathcal{L}_n (v^*)
        \le \mathcal{L}_n (v^{\dag}) - \mathcal{L}_n (v^*).
    \end{align*}
    Taking expectations over $\mathbb{D}_n$ on both sides, it yields that
    \begin{align} \label{eq:error-decomp}
        \E_{\mathbb{D}_n} [\mathcal{L}_n (\hat{v}_n) - \mathcal{L} (v^*)]
        \le \mathcal{L} (v^{\dag}) - \mathcal{L} (v^*)
        = \inf_{v \in \gF_n} \E_{(\mathsf{t}, \mathsf{X}_{\mathsf{t}})} \Vert v(\mathsf{t}, \mathsf{X}_{\mathsf{t}}) - v^*(\mathsf{t}, \mathsf{X}_{\mathsf{t}}) \Vert_2^2.
    \end{align}
    Using the inequality \eqref{eq:error-decomp}, we deduce that
    \begin{align*}
            & ~ \E_{\mathbb{D}_n} \E_{(\mathsf{t}, \mathsf{X}_{\mathsf{t}})} \Vert \hat{v}_n(\mathsf{t}, \mathsf{X}_{\mathsf{t}}) - v^*(\mathsf{t}, \mathsf{X}_{\mathsf{t}}) \Vert_2^2
        = \E_{\mathbb{D}_n} [\mathcal{L}(\hat{v}_n) - \mathcal{L}(v^*)] \\
        \le &~ \E_{\mathbb{D}_n} [\mathcal{L}(\hat{v}_n) - \mathcal{L}(v^*)] - 2 \E_{\mathbb{D}_n} [\mathcal{L}_n (\hat{v}_n) - \mathcal{L} (v^*)] + 2 \inf_{v \in \gF_n} \E_{(\mathsf{t}, \mathsf{X}_{\mathsf{t}})} \Vert v(\mathsf{t}, \mathsf{X}_{\mathsf{t}}) - v^*(\mathsf{t}, \mathsf{X}_{\mathsf{t}}) \Vert_2^2 \\
        \le &~ \E_{\mathbb{D}_n} [\mathcal{L}(v^*) - 2 \mathcal{L}_n (\hat{v}_n) + \mathcal{L}(\hat{v}_n)]
              + 2 \inf_{v \in \gF_n} \E_{(\mathsf{t}, \mathsf{X}_{\mathsf{t}})} \Vert v(\mathsf{t}, \mathsf{X}_{\mathsf{t}}) - v^*(\mathsf{t}, \mathsf{X}_{\mathsf{t}}) \Vert_2^2.
    \end{align*}
    This completes the proof.
\end{proof}

\subsection{Truncation error} \label{subsec:trunc-error}
The truncation error is well controlled by the fast-decaying tail probability of $\mathsf{X}_t \sim p_t$.
We bound the tail probability in Lemma \ref{lm:tail-bd} and the truncation error in Lemma \ref{lm:trunc-error}.
For a sub-Gaussian random variable $\mathsf{X}$, we use $\Vert \mathsf{X} \Vert_{\psi_2}$ to denote its sub-Gaussian norm.
\begin{proof}[Proof of Lemma \ref{lm:tail-bd}]
Let $\mathsf{X}_t = [\mathsf{X}_t^1, \mathsf{X}_t^2, \cdots, \mathsf{X}_t^d]^{\top}$.
Similarly, let $\mathsf{Z} = [\mathsf{Z}^1, \mathsf{Z}^2, \cdots, \mathsf{Z}^d]^{\top}$ and
$\mathsf{X}_1 = [\mathsf{X}_1^1, \mathsf{X}_1^2, \cdots, \mathsf{X}_1^d]^{\top}$.
By the general Hoeffding inequality \citep[Theorem 2.6.3]{vershynin2018high}, for any $1 \le i \le d$, we bound the tail probability of $\mathsf{X}_t^i$ by
\begin{align*}
    \mathbb{P}(\vert \mathsf{X}_t^i \vert > A)
    = \mathbb{P}(\vert (1-t) \mathsf{Z}^i + t \mathsf{X}_1^i \vert > A)
    \le 2 \exp\left( -\frac{C_1 A^2}{K_1^2} \right),
\end{align*}
where $C_1$ is a universal constant and $K_1 := \Vert \mathsf{Z}^1 \Vert_{\psi_2} \vee \max_{1 \le i \le d} \Vert \mathsf{X}_1^i \Vert_{\psi_2}$ with $\mathsf{Z}^1 \sim \gamma_1$.
According to Remark \ref{rm:sub-gaussian}, $K_1 \asymp \sqrt{C_{\mathrm{LSI}}}$ is finite with dependence on parameters in $\mathcal{S}_1$.
By the union bound, it further yields
\begin{align*}
    \mathbb{P}(\mathsf{X}_t \in \Omega_A^c)
    = \mathbb{P}(\exists 1 \le i \le d : \vert \mathsf{X}_t^i \vert > A)
    \le \sum_{i=1}^d \mathbb{P}(\vert \mathsf{X}_t^i \vert > A)
    \le 2 d \exp\left( -\frac{C_2 A^2}{C_{\mathrm{LSI}}} \right),
\end{align*}
where $C_2$ is a universal constant and $C_{\mathrm{LSI}}$ depends on parameters in $\mathcal{S}_1$.
This tail probability bound holds uniformly for $t \in [0, 1]$.
\end{proof}

\begin{proof}[Proof of {\color{blue}} Lemma \ref{lm:trunc-error}]
We decompose the truncation error by
\begin{align}
    \mathcal{E}_{\mathrm{trunc}}
    &= \E_{(\mathsf{t}, \mathsf{X}_{\mathsf{t}})} \Vert [\bar{v}(\mathsf{t}, \mathsf{X}_{\mathsf{t}}) - v^*(\mathsf{t}, \mathsf{X}_{\mathsf{t}})] \Id_{\Omega_A^c}(\mathsf{X}_{\mathsf{t}}) \Vert_2^2 \notag \\
    & \lesssim \underbrace{\E_{(\mathsf{t}, \mathsf{X}_{\mathsf{t}})} \Vert \bar{v}(\mathsf{t}, \mathsf{X}_{\mathsf{t}}) \Id_{\Omega_A^c}(\mathsf{X}_{\mathsf{t}}) \Vert_2^2}_{=: \mathcal{E}_{\mathrm{trunc}}^1}
    + \underbrace{\E_{(\mathsf{t}, \mathsf{X}_{\mathsf{t}})} \Vert v^*(\mathsf{t}, \mathsf{X}_{\mathsf{t}}) \Id_{\Omega_A^c}(\mathsf{X}_{\mathsf{t}}) \Vert_2^2}_{=: \mathcal{E}_{\mathrm{trunc}}^2} \label{eq:decomp-trunc-error}.
\end{align}
First, we bound $\mathcal{E}_{\mathrm{trunc}}^1$.
For any $A > 0$ and $t \in [0, 1-\underline{t}]$, it holds that
\begin{align}
    \E_{\mathsf{X}_t} \Vert \bar{v}(t, \mathsf{X}_t) \Id_{\Omega_A^c}(\mathsf{X}_t) \Vert_2^2
    & = \E_{\mathsf{X}_t} [\Vert \bar{v}(t, \mathsf{X}_t) \Vert_2^2 \Id_{\Omega_A^c}(\mathsf{X}_t)] \notag \\
    & \le \left( \E_{\mathsf{X}_t} [\Vert \bar{v}(t, \mathsf{X}_t) \Vert_2^4] \cdot \mathbb{P}(\mathsf{X}_t \in \Omega_A^c) \right)^{1/2} \notag \\
    & \lesssim A^2 \mathbb{P}(\mathsf{X}_t \in \Omega_A^c)^{1/2} \label{eq:approx-tail-prob},
\end{align}
where the first inequality follows from the Cauchy-Schwarz inequality, and the second inequality is due to $\Vert \bar{v}(t,x) \Vert_{L^{\infty}([0, 1-\underline{t}] \times \sR^d)} \lesssim A$ given in Theorem \ref{thm:approx-bd}.
Combining \paref{eq:tail-prob-bd} in Lemma \ref{lm:tail-bd} and \paref{eq:approx-tail-prob} above, it follows
\begin{align}
    \mathcal{E}_{\mathrm{trunc}}^1
    = \E_{(\mathsf{t}, \mathsf{X}_{\mathsf{t}})} \Vert \bar{v}(\mathsf{t}, \mathsf{X}_{\mathsf{t}}) \Id_{\Omega_A^c}(\mathsf{X}_{\mathsf{t}}) \Vert_2^2
    \lesssim \sqrt{d} A^2 \exp\left( -\frac{C_3 A^2}{C_{\mathrm{LSI}}} \right) \label{eq:approx-trunc-bd},
\end{align}
where $C_3$ is a universal constant. \\
Then, we bound $\mathcal{E}_{\mathrm{trunc}}^2$. Due to $v^*(t, x) = \E[\mathsf{X}_1 - \mathsf{Z} | \mathsf{X}_t = x]$, it holds
\begin{align*}
    \E_{\mathsf{X}_t} \Vert v^*(t, \mathsf{X}_t) \Vert_2^4
    &= \E_{\mathsf{X}_t} \Vert \E[\mathsf{X}_1 - \mathsf{Z} | \mathsf{X}_t = x] \Vert_2^4 \\
    &\le \E_{\mathsf{X}_t} \E [\Vert \mathsf{X}_1 - \mathsf{Z} \Vert_2^4 | \mathsf{X}_t = x] \\
    &\le \E [\Vert \mathsf{X}_1 - \mathsf{Z} \Vert_2^4] \\
    &\le 8 \E [\Vert \mathsf{Z} \Vert_2^4] + 8 \E [\Vert \mathsf{X}_1 \Vert_2^4],
\end{align*}
where the fourth moments in the last expression are finite by the property of the Gaussian distribution and the sub-Gaussian property of $\mathsf{X}_1$.
For any $A > 0$ and $t \in [0, 1-\underline{t}]$, we further bound $\E_{\mathsf{X}_t} \Vert v^*(t, \mathsf{X}_t) \Id_{\Omega_A^c}(\mathsf{X}_t) \Vert_2^2$ by
\begin{align}
    \E_{\mathsf{X}_t} \Vert v^*(t, \mathsf{X}_t) \Id_{\Omega_A^c}(\mathsf{X}_t) \Vert_2^2
    & = \E_{\mathsf{X}_t} [\Vert v^*(t, \mathsf{X}_t) \Vert_2^2 \Id_{\Omega_A^c}(\mathsf{X}_t)] \notag \\
    & \le \left( \E_{\mathsf{X}_t} \Vert [v^*(t, \mathsf{X}_t) \Vert_2^4] \cdot \mathbb{P}(\mathsf{X}_t \in \Omega_A^c) \right)^{1/2} \notag \\
    & \lesssim \E [\Vert \mathsf{X}_1 \Vert_2^4]^{1/2} \cdot \mathbb{P}(\mathsf{X}_t \in \Omega_A^c)^{1/2} \label{eq:vf-tail-prob}.
\end{align}
Combining \paref{eq:tail-prob-bd} in Lemma \ref{lm:tail-bd} and \paref{eq:vf-tail-prob} above, it follows
\begin{align}
    \mathcal{E}_{\mathrm{trunc}}^2
    = \E_{(\mathsf{t}, \mathsf{X}_{\mathsf{t}})} \Vert v^*(\mathsf{t}, \mathsf{X}_{\mathsf{t}}) \Id_{\Omega_A^c}(\mathsf{X}_{\mathsf{t}}) \Vert_2^2
    \lesssim \sqrt{d} \exp\left( -\frac{C_3 A^2}{C_{\mathrm{LSI}}} \right) \label{eq:vf-trunc-bd},
\end{align}
where we omit the dependence on the fourth moment of the target $\mathsf{X_1}$. \\
Finally, combining \paref{eq:decomp-trunc-error}, \paref{eq:approx-trunc-bd}, and \paref{eq:vf-trunc-bd}, we get
\begin{align*}
    \mathcal{E}_{\mathrm{trunc}} \lesssim \sqrt{d} A^2 \exp\left( -\frac{C_3 A^2}{C_{\mathrm{LSI}}} \right),
\end{align*}
where we omit the dependence on the fourth moment of the target $\mathsf{X_1}$.
This completes the proof.
\end{proof}

\subsection{Stochastic error} \label{subsec:stoc-error-app}

The stochastic error is known as generalization error in statistical machine learning. In this part, we study the stochastic error of flow matching with techniques in empirical processes and present the proof of Lemma \ref{lm:stoc-error-fm}. Before that, we show necessary definitions from the content of empirical processes for establishing bounds of the stochastic error.

\begin{definition}[Uniform and empirical covering numbers]
Given the samples $\mathbb{X}_n := \{ \mathsf{X}_i \}_{i=1}^n$, we define the empirical $L^{\infty}$ pseudometric $\Vert \cdot \Vert_{L^{\infty}(\mathbb{X}_n)}$ on the samples $\mathbb{X}_n$ by
\begin{equation*}
\Vert f \Vert_{L^{\infty}(\mathbb{X}_n)} := \max_{1 \le i \le n} |f(\mathsf{X}_i)|.
\end{equation*}
A set $\gF_{\delta}$ is called an empirical $L^{\infty}$ $\delta$-cover of the function class $\gF$ on the samples $\mathbb{X}_n$ if for each $f \in \gF$, there exists $f^{\prime} \in \gF_{\delta}$ such that $\Vert f - f^{\prime} \Vert_{L^{\infty}(\mathbb{X}_n)} \leq \delta$. Furthermore,
\begin{equation*}
\mathcal{N}_{\infty}(\delta, \gF, \mathbb{X}_n) := \inf \big\{ \vert \gF_{\delta} \vert : \gF_{\delta} \text{ is an empirical $L^{\infty}$ $\delta$-cover of $\gF$ on $\mathbb{X}_n$} \big\}
\end{equation*}
is called the empirical $L^{\infty}$ $\delta$-covering number of $\gF$ on $\mathbb{X}_n$.
Given $n$, the largest $L^{\infty}$ $\delta$-covering number over samples $\mathbb{X}_n$ is referred to as the uniform $L^{\infty}$ $\delta$-covering number $\mathcal{N}_{\infty}(\delta, \gF, n) := \sup_{\mathbb{X}_n} \mathcal{N}_{\infty}(\delta, \gF, \mathbb{X}_n)$.
\end{definition}

\begin{definition}
    Let $\gF$ be a class of functions from a set $\gZ$ to $\sR$. A set $\{z_1, \cdots, z_m \} \subset \gX$ is said to be shattered by $\gF$ if there exist $t_1, t_2, \cdots, t_m \in \sR$ such that, for each $b \in \{ 0, 1 \}^m$, there exist a function $f_b \in \gF$ satisfying $\mathrm{sgn} \left( f_b \left( z_i \right) - t_i \right) = b_i$ for $1 \le i \le m$. We say that the threshold values $t_1, t_2, \cdots, t_m$ witness the shattering.
\end{definition}

\begin{definition}[Pseudo-dimension]
    Let $\gF$ be a class of functions from a set $\Omega$ to $\sR$. The pseudo-dimension of $\gF$, denoted by $\mathrm{Pdim}(\gF)$, is the maximum cardinality of a subset of $\Omega$ shattered by $\gF$.
\end{definition}

\begin{proof}[Proof of Lemma \ref{lm:stoc-error-fm}]
Let $\mathbb{D}_n = \{ \mathsf{S}_i := (\mathsf{Z}_i, \mathsf{X}_{1,i}, \mathsf{t}_i)\}_{i=1}^n$ be a random sample from the distribution of $\mathsf{Z}, \mathsf{X}_1, \mathsf{t}$ and $\mathbb{D}_n^{\prime} := \{ \mathsf{S}_i^{\prime} := (\mathsf{Z}_i^{\prime}, \mathsf{X}_{1,i}^{\prime}, \mathsf{t}_i^{\prime})\}_{i=1}^n$ be another ghost sample independent of $\mathbb{D}_n$.
We denote that $\mathsf{X}_{\mathsf{t}_i} := (1-\mathsf{t}_i) \mathsf{Z}_i + \mathsf{t}_i \mathsf{X}_{1, i}$,
$\mathsf{X}_{\mathsf{t}_i}^{\prime} := (1-\mathsf{t}_i^{\prime}) \mathsf{Z}_i^{\prime} + \mathsf{t}_i^{\prime} \mathsf{X}_{1, i}^{\prime}$,
$\mathsf{Y}_i := \mathsf{X}_{1, i} - \mathsf{Z}_i$,
and $\mathsf{Y}_i^{\prime} := \mathsf{X}_{1, i}^{\prime} - \mathsf{Z}_i^{\prime}$.
Define $\gD(v, \mathsf{S}_i) := \Vert v(\mathsf{t}_i, \mathsf{X}_{\mathsf{t}_i}) - \mathsf{Y}_i \Vert_2^2 - \Vert v^*(\mathsf{t}_i, \mathsf{X}_{\mathsf{t}_i}) - \mathsf{Y}_i \Vert_2^2$ for any $v \in \gF_n$ and $\mathsf{S}_i$.
Notice that
\begin{equation} \label{eq:asym-emp-proc}
    \E_{\mathbb{D}_n} [\mathcal{L}(v^*) - 2 \mathcal{L}_n (\hat{v}_n) + \mathcal{L}(\hat{v}_n)]
    = \E_{\mathbb{D}_n} \Big[ \frac{1}{n} \sum\nolimits_{i=1}^n \left( \E_{\mathbb{D}_n^\prime} \gD(\hat{v}_n, \mathsf{S}_i^{\prime}) - 2 \gD(\hat{v}_n, \mathsf{S}_i) \right) \Big].
\end{equation}
It is clear that the right-hand side of \paref{eq:asym-emp-proc} defines an asymmetric empirical process.
Let $\gG (v, \mathsf{S}_i) := \E_{\mathbb{D}_n^\prime} \gD(v, \mathsf{S}_i^{\prime}) - 2 \gD(v, \mathsf{S}_i)$ for any $v \in \gF_n$.
Then we have
\begin{align*}
    \E_{\mathbb{D}_n} [\mathcal{L}(v^*) - 2 \mathcal{L}_n (\hat{v}_n) + \mathcal{L}(\hat{v}_n)]
    = \E_{\mathbb{D}_n} \Big[ \frac{1}{n} \sum\nolimits_{i=1}^n \gG (\hat{v}_n, \mathsf{S}_i) \Big].
\end{align*}
Let $B_n \ge \mathtt{B} \ge B \ge 1$ be a positive number that may depend on the sample size $n$.
We construct a clipping function $\gC_{B_n}$ at level $B_n$ following the definition of clipping functions in Lemma \ref{lm:clip-func}.
Let $v_{B_n}(t, x) := \E [\gC_{B_n}(\mathsf{Y}) | \mathsf{X}_t = x]$ be the regression function of the truncated $\mathsf{Y}$. Similar to the definitions of $\gD(v, \mathsf{S}_i)$ and $\gG (v, \mathsf{S}_i)$, we define $\gD_{B_n}(v, \mathsf{S}_i) := \Vert v(\mathsf{t}_i, \mathsf{X}_{\mathsf{t}_i}) - \gC_{B_n}(\mathsf{Y}_i) \Vert_2^2 - \Vert v_{B_n}(\mathsf{t}_i, \mathsf{X}_{\mathsf{t}_i}) - \gC_{B_n}(\mathsf{Y}_i) \Vert_2^2$ and $\gG_{B_n}(v, \mathsf{S}_i) := \E_{\mathbb{D}_n^\prime} \gD_{B_n}(v, \mathsf{S}_i^{\prime}) - 2 \gD_{B_n}(v, \mathsf{S}_i)$.
Then for any $v \in \gF_n$ we have
\begin{align*}
     & ~ \vert \gD(v, \mathsf{S}_i) - \gD_{B_n}(v, \mathsf{S}_i) \vert \\
    =& \big\vert 2 \langle v(\mathsf{t}_i, \mathsf{X}_{\mathsf{t}_i}) - v^*(\mathsf{t}_i, \mathsf{X}_{\mathsf{t}_i}), \gC_{B_n}(\mathsf{Y}_i) - \mathsf{Y}_i \rangle \\
     & + \Vert v_{B_n}(\mathsf{t}_i, \mathsf{X}_{\mathsf{t}_i}) - \gC_{B_n}(\mathsf{Y}_i) \Vert_2^2 - \Vert v^*(\mathsf{t}_i, \mathsf{X}_{\mathsf{t}_i}) - \gC_{B_n}(\mathsf{Y}_i) \Vert_2^2 \big\vert \\
    \le & 2 \big\vert \big\langle v(\mathsf{t}_i, \mathsf{X}_{\mathsf{t}_i}) - v^*(\mathsf{t}_i, \mathsf{X}_{\mathsf{t}_i}), \gC_{B_n}(\mathsf{Y}_i) - \mathsf{Y}_i \big\rangle \big\vert \\
     & + \big\vert \big\langle v_{B_n}(\mathsf{t}_i, \mathsf{X}_{\mathsf{t}_i}) - v^*(\mathsf{t}_i, \mathsf{X}_{\mathsf{t}_i}), v_{B_n}(\mathsf{t}_i, \mathsf{X}_{\mathsf{t}_i}) + v^*(\mathsf{t}_i, \mathsf{X}_{\mathsf{t}_i}) - 2 \gC_{B_n}(\mathsf{Y}_i) \big\rangle \big\vert.
\end{align*}
By considering coordinate-wise scalar expressions of the risks, we get
\begin{align*}
        & ~ \vert \gD(v, \mathsf{S}_i) - \gD_{B_n}(v, \mathsf{S}_i) \vert \\
    \le & \sum\nolimits_{j=1}^d \Big\{ 2 \big\vert [ v_j (\mathsf{t}_i, \mathsf{X}_{\mathsf{t}_i}) - v^*_j (\mathsf{t}_i, \mathsf{X}_{\mathsf{t}_i}) ] \cdot [ (\gC_{B_n}(\mathsf{Y}_i))_j - (\mathsf{Y}_i)_j ] \big\vert \\
        & + \big\vert [ (v_{B_n})_j(\mathsf{t}_i, \mathsf{X}_{\mathsf{t}_i}) - v^*_j(\mathsf{t}_i, \mathsf{X}_{\mathsf{t}_i}) ] \cdot [ (v_{B_n})_j(\mathsf{t}_i, \mathsf{X}_{\mathsf{t}_i}) + v^*_j(\mathsf{t}_i, \mathsf{X}_{\mathsf{t}_i}) - 2 (\gC_{B_n}(\mathsf{Y}_i))_j ] \big\vert \Big\} \\
    \le & \sum\nolimits_{j=1}^d \Big\{ 4 \mathtt{B} \big\vert (\gC_{B_n}(\mathsf{Y}_i))_j - (\mathsf{Y}_i)_j \big\vert
          + 4 B_n \big\vert (v_{B_n})_j(\mathsf{t}_i, \mathsf{X}_{\mathsf{t}_i}) - v^*_j(\mathsf{t}_i, \mathsf{X}_{\mathsf{t}_i}) \big\vert \Big\} \\
        \le & \sum\nolimits_{j=1}^d \Big\{ 4 \mathtt{B} \big\vert (\gC_{B_n}(\mathsf{Y}_i))_j - (\mathsf{Y}_i)_j \big\vert + 4 B_n \E [ \vert (\gC_{B_n}(\mathsf{Y}_i))_j - (\mathsf{Y}_i)_j \vert \big\vert \mathsf{X}_{t_i} = \mathsf{X}_{\mathsf{t}_i}] \Big\}.
\end{align*}
Note that $\vert (\gC_{B_n}(\mathsf{Y}_i))_j - (\mathsf{Y}_i)_j \vert \le \vert (\mathsf{Y}_i)_j \vert \Id_{\{ \vert (\mathsf{Y}_i)_j \vert \ge B_n \}}$ and $B_n \ge \mathtt{B}$. Then it follows that
\begin{align*}
    & ~ \E_{\mathbb{D}_n} \vert \gD(v, \mathsf{S}_i) - \gD_{B_n}(v, \mathsf{S}_i) \vert \\
    \le & \E_{\mathbb{D}_n} \Big[ \sum\nolimits_{j=1}^d \Big\{ 4 \mathtt{B} \vert (\mathsf{Y_i})_j \vert \Id_{\{ \vert (\mathsf{Y_i})_j \vert \ge B_n \}} + 4 B_n \E [ \vert (\mathsf{Y}_i)_j \vert \Id_{\{ \vert (\mathsf{Y}_i)_j \vert \ge B_n \}} \big\vert \mathsf{X}_{t_i} = \mathsf{X}_{\mathsf{t}_i}] \Big\} \Big] \\
    \le & \sum\nolimits_{j=1}^d 8 B_n \E_{\mathbb{D}_n} \big[ \vert (\mathsf{Y}_i)_j \vert \Id_{\{ \vert (\mathsf{Y}_i)_j \vert \ge B_n \}} \big] \\
    \le & \sum\nolimits_{j=1}^d 8 B_n \E_{\mathbb{D}_n} \big[ \vert (\mathsf{Y}_i)_j \vert^2 \big] \cdot \mathbb{P}\{ \vert (\mathsf{Y}_i)_j \vert \ge B_n \}.
\end{align*}
By Assumption \ref{assump:target} and Remark \ref{rm:sub-gaussian}, the law of $\mathsf{Y} = \mathsf{X} - \mathsf{Z}$ is sub-Gaussian.
Then there exist two constants $K_1$ and $K_2$ such that for each $i = 1, 2, \cdots, n$ and $j = 1, 2, \cdots, d$,
\begin{align*}
    \mathbb{P}\{ \vert (\mathsf{Y}_i)_j \vert \ge B_n \} \le 2 \exp \left( -\frac{B_n^2}{K_1^2} \right),
    \quad \E_{\mathbb{D}_n} \big[ \vert (\mathsf{Y}_i)_j \vert^2 \big] \le 2 K_2^2.
\end{align*}
The bounds above further imply that
\begin{align*}
    \E_{\mathbb{D}_n} \gD(v, \mathsf{S}_i)
    \le & \E_{\mathbb{D}_n} \gD_{B_n}(v, \mathsf{S}_i) + \sum\nolimits_{j=1}^d 8 B_n \E_{\mathbb{D}_n} \big[ \vert (\mathsf{Y}_i)_j \vert^2 \big]  \mathbb{P}(\vert (\mathsf{Y}_i)_j \vert \ge B_n) \\
    \le & \E_{\mathbb{D}_n} \gD_{B_n}(v, \mathsf{S}_i) + 32 d B_n K_2^2 \exp \left( -\frac{B_n^2}{K_1^2} \right).
    \end{align*}
Therefore, we conclude that
\begin{align} \label{eq:concen-bd}
    \E_{\mathbb{D}_n} \Big[ \frac{1}{n} \sum\nolimits_{i=1}^n \gG(\hat{v}_n, \mathsf{S}_i) \Big]
    \le \E_{\mathbb{D}_n} \Big[ \frac{1}{n} \sum\nolimits_{i=1}^n \gG_{B_n}(\hat{v}_n, \mathsf{S}_i) \Big] + K_3 B_n \exp \left( -\frac{B_n^2}{K_1^2} \right),
\end{align}
where the constant $K_3$ does not depend on $n$ and $B_n$.

Next, we consider bounding a tail probability of the empirical process.
Before proceeding, we define $(\gD_{B_n})_j(v, \mathsf{S}_i) := [ v_j(\mathsf{t}_i, \mathsf{X}_{\mathsf{t}_i}) - (\gC_{B_n}(\mathsf{Y}_i))_j ]^2 - [ (v_{B_n})_j (\mathsf{t}_i, \mathsf{X}_{\mathsf{t}_i}) - (\gC_{B_n}(\mathsf{Y}_i))_j ]^2$ for any $j \in \{ 1, 2, \cdots, d \}$.
It is clear that $\gD_{B_n}(v, \mathsf{S}_i) = \sum_{j=1}^d (\gD_{B_n})_j(v, \mathsf{S}_i)$, and we have the following tail probability bounds
\begin{align*}
        & \mathbb{P} \Big\{ \frac{1}{n} \sum\nolimits_{i=1}^n \gG_{B_n}(\hat{v}_n, \mathsf{S}_i) > t \Big\} \\
    \le & \mathbb{P} \Big\{ \exists v \in \gF_n: \frac{1}{n} \sum\nolimits_{i=1}^n \gG_{B_n}(v, \mathsf{S}_i) > t \Big\} \\
      = & \mathbb{P} \Big\{ \exists v \in \gF_n: \E_{\mathbb{D}_n^\prime} \gD_{B_n}(v, \mathsf{S}_i^{\prime}) - \frac{2}{n} \sum\nolimits_{i=1}^n \gD_{B_n}(v, \mathsf{S}_i) > t \Big\} \\
    \le & \mathbb{P} \Big\{ \exists v \in \gF_n \text{ and } \exists 1 \le j \le d: \E_{\mathbb{D}_n^\prime} (\gD_{B_n})_j (v, \mathsf{S}_i^{\prime}) - \frac{2}{n} \sum\nolimits_{i=1}^n (\gD_{B_n})_j (v, \mathsf{S}_i) > \frac{t}{d} \Big\}.
\end{align*}
Note that $\vert (\gC_{B_n}(\mathsf{Y}))_j \vert \le B_n$, $\Vert (v_{B_n})_j \Vert_\infty \le B_n$, and $B_n \ge \mathtt{B} \ge 1$.
By Theorem 11.4 of \cite{gyorfi2002distribution} and letting $\epsilon = 1/2, \alpha = \beta = t/(2d)$ in \citet[Theorem 11.4]{gyorfi2002distribution}, it yields that for each $n \ge 1$,
\begin{align}
        & \mathbb{P} \Big\{ \frac{1}{n} \sum\nolimits_{i=1}^n \gG_{B_n}(\hat{v}_n, \mathsf{S}_i) > t \Big\} \notag \\
    \le & \mathbb{P} \Big\{ \exists v \in \gF_n \text{ and } \exists 1 \le j \le d: \E_{\mathbb{D}_n^\prime} (\gD_{B_n})_j (v, \mathsf{S}_i^{\prime}) - \frac{2}{n} \sum\nolimits_{i=1}^n (\gD_{B_n})_j (v, \mathsf{S}_i) > \frac{t}{d} \Big\} \notag \\
    \le & 14 \mathcal{N}_{\infty} (t/(80 d B_n), \gF_n, n) \exp \left( -\frac{t n}{5136 d B_n^4} \right) \label{eq:tail-bd-empirical-proc}.
\end{align}
Then we use the tail probability bound \paref{eq:tail-bd-empirical-proc} to bound the stochastic error.
For any $\alpha_n > 0$,
\begin{align*}
        & ~ \E_{\mathbb{D}_n} \Big[ \frac{1}{n} \sum\nolimits_{i=1}^n \gG_{B_n}(\hat{v}_n, \mathsf{S}_i) \Big] \\
    \le & \alpha_n + \int_{\alpha_n}^\infty \mathbb{P} \Big\{ \frac{1}{n} \sum\nolimits_{i=1}^n \gG_{B_n}(\hat{v}_n, \mathsf{S}_i) > t \Big\} \diff t \\
    \le & \alpha_n + \int_{\alpha_n}^\infty 14 \mathcal{N}_{\infty} (t/(80 d B_n), \gF_n, n) \exp \left( -\frac{t n}{5136 d B_n^4} \right) \diff t \\
    \le & \alpha_n + \int_{\alpha_n}^\infty 14 \mathcal{N}_{\infty} (\alpha_n / (80 d B_n), \gF_n, n) \exp \left( -\frac{t n}{5136 d B_n^4} \right) \diff t \\
    \le & \alpha_n + 14 \mathcal{N}_{\infty} (\alpha_n / (80 d B_n), \gF_n, n) \exp \left( -\frac{\alpha_n n}{5136 d B_n^4} \right) \frac{5136 d B_n^4}{n}.
    \end{align*}
By choosing $\alpha_n = \log (14 \mathcal{N}_{\infty} (1/n, \gF_n, n)) \cdot 5136 d B_n^4 / n$ and noticing that $\alpha_n / (80 d B_n) \ge 1/n$ and $\mathcal{N}_{\infty} (1/n, \gF_n, n) \ge \mathcal{N}_{\infty} (\alpha_n / (80 d B_n), \gF_n, n)$,
we obtain that
\begin{equation} \label{eq:bounded-stoc-error-bd}
    \E_{\mathbb{D}_n} \Big[ \frac{1}{n} \sum\nolimits_{i=1}^n \gG_{B_n}(\hat{v}_n, \mathsf{S}_i) \Big]
    \le \frac{5136 d B_n^4 (\log(14 \mathcal{N}_{\infty} (1/n, \gF_n, n)) + 1)}{n}.
\end{equation}
Setting $B_n \asymp \mathtt{B} \log n$ and combining \paref{eq:concen-bd} and \paref{eq:bounded-stoc-error-bd}, the stochastic error is upper bounded by the covering number of the hypothesis class $\gF_n \subseteq \mathcal{NN}(\mathtt{S}, \mathtt{W}, \mathtt{D}, \mathtt{B}, d+1, d)$ as
\begin{align} \label{eq:stoc-error-bd}
    \mathcal{E}_{\mathrm{stoc}}
    \lesssim \frac{d}{n} (\log n)^4 \mathtt{B}^4 \log \mathcal{N}_{\infty}(1/n, \gF_n, n)
\end{align}
where we omit a constant not depending on $n$ or $\mathtt{B}$.
By the relationship between the uniform covering number and the pseudo-dimension of the deep ReLU network class $\gF_n$ \citep[Theorem 12.2]{anthony1999neural}, it yields that for $n \ge \mathrm{Pdim}(\gF_n)$,
\begin{align} \label{eq:cover-bd-nn}
    \mathcal{N}_{\infty}(1/n, \gF_n, n)
    \leq \left( \frac{2 e \mathtt{B} n^2}{\mathrm{Pdim}(\gF_n)} \right)^{\mathrm{Pdim}(\gF_n)},
\end{align}
where $\mathrm{Pdim}(\gF_n)$ denotes the pseudo-dimension of $\gF_n$.
By Theorems 3 and 7 of \citet{bartlett2019nearly}, the pseudo-dimension of the deep ReLU network class $\gF_n \subseteq \mathcal{NN}(\mathtt{S}, \mathtt{W}, \mathtt{D}, \mathtt{B}, d+1, d)$ satisfies
\begin{align} \label{eq:vc-bd-nn}
    \mathtt{S} \mathtt{D} \log(\mathtt{S}/\mathtt{D})
    \lesssim \mathrm{Pdim}(\gF_n)
    \lesssim \mathtt{S} \mathtt{D} \log(\mathtt{S}).
\end{align}
Combining \paref{eq:stoc-error-bd}, \paref{eq:cover-bd-nn}, \paref{eq:vc-bd-nn}, and $\mathtt{B} \lesssim A$, we complete the proof by showing
\begin{align*}
    \mathcal{E}_{\mathrm{stoc}} \lesssim \frac{d}{n} (\log n)^4 A^4 \mathtt{S} \mathtt{D} \log(\mathtt{S}) \log (A n^2).
\end{align*}
\end{proof}

\subsection{Balancing errors} \label{subsec:balance}

We present the proof of Theorem \ref{thm:flow-match-error} for balancing the approximation error and the stochastic error of flow matching.

\begin{proof}[Proof of Theorem \ref{thm:flow-match-error}]
According to Corollary \ref{cor:approx-fm-bd} and Corollary \ref{cor:stat-fm-bd}, it holds that
\begin{align*}
    \mathcal{E}_{\mathrm{stoc}} \lesssim \frac{1}{n} A^4 \underline{t}^{-2} (NL)^{2+2/d}, \quad
    \mathcal{E}_{\mathrm{appr}} \lesssim A^2 (NL)^{-4/d}+A^2\exp(-C_3A^2/C_{\textrm{LSI}})
\end{align*}
by omitting a polylogarithmic prefactor in $N, L, A, n$, a prefactor in $\log(1/\underline{t})$, and a prefactor in $d, \kappa, \beta, \sigma, D, R$.
Let $NL \asymp (n \underline{t}^2)^{d/(2d+6)}$ and $A \asymp \log(\log n)$.
Then by Lemma \ref{lm:fm-error-decomp}, it holds that
\begin{align*}
    \E_{\mathbb{D}_n} \E_{(\mathsf{t}, \mathsf{X}_{\mathsf{t}})} \Vert \hat{v}_n(\mathsf{t}, \mathsf{X}_{\mathsf{t}}) - v^*(\mathsf{t}, \mathsf{X}_{\mathsf{t}}) \Vert_2^2
    \le \mathcal{E}_{\mathrm{stoc}} + 2 \mathcal{E}_{\mathrm{appr}}
    \lesssim (n \underline{t}^2)^{-2/(d+3)}
\end{align*}
by omitting a polylogarithmic prefactor in $n$, a prefactor in $\log(1/\underline{t})$, and a prefactor in $d, \kappa, \beta, \sigma, D, R$.
\end{proof}

\section{Distribution estimation errors} \label{sec:dist-est-error}

In the appendix, we provide proofs for bounding distribution estimation errors.
The discretization error is bounded in Lemma \ref{lm:discretization-error}.

\begin{proof}[Proof of Lemma \ref{lm:discretization-error}]
By the definition of Wasserstein-2 distance, it holds
\begin{equation*}
    \mathcal{W}_2^2(\hat{p}_t, \tilde{p}_t)
    \le \int_{\sR^d} \Vert \hat{X}_t(x) - \tilde{X}_t(x) \Vert_2^2 p_0(x) \diff x =: E_t.
\end{equation*}
It suffices to consider the propagation of error $E_t$ in time $t \in [0, 1-\underline{t}]$.
Recall that $(\hat{X}_t)_{t \in [0, 1-\underline{t}]}$ is the linear interpolation of $(\hat{X}_{t_k})_{0 \le k \le K}$, thus it is piecewise linear over $[0, 1-\underline{t}]$.
To ease the arguments, we consider the dynamics of $E_t$ over each time subinterval $[t_{k-1}, t_k]$ for $1 \le k \le K$.
For $t \in [t_{k-1}, t_k]$, it holds that
\begin{align}
    \frac{\diff E_t}{\diff t}
    =& \int_{\sR^d} 2 \langle \hat{v}_n(t_{k-1}, \hat{X}_{t_{k-1}}(x)) - \hat{v}_n(t, \tilde{X}_t(x)), \hat{X}_t(x) - \tilde{X}_t(x) \rangle p_0(x) \diff x  \notag \\
    =& \int_{\sR^d} 2 \langle \hat{v}_n(t_{k-1}, \hat{X}_{t_{k-1}}(x)) - \hat{v}_n(t, \hat{X}_{t_{k-1}}(x)), \hat{X}_t(x) - \tilde{X}_t(x) \rangle p_0(x) \diff x \label{eq:discret-error-1} \\
    & + \int_{\sR^d} 2 \langle \hat{v}_n(t, \hat{X}_{t_{k-1}}(x)) - \hat{v}_n(t, \hat{X}_t(x)), \hat{X}_t(x) - \tilde{X}_t(x) \rangle p_0(x) \diff x \label{eq:discret-error-2} \\
    & + \int_{\sR^d} 2 \langle \hat{v}_n(t, \hat{X}_t(x)) - \hat{v}_n(t, \tilde{X}_t(x)), \hat{X}_t(x) - \tilde{X}_t(x) \rangle p_0(x) \diff x \label{eq:discret-error-3}
\end{align}
For term \paref{eq:discret-error-1}, the basic inequality $2 \langle a, b \rangle \le \Vert a \Vert_2^2 + \Vert b \Vert_2^2$ and the fact that $\hat{v}_n$ is $\mathtt{L}_t$-Lipschitz continuous in $t$ imply that
\begin{align}
    & \int_{\sR^d} 2 \langle \hat{v}_n(t_{k-1}, \hat{X}_{t_{k-1}}(x)) - \hat{v}_n(t, \hat{X}_{t_{k-1}}(x)), \hat{X}_t(x) -\tilde{X}_t(x) \rangle p_0(x) \diff x \notag \\
    \le & \int_{\sR^d} \Vert \hat{v}_n(t_{k-1}, \hat{X}_{t_{k-1}}(x)) - \hat{v}_n(t, \hat{X}_{t_{k-1}}(x)) \Vert_2^2 p_0(x) \diff x \notag \\
        & + \int_{\sR^d} \Vert \hat{X}_t(x) -\tilde{X}_t(x) \Vert_2^2 p_0(x) \diff x \notag \\
    \le & \ d \mathtt{L}_t^2 (t-t_{k-1})^2 + E_t. \label{eq:discret-error-4}
\end{align}
Note that $\hat{X}_t(x) = \hat{X}_{t_{k-1}}(x) + (t - t_{k-1}) \hat{v}_n(t_{k-1}, \hat{X}_{t_{k-1}}(x))$.
For term \paref{eq:discret-error-2}, we use $2 \langle a, b \rangle \le \Vert a \Vert_2^2 + \Vert b \Vert_2^2$ and the fact that $\hat{v}_n$ is $\mathtt{L}_x$-Lipschitz continuous in $x$ to deduce that
\begin{align}
    & \int_{\sR^d} 2 \langle \hat{v}_n(t, \hat{X}_{t_{k-1}}(x)) - \hat{v}_n(t, \hat{X}_t(x)), \hat{X}_t(x) -\tilde{X}_t(x) \rangle p_0(x) \diff x \notag \\
    \le & \int_{\sR^d} \Vert \hat{v}_n(t, \hat{X}_{t_{k-1}}(x)) - \hat{v}_n(t, \hat{X}_t(x)) \Vert_2^2 p_0(x) \diff x \notag \\
        & + \int_{\sR^d} \Vert \hat{X}_t(x) - \tilde{X}_t(x) \Vert_2^2 p_0(x) \diff x \notag \\
    \le & \ d \mathtt{L}_x^2 (t - t_{k-1})^2 \Vert \hat{v}_n \Vert_{L^\infty([0, 1-\underline{t}] \times \sR^d)}^2 + E_t \notag \\
    \le & \ d \mathtt{L}_x^2 \mathtt{B}^2 (t - t_{k-1})^2 + E_t. \label{eq:discret-error-5}
\end{align}
For term \paref{eq:discret-error-3}, by the Cauchy-Schwartz inequality and the fact that $\hat{v}_n$ is $\mathtt{L}_x$-Lipschitz continuous in $x$, we obtain
\begin{align} \label{eq:discret-error-6}
    \int_{\sR^d} 2\langle \hat{v}_n(t, \hat{X}_t(x)) - \hat{v}_n(t, \tilde{X}_t(x)), \hat{X}_t(x) - \tilde{X}_t(x) \rangle p_0(x)\diff x \le 2 \mathtt{L}_x E_t.
\end{align}
Combining \paref{eq:discret-error-4}, \paref{eq:discret-error-5}, and \paref{eq:discret-error-6}, we obtain
\begin{equation*}
    \frac{\diff E_t}{\diff t} \le 2 (\mathtt{L}_x + 1) E_t + d (\mathtt{L}_x^2 \mathtt{B}^2 + \mathtt{L}_t^2) (t - t_{k-1})^2 \quad \text{ for $t \in [t_{k-1}, t_k]$}.
\end{equation*}
By Gr{\"o}nwall's inequality, it further yields
\begin{equation*}
    e^{-2 (\mathtt{L}_x +1) t_k} E_{t_k} - e^{-2 (\mathtt{L}_x +1) t_{k-1}} E_{t_{k-1}}
    \le \frac13 d (\mathtt{L}_x^2 \mathtt{B}^2 + \mathtt{L}_t^2)(t_k - t_{k-1})^3.
\end{equation*}
Taking sum over $k = 1, 2, \cdots, K$ and letting $t_K = 1-\underline{t}$, we obtain
\begin{equation*}
    E_{1-\underline{t}}
    \le \frac13 d e^{2(\mathtt{L}_x +1) (1-\underline{t})} (\mathtt{L}_x^2 \mathtt{B}^2 + \mathtt{L}_t^2) \sum\nolimits_{k=1}^K (t_k - t_{k-1})^3.
\end{equation*}
Let $\Upsilon \equiv t_k - t_{k-1}$ for $k = 1, 2, \cdots, K$.
It implies that
\begin{equation*}
    \mathcal{W}_2 (\hat{p}_{1-\underline{t}}, \tilde{p}_{1-\underline{t}})
    = \mathcal{O} \left(\sqrt{d} e^{\mathtt{L}_x} (\mathtt{L}_x \mathtt{B} + \mathtt{L}_t) \Upsilon \right).
\end{equation*}
This completes the proof.
\end{proof}

The error due to velocity estimation is bounded in Lemma \ref{lm:vf-est-error}.

\begin{proof}[Proof of Lemma \ref{lm:vf-est-error}]
The proof idea is similar to that of \citet[Proposition 3]{albergo2023building}.
By the definition of the Wasserstein-$2$ distance, it holds
\begin{equation*}
    \mathcal{W}_2^2(\tilde{p}_t, p_t)
    \le \int_{\sR^d} \Vert \tilde{X}_t(x) - X_t(x) \Vert_2^2 p_0(x) \diff x =: R_t,
\end{equation*}
for any $t \in [0, 1-\underline{t}]$.
By \paref{eq:ivp-true} and \paref{eq:ivp-neural}, it follows that
\begin{align}
    \frac{\diff R_t}{\diff t}
    =& \int_{\sR^d} 2 \langle \hat{v}_n(t, \tilde{X}_t(x)) - v^*(t, X_t(x)), \tilde{X}_t(x) - X_t(x) \rangle p_0(x) \diff x  \notag \\
    =& \int_{\sR^d} 2 \langle \hat{v}_n(t, \tilde{X}_t(x)) - \hat{v}_n(t, X_t(x)), \tilde{X}_t(x) - X_t(x) \rangle p_0(x) \diff x \label{eq:vf-est-error-1} \\
     & + \int_{\sR^d} 2 \langle \hat{v}_n(t, X_t(x)) - v^*(t, X_t(x)), \tilde{X}_t(x) - X_t(x) \rangle p_0(x) \diff x. \label{eq:vf-est-error-2}
\end{align}
For term \paref{eq:vf-est-error-1}, the fact that $\hat{v}_n$ is $\mathtt{L}_x$-Lipschitz continuous in $x$ imply that
\begin{align*}
    \int_{\sR^d} 2 \langle \hat{v}_n(t, \tilde{X}_t(x)) - \hat{v}_n(t, X_t(x)), \tilde{X}_t(x) - X_t(x) \rangle p_0(x) \diff x
    \le 2 \mathtt{L}_x R_t.
\end{align*}
For term \paref{eq:vf-est-error-2}, the basic inequality $2 \langle a, b \rangle \le \Vert a \Vert_2^2 + \Vert b \Vert_2^2$ imply that
\begin{align*}
    & \int_{\sR^d} 2 \langle \hat{v}_n(t, X_t(x)) - v^*(t, X_t(x)), \tilde{X}_t(x) - X_t(x) \rangle p_0(x) \diff x \\
    & \le R_t + \E_{\mathsf{X}_t \sim p_t} \Vert \hat{v}_n(t, \mathsf{X}_t) - v^*(t, \mathsf{X}_t) \Vert_2^2.
\end{align*}
Therefore, we have
\begin{align*}
    \frac{\diff R_t}{\diff t} \le (2 \mathtt{L}_x +1) R_t + \E_{\mathsf{X}_t \sim p_t} \Vert \hat{v}_n(t, \mathsf{X}_t) - v^*(t, \mathsf{X}_t) \Vert_2^2.
\end{align*}
By Gr{\"o}nwall's inequality, it further yields
\begin{align*}
    R_{1-\underline{t}} \le \exp(2 \mathtt{L}_x +1) \E_{(\mathsf{t}, \mathsf{X}_{\mathsf{t}})} \Vert \hat{v}_n(\mathsf{t}, \mathsf{X}_{\mathsf{t}}) - v^*(\mathsf{t}, \mathsf{X}_{\mathsf{t}}) \Vert_2^2.
\end{align*}
We complete the proof by noting that $\mathcal{W}_2^2 (\tilde{p}_{1-\underline{t}}, p_{1-\underline{t}}) \leq R_{1-\underline{t}}$.
\end{proof}

The early stopping error is bounded in Lemma \ref{lm:early-stop-error}.

\begin{proof}[Proof of Lemma \ref{lm:early-stop-error}]
The proof is a basic calculation.
\begin{align*}
    \mathcal{W}_2^2 (p_{1-\underline{t}}, p_1)
    & \le \E[\Vert \mathsf{X}_{1-\underline{t}} - \mathsf{X}_1 \Vert_2^2]
      = \E[\Vert \underline{t} (\mathsf{Z} - \mathsf{X}_1) \Vert_2^2] \\
    & = \underline{t}^2 \left( \E[\Vert \mathsf{Z} \Vert_2^2] + \E[\Vert \mathsf{X}_1 \Vert_2^2] \right)
      \lesssim \underline{t}^2,
\end{align*}
where we omit a polynomial prefactor in $d, \E[\Vert \mathsf{X}_1 \Vert_2^2]$.
We complete the proof by taking square roots of both sides.
\end{proof}

\begin{proof} [Proof of Theorem \ref{thm:dist-est-error}]
Combining Eq. \paref{eq:error-decomp-w2}, Lemmas \ref{lm:discretization-error}, \ref{lm:vf-est-error}, \ref{lm:early-stop-error}, and Theorem \ref{thm:flow-match-error}, it yields
\begin{align*}
    \E_{\mathbb{D}_n} \mathcal{W}_2(\hat{p}_{1-\underline{t}}, p_1)
    \lesssim  (n \underline{t}^2)^{-1/(d+3)} + e^{\mathtt{L}_x} (\mathtt{L}_x \mathtt{B} + \mathtt{L}_t) \Upsilon + \underline{t}.
\end{align*}
Let $\underline{t} \asymp n^{-1/(d+5)}$, $A \asymp \log(\log n)$, and $\Upsilon = \mathcal{O} (n^{-3/(d+5)})$. Then it implies
\begin{align*}
    \E_{\mathbb{D}_n} \mathcal{W}_2 (\hat{p}_{1-\underline{t}}, p_1) \lesssim \underline{t} \vee (n \underline{t}^2)^{-1/(d+3)} \lesssim n^{-1/(d+5)},
\end{align*}
where we omit a prefactor scaling polynomially in $\log n$ and a prefactor with dependence on parameters in $\mathcal{S}_2$.
This completes the proof.
\end{proof}

\section{Supporting definitions and lemmas} \label{sec:app-support}

Sobolev spaces are widely studied in the context of functional analysis and partial differential equations.
For ease of reference, we collect several definitions and existing results on Sobolev spaces that assist our proof.
For a thorough treatment of Sobolev spaces, the interested reader is referred to \citet{adams2002sobolev, evans2010partial}.
Moreover, we present some results on polynomial approximation theory in Sobolev spaces that are developed in the classical monograph on the finite element methods \citep{brenner2008polynomial}.
In the sequel, let $d \in \mathbb{N}$, $\Omega \subset \sR^d$ denote an open subset of $\mathbb{R}^d$.
We denote by $L^{\infty}(\Omega)$ the standard Lebesgue space on $\Omega$ with $L^{\infty}$ norm.

\subsection{Sobolev spaces}
We list some definitions for defining Sobolev spaces.

\begin{definition} [Sobolev space]
    Let $n \in \mathbb{N}_0$. Then the Sobolev space $W^{n, \infty}(\Omega)$ is defined by
    \begin{align*}
        W^{n, \infty}(\Omega) := \{ f \in L^{\infty}(\Omega): D^{\alpha} f \in L^{\infty}(\Omega) \text{ for all $\alpha \in \mathbb{N}_0^d$ with $\Vert \alpha \Vert_1 \le n$ } \}.
    \end{align*}
    Moreover, for any $f \in W^{n, \infty}(\Omega)$, we define the Sobolev norm $\Vert \cdot \Vert_{W^{n, \infty}(\Omega)}$ by
    \begin{align*}
        \Vert f \Vert_{W^{n, \infty}(\Omega)} := \max_{0 \le \Vert \alpha \Vert_1 \le n} \Vert D^{\alpha} f \Vert_{L^{\infty}(\Omega)}.
    \end{align*}
\end{definition}

\begin{definition} [Sobolev semi-norm]
    Let $n, k \in \mathbb{N}_0$ with $k \le n$.
    For any $f \in W^{n, \infty}(\Omega)$, we define the Sobolev semi-norm $\vert \cdot \vert_{W^{k, \infty}(\Omega)}$ by
    \begin{align*}
        \vert f \vert_{W^{k, \infty}(\Omega)} := \max_{\Vert \alpha \Vert_1 = k} \Vert D^{\alpha} f \Vert_{L^{\infty}(\Omega)}.
    \end{align*}
\end{definition}

\begin{definition} [Vector-valued Sobolev space]
    Let $n, k \in \mathbb{N}_0$ with $k \le n$, and $m \in \mathbb{N}$.
    Then the vector-valued Sobolev space $W^{n, \infty}(\Omega; \sR^m)$ is defined by
    \begin{align*}
        W^{n, \infty}(\Omega; \sR^m) := \{(f_1, f_2, \cdots, f_m): f_i \in W^{n, \infty}(\Omega), 1 \le i \le m \}
    \end{align*}
    Moreover, the Sobolev norm $\Vert \cdot \Vert_{W^{n, \infty}(\Omega; \sR^m)}$ is defined by
    \begin{align*}
        \Vert f \Vert_{W^{n, \infty}(\Omega; \sR^m)} := \max_{1 \le i \le m} \Vert f_i \Vert_{W^{n, \infty}(\Omega)},
    \end{align*}
    and the Sobolev semi-norm $\vert \cdot \vert_{W^{n, \infty}(\Omega; \sR^m)}$ is defined by
    \begin{align*}
        \vert f \vert_{W^{k, \infty}(\Omega; \sR^m)} := \max_{1 \le i \le m} \vert f_i \vert_{W^{k, \infty}(\Omega)}.
    \end{align*}
\end{definition}

% \begin{definition} [Sobolev time-space]
%     Let $m, n \in \mathbb{N}, I \subseteq \sR, \Omega \subseteq \sR^d$. The Sobolev time-space $W_{m, \infty}^{n, \infty}$ is defined by
%     \begin{align*}
%         W_{m, \infty}^{n, \infty}(I, \Omega) := \{f \in L^{\infty}(I, W^{n, \infty}(\Omega)) \text{ for any } k \le m \}
%     \end{align*}
%     such that
%     $\Vert f \Vert_{W_{m, \infty}^{n, \infty}(I, \Omega)} = \sum_{k \le m} \Vert \partial_t^k f \Vert_{L^{\infty}(I, W^{n, \infty}(\Omega))}$.
% \end{definition}

\subsection{Averaged Taylor polynomials}
The following definitions and lemmas on averaged Taylor polynomials are collected from Chapter 4 of \cite{brenner2008polynomial}.

\begin{definition} [Averaged Taylor polynomials] \label{def:average-polyn}
    Let $\Omega \subset \sR^d$ be a bounded, open subset and $f \in W^{m-1, \infty}(\Omega)$ for some $m \in \mathbb{N}$, and let $x_0 \in \Omega, r > 0, B:= \mathbb{B}^d(x_0, r, \Vert \cdot \Vert_2)$ with its closure $\bar{B}$ compact in $\Omega$.
    The Taylor polynomial of order $m$ of $f$ averaged over $B$ is defined as
    \begin{align*}
        Q^{m} f(x) := \int_B T_y^{m} f(x) \phi(y) \diff y,
    \end{align*}
    where
    \begin{align*}
        T_y^{m} f(x) := \sum_{\Vert \alpha \Vert_1 < m} \frac{1}{\alpha!} D^{\alpha} f(y) (x-y)^{\alpha},
    \end{align*}
    and $\phi$ is an arbitrary cut-off function supported in $\bar{B}$ being infinitely differentiable, that is, $\phi \in C^{\infty}(\sR^d)$ with $\mathrm{supp}(\phi) = \bar{B}$ and $\int_{\sR^d} \phi(x) \diff x = 1$.
\end{definition}

\begin{example} \label{ex:cut-off-1}
Let $\psi(x)$ be defined by
\begin{align*}
    \psi(x) :=
    \begin{cases}
        \exp\{-1/(1 - (\Vert x - x_0 \Vert_2 / r)^2) \}, & \text{if } \Vert x - x_0 \Vert_2 < r, \\
        0,                                               & \text{if } \Vert x - x_0 \Vert_2 \ge r,
    \end{cases}
\end{align*}
and let $c = \int_{\sR^d} \psi(x) \diff x$ with $c > 0$, then $\phi(x) = \psi(x) / c$ is an example of the cut-off function on the ball $B := \mathbb{B}^d(x_0, r, \Vert \cdot \Vert_2)$.
Moreover, it holds that $\Vert \phi \Vert_{L^{\infty}(B)} \le C(d) r^{-d}$ where $C(d) > 0$ is a constant in $d$.
\end{example}

\begin{example}
Let $\phi(x)$ be defined by
\begin{align*}
    \phi(x) =
    \begin{cases}
        \pi^{-d/2} \Gamma(d/2 + 1) r^{-d}, & \text{if } \Vert x - x_0 \Vert_2 < r, \\
        0,                                 & \text{if } \Vert x - x_0 \Vert_2 \ge r,
    \end{cases}
\end{align*}
then $\phi(x)$ is another example of the cut-off function where $\phi$ puts constant weight over the ball $B := \mathbb{B}^d(x_0, r, \Vert \cdot \Vert_2)$.
\end{example}

\begin{lemma} [Lemma B.9 in \citet{guhring2020error}] \label{lm:average-polyn-bd}
    Let $\Omega \subset \sR^d$ be a bounded, open subset and $f \in W^{m-1, \infty}(\Omega)$ for some $m \in \mathbb{N}$, and let $x_0 \in \Omega, r > 0, B:= \mathbb{B}^d(x_0, r, \Vert \cdot \Vert_2)$ with its closure $\bar{B}$ compact in $\Omega$.
    The Taylor polynomial of order $m$ of $f$ averaged over $B$ denoted by $Q^{m}f(x)$ is a polynomial of degree less than $m$ in $x$.
\end{lemma}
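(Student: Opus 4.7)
The plan is to unpack the definition $Q^{m} f(x) = \int_B T_y^{m} f(x)\phi(y)\,\diff y$ and show that integration over $y$ against the cut-off $\phi$ preserves the polynomial structure of $T_y^{m} f(x)$ in the variable $x$. For each fixed $y\in B$, the inner quantity $T_y^{m} f(x)=\sum_{\Vert\alpha\Vert_1<m}\frac{1}{\alpha!}D^\alpha f(y)(x-y)^\alpha$ is manifestly a polynomial in $x$ of total degree less than $m$, so the whole content of the lemma is that this polynomial structure in $x$ survives averaging in $y$.

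First I would expand $(x-y)^\alpha$ via the multi-index binomial theorem,
\[
(x-y)^\alpha \;=\; \sum_{\beta\le\alpha}\binom{\alpha}{\beta}(-1)^{\Vert\alpha-\beta\Vert_1}y^{\alpha-\beta}\,x^{\beta},
\]
so that $T_y^{m} f(x)$ rewrites as a finite linear combination of monomials $x^{\beta}$ with $\Vert\beta\Vert_1 \le \Vert\alpha\Vert_1<m$, whose coefficients
\[
c_{\alpha,\beta}(y)\;=\;\tfrac{1}{\alpha!}\binom{\alpha}{\beta}(-1)^{\Vert\alpha-\beta\Vert_1}y^{\alpha-\beta}D^\alpha f(y)
\]
depend only on $y$. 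Plugging this into the definition of $Q^{m}f(x)$ and interchanging the (finite) sum with the integral yields
\[
Q^{m}f(x)\;=\;\sum_{\Vert\beta\Vert_1<m} C_\beta\,x^{\beta},\qquad C_\beta\;:=\;\sum_{\alpha:\,\beta\le\alpha,\,\Vert\alpha\Vert_1<m}\int_B c_{\alpha,\beta}(y)\phi(y)\,\diff y.
\]
The exchange is trivially valid because the sum is finite, and each coefficient integral is finite: $\phi$ has compact support $\bar{B}\subset\Omega$, the map $y\mapsto y^{\alpha-\beta}$ is continuous on the compact set $\bar{B}$, and $D^\alpha f\in L^\infty(\Omega)$ for $\Vert\alpha\Vert_1\le m-1$ by the assumption $f\in W^{m-1,\infty}(\Omega)$.

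Since each $C_\beta$ is a real constant independent of $x$ and the indices range only over $\Vert\beta\Vert_1<m$, the formula exhibits $Q^{m}f(x)$ as a polynomial in $x$ of total degree strictly less than $m$, which is the claim. There is essentially no obstacle beyond careful multi-index bookkeeping; the only assumption used nontrivially is $f\in W^{m-1,\infty}(\Omega)$, ensuring integrability of every $D^\alpha f$ against $\phi$, together with the compactness of $\bar{B}$ in $\Omega$ so that the polynomial factor $y^{\alpha-\beta}$ stays bounded on the domain of integration.
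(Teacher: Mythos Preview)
The paper does not give its own proof of this lemma; it merely states the result and attributes it to \citet{guhring2020error}. Your direct verification via the multi-index binomial expansion of $(x-y)^\alpha$ and interchange of the finite sum with the integral is correct and is the standard argument for this fact.
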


\begin{definition} [Star-shaped set]
    Let $\Omega, B \subset \sR^d$. We say $\Omega$ is star-shaped with respect to $B$ if for all $x \in \Omega$, the closed convex hull of $\{x \} \cup B$ is a subset of $\Omega$.
\end{definition}

\begin{definition} [Chunkiness parameter] \label{def:chun-para}
    Suppose that $\Omega \subset \sR^d$ has diameter $d_{\Omega}$ and is star-shaped with respect to a ball $B$.
    Let
    \begin{align*}
        r_{\max} := \sup \{ r > 0: \Omega \text{ is star-shaped with respect to a ball of radius } r \}.
    \end{align*}
    Then the chunkiness parameter of $\Omega$ is defined by $\gamma := d_{\Omega} / r_{\max}$.
\end{definition}

\begin{lemma} [Bramble-Hilbert, Lemma 4.3.8 in \citet{brenner2008polynomial}] \label{lm:bramble-hilbert}
    Let $B$ be a ball in $\Omega \subset \sR^d$ such that $\Omega$ is star-shaped with respect to $B$ and such that its radius $r > r_{\max} / 2$, where $r_{\max}$ is defined in Definition \ref{def:chun-para}.
    Moreover, let $d_{\Omega}$ be the diameter of $\Omega$, $\gamma$ be the chunkiness parameter of $\Omega$, and $Q^{m} f$ be the Taylor polynomial of order $m$ of $f$ averaged over $B$ for any $f \in W^{m, \infty}(\Omega)$.
    Then there exists a constant $C(d, m, \gamma) > 0$ such that
    \begin{align*}
        \vert f - Q^{m} f \vert_{W^{k, \infty}(\Omega)}
        \le C(d, m, \gamma) d_{\Omega}^{m - k} \vert f \vert_{W^{m, \infty}(\Omega)}, \quad k = 0, 1, \cdots, m.
    \end{align*}
\end{lemma}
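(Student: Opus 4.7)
My plan is to follow the classical derivation of the Bramble--Hilbert estimate via an explicit integral representation of the Taylor remainder, and to control the resulting convolution-type kernel using the chunkiness hypothesis. The starting point is the standard Taylor formula with integral remainder: for $f\in C^{m}(\bar{\Omega})$ and any $x,y\in\Omega$ such that the segment $[y,x]$ lies in $\Omega$, one has
\[
f(x) \;=\; T_y^{m} f(x) \;+\; m\sum_{\Vert\alpha\Vert_1=m}\frac{(x-y)^{\alpha}}{\alpha!}\int_{0}^{1}(1-s)^{m-1}\,D^{\alpha}f\bigl(y+s(x-y)\bigr)\,\diff s,
\]
which extends to $f\in W^{m,\infty}(\Omega)$ by mollification/density. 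The star-shapedness of $\Omega$ with respect to $B$ is precisely what guarantees $[y,x]\subset\Omega$ for every $y\in B$ and $x\in\Omega$, so the integrand on the right is well defined.

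Next I would average against the cut-off $\phi$ supported on $\bar{B}$ and swap the order of integration. After the change of variables $z=y+s(x-y)$ for each fixed $s\in(0,1]$, the remainder $R^{m}f(x):=f(x)-Q^{m}f(x)$ takes the form
\[
R^{m}f(x) \;=\; \sum_{\Vert\alpha\Vert_1=m}\int_{\Omega}k_{\alpha}(x,z)\,D^{\alpha}f(z)\,\diff z,
\]
where $k_{\alpha}(x,z)$ is an explicit kernel built from $\phi$, a power of $|x-z|$, and the Jacobian of the change of variables. Using the bound $\Vert\phi\Vert_{L^{\infty}}\lesssim C(d)\,r^{-d}$ (available from Example \ref{ex:cut-off-1}) together with $r > r_{\max}/2 \ge d_{\Omega}/(2\gamma)$, I would establish the pointwise kernel bound $|k_{\alpha}(x,z)|\lesssim_{d,m,\gamma}|x-z|^{m-d}$. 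Since $\Omega\subset \mathbb{B}^{d}(x,d_{\Omega},\Vert\cdot\Vert_{2})$, integrating in spherical coordinates gives $\int_{\Omega}|k_{\alpha}(x,z)|\,\diff z\lesssim_{d,m,\gamma} d_{\Omega}^{m}$, which immediately yields the $k=0$ case
\[
\Vert f-Q^{m}f\Vert_{L^{\infty}(\Omega)} \;\le\; C(d,m,\gamma)\,d_{\Omega}^{m}\,\vert f\vert_{W^{m,\infty}(\Omega)}.
\]

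For the general $k\in\{0,1,\dots,m\}$ case I would invoke the identity $D^{\beta}Q^{m}f = Q^{m-\Vert\beta\Vert_1}(D^{\beta}f)$, valid because $Q^{m}f$ is a polynomial of degree less than $m$ (Lemma \ref{lm:average-polyn-bd}) and the averaging kernel is fixed (cf.\ Proposition~4.1.17 of Brenner--Scott). Applying the $k=0$ estimate already established to $D^{\beta}f\in W^{m-k,\infty}(\Omega)$ for each multi-index $\beta$ with $\Vert\beta\Vert_1=k$ and taking the maximum over such $\beta$ produces exactly the claimed bound with the $d_{\Omega}^{m-k}$ scaling.

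The hard part will be the kernel estimate in the middle step: one has to track carefully how the chunkiness $\gamma$ enters through the constraint $r>r_{\max}/2$, how the Jacobian of the change of variables $z=y+s(x-y)$ produces the factor $|x-z|^{m-d}$ (this requires splitting the $s$-integral to handle the integrable singularity at $s=0$ by absorbing $s^{-d}$ against $(1-s)^{m-1}(x-y)^{\alpha}$ with $\Vert\alpha\Vert_1=m$), and how the support constraint $y\in B$ interacts with the domain of $D^{\alpha}f$. Everything else, in particular the passage from $k=0$ to general $k$ via the polynomial differentiation identity, is essentially bookkeeping once the sharp kernel bound is in hand.
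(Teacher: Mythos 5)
The paper offers no proof of this statement at all—it is imported verbatim as Lemma 4.3.8 of \citet{brenner2008polynomial}—and your sketch reproduces exactly the standard argument from that source (integral-remainder representation under star-shapedness, the $|x-z|^{m-d}$ kernel bound with the chunkiness parameter entering through $r>r_{\max}/2$, then the commutation $D^{\beta}Q^{m}f=Q^{m-\Vert\beta\Vert_1}(D^{\beta}f)$ to pass from $k=0$ to general $k$), so it is correct and essentially the same approach as the cited reference. One small wording caveat: smooth functions are not norm-dense in $W^{m,\infty}(\Omega)$, so justify the extension of the Taylor identity by mollifying and passing to the limit a.e. in the averaged identity (or by noting $W^{m,\infty}$ functions are $C^{m-1,1}$ along segments) rather than by ``density.''
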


\section{Additional lemmas on approximation}

\begin{lemma} [Corollary B.5 in \citet{guhring2020error}] \label{lm:lip-composition}
    Let $d, m \in \mathbb{N}$ and $\Omega_1 \subset \sR^d, \Omega_2 \subset \sR^m$ both be open, bounded, and convex.
    If $f \in W^{1, \infty}(\Omega_1; \sR^m)$ and $g \in W^{1, \infty}(\Omega_2)$ with $\mathrm{rad}(f) \subset \Omega_2$, then for the composition $g \circ f$, it holds that $g \circ f \in W^{1, \infty}(\Omega_1)$ and we have
    \begin{align*}
        \vert g \circ f \vert_{W^{1, \infty}(\Omega_1)} \le \sqrt{d} m \vert g \vert_{W^{1, \infty}(\Omega_2)} \vert f \vert_{W^{1, \infty}(\Omega_1; \sR^m)}
    \end{align*}
    and
    \begin{align*}
        \Vert g \circ f \Vert_{W^{1, \infty}(\Omega_1)} \le \sqrt{d} m \max \{ \Vert g \Vert_{L^{\infty}(\Omega_2)},  \vert g \vert_{W^{1, \infty}(\Omega_2)} \vert f \vert_{W^{1, \infty}(\Omega_1; \sR^m)} \}.
    \end{align*}
\end{lemma}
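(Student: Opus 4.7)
The plan is to establish the composition estimate by combining Rademacher differentiability with a coordinatewise application of the chain rule, and then extracting $L^\infty$ bounds on each partial derivative of $g\circ f$.

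First, since $\Omega_1$ is open, bounded, and convex and $f\in W^{1,\infty}(\Omega_1;\mathbb{R}^m)$, a standard consequence of convexity together with the $L^\infty$ bound on the weak derivatives is that (a representative of) $f$ is Lipschitz on $\Omega_1$, with component Lipschitz constants controlled by $|f|_{W^{1,\infty}(\Omega_1;\mathbb{R}^m)}$. Similarly $g$ admits a Lipschitz representative on $\Omega_2$. The hypothesis $\mathrm{rad}(f)\subset\Omega_2$ makes $g\circ f$ well defined, Lipschitz on $\Omega_1$, and hence in $W^{1,\infty}(\Omega_1)$ by Rademacher's theorem applied on the convex domain $\Omega_1$. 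The first thing I would verify is the a.e.\ chain rule
\[
\partial_i(g\circ f)(x) \;=\; \sum_{j=1}^{m} (\partial_j g)(f(x))\,\partial_i f_j(x), \qquad i=1,\ldots,d,
\]
valid for a.e.\ $x\in\Omega_1$. This is where measure-theoretic care is needed: although Rademacher gives a.e.\ classical differentiability of $f$ and of $g$, one must rule out that $f$ maps a positive-measure set into the exceptional set where $g$ fails to be differentiable. For Lipschitz $f$ this follows from the chain rule for Sobolev functions composed with Lipschitz maps, or by a standard approximation argument using mollifications.

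Second, once the chain rule is in hand, the semi-norm bound is essentially a one-line estimate. For each $i$ and a.e.\ $x$,
\[
|\partial_i(g\circ f)(x)|
\;\le\; \sum_{j=1}^{m} |(\partial_j g)(f(x))|\,|\partial_i f_j(x)|
\;\le\; m\,|g|_{W^{1,\infty}(\Omega_2)}\,|f|_{W^{1,\infty}(\Omega_1;\mathbb{R}^m)},
\]
using $|\partial_j g(f(x))|\le |g|_{W^{1,\infty}(\Omega_2)}$ for every $j$ and $|\partial_i f_j(x)|\le |f|_{W^{1,\infty}(\Omega_1;\mathbb{R}^m)}$ for every $i,j$. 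Taking the essential supremum in $x$ and the maximum in $i$ yields the stated semi-norm bound up to the factor $\sqrt{d}$; the $\sqrt{d}$ slack accommodates the conversion between the coordinatewise bound $\max_i\|\partial_i(g\circ f)\|_{L^\infty}$ and a Euclidean-gradient estimate of the form $\|\nabla(g\circ f)\|_2\le \|Df^\top\|_{2,2}\,\|\nabla g\circ f\|_2$, which in turn costs a $\sqrt{d}$ when passed back through the $\ell^\infty/\ell^2$ inequalities used in the lemma's convention.

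Third, the full $W^{1,\infty}$ norm bound follows by combining the semi-norm bound with the trivial $L^\infty$ estimate $\|g\circ f\|_{L^\infty(\Omega_1)}\le \|g\|_{L^\infty(\Omega_2)}$ and invoking the definition $\|h\|_{W^{1,\infty}(\Omega_1)}=\max\{\|h\|_{L^\infty(\Omega_1)},|h|_{W^{1,\infty}(\Omega_1)}\}$. The main (and essentially only) obstacle is the measure-theoretic justification of the a.e.\ chain rule, which I would handle either by citing a standard result for compositions of Lipschitz maps with $W^{1,\infty}$ functions on convex domains, or, if a self-contained argument is desired, by mollifying $g$ to smooth $g_\varepsilon$, applying the classical chain rule to $g_\varepsilon\circ f$, taking the uniform bounds from $\|\nabla g_\varepsilon\|_{L^\infty}\le|g|_{W^{1,\infty}(\Omega_2)}$, and passing to the limit $\varepsilon\downarrow 0$ using dominated convergence; everything else is routine index-chasing.
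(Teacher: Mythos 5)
The paper does not actually prove this lemma: it is imported verbatim as Corollary B.5 of \citet{guhring2020error}, so there is no internal proof to compare against, and your blind attempt should be judged as a self-contained replacement. On that basis your argument is essentially correct. Convexity plus bounded weak derivatives gives Lipschitz representatives of $f$ and $g$, the range condition makes $g\circ f$ a well-defined Lipschitz function, hence an element of $W^{1,\infty}(\Omega_1)$, and the coordinatewise estimate $|\partial_i(g\circ f)|\le \sum_{j=1}^m |\partial_j g(f(x))|\,|\partial_i f_j(x)|\le m\,|g|_{W^{1,\infty}(\Omega_2)}|f|_{W^{1,\infty}(\Omega_1;\mathbb{R}^m)}$ is stronger than the stated bound, so the extra $\sqrt{d}$ is indeed harmless slack (it is an artifact of the gradient-based convention in the cited source, not something you need to reproduce). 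The measure-theoretic obstacle you flag for the a.e.\ chain rule is the one genuine subtlety, and your mollification fallback ($g_\varepsilon\circ f$ with $\|\nabla g_\varepsilon\|_{L^\infty}\le|g|_{W^{1,\infty}(\Omega_2)}$, then $\varepsilon\downarrow 0$) is an adequate way to close it; note, though, that you can bypass the chain rule entirely by estimating difference quotients of the Lipschitz function $g\circ f$ directly: $|g(f(x+he_i))-g(f(x))|\le \sqrt{m}\,|g|_{W^{1,\infty}(\Omega_2)}\,\|f(x+he_i)-f(x)\|_2\le m\,|g|_{W^{1,\infty}(\Omega_2)}|f|_{W^{1,\infty}(\Omega_1;\mathbb{R}^m)}\,|h|$, which bounds every coordinate Lipschitz constant and hence the seminorm, with no exceptional sets to worry about. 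Two minor points to tighten: in the $L^\infty$ step, work with the continuous (Lipschitz) representative of $g$ so that $\|g\circ f\|_{L^\infty(\Omega_1)}\le\|g\|_{L^\infty(\Omega_2)}$ is literally justified rather than only up to null sets; and when mollifying, either extend $g$ off $\Omega_2$ (Kirszbraun/McShane) or shrink the domain, since $\mathrm{rad}(f)$ may come arbitrarily close to $\partial\Omega_2$. With those small repairs your proof stands on its own.
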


\begin{lemma} [Corollary B.6 in \citet{guhring2020error}]
    Let $f \in W^{1, \infty}(\Omega)$ and $g \in W^{1, \infty}(\Omega)$.
    Then $fg \in W^{1, \infty}(\Omega)$ and we have
    \begin{align*}
        \vert f g \vert_{W^{1, \infty}(\Omega)} \le \vert f \vert_{W^{1, \infty}(\Omega)} \Vert g \Vert_{L^{\infty}(\Omega)} + \Vert f \Vert_{L^{\infty}(\Omega)} \vert g \vert_{W^{1, \infty}(\Omega)}
    \end{align*}
    and
    \begin{align*}
        \Vert f g \Vert_{W^{1, \infty}(\Omega)} \le \Vert f \Vert_{W^{1, \infty}(\Omega)} \Vert g \Vert_{L^{\infty}(\Omega)} + \Vert f \Vert_{L^{\infty}(\Omega)} \Vert g \Vert_{W^{1, \infty}(\Omega)}.
    \end{align*}
\end{lemma}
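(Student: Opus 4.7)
The plan is to prove membership $fg \in W^{1,\infty}(\Omega)$ by establishing the product rule for weak derivatives, and then to derive the two stated inequalities via the triangle inequality applied pointwise almost everywhere. First I would observe that $fg \in L^\infty(\Omega)$ with the trivial bound $\Vert fg \Vert_{L^\infty(\Omega)} \le \Vert f \Vert_{L^\infty(\Omega)} \Vert g \Vert_{L^\infty(\Omega)}$, since a product of two essentially bounded functions is essentially bounded.

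The central step is to show that for each $i \in \{1, 2, \ldots, d\}$, the weak derivative satisfies $D_i(fg) = (D_i f)\, g + f\, (D_i g)$ almost everywhere in $\Omega$. My approach would be via mollification: let $\rho_\epsilon$ be a standard mollifier, set $f_\epsilon := \rho_\epsilon * f$ and $g_\epsilon := \rho_\epsilon * g$ on any compactly contained subdomain, so that $f_\epsilon, g_\epsilon \in C^\infty$ are uniformly bounded (in $L^\infty$) by $\Vert f \Vert_{L^\infty}$ and $\Vert g \Vert_{L^\infty}$ respectively, with analogous uniform bounds on their classical derivatives. The classical product rule gives $\partial_i (f_\epsilon g_\epsilon) = (\partial_i f_\epsilon) g_\epsilon + f_\epsilon (\partial_i g_\epsilon)$. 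Testing against any $\varphi \in C_c^\infty(\Omega)$ and passing to the limit as $\epsilon \downarrow 0$, using $L^p_{\mathrm{loc}}$ convergence of mollifications together with dominated convergence justified by the uniform $L^\infty$ bounds, yields the product rule in the weak sense.

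Once the product rule is established, for each $i$ I take essential suprema to obtain
\begin{equation*}
\Vert D_i(fg) \Vert_{L^\infty(\Omega)}
\le \Vert D_i f \Vert_{L^\infty(\Omega)} \Vert g \Vert_{L^\infty(\Omega)}
  + \Vert f \Vert_{L^\infty(\Omega)} \Vert D_i g \Vert_{L^\infty(\Omega)}
\le \vert f \vert_{W^{1,\infty}(\Omega)} \Vert g \Vert_{L^\infty(\Omega)}
  + \Vert f \Vert_{L^\infty(\Omega)} \vert g \vert_{W^{1,\infty}(\Omega)}.
\end{equation*}
Taking the maximum over $i \in \{1,\ldots,d\}$ delivers the semi-norm bound. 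The full Sobolev-norm bound follows by combining the $L^\infty$ bound with the semi-norm bound and observing that $\Vert f \Vert_{L^\infty} \le \Vert f \Vert_{W^{1,\infty}}$ and $\vert f \vert_{W^{1,\infty}} \le \Vert f \Vert_{W^{1,\infty}}$, so that the right-hand side of the full-norm inequality dominates both.

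The main obstacle is the rigorous justification of the distributional product rule for factors that are merely Lipschitz (i.e., in $W^{1,\infty}$ rather than $C^1$): one must be careful that the mollifications converge in a topology strong enough to pass to the limit in the nonlinear product $f_\epsilon g_\epsilon$ and in the products $(\partial_i f_\epsilon) g_\epsilon$ and $f_\epsilon (\partial_i g_\epsilon)$. This is handled by noting that mollifications of $W^{1,\infty}$ functions on a compactly contained subdomain converge in $W^{1,p}$ for every finite $p$, while remaining uniformly bounded in $L^\infty$, so dominated convergence applies when testing against compactly supported smooth functions. The rest of the proof is an essentially immediate consequence of the triangle inequality.
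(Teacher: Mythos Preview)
Your proposal is correct. Note, however, that the paper does not supply its own proof of this lemma: it is simply quoted as Corollary~B.6 from \citet{guhring2020error} and used as a supporting result. Your mollification-based argument for the weak product rule, followed by taking essential suprema and maxima over coordinates, is the standard route and is essentially what one would find in the cited reference; there is nothing to compare against in the present paper.
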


\begin{lemma} [Proposition 4 in \citet{yang2023nearly}] \label{lm:times-nn}
    For any $N, L \in \mathbb{N}$ and $a > 0$, there exists a deep ReLU network $\phi_{\times, a}$ with width $15 N$ and depth $2L$ such that
    $\Vert \phi_{\times, a} \Vert_{W^{1, \infty}((-a, a)^2)} \le 12 a^2$ and
    \begin{align*}
        \Vert \phi_{\times, a}(x, y) - xy \Vert_{W^{1, \infty}((-a, a)^2)} \le 6 a^2 N^{-L}.
    \end{align*}
    Furthermore, it holds that
    \begin{align*}
        \phi_{\times, a}(0, y) = \frac{\partial \phi_{\times, a}(0, y)}{\partial y} = 0 \text{ for $y \in (-a, a)$}.
    \end{align*}
\end{lemma}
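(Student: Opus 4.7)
\textbf{Proof proposal for Lemma \ref{lm:times-nn}.}

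The plan is to reduce the bivariate multiplication approximation to a scalar squaring approximation via the polarization identity $xy = \tfrac{1}{4}\bigl[(x+y)^2 - (x-y)^2\bigr]$. First, I would build a deep ReLU network $\psi:\sR \to \sR$ of width $\gO(N)$ and depth $\gO(L)$ that approximates $t \mapsto t^2$ on $[-2a,2a]$ with $W^{1,\infty}$ error $\gO(a^2 N^{-L})$ and with $\Vert \psi \Vert_{W^{1,\infty}((-2a,2a))} \lesssim a^2 \vee a$. The construction is the classical Yarotsky-style one: implement the tent map $t \mapsto 2\varrho(t) - 4\varrho(t - 1/2)$, iterate it $L$ times to get nested sawtooth functions, and combine these iterates with geometrically decaying coefficients to approximate the quadratic on $[0,1]$; finally compose with the ReLU-implementable absolute value $|t| = \varrho(t) + \varrho(-t)$ and rescale the domain by $1/(2a)$ and the codomain by $(2a)^2$. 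The absolute-value preprocessing forces $\psi$ to be \emph{even}, so that its weak derivative satisfies $\psi'(-t) = -\psi'(t)$ for a.e.\ $t$.

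Next I would set
\begin{equation*}
    \phi_{\times,a}(x,y) \ := \ \tfrac{1}{4}\bigl[\psi(x+y) - \psi(x-y)\bigr],
\end{equation*}
which is realized by wiring two parallel copies of $\psi$ fed by the affine maps $(x,y) \mapsto x+y$ and $(x,y) \mapsto x-y$ and then taking a linear combination at the output. The resulting deep ReLU network inherits the depth of $\psi$ up to a constant factor (yielding $2L$ after tightening the sawtooth implementation) and a width of at most $15N$ once one accounts for the two parallel branches, the absolute-value gadget, and the auxiliary channels that propagate the inputs through each layer.

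For the approximation estimate I would plug in the polarization identity and use the triangle inequality:
\begin{equation*}
    \phi_{\times,a}(x,y) - xy \ = \ \tfrac{1}{4}\bigl[(\psi(x+y) - (x+y)^2) - (\psi(x-y) - (x-y)^2)\bigr],
\end{equation*}
and similarly for the weak gradient. The $W^{1,\infty}$ bound on $\psi - (\cdot)^2$ on $[-2a,2a]$ then yields $\Vert \phi_{\times,a}(x,y) - xy \Vert_{W^{1,\infty}((-a,a)^2)} \le 6 a^2 N^{-L}$, and $\Vert \phi_{\times,a} \Vert_{W^{1,\infty}((-a,a)^2)} \le 12 a^2$ follows by combining this with $\Vert xy \Vert_{W^{1,\infty}((-a,a)^2)} \le 2a^2$ (absorbing the $a$ versus $a^2$ scaling into the constant). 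The boundary conditions are immediate from the evenness of $\psi$: $\phi_{\times,a}(0,y) = \tfrac{1}{4}[\psi(y) - \psi(-y)] = 0$, and $\partial_y \phi_{\times,a}(0,y) = \tfrac{1}{4}[\psi'(y) + \psi'(-y)] = 0$ almost everywhere, hence everywhere after choosing the continuous representative.

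The main obstacle will be the careful bookkeeping needed to match the exact width $15N$, depth $2L$, and the constant $12 a^2$ in the $W^{1,\infty}$-bound. The Lipschitz constant of $\psi$ on $[-2a,2a]$ scales like $a$, which is where the extra factor of $a$ in the Sobolev norm bound arises; tracking this through the tent-map iteration, the absolute-value preprocessing, and the rescaling — while also showing that the Lipschitz constants of each sawtooth iterate do not blow up in $L$ — is the delicate part. Once these constants are pinned down, the remaining steps are routine linear combinations and affine preprocessings in the network architecture.
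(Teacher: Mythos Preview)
The paper does not actually prove this lemma; it quotes it verbatim as Proposition~4 of \citet{yang2023nearly} and uses it as a black box (the only related in-paper proof is the one-line remark for Lemma~\ref{lm:times-nn-not-cube} that ``the proof idea is similar''). So there is no in-paper argument to compare against.

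That said, your proposal is the standard construction behind this result and is essentially what the cited reference does: reduce bivariate multiplication to univariate squaring via the polarization identity, build the squaring network from iterated tent maps \`a la Yarotsky, and exploit the evenness of the resulting $\psi$ to get the vanishing conditions at $x=0$. Your identification of the delicate part---tracking the exact constants $15N$, $2L$, $6a^2$, $12a^2$ through the rescaling and the parallel branches---is accurate; in particular, the $W^{1,\infty}$ bound $12a^2$ is the right order because the partial derivatives of $\phi_{\times,a}$ are bounded by roughly $\tfrac{1}{2}\Vert \psi' \Vert_{L^\infty((-2a,2a))} \lesssim a$, and the sup bound is $\lesssim a^2$, so for the regime $a \gtrsim 1$ used throughout the paper (e.g.\ $a = B_1 \lesssim d^{5/2}$ or $a = B_3 \lesssim A$) the stated $12a^2$ dominates both. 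One small point worth being explicit about: the Yarotsky sawtooth iterates have Lipschitz constants $2^k$ individually, but the linear combination with coefficients $2^{-2k}$ telescopes so that the Lipschitz constant of the full sum stays bounded uniformly in $L$; this is the key to keeping the $W^{1,\infty}$ norm from blowing up and is worth stating rather than leaving implicit.
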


\begin{lemma} \label{lm:times-nn-not-cube}
    For any $N, L \in \mathbb{N}$ and $a, b > 0$, there exists a deep ReLU network $\phi_{\times, (a, b)}$ with width $15 N$ and depth $2L$ such that
    $\Vert \phi_{\times, (a, b)} \Vert_{W^{1, \infty}((-a, a) \times (-b, b))} \le 12 ab$ and
    \begin{align*}
        \Vert \phi_{\times, (a, b)}(x, y) - xy \Vert_{W^{1, \infty}((-a, a) \times (-b, b))} \le 6 ab N^{-L}.
    \end{align*}
    Furthermore, it holds that
    \begin{align*}
        \phi_{\times, (a, b)}(x, 0) = \frac{\partial \phi_{\times, (a, b)}(x, 0)}{\partial x} = 0 \text{ for $x \in (-a, a)$}.
    \end{align*}
\end{lemma}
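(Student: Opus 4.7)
The plan is to reduce Lemma~\ref{lm:times-nn-not-cube} to the symmetric-domain version Lemma~\ref{lm:times-nn} via a linear rescaling of a single coordinate. Without loss of generality I take $a\ge b>0$, since the opposite case follows by interchanging the roles of $x$ and $y$. Setting $u:=(b/a)x$ sends $(-a,a)$ to $(-b,b)$, so $(u,y)\in(-b,b)^2$ whenever $(x,y)\in(-a,a)\times(-b,b)$. I then take $\phi_{\times,b}$ furnished by Lemma~\ref{lm:times-nn} (with parameter $b$, width $15N$, depth $2L$) and define
\begin{equation*}
  \phi_{\times,(a,b)}(x,y)\;:=\;\frac{a}{b}\,\phi_{\times,b}\!\left(\tfrac{b}{a}x,\,y\right).
\end{equation*}
The two auxiliary affine maps (pre-multiplying $x$ by $b/a$ and multiplying the output by $a/b$) can be folded into the first and last weight matrices of $\phi_{\times,b}$, so the width $15N$ and depth $2L$ are preserved.

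Using the identity $xy=(a/b)\,uy$ and the chain rule, the error and its partial derivatives satisfy
\begin{equation*}
  \phi_{\times,(a,b)}(x,y)-xy=\tfrac{a}{b}\bigl[\phi_{\times,b}(u,y)-uy\bigr],
\end{equation*}
\begin{equation*}
  \partial_x[\phi_{\times,(a,b)}(x,y)-xy]=\partial_u[\phi_{\times,b}(u,y)-uy],\qquad \partial_y[\phi_{\times,(a,b)}(x,y)-xy]=\tfrac{a}{b}\,\partial_y[\phi_{\times,b}(u,y)-uy].
\end{equation*}
Invoking Lemma~\ref{lm:times-nn} on $(-b,b)^2$, the $L^\infty$ norms of the three right-hand sides are bounded respectively by $(a/b)\cdot 6b^2 N^{-L}=6abN^{-L}$, by $6b^2N^{-L}\le 6abN^{-L}$ (using $b\le a$), and by $(a/b)\cdot 6b^2N^{-L}=6abN^{-L}$. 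Hence $\Vert\phi_{\times,(a,b)}-xy\Vert_{W^{1,\infty}((-a,a)\times(-b,b))}\le 6abN^{-L}$, and combining this with the elementary pointwise bounds $|xy|\le ab$, $|\partial_x(xy)|=|y|\le b$, and $|\partial_y(xy)|=|x|\le a$ yields $\Vert\phi_{\times,(a,b)}\Vert_{W^{1,\infty}}\le 12ab$ exactly as in Lemma~\ref{lm:times-nn}.

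The boundary identities transfer directly: plugging $y=0$ gives $\phi_{\times,(a,b)}(x,0)=(a/b)\phi_{\times,b}(bx/a,0)=0$ and likewise $\partial_x\phi_{\times,(a,b)}(x,0)=\partial_u\phi_{\times,b}(bx/a,0)=0$ by the corresponding property of $\phi_{\times,b}$. The only subtle point is the WLOG reduction: if I had instead taken $a<b$ and applied the very same construction, the $\partial_y$ error would remain $6abN^{-L}$ but the $\partial_x$ error would swell to $6b^2N^{-L}>6abN^{-L}$. Choosing which variable to rescale according to whether $a\ge b$ or $b\ge a$ is what allows the uniform bound $6abN^{-L}$ to hold for both partial derivatives simultaneously, and this is essentially the only place where care is needed.
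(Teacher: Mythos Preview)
Your reduction to the square case via a linear rescaling is a clean idea and is different from the paper, which simply says to rerun the construction of Yang et al.\ on the rectangle. Once fixed, your route is arguably more economical. There is, however, a genuine gap in the boundary-identity step, and a smaller wrinkle in how you obtain the $12ab$ bound.

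\textbf{The boundary-condition gap.} In your main case $a\ge b$ you set $\phi_{\times,(a,b)}(x,y)=(a/b)\,\phi_{\times,b}((b/a)x,\,y)$ and then claim $\phi_{\times,b}(bx/a,0)=0$ and $\partial_u\phi_{\times,b}(bx/a,0)=0$ ``by the corresponding property of $\phi_{\times,b}$''. But Lemma~\ref{lm:times-nn} only asserts vanishing when the \emph{first} argument of $\phi_{\times,b}$ is zero (together with vanishing of the derivative in the \emph{second} argument), i.e.\ $\phi_{\times,b}(0,v)=\partial_v\phi_{\times,b}(0,v)=0$. It says nothing about $\phi_{\times,b}(u,0)$. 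So as written the step does not go through. The fix is trivial: feed $y$ into the first slot. Define instead
\[
\phi_{\times,(a,b)}(x,y)\;:=\;\tfrac{a}{b}\,\phi_{\times,b}\bigl(y,\,\tfrac{b}{a}x\bigr).
\]
Then at $y=0$ one reads off $\phi_{\times,(a,b)}(x,0)=(a/b)\phi_{\times,b}(0,(b/a)x)=0$ and $\partial_x\phi_{\times,(a,b)}(x,0)=\partial_v\phi_{\times,b}(0,v)\big|_{v=(b/a)x}=0$ directly from Lemma~\ref{lm:times-nn}. Your error calculations carry over verbatim with this swap, and your WLOG splitting according to $a\gtrless b$ continues to be the right move for the derivative bounds. (Incidentally, your Case~2 ``interchange $x$ and $y$'' actually already places $y$ in the first slot and is fine; the issue is only in the displayed Case~1 construction.)

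\textbf{The $12ab$ bound.} Deriving $\|\phi_{\times,(a,b)}\|_{W^{1,\infty}}\le 12ab$ by adding $\|xy\|_{W^{1,\infty}}$ to the error does not quite work for small $a,b$, since for instance $|\partial_y(xy)|=|x|\le a$ need not be $\le ab$. The clean argument is to scale the bound from Lemma~\ref{lm:times-nn} directly: with the corrected definition and $a\ge b$, one has $|\phi_{\times,(a,b)}|\le (a/b)\cdot 12b^2=12ab$, $|\partial_y\phi_{\times,(a,b)}|\le (a/b)\cdot 12b^2=12ab$, and $|\partial_x\phi_{\times,(a,b)}|\le 12b^2\le 12ab$.
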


\begin{proof}
    The proof idea is similar to that of Proposition 4 in \citet{yang2023nearly}.
\end{proof}

\begin{lemma} [Proposition 5 in \citet{yang2023nearly}] \label{lm:multi-prod-nn}
    For any $N, L, s \in \mathbb{N}$ with $s \ge 2$, there exists a deep ReLU network $\phi_{\times}$ with width $\mathcal{O}(s \vee N)$ and depth $\mathcal{O}(s^2 L)$ such that
    \begin{align*}
        \Vert \phi_{\times}(x) - x_1 x_2 \cdots x_s \Vert_{W^{1, \infty}((0, 1)^s)} \lesssim s (N+1)^{-7 s L}.
    \end{align*}
    Furthermore, for any $i = 1, 2, \cdots, s$, if $x_i = 0$, then we have
    \begin{align*}
        \phi_{\times}(x_1, x_2, \cdots, x_{i-1}, 0, x_{i+1}, \cdots, x_s) = \frac{\partial \phi_{\times}(x_1, x_2, \cdots, x_{i-1}, 0, x_{i+1}, \cdots, x_s)}{\partial x_j} = 0, \ i \neq j.
    \end{align*}
\end{lemma}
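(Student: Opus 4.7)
The plan is to build $\phi_{\times}$ by sequentially composing the two-variable product approximator from Lemma \ref{lm:times-nn}, accumulating the factors one at a time. First, I invoke Lemma \ref{lm:times-nn} with $a = 2$ and with the depth parameter replaced by $7sL$, which yields a bivariate ReLU network $\psi$ of width $15N$ and depth $14sL$ satisfying $\|\psi\|_{W^{1,\infty}((-2,2)^2)} \le C$ (an absolute constant), the zero properties $\psi(0,v) = \partial_v \psi(0,v) = 0$, and
\[
\|\psi(u,v) - uv\|_{W^{1,\infty}((-2,2)^2)} \le 24\, N^{-7sL}.
\]
Because the corresponding zero property in the other argument is needed for the lemma's conclusion, I symmetrize: define $\tilde\psi(u,v) := \tfrac{1}{2}(\psi(u,v)+\psi(v,u))$, which can be realized by a network of the same asymptotic width and depth and inherits the approximation and Lipschitz bounds. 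The symmetrized $\tilde\psi$ also satisfies $\tilde\psi(u,0) = \tilde\psi(0,v) = 0$ and $\partial_u\tilde\psi(u,0) = \partial_v\tilde\psi(0,v) = 0$, either by direct symmetry or by combining with a mirrored version (or equivalently, by combining with Lemma \ref{lm:times-nn-not-cube} applied analogously).

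With the building block in hand, I set $y_1 := x_1$ and $y_k := \tilde\psi(y_{k-1}, x_k)$ for $k = 2,\dots,s$, and take $\phi_{\times}(x) := y_s$. The yet-unused inputs $x_{k+1},\dots,x_s$ are forwarded through the network via ReLU identity skip connections, occupying $\mathcal{O}(s)$ neurons per layer; this yields total width $\mathcal{O}(s \vee N)$ and total depth $(s-1)\cdot 14sL = \mathcal{O}(s^2 L)$, matching the stated bounds. To estimate the error, let $P_k := x_1 \cdots x_k$ and $E_k := \|y_k - P_k\|_{W^{1,\infty}((0,1)^s)}$. Writing
\[
y_k - P_k = \bigl[\tilde\psi(y_{k-1},x_k) - y_{k-1}x_k\bigr] + x_k\,(y_{k-1} - P_{k-1}),
\]
applying the product rule in $W^{1,\infty}$, the approximation bound on $\tilde\psi$, the Lipschitz bound $|\tilde\psi|_{W^{1,\infty}} \le C$, and $|x_k| \le 1$, I obtain the recurrence $E_k \le C_1 N^{-7sL} + C_2 E_{k-1}$ with absolute constants $C_1, C_2$. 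Iterating gives $E_s \lesssim s\,N^{-7sL} \asymp s\,(N+1)^{-7sL}$. An inductive verification that $|y_k| \le 2$ (from $|P_k| \le 1$ and $E_k \ll 1$) ensures that all intermediate evaluations stay within the domain $(-2,2)^2$ where the $\tilde\psi$ bounds hold.

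For the zero property, suppose $x_i = 0$ for some $i$. If $i = 1$, then $y_1 = 0$ and inductively $y_k = \tilde\psi(0, x_k) = 0$ for all $k \ge 1$; if $i \ge 2$, then $y_i = \tilde\psi(y_{i-1}, 0) = 0$ by the symmetric zero property, and once again $y_k = 0$ for $k \ge i$. The chain rule together with $\partial_u \tilde\psi(u,0) = \partial_v \tilde\psi(0,v) = 0$ then forces $\partial \phi_{\times}/\partial x_j = 0$ for every $j \ne i$: any chain of partials must pass through the stage at which a factor is $0$, and the corresponding partial derivative of $\tilde\psi$ vanishes there. The \emph{main obstacle} is the error-propagation argument in the Sobolev seminorm: a priori, composing $s-1$ nonlinear approximators could amplify the error by a factor $C^s$, which would swamp the target bound. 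The key quantitative input that makes the argument work is that the Lipschitz constant of $\tilde\psi$ is bounded independently of $N$ and $L$, so the geometric blow-up factor $C^s$ is comfortably absorbed by the depth budget $7sL$ built into each block. A secondary technical point is securing the symmetric zero property in Step 1 without inflating depth or width, which the symmetrization $\tfrac{1}{2}(\psi(u,v)+\psi(v,u))$ handles cleanly.
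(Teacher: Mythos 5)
The paper itself offers no proof of this lemma — it is imported verbatim as Proposition 5 of \citet{yang2023nearly} — so I can only judge your argument on its own merits. Your overall strategy (chain the bivariate product block sequentially, forward unused inputs by identity neurons, run a $W^{1,\infty}$ error recursion, and get the vanishing property by induction plus the chain rule) is the standard and sensible route. But the step that secures the exact vanishing properties of the building block is wrong, and that is precisely the delicate part of the lemma: the second conclusion demands \emph{exact} zeros and zero partials (this exactness is what the paper later exploits in Claim \ref{claim:nonzero-terms} and Lemma \ref{lm:L-infty-norm-equiv}). Lemma \ref{lm:times-nn} only gives $\psi(0,v)=\partial_v\psi(0,v)=0$; nothing is asserted about $\psi(u,0)$. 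Your symmetrization $\tilde\psi(u,v):=\tfrac12(\psi(u,v)+\psi(v,u))$ then satisfies $\tilde\psi(0,v)=\tfrac12\psi(v,0)$ and $\tilde\psi(u,0)=\tfrac12\psi(u,0)$, which are only \emph{approximately} zero, so after symmetrizing you have exact vanishing in neither argument, and the same failure hits the derivative identities $\partial_u\tilde\psi(u,0)=0$, $\partial_v\tilde\psi(0,v)=0$ that your chain-rule argument needs at the stage where the zero factor enters (you need the first-argument partial to vanish when the second argument is $0$ for indices $j<i$, and the second-argument partial to vanish when the first argument is $0$ for $j>i$). Averaging with a ``mirrored version'' or with Lemma \ref{lm:times-nn-not-cube} has the same defect: a convex combination of two networks, each exact on one side only, is exact on neither. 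A correct repair does exist and is cheap — for instance take $\hat\psi(u,v):=\psi(u,v)-\psi(u,0)$, which vanishes identically (together with the relevant partials) on both axes and only doubles the approximation error — but that is not what you wrote, so as it stands the induction proving $y_k\equiv 0$ past stage $i$ and the vanishing of $\partial\phi_\times/\partial x_j$ does not go through.

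Two quantitative points also need attention. First, your final step ``$sN^{-7sL}\asymp s(N+1)^{-7sL}$'' is false: the ratio is $(1+1/N)^{7sL}$, which is $2^{7sL}$ at $N=1$; worse, at $N=1$ the per-block bound $6a^2N^{-7sL}=24$ is vacuous, so your induction that $|y_k|\le 2$ (keeping iterates inside the domain of the block) collapses. The fix is simply to instantiate the bivariate lemma with $N+1$ in place of $N$ (width $15(N+1)=\mathcal{O}(N)$), but it must be done. Second, iterating $E_k\le C_1(N+1)^{-7sL}+C_2E_{k-1}$ with $C_2>1$ (and $C_2>1$ is unavoidable from the product and composition rules) gives $E_s\lesssim C_2^{\,s}\,s\,(N+1)^{-7sL}$, and $C_2^{\,s}$ is \emph{not} absorbed into an $s$-independent constant against the stated exponent $-7sL$; you must enlarge the absolute constant in each block's depth (still depth $\mathcal{O}(s^2L)$ overall) so that the surplus decay $(N+1)^{-csL}$ with $c$ large enough dominates $C_2^{\,s}$ uniformly in $N,L\ge 1$. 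You gesture at this but assert rather than verify it; with the exponent budget written as exactly $7sL$ per block the bound does not close.
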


\begin{lemma} [Lemma 6 in \citet{yang2023nearly}] \label{lm:pou}
    Let $\{ g_m \}_{m \in \{1, 2\}^d}$ be the partition of unity given in Definition \ref{def:pou}.
    Then it satisfies:
    \begin{itemize}
    \item[(1)] $\sum_{m \in \{1, 2\}^d} g_m(x) = 1$ for every $x \in [0, 1]^d$;
    \item[(2)] $\mathrm{supp}(g_m) \cap [0, 1]^d \subset \Omega_m$ where $\Omega_m$ is given in Definition \ref{def:partition-function};
    \item[(3)] For any $m = [m_1, m_2, \cdots, m_d]^{\top} \in \{1, 2\}^d$ and $x = [x_1, x_2, \cdots, x_d]^{\top} \in [0, 1]^d \setminus \Omega_m$, there exists an index $j \in \{1, 2, \cdots, d\}$ such that $g_{m_j} = 0$ and $\frac{\diff g_{m_j}(x_j)}{\diff x_j} = 0$.
    \end{itemize}
\end{lemma}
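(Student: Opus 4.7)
The plan is to reduce all three claims to one-dimensional statements about the pair $(g_1, g_2)$ on $[0,1]$, and then tensorise. The tensor structure $g_m(x) = \prod_{j=1}^d g_{m_j}(x_j)$ is what makes this reduction work, so the bulk of the argument is a finite piecewise verification in one variable.

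For part (1), I would first establish the single-variable identity
\begin{equation*}
    g_1(x) + g_2(x) = 1, \quad x \in [0,1],
\end{equation*}
by a case analysis on the four pieces appearing in Definition \ref{def:pou}. Using $g_2(x) = g_1(x + 1/(2K))$, one checks that on each subinterval of length $1/(4K)$ inside $[i/K, i/K + 1/K]$ the two tent-like profiles interlock: where $g_1 \equiv 1$ the shifted $g_2$ equals $0$, where $g_1 \equiv 0$ the shift $g_2 \equiv 1$, and on the two ramp pieces $g_1(x) = 4K(x - i/K)$ and $g_2(x) = 1 - 4K(x - i/K)$ (or their symmetric counterparts) sum to $1$. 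Once this is in hand, the multivariate statement follows from the identity
\begin{equation*}
    \sum_{m \in \{1,2\}^d} g_m(x)
    = \sum_{m \in \{1,2\}^d} \prod_{j=1}^d g_{m_j}(x_j)
    = \prod_{j=1}^d \bigl(g_1(x_j) + g_2(x_j)\bigr)
    = 1.
\end{equation*}

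For part (2), I would read off from the piecewise formula that the set where $g_1$ is nonzero, intersected with $[0,1]$, is contained in $\Omega_1$, and the analogous inclusion holds for $g_2$ and $\Omega_2$ via the shift. Then if $x \in \mathrm{supp}(g_m) \cap [0,1]^d$, each factor $g_{m_j}(x_j)$ is nonzero, so $x_j \in \Omega_{m_j}$ coordinate by coordinate, giving $x \in \prod_j \Omega_{m_j} = \Omega_m$.

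Part (3) is the one that needs the most care. The contrapositive approach is: if $x \in [0,1]^d \setminus \Omega_m$ then some $x_j \in [0,1] \setminus \Omega_{m_j}$. The key one-dimensional observation is that $[0,1] \setminus \Omega_1$ lies inside the collection of closed intervals of the form $[i/K + 3/(4K), (i+1)/K]$ on which $g_1$ is identically zero by the second case of Definition \ref{def:pou}; in particular $g_1$ vanishes together with its derivative on the interior of any such piece, and one needs to handle the endpoints of the plateau by noting that the one-sided derivatives from both sides are zero (on the zero piece trivially, and matching the ramp which meets the plateau at value $0$ with outward slope $0$ from the flat side). The same analysis applies to $g_2$ via the $1/(2K)$ shift. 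So for the coordinate $x_j$ exhibited above, both $g_{m_j}(x_j) = 0$ and $g_{m_j}'(x_j) = 0$, which is the asserted conclusion. The main obstacle is not any deep inequality but book-keeping at the boundary $\{0,1\}$ of $[0,1]$ and at the plateau endpoints; this is where I would spend most of the care, since a sloppy treatment risks missing a case where $x_j$ sits exactly at a junction between a ramp and the zero plateau.
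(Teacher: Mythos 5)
The paper never proves this lemma itself — it is imported verbatim as Lemma 6 of \citet{yang2023nearly} — so there is no internal argument to compare against; your direct verification is essentially the standard one and its structure is sound. The one-variable identity $g_1+g_2=1$ checks out exactly as you describe (on each quarter-cell the two ramps are affine with slopes $\pm 4K$ and sum to $1$, and the plateau of one function sits over the zero piece of the other), and tensorising via $\sum_{m}\prod_j g_{m_j}(x_j)=\prod_j\bigl(g_1(x_j)+g_2(x_j)\bigr)=1$ is the right way to get part (1) in $d$ dimensions; parts (2) and (3) do reduce coordinatewise as you say. Two caveats are worth recording. First, in part (2) you assert that the nonzero set of $g_1$ intersected with $[0,1]$ lies in $\Omega_1$; with Definition \ref{def:partition-function} as printed, $\Omega_1=\bigcup_{i=1}^{K-1}\left[\tfrac{i}{K},\tfrac{i}{K}+\tfrac{3}{4K}\right]$ omits the first cell $\left[0,\tfrac{3}{4K}\right]$, on which $g_1$ is not identically zero, so the inclusion fails for the definition as written; the intended index range (as in the source) is $i=0,\dots,K-1$, under which your argument goes through, but you should flag this rather than assert the inclusion, and likewise be careful that $\mathrm{supp}(g_m)$ is a closure, so lattice endpoints such as $x_j=1$ can enter from outside $[0,1]$. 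Second, in part (3) the plateau-endpoint bookkeeping you anticipate largely does not arise: since each $\Omega_{m_j}$ is closed, $[0,1]\setminus\Omega_{m_j}$ consists precisely of the open interiors of the zero plateaus of $g_{m_j}$ (for $g_1$ the intervals $\bigl(\tfrac{i}{K}+\tfrac{3}{4K},\tfrac{i+1}{K}\bigr)$, for $g_2$ their shifts), so $g_{m_j}$ and its derivative vanish there outright; the only genuine boundary case is a lattice endpoint like $x_j=1$, where the periodic extension has unequal one-sided slopes and the derivative must be read within $[0,1]$. With those adjustments your proof is correct and is, in substance, the argument the cited reference gives.
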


\begin{lemma} [Lemma 7 in \citet{yang2023nearly}] \label{lm:sobolev-norm-equiv}
    For any $\chi(x) \in W^{1, \infty}((0, 1)^d)$, let $B_5 := \max \{ \Vert \chi \Vert_{W^{1, \infty}((0, 1)^d)}, \Vert \phi_m \Vert_{W^{1, \infty}((0, 1)^d)} \}$.
    Then for any $m \in \{1, 2\}^d$, it holds that
    \begin{align*}
        & \Vert \phi_m(x) \cdot \chi(x) \Vert_{W^{1, \infty}((0, 1)^d)} = \Vert \phi_m(x) \cdot \chi(x) \Vert_{W^{1, \infty}(\Omega_m)}, \\
        & \Vert \phi_m(x) \cdot \chi(x) - \phi_{\times, B_5}(\phi_m(x), \chi(x)) \Vert_{W^{1, \infty}((0, 1)^d)} \\
        & = \Vert \phi_m(x) \cdot \chi(x) - \phi_{\times, B_5}(\phi_m(x), \chi(x)) \Vert_{W^{1, \infty}(\Omega_m)}.
    \end{align*}
\end{lemma}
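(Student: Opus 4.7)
}
The plan is to show that both functions inside the Sobolev norms, namely $\phi_m(x)\chi(x)$ and $\phi_m(x)\chi(x) - \phi_{\times, B_5}(\phi_m(x), \chi(x))$, together with their first-order weak partial derivatives, vanish pointwise on the ``outside'' region $(0,1)^d \setminus \Omega_m$. Once this pointwise vanishing is established, the $L^\infty$ (essential supremum) of each quantity over $(0,1)^d$ coincides with the corresponding essential supremum over $\Omega_m$, which delivers both identities since the Sobolev norm is the maximum of the $L^\infty$ norms of the function and its first-order weak derivatives.

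First, I would verify the ``outside vanishing'' for $\phi_m$ itself. Fix $x \in (0,1)^d \setminus \Omega_m$. By Lemma \ref{lm:pou}(3), there exists a coordinate index $j$ with $g_{m_j}(x_j)=0$ and $\frac{d}{dx_j}g_{m_j}(x_j)=0$, so already $\phi_m(x) \equiv \prod_{l} g_{m_l}(x_l) = 0$. For the partial derivative $\partial_i \phi_m$, I would split into $i = j$ and $i \neq j$: when $i \neq j$, the factor $g_{m_j}(x_j)$ still multiplies the product and forces $\partial_i \phi_m(x) = 0$; when $i = j$, the factor $\frac{d}{dx_j}g_{m_j}(x_j) = 0$ does the job. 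Hence $\phi_m$ and every $\partial_i \phi_m$ vanish on $(0,1)^d \setminus \Omega_m$. The first identity then follows from the product rule $\partial_i(\phi_m \chi) = (\partial_i \phi_m)\chi + \phi_m(\partial_i \chi)$, since both summands vanish outside $\Omega_m$.

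For the second identity, the key additional input is the boundary property of the approximate-multiplication network recorded in Lemma \ref{lm:times-nn}, namely $\phi_{\times, B_5}(0,y) = 0$ and $\partial_y \phi_{\times, B_5}(0,y) = 0$. Given $x \in (0,1)^d \setminus \Omega_m$, I would substitute $\phi_m(x)=0$ to get $\phi_{\times, B_5}(\phi_m(x),\chi(x)) = \phi_{\times, B_5}(0,\chi(x)) = 0$, so the difference itself vanishes. For its gradient I would apply the chain rule,
\begin{equation*}
\partial_i \bigl[\phi_{\times, B_5}(\phi_m,\chi)\bigr] = (D_1\phi_{\times, B_5})(\phi_m,\chi)\,\partial_i \phi_m + (D_2\phi_{\times, B_5})(\phi_m,\chi)\,\partial_i \chi,
\end{equation*}
and observe that, outside $\Omega_m$, the factor $\partial_i \phi_m$ in the first summand vanishes by Step~1, while the prefactor $(D_2\phi_{\times, B_5})(0,\chi)$ of the second summand vanishes by the boundary property above. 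Combined with the already-proved vanishing of $\partial_i(\phi_m \chi)$, this gives the pointwise (in fact, a.e.) vanishing of every first-order weak derivative of $\phi_m \chi - \phi_{\times, B_5}(\phi_m,\chi)$ on $(0,1)^d \setminus \Omega_m$.

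The main obstacle, such as it is, is the careful chain-rule bookkeeping for the weak derivative of the composition $\phi_{\times, B_5}(\phi_m,\chi)$: one must ensure that the relevant prefactors from the outer network truly vanish when $\phi_m = 0$ and that no boundary-measure contribution appears when the supports are pieced together. This is exactly where Lemma \ref{lm:times-nn}'s identity $\partial_y \phi_{\times, B_5}(0,y)=0$ is used, and why one does not need to invoke any symmetry of $\phi_{\times, B_5}$ in its first argument: the $D_1$-term is already killed by the factor $\partial_i \phi_m = 0$ coming from Lemma \ref{lm:pou}(3). Once these two pointwise vanishing statements are in place, both norm equalities follow immediately by taking essential suprema.
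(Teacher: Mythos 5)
Your proposal takes essentially the same route as the intended proof: the paper cites this lemma from \citet{yang2023nearly} and proves only its $L^{\infty}$ analogue (Lemma \ref{lm:L-infty-norm-equiv}), and both rest on exactly your two observations, namely that $\phi_m$ and its first-order derivatives vanish on $(0,1)^d \setminus \Omega_m$ via Lemma \ref{lm:pou}(3), and that the boundary property $\phi_{\times, B_5}(0,y) = \partial_y \phi_{\times, B_5}(0,y) = 0$ from Lemma \ref{lm:times-nn} annihilates the composition and its gradient there, after which the norm equalities follow by taking essential suprema. One slip to repair: in this paper $\phi_m$ is not the exact product $\prod_{l} g_{m_l}(x_l)$ but the ReLU-network approximation $\phi_{\times}(g_{m_1}(x_1), \dots, g_{m_d}(x_d))$ of Lemma \ref{lm:pou-nn}, so the step ``the factor $g_{m_j}(x_j)$ still multiplies the product'' is not literally available; for $i \neq j$ you must instead invoke the property in Lemma \ref{lm:multi-prod-nn} that $\phi_{\times}$ and its partial derivatives in the remaining coordinates vanish whenever one input is zero, while for $i = j$ your use of $g_{m_j}'(x_j) = 0$ stands. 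With that substitution your case split is exactly the intended argument and both identities follow as you state.
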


\begin{lemma} \label{lm:L-infty-norm-equiv}
    For any $\chi(x) \in L^{\infty}((0, 1)^d)$, let $B_6 := \max \{ \Vert \chi \Vert_{L^{\infty}((0, 1)^d)}, \Vert \phi_m \Vert_{L^{\infty}((0, 1)^d)} \}$.
    Then for any $m \in \{1, 2\}^d$, it holds that
    \begin{align*}
        & \Vert \phi_m(x) \cdot \chi(x) \Vert_{L^{\infty}((0, 1)^d)} = \Vert \phi_m(x) \cdot \chi(x) \Vert_{L^{\infty}(\Omega_m)}, \\
        & \Vert \phi_m(x) \cdot \chi(x) - \phi_{\times, B_6}(\phi_m(x), \chi(x)) \Vert_{L^{\infty}((0, 1)^d)} \\
        & = \Vert \phi_m(x) \cdot \chi(x) - \phi_{\times, B_6}(\phi_m(x), \chi(x)) \Vert_{L^{\infty}(\Omega_m)}.
    \end{align*}
\end{lemma}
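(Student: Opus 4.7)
The plan is to exploit two support/vanishing properties: (a) the localization of the partition-of-unity factor $\phi_m$, namely $\mathrm{supp}(\phi_m) \cap [0,1]^d \subset \Omega_m$ (which comes from Lemma~\ref{lm:pou}(2), possibly through its network counterpart inheriting the same support by construction, as used in Lemma~\ref{lm:sobolev-norm-equiv}); and (b) the defining identity of the multiplier network, $\phi_{\times, B_6}(0, y) = 0$ for all $y \in (-B_6, B_6)$, which is the $L^\infty$-shadow of the zero-derivative property guaranteed by Lemma~\ref{lm:times-nn}. Together these reduce each quantity on the left-hand side of both equalities to a supremum over $\Omega_m$.

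For the first equality, I would split $(0,1)^d$ as the disjoint union of $(0,1)^d \cap \Omega_m$ and $(0,1)^d \setminus \Omega_m$. On the second piece, $\phi_m(x) = 0$ by the support property, so $\phi_m(x)\cdot \chi(x) = 0$ pointwise. Hence
\[
    \Vert \phi_m \cdot \chi \Vert_{L^\infty((0,1)^d)}
    = \Vert \phi_m \cdot \chi \Vert_{L^\infty((0,1)^d \cap \Omega_m)}
    = \Vert \phi_m \cdot \chi \Vert_{L^\infty(\Omega_m)},
\]
where the last equality is justified because values outside $(0,1)^d$ do not enlarge the essential supremum on $\Omega_m$ thanks to the same support restriction applied at boundary points of $\Omega_m$.

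For the second equality, I would again partition $(0,1)^d$ the same way. On $(0,1)^d \setminus \Omega_m$ we have $\phi_m(x) = 0$, so
\[
    \phi_{\times, B_6}\!\bigl(\phi_m(x), \chi(x)\bigr)
    = \phi_{\times, B_6}(0, \chi(x)) = 0,
\]
by property~(b) above, and simultaneously $\phi_m(x)\cdot\chi(x) = 0$. Hence the difference $\phi_m(x)\cdot\chi(x) - \phi_{\times, B_6}(\phi_m(x), \chi(x))$ is identically zero on $(0,1)^d \setminus \Omega_m$, and the supremum over $(0,1)^d$ is attained on $\Omega_m$, yielding the desired equality.

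The argument is essentially a bookkeeping exercise once the support and multiplier-at-zero properties are identified, so no step presents a real analytical obstacle. The only point requiring a little care is to verify that property~(b) is indeed available for the specific $\phi_{\times, B_6}$ being invoked here (i.e.\ that the paper's construction in Lemma~\ref{lm:times-nn}/Lemma~\ref{lm:times-nn-not-cube} is the one used to build $\phi_{\times, B_6}$); this is what justifies the parallel treatment with Lemma~\ref{lm:sobolev-norm-equiv}, whose proof follows exactly the same template in the Sobolev norm.
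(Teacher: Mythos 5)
Your proof is correct and takes essentially the same route as the paper: both equalities are reduced to showing that $\phi_m(x)\chi(x)$ and $\phi_{\times, B_6}(\phi_m(x), \chi(x))$ vanish identically on $(0,1)^d \setminus \Omega_m$, using the localization of $\phi_m$ together with the property $\phi_{\times, B_6}(0, y) = 0$. The one point you flagged is indeed where the paper is more precise: the vanishing of the \emph{network} $\phi_m$ off $\Omega_m$ does not follow from the support of $g_m$ in Lemma \ref{lm:pou}(2) alone, but from Lemma \ref{lm:pou}(3) (some coordinate factor $g_{m_j}(x_j)=0$) combined with the vanishing-at-zero property of the product network $\phi_{\times}$ in Lemma \ref{lm:multi-prod-nn}, which is exactly the "by construction" inheritance you anticipated.
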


\begin{proof}
    The proof is similar to that of \cite[Lemma 7]{yang2023nearly}.
    To prove the equalities, we need to show that
    \begin{align*}
        \Vert \phi_m(x) \cdot \chi(x) \Vert_{L^{\infty}((0, 1)^d \setminus \Omega_m)} = 0 \quad \text{and} \quad
        \Vert \phi_{\times, B_6}(\phi_m(x), \chi(x)) \Vert_{L^{\infty}((0, 1)^d \setminus \Omega_m)} = 0.
    \end{align*}
    In Lemma \ref{lm:pou-nn}, it is shown that
    \begin{align*}
        \phi_m(x) := \phi_{\times} (g_{m_1}(x_1), g_{m_2}(x_2), \cdots, g_{m_d}(x_d)) = 0
    \end{align*}
    where $g_m(x) = [g_{m_1}(x_1), g_{m_2}(x_2), \cdots, g_{m_d}(x_d)]^{\top}$ is defined in Definition \ref{def:pou}.
    Then by Lemma \ref{lm:pou}, for any $x = [x_1, x_2, \cdots, x_d]^{\top} \in (0, 1)^d \setminus \Omega_m$, there exists $m_j$ such that
    $g_{m_j}(x_j) = 0$.
    By the definition of $\phi_m(x)$ and Lemma \ref{lm:multi-prod-nn}, it yields that
    \begin{align*}
        \phi_m(x) = 0, \quad \forall x \in (0, 1)^d \setminus \Omega_m.
    \end{align*}
    Therefore, for any $x \in (0, 1)^d \setminus \Omega_m$, it holds that $\vert \phi_m(x) \cdot \chi(x) \vert = 0$.

    Similarly, for any $x \in (0, 1)^d \setminus \Omega_m$, it holds that
    \begin{align*}
        \phi_{\times, B_6}(\phi_m(x), \chi(x)) = \phi_{\times, B_6}(0, \chi(x)) = 0.
    \end{align*}
    This completes the proof.
\end{proof}

\begin{lemma} [Proposition 4.3 in \citet{lu2021deep}] \label{lm:step-func-nn}
    Given any $N, L \in \mathbb{N}$ and $\delta \in (0, 1/(3K)]$ for $K = \lfloor N^{1/d} \rfloor^2 \lfloor L^{2/d} \rfloor$, there exists a deep ReLU network $\phi$ with width $4N+5$ and depth $4L+4$ such that
    \begin{align*}
        \phi(x) = k, \quad x \in \left[ \frac{k}{K}, \frac{k+1}{K} - \delta \cdot \Id_{\{k < K-1\}} \right], \quad k = 0, 1, \cdots, K-1.
    \end{align*}
\end{lemma}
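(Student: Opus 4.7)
The plan is constructive: I would explicitly build $\phi$ by composing a shallow ``coarse'' step-function network with a deeper ``fine'' refinement network, using the factorization $K = K_1 K_2$ with $K_1 := \lfloor N^{1/d}\rfloor^2$ and $K_2 := \lfloor L^{2/d}\rfloor$. Every $k \in \{0,\ldots,K-1\}$ has the unique decomposition $k = q K_2 + r$ with $q \in \{0,\ldots,K_1-1\}$ and $r \in \{0,\ldots,K_2-1\}$. I would design two ReLU subnetworks $\phi_1(x)$ and $\phi_2(x)$ that return, respectively, the quotient $q(x)$ and the remainder $r(x)$ associated with the index $k$ such that $x$ lies in the corresponding plateau, and then set $\phi(x) := K_2\,\phi_1(x) + \phi_2(x)$.

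For the coarse network, note that for any $M \in \sN$ and tolerance $\delta' > 0$, the map $x \mapsto \lfloor M x \rfloor$ on $\bigcup_{j=0}^{M-1}[j/M,(j+1)/M - \delta']$ is realized exactly by summing $M$ ramp units of the form $\varrho\bigl((x - j/M)/\delta'\bigr) - \varrho\bigl((x - j/M)/\delta' - 1\bigr)$, a one-hidden-layer network of width $2M+1$. Applying this with $M = K_1^{1/2} = \lfloor N^{1/d}\rfloor$ at the ``outer'' level, and composing with a second one-hidden-layer ramp network (width $O(N^{1/d})$) that refines each outer plateau into $\lfloor N^{1/d}\rfloor$ inner plateaus, produces $\phi_1$ of width at most $2N+3$ and constant depth. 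The guard band condition $\delta \le 1/(3K) \le 1/(3K_1)$ ensures that the ramps are never evaluated on the transition regions, so each plateau returns the exact integer value rather than an interpolant.

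For the fine network I exploit depth. The ``residual'' coordinate $y(x) := K_1 x - \phi_1(x) \in [0,1)$ is itself computable by a shallow ReLU block, and on the support of $\phi_1(x) = q$ it ranges over a rescaled copy of the original problem with $K_2$ levels. I would then iterate a depth-$4$, width-$O(N)$ refinement block $L$ times: each block applies an affine rescaling followed by a ReLU ramp pair that produces a sawtooth with a factor-$K_2^{1/L}$ finer resolution, together with an accumulator channel that records the resulting integer. This is the standard depth-leveraging construction that obtains $\Theta(L^{2/d})$ plateaus from $L$ layers with constant-width blocks; the alignment of plateaus across iterations is preserved because $\delta \le 1/(3K)$ keeps every iterate's input strictly inside the flat region of the next block's ramp.

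Finally, I would place $\phi_1$ and $\phi_2$ in parallel, padding the shallow $\phi_1$ to depth $4L+4$ using the identity channel $x = \varrho(x) - \varrho(-x)$, and sum them with the affine combiner $K_2\phi_1 + \phi_2$. Parallel composition adds widths and takes max depths, so the total network has width $\le 4N+5$ and depth $\le 4L+4$, and the equality $\phi(x) = q K_2 + r = k$ on each target interval follows from the decomposition of $k$. The main obstacle will be the fine subnetwork: I need to show that $L$ iterations of a constant-depth, width-$O(N)$ ReLU block produce exactly $\lfloor L^{2/d}\rfloor$ disjoint flat plateaus on the residual interval without spurious intermediate values, and that the cumulative width stays within the hard bound $4N+5$; this is the step that genuinely uses the bit-extraction phenomenon, while every other piece is bookkeeping that follows once the two sub-widths and sub-depths are accounted for.
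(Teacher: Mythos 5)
This lemma is not proved in the paper at all: it is imported verbatim as Proposition 4.3 of \citet{lu2021deep}, so the only benchmark is that construction. Your skeleton does match it — factor $K=K_1K_2$ with $K_1=\lfloor N^{1/d}\rfloor^2$, $K_2=\lfloor L^{2/d}\rfloor$, obtain the coarse index $q$ by composing two $\lfloor N^{1/d}\rfloor$-step staircases of width $\mathcal{O}(N^{1/d})$, feed the rescaled residual $K_1x-\phi_1(x)$ into a deep network producing $r$, and output $K_2\phi_1+\phi_2$. But as written there are two genuine gaps. First, the ramp placement is wrong: your unit $\varrho\bigl((x-j/M)/\delta'\bigr)-\varrho\bigl((x-j/M)/\delta'-1\bigr)$ transitions on $[j/M,\,j/M+\delta']$, which lies \emph{inside} the plateau where the output must already equal $j$ exactly, so $\phi_1$ is not constant where it must be. Every sub-staircase has to be exactly constant on every full-resolution plateau $[k/K,(k+1)/K-\delta]$, so all transition bands must have width at most $\delta$ and sit inside the removed gaps $((k+1)/K-\delta,(k+1)/K)$, i.e.\ ramps of slope $1/\delta$ placed just left of each breakpoint; the hypothesis $\delta\le 1/(3K)$ only guarantees the plateaus are nondegenerate and does not license transition bands of width $1/(3K_1)$.

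Second, the step you yourself flag as the main obstacle — the deep subnetwork with $\lfloor L^{2/d}\rfloor$ exact plateaus — is not established, and the sketch points at the wrong tool. A per-block refinement by the non-integer factor $K_2^{1/L}$ cannot yield exact integer-valued plateaus, and bit extraction is not involved in this lemma (it is what powers the point-fitting result, Lemma \ref{lm:data-fit-nn} / Proposition 4.4 of \citet{lu2021deep}, not the step function). The missing ingredient is elementary: since $K_2=\lfloor L^{2/d}\rfloor\le L^2$, it suffices to compose two staircases with at most $L$ steps each, and an $m$-step staircase with transition width $\delta$ is realized by a constant-width, depth-$\mathcal{O}(m)$ network that carries $x$ and an accumulator channel and adds one ramp per layer (a single hidden layer of width $2L+1$ is not admissible, since $L$ may exceed $N$ and the width budget is $4N+5$). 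Finally, your assembly bookkeeping is inconsistent: $\phi_2$ acts on the residual $K_1x-\phi_1(x)$, so it is applied \emph{after} $\phi_1$ (sequential composition with a few carried channels), not in parallel; depths add to stay within $4L+4$, and the width is the maximum of the blocks plus the carry channels, which is how the bound $4N+5$ arises — widths are not summed.
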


\begin{lemma} [Proposition 4.4 in \citet{lu2021deep}] \label{lm:data-fit-nn}
    Given any $N, L, s \in \mathbb{N}$ and $\xi_i \in [0, 1]$ for $i = 0, 1, \cdots, N^2 L^2 - 1$, there exists a deep ReLU network $\phi$ with width $16s(N+1) \log_2 (8N)$ and depth $(5L+2) \log_2 (4L)$ such that
    \begin{align*}
        \vert \phi(i) - \xi_i \vert \le (NL)^{-2s} \text{ for } i = 0, 1, \cdots, N^2 L^2 - 1
    \end{align*}
    and that
    \begin{align*}
        \phi(x) \in [0, 1], \ x \in \sR.
    \end{align*}
\end{lemma}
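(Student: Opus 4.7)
The plan is to invoke the bit-extraction paradigm of Bartlett–Harvey–Liaw–Mehrabian, refined by Lu, Shen, Yang, and Zhang, which exploits the fact that a ReLU network can pack many target values into a few high-precision real numbers and then decode specific bits on demand. Set $K := \lceil 2s \log_2(NL) \rceil$ and for each $i \in \{0,1,\dots,N^2L^2-1\}$ let $b_{i,1},\dots,b_{i,K} \in \{0,1\}$ be the leading $K$ binary digits of $\xi_i$, so that $\tilde\xi_i := \sum_{k=1}^{K} b_{i,k} 2^{-k}$ satisfies $|\xi_i - \tilde\xi_i| \le 2^{-K} \le (NL)^{-2s}$. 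A final ReLU clipping layer $x \mapsto \min(\max(x,0),1)$ enforces $\phi(x)\in[0,1]$.

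The indexing will be split as $i = n L^2 + \ell$ with $n \in\{0,\dots,N^2-1\}$ and $\ell\in\{0,\dots,L^2-1\}$. For each fixed $n$, I would pack all $L^2 K$ relevant bits into a single number
\[
z_n \ :=\ \sum_{\ell=0}^{L^2-1}\sum_{k=1}^{K} b_{nL^2+\ell,\,k}\, 2^{-\ell K - k} \ \in\ [0,1].
\]
Then the network splits into two composable blocks. The first block is a \emph{selector}: on input $i$, it computes $n = \lfloor i/L^2\rfloor$ and $\ell = i - n L^2$ via standard ReLU arithmetic, and then returns $z_n$ by summing $\sum_{n'=0}^{N^2-1} z_{n'}\,\mathbf{1}_{\{n = n'\}}$, where each indicator is implemented as a narrow trapezoidal ReLU bump centered at $n'$. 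Packing $N^2$ such bumps in parallel uses width $O(N^2)$ in constant depth; standard tree-shaping over a second layer reduces this to width $O(sN\log N)$ at the cost of a $\log N$ factor, matching the claimed width $16s(N+1)\log_2(8N)$.

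The second block is the \emph{bit extractor}: given $z_n$ and $\ell$, it must output $\sum_{k=1}^K b_{nL^2+\ell,\,k} 2^{-k}$. The naive approach — repeatedly applying $x \mapsto 2x - \lfloor 2x\rfloor$ to peel off one bit per layer — would cost depth $\Theta(L^2 K)$. The whole point of the construction is instead to use a ReLU-approximable sawtooth/tent map whose $L$-fold self-composition is a function with $2^L$ linear pieces, which lets one extract $L$ bits in depth $O(L)$, and then iterate a logarithmic number of times to cover all $K$ bits and select the right block indexed by $\ell$. Carefully tracking the error introduced by the ReLU approximation of the sawtooth, one gets a total depth of $(5L+2)\log_2(4L)$ with width proportional to $s$ (absorbed into the outer width bound).

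The main obstacle, and the technical heart of the proof, is this bit-extraction block. Two subtleties must be controlled: first, the near-discontinuities of the sawtooth force the internal weights to be large, which amplifies floating errors through composition and requires a careful scaling/rounding argument to guarantee that the correct bits are selected even near the knot points $\ell/L^2$; second, the iterated composition must be organized so that the $\log L$ multiplicative factor in depth (and the $\log N$ factor in width) does not degrade into a polynomial factor. Both steps are standard but delicate, and together they yield the stated bounds; the final ReLU clipping trivially enforces the range constraint $\phi(x)\in[0,1]$, completing the construction.
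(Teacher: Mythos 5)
First, note that the paper does not prove this lemma at all: it is imported verbatim as Proposition 4.4 of \citet{lu2021deep}, so the benchmark is that reference's bit-extraction argument. Your overall strategy (truncate each $\xi_i$ to $K\asymp 2s\log_2(NL)$ binary digits, pack digits into a few real ``words'', select the relevant word by a piecewise-constant network, extract the needed digits by composing sawtooth maps, and clip to $[0,1]$ at the end) is indeed the strategy behind the cited result. However, as written your sketch has two genuine gaps that prevent it from reaching the stated width and depth.

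The first gap is the selector. Realizing $i\mapsto z_n$ as $\sum_{n'=0}^{N^2-1} z_{n'}\mathbf{1}_{\{n=n'\}}$ with one trapezoidal bump per $n'$ costs width $\Theta(N^2)$, and the claim that ``tree-shaping over a second layer'' compresses this to width $O(sN\log N)$ is not a valid step: any bounded-depth rearrangement of $N^2$ parallel units still requires width $\Theta(N^2)$; trading width for depth here would cost depth $\Omega(N^2/(N\log N))$, which is not available. The actual construction never goes wide: it uses a step-function network with $\Theta(N^2L^2)$ pieces built with width $O(N)$ and depth $O(L)$ (exactly the paper's Lemma \ref{lm:step-func-nn}, i.e.\ Proposition 4.3 of \citet{lu2021deep}) to perform the selection. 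The second gap is the packing granularity. You store $L^2K$ bits in each $z_n$, but bit extraction is sequential: reaching the block at offset $\ell$ requires stripping $\ell K$ leading bits, which for $\ell$ up to $L^2-1$ costs depth $\Theta(L^2K)$ (or width $\Theta(L^2)$ if you branch over all possible offsets), incompatible with the claimed depth $(5L+2)\log_2(4L)$. The cited proof keeps only $O(L)$ bits per word so that extraction costs depth $O(L)$, compensates by giving the step function more pieces, and spreads the $K$ digits of each $\xi_i$ over roughly $2s\log_2(8N)$ parallel and $\log_2(4L)$ sequential copies of this base gadget; that grid arrangement is precisely the origin of the $\log_2(8N)$ factor in the width and the $\log_2(4L)$ factor in the depth, which your packing cannot reproduce. (A minor point: the tent/sawtooth map is exactly ReLU-representable, so no approximation error arises there; the delicate object is the discontinuous leading-bit threshold, which is handled exactly at the dyadic sample points by arbitrarily steep ramps since no weight bound is claimed.)
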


\section{Auxiliary lemmas}
In this appendix, we exhibit Brascamp-Lieb inequality \cite[Theorem 4.1]{brascamp1976extensions}, Tweedie's formula \citep{robbins1956empirical, efron2011tweedie}, and Hatsell-Nolte identity \citep{hatsell1971some, dytso2023conditional}.

\begin{lemma} [Brascamp-Lieb inequality]
    \label{lm:bli}
    Let $\mu(\diff x) = \exp(-U(x)) \diff x$ be a probability distribution on a convex set $\Omega \subseteq \sR^d$ whose potential function $U: \Omega \to \sR$ is of class $C^2$ and strictly convex.
    Then for every locally Lipschitz function $f \in L^2(\Omega, \mu)$,
    \begin{equation}
       \label{eq:bli-gene}
       \Var_{\mathsf{X} \sim \mu} (f) \le \E_{\mathsf{X} \sim \mu} \left[ \langle \nabla_x f(\mathsf{X}), (\nabla^2_x U(\mathsf{X}))^{-1} \nabla_x f(\mathsf{X}) \rangle \right].
    \end{equation}
\end{lemma}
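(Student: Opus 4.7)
The plan is to use the Helffer--Sj\"ostrand (or $H^{-1}$) representation of the variance, reducing \paref{eq:bli-gene} to a Bochner-type identity combined with Cauchy--Schwarz in a weighted inner product. By a standard density argument I may assume $f \in C_c^\infty(\Omega)$ with $\E_{\mathsf{X} \sim \mu}[f(\mathsf{X})] = 0$; the locally Lipschitz case then follows by mollification and dominated convergence, since the right-hand side of \paref{eq:bli-gene} is lower semicontinuous in $\nabla_x f$.

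The core step is to solve the weighted Poisson equation $\mathcal{L} g := -\Delta g + \langle \nabla_x U, \nabla_x g \rangle = f$ on $\Omega$ with Neumann boundary conditions on $\partial \Omega$; strict convexity of $U$ together with convexity of $\Omega$ guarantees a positive spectral gap of $\mathcal{L}$ on mean-zero functions, so a smooth solution $g$ exists. Integrating by parts against $\mu$ yields the fundamental identity $\Var_{\mathsf{X} \sim \mu}(f) = \int_{\Omega} \langle \nabla_x f, \nabla_x g \rangle \, \diff \mu$. Next, I would differentiate the PDE in the $i$-th coordinate, multiply by $\partial_i g$, sum over $i$, and integrate against $\mu$; after an integration by parts that invokes the Neumann condition, this produces the Bochner identity
\begin{equation*}
    \sum_{i=1}^d \int_{\Omega} \Vert \nabla_x \partial_i g \Vert_2^2 \, \diff \mu + \int_{\Omega} \langle \nabla_x g, (\nabla_x^2 U) \nabla_x g \rangle \, \diff \mu = \int_{\Omega} \langle \nabla_x f, \nabla_x g \rangle \, \diff \mu.
\end{equation*}
Discarding the non-negative first sum yields $\int_{\Omega} \langle \nabla_x g, (\nabla_x^2 U) \nabla_x g \rangle \, \diff \mu \le \Var_{\mathsf{X} \sim \mu}(f)$. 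Finally, Cauchy--Schwarz in the weighted inner product gives
\begin{equation*}
   \Var_{\mathsf{X} \sim \mu}(f) \le \left( \int_{\Omega} \langle \nabla_x f, (\nabla_x^2 U)^{-1} \nabla_x f \rangle \, \diff \mu \right)^{1/2} \left( \int_{\Omega} \langle \nabla_x g, (\nabla_x^2 U) \nabla_x g \rangle \, \diff \mu \right)^{1/2},
\end{equation*}
so combining with the preceding bound and dividing by $\Var_{\mathsf{X} \sim \mu}(f)^{1/2}$ produces \paref{eq:bli-gene}.

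The main obstacle will be handling boundary contributions when $\Omega$ is a proper convex subset of $\sR^d$: each integration by parts produces a boundary integral that must vanish, which is precisely where convexity of $\Omega$ (so that the second fundamental form of $\partial\Omega$ has the right sign) and the Neumann condition on $g$ enter. Securing enough regularity of $g$ to differentiate twice in the Bochner step also requires elliptic regularity theory in a Bakry--\'Emery setting, which is subtle when $\nabla_x^2 U$ is only assumed pointwise strictly positive rather than uniformly bounded below. In practice a truncation and exhaustion argument, e.g.\ replacing $U$ by $U + (1/n)\Vert x \Vert_2^2$ on bounded convex subdomains $\Omega_n \uparrow \Omega$ and passing to the limit in both sides, circumvents these technicalities.
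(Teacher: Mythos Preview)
The paper does not prove Lemma~\ref{lm:bli}; it is stated as an auxiliary result with a citation to \citet[Theorem 4.1]{brascamp1976extensions}, so there is no proof in the paper against which to compare your proposal.

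That said, your sketch is a recognizable outline of the Helffer--Sj\"ostrand (or $L^2$/Bakry--\'Emery) approach to Brascamp--Lieb, and the main steps---solving $\mathcal{L}g = f$, the Bochner identity, and the weighted Cauchy--Schwarz---are the right ones. You are also right to flag the boundary terms and the lack of a uniform lower bound on $\nabla_x^2 U$ as the genuine technical issues; the truncation/exhaustion argument you propose is standard and would work. One point worth tightening: the claim that strict convexity of $U$ alone guarantees a spectral gap for $\mathcal{L}$ is not quite right without further assumptions (e.g., a uniform lower bound $\nabla_x^2 U \succeq \kappa \mathrm{I}_d$ or compactness of $\Omega$), so the existence of $g$ should really be obtained inside the approximation scheme $U + (1/n)\Vert x\Vert_2^2$ on $\Omega_n$ rather than asserted directly on $\Omega$.
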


\begin{lemma} [Tweedie's formula]
\label{lm:tw-formula}
Suppose that $\mathsf{X} \sim \mu$ and $\epsilon \sim \gamma_{d, \sigma^2}$.
Let $\mathsf{Y} = \mathsf{X} + \epsilon$ and $p(y)$ be the marginal density of $\mathsf{Y}$.
Then it holds that $\E[\mathsf{X} | \mathsf{Y} = y] = y + \sigma^2 \nabla_y \log p(y)$.
\end{lemma}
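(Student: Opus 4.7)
The plan is a direct computation via Bayes' rule and differentiation under the integral sign, exploiting the key algebraic identity that for the Gaussian density $\phi_{\sigma^2}(z) := (2\pi\sigma^2)^{-d/2} \exp(-\Vert z \Vert_2^2/(2\sigma^2))$ one has $\nabla_z \phi_{\sigma^2}(z) = -(z/\sigma^2)\phi_{\sigma^2}(z)$. Since $\mathsf{Y} = \mathsf{X} + \epsilon$ with $\mathsf{X} \sim \mu$ independent of $\epsilon \sim \gamma_{d, \sigma^2}$, the marginal density of $\mathsf{Y}$ admits the convolution representation $p(y) = \int \phi_{\sigma^2}(y-x)\,\mu(\diff x)$, and by Bayes' rule the posterior mean reads
\begin{equation*}
\E[\mathsf{X} \mid \mathsf{Y} = y] \;=\; \frac{1}{p(y)} \int x\, \phi_{\sigma^2}(y-x)\, \mu(\diff x).
\end{equation*}

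First I would differentiate $p$ under the integral sign. Since the Gaussian kernel and all its derivatives decay faster than any polynomial uniformly in $x$ on compact $y$-neighborhoods, dominated convergence justifies exchanging $\nabla_y$ with the integral, giving
\begin{equation*}
\nabla_y p(y) \;=\; \int \nabla_y \phi_{\sigma^2}(y-x)\, \mu(\diff x) \;=\; -\frac{1}{\sigma^2} \int (y-x)\, \phi_{\sigma^2}(y-x)\, \mu(\diff x).
\end{equation*}
Splitting the integrand into the $x$-term and the $y$-term and recognizing the numerator of the Bayes ratio yields
\begin{equation*}
\nabla_y p(y) \;=\; \frac{1}{\sigma^2}\Bigl[ p(y)\, \E[\mathsf{X} \mid \mathsf{Y}=y] - y\, p(y) \Bigr].
\end{equation*}

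Dividing both sides by $p(y)$ (which is strictly positive everywhere on $\sR^d$ thanks to the Gaussian convolution) converts the left-hand side into $\nabla_y \log p(y)$, giving $\sigma^2 \nabla_y \log p(y) = \E[\mathsf{X}\mid\mathsf{Y}=y] - y$, and rearranging yields the claim. There is no real obstacle here: the only subtlety is the interchange of differentiation and integration, which is straightforward because $\mu$ is a probability measure and $\Vert y-x\Vert_2 \phi_{\sigma^2}(y-x)$ is dominated on any ball $\{y' : \Vert y' - y\Vert_2 \le r\}$ by an integrable-in-$x$ function uniformly in $y'$, so no additional regularity assumption on $\mu$ is required beyond being a probability distribution.
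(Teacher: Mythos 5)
Your proof is correct. Note that the paper itself does not prove this lemma: it is stated in the auxiliary appendix with citations to Robbins (1956) and Efron (2011), so there is no in-paper argument to compare against. Your derivation is the standard one — write $p(y)=\int \phi_{\sigma^2}(y-x)\,\mu(\diff x)$, differentiate under the integral using $\nabla_z\phi_{\sigma^2}(z)=-(z/\sigma^2)\phi_{\sigma^2}(z)$, identify the posterior-mean numerator, and divide by the everywhere-positive $p(y)$ — and your justification of the interchange is sound (indeed the dominating function can be taken to be the constant $\sup_z\Vert z\Vert_2\phi_{\sigma^2}(z)/\sigma^2$, which is trivially $\mu$-integrable, so no moment condition on $\mu$ is needed; the same bound shows $\E[\Vert\mathsf{X}\Vert_2\mid\mathsf{Y}=y]<\infty$, so the posterior mean is well defined). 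The only implicit ingredient you should state explicitly is the independence of $\mathsf{X}$ and $\epsilon$, which the convolution representation and the Bayes ratio both rely on; this is clearly the intended reading of the lemma, as it is how the lemma is used throughout the paper (e.g.\ in the Hatsell--Nolte identity and the moment bounds), but it deserves one line.
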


\begin{lemma} [Hatsell-Nolte identity]
\label{lm:hn-identity}
Suppose that $\mathsf{X} \sim \mu$ and $\epsilon \sim \gamma_{d, \sigma^2}$.
Let $\mathsf{Y} = \mathsf{X} + \epsilon$ and $p(y)$ be the marginal density of $\mathsf{Y}$.
Then it holds that
\begin{align*}
    \Cov(\mathsf{X} | \mathsf{Y} = y) = \sigma^2 \nabla_y \E[\mathsf{X} | \mathsf{Y} = y]
    = \sigma^2 \mathrm{I}_d + \sigma^4 \nabla^2_y \log p(y).
\end{align*}
\end{lemma}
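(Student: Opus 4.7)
The plan is to prove the two equalities separately, deriving each from a different starting point. The second equality follows almost immediately from Tweedie's formula (Lemma \ref{lm:tw-formula}), while the first equality requires a direct computation exploiting the Gaussian structure of the noise.

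For the second equality, I would simply take $\nabla_y$ of both sides of Tweedie's identity $\E[\mathsf{X}\mid\mathsf{Y}=y] = y + \sigma^2 \nabla_y \log p(y)$. The left-hand side becomes the Jacobian $\nabla_y \E[\mathsf{X}\mid\mathsf{Y}=y]$, and the right-hand side becomes $\mathrm{I}_d + \sigma^2 \nabla_y^2 \log p(y)$. Multiplying through by $\sigma^2$ gives $\sigma^2 \nabla_y \E[\mathsf{X}\mid\mathsf{Y}=y] = \sigma^2 \mathrm{I}_d + \sigma^4 \nabla_y^2 \log p(y)$.

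For the first equality, I would compute $\nabla_y \E[\mathsf{X}\mid\mathsf{Y}=y]$ directly. Writing $p(y\mid x) = (2\pi\sigma^2)^{-d/2}\exp(-\|y-x\|_2^2/(2\sigma^2))$ for the Gaussian conditional density, Bayes' rule gives $\E[\mathsf{X}\mid\mathsf{Y}=y] = \int x \, p(y\mid x)\mu(\diff x)/p(y)$. Using $\nabla_y p(y\mid x) = -\sigma^{-2}(y-x)\,p(y\mid x)$ and the identity $\nabla_y \log p(y) = \sigma^{-2}(\E[\mathsf{X}\mid\mathsf{Y}=y]-y)$ (which is just Tweedie's formula rearranged), a direct calculation yields
\begin{equation*}
\nabla_y \frac{p(y\mid x)}{p(y)} = \frac{p(y\mid x)}{p(y)} \cdot \frac{1}{\sigma^2}\bigl(x - \E[\mathsf{X}\mid\mathsf{Y}=y]\bigr).
\end{equation*}
Plugging this into the Bayes expression for $\E[\mathsf{X}\mid\mathsf{Y}=y]$ and differentiating under the integral gives
\begin{equation*}
\nabla_y \E[\mathsf{X}\mid\mathsf{Y}=y] = \frac{1}{\sigma^2}\,\E\bigl[\mathsf{X}(\mathsf{X}-\E[\mathsf{X}\mid\mathsf{Y}=y])^{\top}\,\bigm|\,\mathsf{Y}=y\bigr] = \frac{1}{\sigma^2}\Cov(\mathsf{X}\mid\mathsf{Y}=y),
\end{equation*}
where the second step uses that subtracting $\E[\mathsf{X}\mid\mathsf{Y}=y](\mathsf{X}-\E[\mathsf{X}\mid\mathsf{Y}=y])^{\top}$ inside the conditional expectation contributes zero. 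Rearranging produces the first equality.

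The main obstacle will be justifying the differentiation-under-the-integral step used to compute $\nabla_y \E[\mathsf{X}\mid\mathsf{Y}=y]$. To legitimize this, I would observe that the Gaussian kernel $p(y\mid x)$ is smooth in $y$ with rapidly decaying tails in $\|y-x\|_2$, so standard dominated convergence arguments apply on compact neighborhoods of $y$, provided $\mu$ has finite first and second moments; if needed, one can assume this as a regularity condition, as is standard in the literature on denoising and Tweedie-type identities. Once this technicality is addressed, the algebraic identities above close the proof.
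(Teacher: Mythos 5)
Your proof is correct. Note, though, that the paper does not actually prove this lemma: it is stated in the auxiliary appendix with citations to Hatsell--Nolte (1971) and Dytso et al. (2023), just as Tweedie's formula (Lemma \ref{lm:tw-formula}) is stated without proof. What you have written is a self-contained derivation along the standard lines of the cited literature: the second equality is indeed just the gradient of Tweedie's identity (using that $p$, being a Gaussian convolution, is smooth), and your computation of the first equality is sound — the formula $\nabla_y \frac{p(y\mid x)}{p(y)} = \sigma^{-2}\frac{p(y\mid x)}{p(y)}\bigl(x-\E[\mathsf{X}\mid\mathsf{Y}=y]\bigr)$ checks out, and the step replacing $\E\bigl[\mathsf{X}(\mathsf{X}-\E[\mathsf{X}\mid\mathsf{Y}=y])^{\top}\mid \mathsf{Y}=y\bigr]$ by the conditional covariance is valid because the conditional mean is constant given $y$. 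Two small remarks: (i) the finite-moment hypothesis you flag as a possible regularity condition is not actually needed — the Gaussian kernel $p(y\mid x)$ decays super-exponentially in $\Vert x\Vert$, so $\int \Vert x\Vert^2 p(y\mid x)\,\mu(\diff x)<\infty$ for every $y$ and dominated convergence applies locally uniformly in $y$ without any assumption on $\mu$ beyond being a probability measure; (ii) if you want the argument to be fully self-contained relative to the paper, you should also prove (or at least note how to prove) Tweedie's formula itself, since the paper only cites it — it follows from the same differentiation-under-the-integral computation applied to $\nabla_y p(y)$, so it adds only a line to your argument.
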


\end{appendix}

\bigskip
\bibliographystyle{imsart-nameyear}
\bibliography{CNF_arXiv.bib}

\end{document}